\definecolor{bleu}{rgb}{0.2,0.2,0.7020}
\definecolor{bclair}{rgb}{0.6235    0.6235    0.8980}
\definecolor{rose}{rgb}{0.9882,0.7882,0.7882}
\theoremstyle{theorem}
\newtheorem{theorem}{Theorem}                                     
\newtheorem{claim}{Proposition}     
\theoremstyle{definition}
\newtheorem{definition}{Definition}
\theoremstyle{remark}
\newtheorem{remark}{Remark}
\newtheorem{hypothesis}{Assumption}
\newcommand{\nablaL}{\partial}
\date{April 2020}
\title{Automated data-driven selection of the hyperparameters for Total-Variation based texture segmentation\thanks{Work supported by Defi Imag'in SIROCCO and by ANR-16-CE33-0020 MultiFracs, France and by ANR GraVa ANR-18-CE40-0005.}.}
\author{Barbara Pascal\thanks{Univ Lyon, ENS de Lyon, Univ Lyon 1, CNRS, Laboratoire de Physique, F-69342 Lyon, France (\texttt{firstname.lastname@ens-lyon.fr}).} \and Samuel Vaiter\thanks{CNRS \& Universit\'e de Bourgogne Franche-Comt\'e, Dijon, France. (\texttt{samuel.vaiter@u-bourgogne.fr})} \and Nelly Pustelnik\footnotemark[2] \and Patrice Abry\footnotemark[2]}
\begin{document}
\maketitle

\section{Introduction}

Numerous problems in signal and image processing consist in finding the best possible estimate $\widehat{\boldsymbol{x}}$ of a quantity $\bar{\boldsymbol{x}} \in \mathcal{H}$ from an observation $\boldsymbol{y}\in \mathcal{G}$ (where $\mathcal{H}$ and $\mathcal{G}$ are Hilbert spaces isomorphic to $\mathbb{R}^N$ and $\mathbb{R}^P$ respectively), potentially corrupted by a linear operator $\boldsymbol{\Phi} : \mathcal{H} \rightarrow \mathcal{G}$, which encapsulates deformation or information loss, and by some additive zero-mean Gaussian noise $\boldsymbol{\zeta} \sim \mathcal{N}(\boldsymbol{0}_{P},\boldsymbol{\mathcal{S}})$, with \textit{known} covariance matrix $\boldsymbol{\mathcal{S}}\in \mathbb{R}^{P \times P}$, leading to the general observation model
\begin{align}
\label{eq:obs_gen_model}
\boldsymbol{y} = \boldsymbol{\Phi} \bar{\boldsymbol{x}} + \boldsymbol{\zeta}.
\end{align}
Examples resorting to inverse problems include image restoration~\cite{Cai_JF_2012_j-ams_ima_rtv, pustelnik2012_j-ieee-tsp_sur_ads}, inpainting~\cite{chan_T_2005_wiley_var_ii}, texture-geometry decomposition~\cite{Aujol_J_2006_ijcv_str_tid}, but also texture segmentation as recently proposed in \cite{pascal2019nonsmooth}. 
A widely investigated path for the estimation of underlying $\bar{\boldsymbol{x}}$ is linear regression~\cite{lawson1995solving,bjorck1996numerical}, providing an unbiased linear regression estimator $\widehat{\boldsymbol{x}}_{\mathrm{LR}}$.
Yet corresponding estimates suffer from large variances, which can lead to dramatic errors in the presence of noise~$\boldsymbol{\zeta}$~\cite{berger1976minimax}.

An alternative relies on the construction of \textit{parametric estimators}
\begin{align}
\begin{array}{ccc}
 \mathcal{G} \times \mathbb{R}^L & \longrightarrow & \mathcal{H} \\
\left( \boldsymbol{y}, \boldsymbol{\Lambda} \right) & \longmapsto & \widehat{\boldsymbol{x}}(\boldsymbol{y};\boldsymbol{\Lambda} )
\end{array}
\end{align}
allowing some estimation bias, and thus leading to drastic decrease of the variance.
Given some prior knowledge about ground truth $\bar{\boldsymbol{x}}$, e.g. ~\cite{golub1999tikhonov}, either sparsity of the variable $\bar{\boldsymbol{x}}$~\cite{tibshirani1996regression}, of its derivative~\cite{Tikhonov_A_1963_j-sov-mat-dok_tikhonov_ripp,rudin1992nonlinear,hansen1993use} or of its wavelet transform~\cite{donoho1994ideal}, one can build \textit{parametric estimators} performing a compromise between fidelity to the model~\eqref{eq:obs_gen_model} and structure constraints on the estimation.
In general, the compromise is tuned by a small number $L = \mathcal{O}(1)$ parameters, stored in a vector $\boldsymbol{\Lambda} \in \mathbb{R}^L$.
A very popular class of \textit{parametric estimators} relies on a penalization of a least squares data fidelity term formulated as a minimization problem
\begin{align}
\label{eq:LS_pen}
\widehat{\boldsymbol{x}}(\boldsymbol{y};\boldsymbol{\Lambda} ) \in \underset{{\boldsymbol{x}\in \mathcal{H}}}{\mathrm{Argmin}} \, \Vert \boldsymbol{y} - \boldsymbol{\Phi} {\boldsymbol{x}}\Vert_{\boldsymbol{\mathcal{W}}}^2
+  \lVert \textbf{U}_{\boldsymbol{\Lambda}} \boldsymbol{x} \rVert_{q}^q
\end{align}
with $\Vert\cdot \Vert_{\boldsymbol{\mathcal{W}}}$ the Mahalanobis distance associated with $\boldsymbol{\mathcal{W}}\in \mathbb{R}^{P\times P}$, defined as
\begin{align}
\Vert \boldsymbol{y} - \boldsymbol{\Phi} {\boldsymbol{x}}\Vert_{\boldsymbol{\mathcal{W}}} \triangleq \sqrt{\left( \boldsymbol{y} - \boldsymbol{\Phi} {\boldsymbol{x}} \right)^{\top}\boldsymbol{\mathcal{W}} \left( \boldsymbol{y} - \boldsymbol{\Phi} {\boldsymbol{x}} \right)}.
\end{align}
$\textbf{U}_{\boldsymbol{\Lambda}} : \mathcal{H} \rightarrow \mathcal{Q}$ is a linear operator parametrized by $\boldsymbol{\Lambda}$ and $\lVert \cdot \rVert_{q}$ the $\ell_q$-norm with $q\geq 1$ in Hilbert space $\mathcal{Q}$.\\\

\noindent \textbf{Least Squares.} While Ordinary Least Squares involve usual $\ell_2$ squared norm as data-fidelity term, that is $\boldsymbol{\mathcal{W}} = \boldsymbol{I}_{P}$, Generalized Least Squares~\cite{strutz2010data} make use of the covariance structure of the noise through $\boldsymbol{\mathcal{W}} = \boldsymbol{\mathcal{S}}^{-1}$, encapsulating all the observation statistics in the case of Gaussian noise. 
This generalized approach is equivalent to decorrelating the data and equalizing noise levels before performing the regression. 
Further, the Gauss-Markov theorem~\cite{aitkin1935least} asserts that minimizing Weighted Least Squares provides the best linear estimator of $\bar{\boldsymbol{x}}$, advocating for the use of Mahalanobis distance as data fidelity term in penalized Least Squares. 
Yet, in practice, Generalized Least Squares (or Weighted Least Squares in the case when $\boldsymbol{\mathcal{S}}$ is diagonal) requires not only the knowledge of the covariance matrix, but also to be able to invert it. 
For uncorrelated data, $\boldsymbol{\mathcal{S}}$ is diagonal and, provided that it is well-conditioned, it is easy to invert numerically.
On the contrary, computing $\boldsymbol{\mathcal{S}}^{-1}$ might be extremely challenging for correlated data since $\boldsymbol{\mathcal{S}}$ is not diagonal anymore and has a size scaling like the square of the dimension of $\mathcal{G}$. Thus, to handle possibly correlated Gaussian noise $\boldsymbol{\zeta}$, 
using Ordinary Least Squares is often mandatory, even though it does not benefit from same theoretical guarantees that Generalized Least Squares. 
Nevertheless, we will show that the knowledge of $\boldsymbol{\mathcal{S}}$ is far from being useless, since it is possible to take advantage of it when estimating the quadratic risk. \\

\noindent \textbf{Penalization.} Appropriate choice of $q$ and $\textbf{U}_{\boldsymbol{\Lambda}}$ covers a large variety of well-known estimators.
Linear filtering is obtained for $q = 2$~\cite{elden1977algorithms}, the shape of the filter being encapsulated in operator $\textbf{U}_{\Lambda}$~\cite{hansen1993use}, the hyperparameters $\boldsymbol{\Lambda}$ tuning e.g. its band-width.
It is very common in image processing to impose priors on the spatial gradients of the image, using the finite discrete horizontal and vertical difference operator $\textbf{D}$ and one regularization parameter $\boldsymbol{\Lambda} = \lambda > 0$ ($L=1$). 
For example, smoothness of the estimate is favored using $\ell_2$ squared norm, performing Tikhonov regularization~\cite{Tikhonov_A_1963_j-sov-mat-dok_tikhonov_ripp,hansen1993use}, in which $q = 2$ and $\lVert \textbf{U}_{\boldsymbol{\Lambda}} \boldsymbol{x} \rVert_{q}^q \triangleq \lambda \lVert \textbf{D} \boldsymbol{x}\rVert_2^2$. 
Another standard penalization is the anisotropic total variation~\cite{rudin1992nonlinear} $\lVert \textbf{U}_{\boldsymbol{\Lambda}} \boldsymbol{x} \rVert_{q}^q \triangleq \lambda \lVert \textbf{D} \boldsymbol{x}\rVert_1$, corresponding to $q=1$, where the $\ell_1$-norm enforces sparsity of spatial gradients.  \\

\noindent \textbf{Risk estimation.} 
The purpose of Problem~\eqref{eq:LS_pen} is to obtain a faithful estimation $\widehat{\boldsymbol{x}}(\boldsymbol{y};\boldsymbol{\Lambda} )$ of ground truth $\bar{\boldsymbol{x}}$, the error being measured by the so-called \textit{quadratic risk}
\begin{align}
\label{eq:intro_risk}
\mathbb{E}\lVert \textbf{B} \widehat{\boldsymbol{x}}(\boldsymbol{y};\boldsymbol{\Lambda} ) - \textbf{B} \bar{\boldsymbol{x}} \rVert_{\boldsymbol{\mathcal{W}}}^2
\end{align}
with $\textbf{B}$ a linear operator, which enables to consider various types of risk.
For instance, when $\textbf{B} = \boldsymbol{\Pi}$ is a projector on a subset of $\mathcal{H}$~\cite{eldar2008generalized}, the \textit{projected quadratic risk}~\eqref{eq:intro_risk} measures the estimation error on the projected quantity $\boldsymbol{\Pi}\bar{\boldsymbol{x}}$. This case includes the usual quadratic risk when $\textbf{B} = \boldsymbol{I}_{N}$. Conversely, when $\textbf{B} = \boldsymbol{\Phi}$, the risk~\eqref{eq:intro_risk} quantifies the quality of the prediction $\widehat{\boldsymbol{y}}(\boldsymbol{y};\boldsymbol{\Lambda}) \triangleq \boldsymbol{\Phi} \widehat{\boldsymbol{x}}(\boldsymbol{y};\boldsymbol{\Lambda})$ with respect to the noise-free observation $\bar{\boldsymbol{y}}\triangleq\boldsymbol{\Phi}\bar{\boldsymbol{x}}$ lying in $\mathcal{G}$, and is known as the \textit{prediction risk}.\\
The main issue is that one does not have access to ground truth $\bar{\boldsymbol{x}}$. Hence, measuring the quadratic risk~\eqref{eq:intro_risk} first requires to derive an estimator of 
$$
\mathbb{E} \left\lVert \textbf{B} \widehat{\boldsymbol{x}}(\boldsymbol{y} ; \boldsymbol{\Lambda}) - \textbf{B} \bar{\boldsymbol{x}} \right\rVert_{\boldsymbol{\mathcal{W}}}^2
$$ 
\textit{not involving} $\bar{\boldsymbol{x}}$. \\
This problem was handled originally in the case of independent, identically distributed, (i.i.d.) Gaussian linear model, that is for scalar covariance matrix $\boldsymbol{\mathcal{S}} = \rho^2 \boldsymbol{I}_{P}$, by Stein~\cite{stein1981estimation, tibshirani2015stein}, performing a clever integration by part, leading to Stein's Unbiased Risk Estimate (SURE)~\cite{donoho1995adapting,meyer2000degrees, ramani2008monte, tibshirani2012degrees}, initially formulated for the \textit{prediction risk},
\begin{align}
\label{eq:intro_SURE}
\left\lVert  \left( \boldsymbol{\Phi}\widehat{\boldsymbol{x}}(\boldsymbol{y} ; \boldsymbol{\Lambda}) - \boldsymbol{y} \right) \right\rVert_{\boldsymbol{\mathcal{W}}}^2 + 2\rho^2 \mathrm{Tr} \left( \partial_{\boldsymbol{y}}  \widehat{\boldsymbol{x}}(\boldsymbol{y} ; \boldsymbol{\Lambda}) \right) - P \rho^2,
\end{align}
whose expected value equals quadratic risk~\eqref{eq:intro_risk} with $\textbf{B} = \boldsymbol{\Phi}$.
In the past years SURE was intensively used both in statistical, signal and image processing applications~\cite{donoho1995adapting, blu2007sure, pesquet2009sure}.
It was recently extended to the case of independent but not identically distributed noise~\cite{chaux2008nonlinear,xie2012sure}, corresponding to diagonal covariance matrix $\boldsymbol{\mathcal{S}} = \mathrm{diag}(\sigma_1^2, \hdots, \sigma_P^2)$, and to the case when the noise is Gaussian with potential correlations, with very general covariance matrix $\boldsymbol{\mathcal{S}}$. 
Yet, to the best of our knowledge, very few numerical assessments are available for Gaussian noise with non-scalar covariance matrices. 
A notable exception is~\cite{chaux2008nonlinear}, in which numerical experiments are run on uncorrelated multi-component data, the components experiencing different noise levels. The noise being assumed independent, this corresponds to a diagonal covariance matrix $\boldsymbol{\mathcal{S}} = \mathrm{diag}(\rho_1^2, \hdots, \rho_P^2)$, with $\rho_i^2$ the variance of the noise of the $i^{\mathrm{th}}$ component. \\
Further, in the case when the noise is neither independent identically distributed nor Gaussian, Generalized Stein Unbiased Risk Estimators were proposed, e.g. for Exponential Families~\cite{hudson1978natural,eldar2008generalized} or Poisson noise~\cite{hudson1998maximum, luisier2010image, le2014unbiased}. \\
As for practical evaluation of Stein estimator, more sophisticated tools might be required to evaluate the second term of~\eqref{eq:intro_SURE}, notably when $\widehat{\boldsymbol{x}}(\boldsymbol{y} ; \boldsymbol{\Lambda})$ is obtained from a proximal splitting algorithm~\cite{bauschke2011convex,Combettes2011,parikh2014proximal,condat2019proximal} solving Problem~\eqref{eq:LS_pen}.
Indeed Stein estimator involves the Jacobian of $\widehat{\boldsymbol{x}}(\boldsymbol{y}; \boldsymbol{\Lambda})$ with respect to observations $\boldsymbol{y}$, which might not be directly accessible in this case.
In order to manage this issue, Vonesch~\textit{et~al} proposed in~\cite{vonesch2008recursive} to perform recursive forward differentiation inside the splitting scheme solving~\eqref{eq:LS_pen}, which benefits from few theoretical results from~\cite{evans2015measure}. This approach, even if remaining partially heuristic, proved to be efficient for a large class of problems~\cite{deledalle2014stein}.\\

\noindent \textbf{Hyperparameter tuning.}
Equation~\eqref{eq:LS_pen} clearly shows that the estimate $\widehat{\boldsymbol{x}}(\boldsymbol{y};\boldsymbol{\Lambda}) $ drastically depends on the choice of regularization parameters $\boldsymbol{\Lambda}$. 
Thus, fine-tuning of regularization parameters is a long-standing problem in signal and image processing.
A common formulation of this problem consists in minimizing the quadratic risk with respect to regularization parameters $\boldsymbol{\Lambda}$, solving:
\begin{equation}
\label{eq:min_risk}
\underset{\boldsymbol{\Lambda}}{\mathrm{minimize}} \, \,   \mathbb{E} \left\lVert \textbf{B} \widehat{\boldsymbol{x}}(\boldsymbol{y} ; \boldsymbol{\Lambda}) - \textbf{B} \bar{\boldsymbol{x}} \right\rVert_{\boldsymbol{\mathcal{W}}}^2.
\end{equation}
As emphasized in~\cite{eldar2008generalized}, approximate solution of~\eqref{eq:min_risk} found selecting among the estimates $\left( \widehat{\boldsymbol{x}}(\boldsymbol{y};\boldsymbol{\Lambda})\right)_{\boldsymbol{\Lambda} \in \mathbb{R}^L}$ the one reaching lowest SURE~\eqref{eq:intro_SURE}, as proposed in pioneering work~\cite{stein1981estimation},
leads to lower mean square error than classical Maximum Likelihood approaches applied to Model~\eqref{eq:obs_gen_model}.\\
The most direct method solving~\eqref{eq:min_risk} consists in computing SURE~\eqref{eq:intro_SURE} 
over a grid of parameters~\cite{donoho1994ideal,ramani2008monte, eldar2008generalized}, and to select the parameter of the grid for which SURE is minimal.
Yet, grid search methods suffers from a high computation cost for several reasons.
First of all, the size of the grid scaling algebraically with the number of regularization parameters $L$, exhaustive grid search is often inaccessible. 
Recently, random strategies were proposed to improve grid search efficiency~\cite{bergstra2012random}.
Yet, for $L \geq 3$, it remains very challenging if not unfeasible.
Further, an additional difficulty might appear in the case when $\widehat{\boldsymbol{x}}(\boldsymbol{y} ; \boldsymbol{\Lambda})$ is obtained from a splitting algorithm solving Problem~\eqref{eq:LS_pen}. 
Indeed when the regularization term $\lVert \textbf{U}_{\boldsymbol{\Lambda}} \boldsymbol{x} \rVert_q^q$ is nonsmooth, the proximal algorithms solving~\eqref{eq:LS_pen} suffers from slow convergence rate, making the evaluation of Stein estimator at each point of the grid very time consuming. 
Although accelerated schemes were proposed~\cite{beck2009fast, chambolle2011first}, grid search with $L \geq 2$ remains very costly, preventing from practical use.\\
When a closed-form expression of Stein estimator is available, exact function minimization over the regularization parameters $\boldsymbol{\Lambda}$ might be possible. 
This is the case for instance for the Tikhonov penalization for which Thompson~\textit{et~al.}~\cite{thompson1991study}, Galatsanos~\textit{et~al.} in~\cite{galatsanos1992methods}, and Desbat~\textit{et~al.}~in~\cite{desbat1995minimum} took advantage of the linear closed-form expression of $\widehat{\boldsymbol{x}}(\boldsymbol{y};\boldsymbol{\Lambda} ) $ to find the ``best" regularization parameter, i.e. to solve~\eqref{eq:min_risk}. 
Another well-known closed-form expression holds for soft-thresholding, which is widely used for wavelet-shrinkage denoising e.g.~\cite{donoho1994ideal,luisier2007new}.
Note  that Generalized Cross Validation~\cite{golub1979generalized} also makes use of closed-form expression for parameters tuning, but in a slightly different way, working on prediction risk, solving~\eqref{eq:min_risk} for $\textbf{B} = \boldsymbol{\Phi}$.
Generalized Cross Validation and Stein-based estimators were compared independently by Li~\cite{li1985stein}, Thompson~\cite{thompson1991study}, and Desbat~\textit{et~al.}~in~\cite{desbat1995minimum}.
Further, Bayesian methods were proposed to deal with very large number of hyperparameters $L \gg 1$, among which Sequential Model-Based Optimization (SMBO), providing smart sampling of the hyperparameter domain~\cite{bergstra2013making}. 
Such methods are particularly adapted to machine learning, as they manage huge amount of hyperparameters without requiring knowledge of the gradient of the cost function~\cite{bertr2020implicit}.
\\
In order to go further than (random) sampling methods, elaborated approaches relying on minimization schemes were proposed, requiring sufficiently smooth risk estimator, as well as access to its derivative with respect to $\boldsymbol{\Lambda}$. 
From a $C^{\infty}$ closed-form expression of Poisson Unbiased Risk Estimate, Deledalle~\textit{et~al.}~\cite{deledalle2010poisson} proposed a Newton algorithm solving~\eqref{eq:min_risk}. 
Nevertheless, it does not generalize, since it is very rare that one has access to all the derivatives of the risk estimator. 
In the case when the noise is Gaussian i.i.d., Chaux~\textit{et~al.}~\cite{chaux2008nonlinear} proposed and assessed numerically an empirical descent algorithm for automatic choice of regularization parameter, but with no convergence guarantee. 
For i.i.d. Gaussian noise and estimators built as the solution of~\eqref{eq:LS_pen}, Deledalle~\textit{et~al.}~\cite{deledalle2014stein} proposed sufficient conditions so that $\widehat{\boldsymbol{x}}(\boldsymbol{y};\boldsymbol{\Lambda})$ is differentiable with respect to $\boldsymbol{\Lambda}$, and then derived the differentiability of Stein's Unbiased Risk Estimate. 
Further, they elaborated a Stein Unbiased GrAdient estimator of the Risk (SUGAR) with the aim of performing a quasi-Newton descent solving~\eqref{eq:min_risk} using BFGS strategy. 
SUGAR proved its efficiency in the automated hyperparameter selection in a spatial-spectral deconvolution method for large multispectral data corrupted by i.i.d. Gaussian noise~\cite{ammanouil2019parallel}\\

\noindent \textbf{Contributions and outline.} 
We propose a Generalized Stein Unbiased GrAdient estimator of the Risk, for the case of Gaussian noise $\boldsymbol{\zeta}$ with \textit{any} covariance matrix $\boldsymbol{\mathcal{S}}$, using the framework of Ordinary Least Squares, that is~\eqref{eq:intro_risk} with $\boldsymbol{\mathcal{W}} = \boldsymbol{I}_{P}$, enabling to manage different noise levels and correlations in the observed data. \\
Section~\ref{sec:SURE} revisits Stein's Unbiased Estimator of the Risk in the particular case of correlated Gaussian noise with covariance matrix $\boldsymbol{\mathcal{S}}$ and derives the Finite Difference Monte Carlo SURE for this framework, extending
~\cite{deledalle2014stein}.
Further, we include a projection operator $\textbf{B} = \boldsymbol{\Pi}$ making the model versatile enough to fit various applications.\\
In this context, Finite Difference Monte Carlo SURE is differentiated with respect to regularization parameters leading to Finite Difference Monte Carlo Generalized Stein Unbiased GrAdient estimator of the Risk, whose asymptotic unbiasedness is demonstrated in~Section~\ref{sec:SUGAR}.
Generalized Stein Unbiased Risk Estimate and Generalized Stein Unbiased GrAdient estimate of the Risk are embedded in a quasi-Newton optimization scheme for automatic parameters tuning, presented in Section~\ref{subsec:BFGS}. 
Moreover, the case of sequential estimators is discussed in Section~\ref{subsec:seq_est}. \\
Then, in Section~\ref{sec:text_seg}, the entire proposed procedure is particularized to an original application to texture segmentation based on a wavelet (multiscale) estimation of fractal attributes, proposed in~\cite{pascal2018joint,pascal2019nonsmooth}. 
The texture model is cast into the general formulation~\eqref{eq:obs_gen_model}, $\boldsymbol{y}$ corresponding to a nonlinear multiscale transform of the image to be segmented. Hence the noise $\boldsymbol{\zeta}$ presents both inter-scale and intra-scale correlations, leading to a non-diagonal covariance matrix $\boldsymbol{\mathcal{S}}$.
Both Stein Unbiased Risk Estimate and Stein Unbiased GrAdient estimate of the Risk are evaluated with a Finite Difference Monte Carlo strategy, all steps of which are made explicit for the texture segmentation problem.\\
Finally, Section~\ref{sec:numerics} is devoted to exhaustive numerical simulations assessing the performance of the proposed texture segmentation with automatic regularization parameters tuning. We notably emphasize the importance of taking into account the \textit{full} covariance structure into account in Stein-based approaches.\\

\section{Stein Unbiased Risk Estimate (SURE) with correlated noise}
\label{sec:SURE}

This Section details the extension of Stein Unbiased Risk Estimator~\eqref{eq:intro_SURE} when $\boldsymbol{\mathcal{W}} = \boldsymbol{I}_P$ to the case when observations evidence correlated noise, leading to the Finite Difference Monte Carlo Generalized Stein Unbiased Risk Estimator, $\widehat{R}_{\nu, \boldsymbol{\varepsilon}}(\boldsymbol{y}; \boldsymbol{\Lambda} \lvert \boldsymbol{\mathcal{S}})$, defined in~\eqref{eq:SURE_FDMC_def}.\\

\noindent \textbf{Notations.} For a linear operator $\boldsymbol{\Phi} : \mathcal{H} \rightarrow \mathcal{G}$, the \textit{adjoint} operator is denoted $\boldsymbol{\Phi}^*$ and characterized by: $\text{for every } \boldsymbol{x} \in \mathcal{H}, \, \text{and } \boldsymbol{y} \in \mathcal{G}, \, \, \langle \boldsymbol{y}, \boldsymbol{\Phi} \boldsymbol{x} \rangle = \langle \boldsymbol{\Phi}^* \boldsymbol{y}, \boldsymbol{x} \rangle$.\\
The Jacobian  with respect to observations $\boldsymbol{y}$ of a differentiable estimator $\widehat{\boldsymbol{x}}(\boldsymbol{y};\boldsymbol{\Lambda})$ is denoted $\partial_{\boldsymbol{\Lambda}} \widehat{\boldsymbol{x}}(\boldsymbol{y};\boldsymbol{\Lambda})$.
\subsection{Observation model}

In this work, we consider observations $\boldsymbol{y}$, supposed to follow Model~\eqref{eq:obs_gen_model}, as stated in Assumption~\ref{hyp:gauss_noise} with a degradation operator $\boldsymbol{\Phi}$ assumed to be full-rank, as stated in Assumption~\ref{hyp:full_rank}.

\begin{hypothesis}[Gaussianity]
\label{hyp:gauss_noise}
The additive noise $\boldsymbol{\zeta} \in \mathcal{G}$ is Gaussian: $\boldsymbol{\zeta} \sim \mathcal{N}\left(\boldsymbol{0}_{P}, \boldsymbol{\mathcal{S}} \right)$, where $\boldsymbol{0}_{P}$ is the null vector of $\mathcal{G}$ and $\boldsymbol{\mathcal{S}} \in \mathbb{R}^{P \times P}$ is the covariance matrix of the noise, where $P = \dim(\mathcal{G})$.
Thus, the density probability law associated with the model~\eqref{eq:obs_gen_model} writes
\begin{align}
\label{eq:gauss_noise}
\boldsymbol{y} \sim \frac{1}{\sqrt{(2\pi)^P \lvert \mathrm{det} \boldsymbol{\mathcal{S}} \rvert}} \exp \left( - \frac{\lVert \boldsymbol{y}-\boldsymbol{\Phi} \bar{\boldsymbol{x}} \rVert_{\boldsymbol{\mathcal{S}}^{-1}}^2}{2}\right).
\end{align}
\end{hypothesis}

\begin{hypothesis}[Full-rank]
\label{hyp:full_rank}
The linear operator $\boldsymbol{\Phi}: \mathcal{H} \rightarrow \mathcal{G}$ is full rank, or equivalently $\boldsymbol{\Phi}^* \boldsymbol{\Phi}$ is invertible.
\end{hypothesis}

\subsection{Estimation problem}

Let $\widehat{\boldsymbol{x}}(\boldsymbol{y};\boldsymbol{\Lambda})$ be a parametric estimator of ground truth $\bar{\boldsymbol{x}} \in \mathcal{H}$, defined in a unique manner from observations $\boldsymbol{y} \in \mathcal{G}$ and hyperparameters $\boldsymbol{\Lambda} \in \mathbb{R}^L$. 
\begin{remark}
\label{rq:full_rank_use}
For instance, $\widehat{\boldsymbol{x}}(\boldsymbol{y} ; \boldsymbol{\Lambda})$ can be the Penalized Ordinary Least Squares estimator, defined in~\eqref{eq:LS_pen}. 
In this case full-rank Assumption~\ref{hyp:full_rank} ensures the unicity of the minimizer. 
Nevertheless, we emphasize that Sections~\ref{sec:SURE}~and~\ref{sec:SUGAR} address Problem~\eqref{eq:min_risk} in a more general framework. 
\end{remark}

The possibility that the quantity of interest might be a projection of $\bar{\boldsymbol{x}}$ on a the subspace $\mathcal{I}$ of $\mathcal{H}$ is considered. 
One can think for instance of physics problems, in which only part of variables have a physical interpretation.
\begin{definition}
\label{def:Pi}
The linear operator $\boldsymbol{\Pi} : \mathcal{H} \rightarrow \mathcal{H}$ performs the orthogonal projection on subspace $\mathcal{I}$ capturing relevant information about $\bar{\boldsymbol{x}}$.
Moreover, from both Assumption~\ref{hyp:full_rank} and the projection operator $\boldsymbol{\Pi}$, we define the linear operator $\textbf{A} : \mathcal{G} \rightarrow \mathcal{H}$ as the composition 
\begin{align}
\label{eq:def_A}
\textbf{A} \triangleq \boldsymbol{\Pi} \left(\boldsymbol{\Phi}^{*}\boldsymbol{\Phi}\right)^{-1}\boldsymbol{\Phi}^{*}. 
\end{align}
\end{definition}

The risk is defined as the \textit{projected estimation} error made on the quantity of interest $\boldsymbol{\Pi} \bar{\boldsymbol{x}}$ by the estimator, measured via an ordinary squared $\ell_2$-norm.
\begin{align}
\label{eq:risk_def}
 R[\widehat{\boldsymbol{x}}]( \boldsymbol{\Lambda}) \triangleq \mathbb{E}_{\boldsymbol{\zeta}}  \left\lVert \boldsymbol{\Pi} \widehat{\boldsymbol{x}}(\boldsymbol{y} ; \boldsymbol{\Lambda}) - \boldsymbol{\Pi}\bar{\boldsymbol{x}} \right\rVert_2^2.
\end{align}

\begin{remark}
Another usual definition of the risk involves the inverse of the covariance matrix~\cite{eldar2008generalized} through a Mahalanobis distance writing
\begin{align}
\label{eq:risk_eldar}
R_{\mathrm{M}}[\widehat{\boldsymbol{x}}](\boldsymbol{\Lambda}) \triangleq \mathbb{E}_{\boldsymbol{\zeta}}  \left\lVert \boldsymbol{\Pi} \widehat{\boldsymbol{x}}(\boldsymbol{y} ; \boldsymbol{\Lambda}) - \boldsymbol{\Pi}\bar{\boldsymbol{x}} \right\rVert_{\boldsymbol{\mathcal{S}}^{-1}}^2.
\end{align}
requiring the knowledge of $\boldsymbol{\mathcal{S}}^{-1}$, which might be non-trivial or even inaccessible for correlated noise presenting non-diagonal covariance matrix. Hence our approach uses exclusively ordinary quadratic risk defined in~\eqref{eq:risk_def}. Nevertheless, these two approaches, even though being different, shares interesting common points which will be mentioned briefly in the following (see Remark~\ref{rq:eldar_sure}).
\end{remark}

The aim of this work is automatic fine-tuning the regularization parameters $\boldsymbol{\Lambda}$ in order to minimize the ordinary risk~\eqref{eq:risk_def} defined above. 
Yet in practice, the optimal regularization parameters~$\boldsymbol{\Lambda}^\dagger$ satisfying
\begin{align}
\label{eq:risk_min}
\boldsymbol{\Lambda}^\dagger \in \underset{\boldsymbol{\Lambda} \in \mathbb{R}^L}{\mathrm{Argmin}} \,\, R[\widehat{\boldsymbol{x}}]( \boldsymbol{\Lambda}) 
\end{align}
is inaccessible. In the following, we propose a detailed procedure to closely approach $\boldsymbol{\Lambda}^\dagger$, by minimizing a Generalized Stein Unbiased Risk Estimator approximating $R[\widehat{\boldsymbol{x}}]( \boldsymbol{\Lambda})$.

\subsection{Generalized Stein Unbiased Risk Estimator}
\label{subsec:SURE}

The risk defined in~\eqref{eq:risk_def} depends explicitly on ground truth $\bar{\boldsymbol{x}}$ and hence is inaccessible. 
Stein proposed an unbiased estimator of this risk, known as Stein Unbiased Risk Estimator (SURE) in the case of i.i.d. Gaussian noise, recalled in Equation~\eqref{eq:intro_SURE}.
This estimator was then extended to very general noise distributions (see e.g.~\cite{eldar2008generalized} for Exponential Families, including Gaussian densities). 
In particular, when the noise $\boldsymbol{\zeta}$ is Gaussian, with possible non-trivial covariance matrix, Theorem~\ref{thm:SURE} provides a generalization of Stein's original estimator, which constitutes the starting point of this work. \\
Stein's approach for risk estimation crucially relies the following hypothesis on estimator $\widehat{\boldsymbol{x}}(\boldsymbol{y} ; \boldsymbol{\Lambda})$:
\begin{hypothesis}[Regularity and integrability]
\label{hyp:reg_int}
The estimator $\widehat{\boldsymbol{x}}(\boldsymbol{y};\boldsymbol{\Lambda})$ is continuous and weakly differentiable with respect to observations $\boldsymbol{y}$. Moreover, the quantities $\left\langle \textbf{A}^* \boldsymbol{\Pi}\widehat{\boldsymbol{x}}(\boldsymbol{y};\boldsymbol{\Lambda}), \boldsymbol{\zeta} \right\rangle$ and $\partial_{\boldsymbol{y}} \widehat{\boldsymbol{x}}(\boldsymbol{y};\boldsymbol{\Lambda})$ are integrable against the Gaussian density:
\begin{align*}
\frac{1}{\sqrt{(2\pi)^P \lvert \mathrm{det} \boldsymbol{\mathcal{S}} \rvert}} \exp \left( - \frac{\lVert \boldsymbol{\zeta} \rVert_{\boldsymbol{\mathcal{S}}^{-1}}^2}{2}\right) \, \mathrm{d}\boldsymbol{\zeta}.
\end{align*}
\end{hypothesis}
\begin{theorem}
\label{thm:SURE}
Consider Model~\eqref{eq:obs_gen_model}, together with Assumptions~\ref{hyp:gauss_noise}~(Gaussianity), \ref{hyp:full_rank}~(Full-rank), \ref{hyp:reg_int}~(Integrability), and linear operator $\textbf{A}$ defined in~\eqref{eq:def_A}.
Then generalized Stein's lemma applies, and leads to 
\begin{align}
\label{eq:SURE_thm}
R[\widehat{\boldsymbol{x}}]( \boldsymbol{\Lambda} )  = \mathbb{E}_{\boldsymbol{\zeta}} \left[\left\lVert  \textbf{A}\left( \boldsymbol{\Phi}\widehat{\boldsymbol{x}}(\boldsymbol{y} ; \boldsymbol{\Lambda}) - \boldsymbol{y} \right) \right\rVert_2^2 + 2 \mathrm{Tr} \left( \boldsymbol{\mathcal{S}}\textbf{A}^* \boldsymbol{\Pi} \partial_{\boldsymbol{y}}  \widehat{\boldsymbol{x}}(\boldsymbol{y} ; \boldsymbol{\Lambda}) \right) - \mathrm{Tr}(\textbf{A} \boldsymbol{\mathcal{S}} \textbf{A}^*  ) \right],
\end{align}
the quantity in the brackets being the so-called Generalized Stein Unbiased Risk Estimator.
\end{theorem}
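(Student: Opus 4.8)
The plan is to expand the ordinary quadratic risk~\eqref{eq:risk_def}, recognise the projected estimation error inside the proposed estimator, and dispatch the resulting cross term by a generalized Stein integration-by-parts adapted to the correlated covariance $\boldsymbol{\mathcal{S}}$. The pivotal algebraic fact is the identity $\textbf{A}\boldsymbol{\Phi} = \boldsymbol{\Pi}$, which follows at once from Definition~\ref{def:Pi} together with Assumption~\ref{hyp:full_rank}: indeed $\textbf{A}\boldsymbol{\Phi} = \boldsymbol{\Pi}(\boldsymbol{\Phi}^{*}\boldsymbol{\Phi})^{-1}\boldsymbol{\Phi}^{*}\boldsymbol{\Phi} = \boldsymbol{\Pi}$, the inverse being well defined precisely because $\boldsymbol{\Phi}$ is full rank. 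This is what converts the accessible data-fidelity term into the inaccessible estimation error.

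First I would substitute $\boldsymbol{y} = \boldsymbol{\Phi}\bar{\boldsymbol{x}} + \boldsymbol{\zeta}$ and apply $\textbf{A}\boldsymbol{\Phi} = \boldsymbol{\Pi}$ to rewrite
\[
\textbf{A}\left(\boldsymbol{\Phi}\widehat{\boldsymbol{x}}(\boldsymbol{y};\boldsymbol{\Lambda}) - \boldsymbol{y}\right) = \left(\boldsymbol{\Pi}\widehat{\boldsymbol{x}}(\boldsymbol{y};\boldsymbol{\Lambda}) - \boldsymbol{\Pi}\bar{\boldsymbol{x}}\right) - \textbf{A}\boldsymbol{\zeta}.
\]
Denoting by $\boldsymbol{e} \triangleq \boldsymbol{\Pi}\widehat{\boldsymbol{x}}(\boldsymbol{y};\boldsymbol{\Lambda}) - \boldsymbol{\Pi}\bar{\boldsymbol{x}}$ the projected error whose squared norm is exactly the risk, expanding the square and taking $\mathbb{E}_{\boldsymbol{\zeta}}$ yields
\[
\mathbb{E}_{\boldsymbol{\zeta}}\left\lVert \textbf{A}(\boldsymbol{\Phi}\widehat{\boldsymbol{x}} - \boldsymbol{y})\right\rVert_2^2 = R[\widehat{\boldsymbol{x}}](\boldsymbol{\Lambda}) - 2\,\mathbb{E}_{\boldsymbol{\zeta}}\langle \boldsymbol{e}, \textbf{A}\boldsymbol{\zeta}\rangle + \mathbb{E}_{\boldsymbol{\zeta}}\lVert \textbf{A}\boldsymbol{\zeta}\rVert_2^2.
\]
The last term is a pure second-moment computation: since $\mathbb{E}_{\boldsymbol{\zeta}}[\boldsymbol{\zeta}\boldsymbol{\zeta}^{*}] = \boldsymbol{\mathcal{S}}$ by Assumption~\ref{hyp:gauss_noise}, one gets $\mathbb{E}_{\boldsymbol{\zeta}}\lVert \textbf{A}\boldsymbol{\zeta}\rVert_2^2 = \mathrm{Tr}(\textbf{A}^{*}\textbf{A}\boldsymbol{\mathcal{S}}) = \mathrm{Tr}(\textbf{A}\boldsymbol{\mathcal{S}}\textbf{A}^{*})$, reproducing the constant term of~\eqref{eq:SURE_thm}. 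In the cross term the deterministic part $\boldsymbol{\Pi}\bar{\boldsymbol{x}}$ of $\boldsymbol{e}$ drops out because $\mathbb{E}_{\boldsymbol{\zeta}}[\textbf{A}\boldsymbol{\zeta}] = \boldsymbol{0}$, leaving $\mathbb{E}_{\boldsymbol{\zeta}}\langle \boldsymbol{e}, \textbf{A}\boldsymbol{\zeta}\rangle = \mathbb{E}_{\boldsymbol{\zeta}}\langle \textbf{A}^{*}\boldsymbol{\Pi}\widehat{\boldsymbol{x}}(\boldsymbol{y};\boldsymbol{\Lambda}), \boldsymbol{\zeta}\rangle$.

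The heart of the argument, and the step I expect to be the main obstacle, is the evaluation of this remaining expectation through a generalized Stein identity for correlated Gaussians. Setting $\boldsymbol{h}(\boldsymbol{\zeta}) \triangleq \textbf{A}^{*}\boldsymbol{\Pi}\widehat{\boldsymbol{x}}(\boldsymbol{\Phi}\bar{\boldsymbol{x}} + \boldsymbol{\zeta};\boldsymbol{\Lambda})$, I would exploit the elementary relation $\boldsymbol{\zeta}\, p(\boldsymbol{\zeta}) = -\boldsymbol{\mathcal{S}}\,\nabla_{\boldsymbol{\zeta}} p(\boldsymbol{\zeta})$ for the Gaussian density $p$ of~\eqref{eq:gauss_noise}, and integrate by parts componentwise to reach $\mathbb{E}_{\boldsymbol{\zeta}}[\boldsymbol{\zeta}^{*}\boldsymbol{h}(\boldsymbol{\zeta})] = \mathbb{E}_{\boldsymbol{\zeta}}[\mathrm{Tr}(\boldsymbol{\mathcal{S}}\,\partial_{\boldsymbol{\zeta}}\boldsymbol{h}(\boldsymbol{\zeta}))]$, where the symmetry of $\boldsymbol{\mathcal{S}}$ is what permits writing the trace in this order. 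This is exactly where Assumption~\ref{hyp:reg_int} becomes indispensable: weak differentiability guarantees that the Jacobian $\partial_{\boldsymbol{y}}\widehat{\boldsymbol{x}}$ exists almost everywhere, while the stated integrability of $\langle \textbf{A}^{*}\boldsymbol{\Pi}\widehat{\boldsymbol{x}}, \boldsymbol{\zeta}\rangle$ and of $\partial_{\boldsymbol{y}}\widehat{\boldsymbol{x}}$ against the Gaussian density licenses both the vanishing of the boundary terms at infinity (controlled by the exponential decay of $p$) and the interchange of expectation and differentiation. Since $\boldsymbol{y} = \boldsymbol{\Phi}\bar{\boldsymbol{x}} + \boldsymbol{\zeta}$ gives $\partial_{\boldsymbol{\zeta}}\boldsymbol{h} = \textbf{A}^{*}\boldsymbol{\Pi}\,\partial_{\boldsymbol{y}}\widehat{\boldsymbol{x}}(\boldsymbol{y};\boldsymbol{\Lambda})$, the cross term equals $2\,\mathbb{E}_{\boldsymbol{\zeta}}[\mathrm{Tr}(\boldsymbol{\mathcal{S}}\textbf{A}^{*}\boldsymbol{\Pi}\,\partial_{\boldsymbol{y}}\widehat{\boldsymbol{x}})]$; collecting the three contributions and solving for $R[\widehat{\boldsymbol{x}}](\boldsymbol{\Lambda})$ yields~\eqref{eq:SURE_thm}. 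The expansion and the second-moment computation are routine; the only delicate point is checking that the regularity and integrability of Assumption~\ref{hyp:reg_int} are strong enough to justify the integration by parts in the weak sense, which is the standard technical subtlety underlying all Stein-type lemmas.
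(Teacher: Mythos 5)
Your proof is correct and follows the same overall skeleton as the paper's: expand the quadratic to express the risk in terms of the accessible residual $\textbf{A}(\boldsymbol{\Phi}\widehat{\boldsymbol{x}}-\boldsymbol{y})$, compute the pure-noise second moment as $\mathrm{Tr}(\textbf{A}\boldsymbol{\mathcal{S}}\textbf{A}^*)$, and reduce the cross term to the degrees of freedom $\mathbb{E}_{\boldsymbol{\zeta}}\langle\textbf{A}^*\boldsymbol{\Pi}\widehat{\boldsymbol{x}},\boldsymbol{\zeta}\rangle$ via the identity $\textbf{A}\boldsymbol{\Phi}=\boldsymbol{\Pi}$ (the paper phrases the expansion through the predictor $\widehat{\boldsymbol{y}}=\boldsymbol{\Phi}\widehat{\boldsymbol{x}}$ and the decomposition $\widehat{\boldsymbol{y}}-\bar{\boldsymbol{y}}=(\widehat{\boldsymbol{y}}-\boldsymbol{y})+(\boldsymbol{y}-\bar{\boldsymbol{y}})$, which is algebraically the same computation read in the other direction). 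The one place you genuinely diverge is the generalized Stein step. You integrate by parts directly in $\mathbb{R}^P$ using the gradient identity $\boldsymbol{\zeta}\,p(\boldsymbol{\zeta})=-\boldsymbol{\mathcal{S}}\nabla_{\boldsymbol{\zeta}}p(\boldsymbol{\zeta})$ and the symmetry of $\boldsymbol{\mathcal{S}}$; the paper instead argues that, because of the off-diagonal terms of $\boldsymbol{\mathcal{S}}^{-1}$, the coordinatewise integration by parts is not directly licensed, so it first diagonalizes $\boldsymbol{\mathcal{S}}^{-1}=\boldsymbol{\mathcal{V}}^*\boldsymbol{\mathcal{D}}\boldsymbol{\mathcal{V}}$, changes variables to the decorrelated $\boldsymbol{\vartheta}=\boldsymbol{\mathcal{V}}\boldsymbol{\zeta}$, applies the scalar Stein lemma in each coordinate, and transports the result back using cyclicity of the trace. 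The two routes are equivalent in substance: your version is shorter and makes the role of the symmetry of $\boldsymbol{\mathcal{S}}$ explicit, while the paper's decorrelation reduces everything to the classical one-dimensional lemma so that the boundary-term and integrability checks are exactly the textbook ones. Either way, the burden you correctly identify --- that Assumption~\ref{hyp:reg_int} must be strong enough to kill the boundary terms and justify the weak integration by parts --- is the same in both presentations, so nothing is lost or gained in rigor; it is purely a matter of where the change of variables is performed.
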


\begin{proof}
A detailed proof is provided in Appendix~\ref{app:SURE}.
\end{proof}

\begin{remark}
\label{rq:eldar_sure}
Interestingly, when considering the squared Mahalanobis distance in the defintion of the risk~\eqref{eq:risk_eldar}, Stein Unbiased Risk Estimator has the same global structure, yet, instead of involving the covariance matrix $\boldsymbol{\mathcal{S}}$ it involves its inverse writing
\begin{align}
\widetilde{R}[\widehat{\boldsymbol{x}}]( \boldsymbol{\Lambda} )  = \mathbb{E}_{\boldsymbol{\zeta}} \left[\left\lVert  \textbf{A}\left( \boldsymbol{\Phi}\widehat{\boldsymbol{x}}(\boldsymbol{y} ; \boldsymbol{\Lambda}) - \boldsymbol{y} \right) \right\rVert_{\boldsymbol{\mathcal{S}}^{-1}}^2 + 2 \mathrm{Tr} \left( \textbf{A}^* \boldsymbol{\Pi} \partial_{\boldsymbol{y}}  \widehat{\boldsymbol{x}}(\boldsymbol{y} ; \boldsymbol{\Lambda}) \right) - \mathrm{Tr}(\textbf{A} \textbf{A}^*  ) \right].
\end{align}
\end{remark}

\subsection{Finite Difference Monte Carlo SURE}
\label{sec:est_dof}

In the proposed SURE expression~\eqref{eq:SURE_thm}, the quantity $ \mathrm{Tr} \left( \boldsymbol{\mathcal{S}}\textbf{A}^* \boldsymbol{\Pi} \partial_{\boldsymbol{y}}  \widehat{\boldsymbol{x}}(\boldsymbol{y};\boldsymbol{\Lambda}) \right)$ appearing in~\eqref{eq:SURE_thm}, called the \textit{degrees of freedom}, concentrates the major difficulties in computing Stein's estimator in data processing problems, as evidenced by the prolific literature addressing this issue in the case $\boldsymbol{\mathcal{S}} \propto \boldsymbol{I}_P$~\cite{kato2009degrees,tibshirani2012degrees,dossal2013degrees,vaiter2017degrees}.
Indeed, it involves the product of the $P \times P$ matrix $\boldsymbol{\mathcal{S}} \textbf{A}^* \boldsymbol{\Pi}$ with the $P \times P$ Jacobian matrix $\partial_{\boldsymbol{y}} ( \widehat{\boldsymbol{x}}(\boldsymbol{y}, \boldsymbol{\Lambda})) $. 
Not only the product of two $P \times P$ matrices might be extremely costly in computational efforts  but also the Jacobian matrix, because of its large size, $P \gg 1$, might also be very demanding to compute (or even to estimate).
Two-step Finite Difference Monte Carlo strategy together with Assumption~\ref{hyp:lip_L1} presented below, enable to overcome theses difficulties and to built a usable Stein Unbiased Risk Estimator, denoted $\widehat{R}_{\nu, \boldsymbol{\varepsilon}}(\boldsymbol{y}; \boldsymbol{\Lambda} \lvert \boldsymbol{\mathcal{S}}) $, defined in Equation~\eqref{eq:SURE_FDMC_def}. \\ 

\begin{hypothesis}[Lipschitzianity w.r.t. observations]
\label{hyp:lip_L1}
Let $\widehat{\boldsymbol{x}}(\boldsymbol{y};\boldsymbol{\Lambda})$ an estimator of $\bar{\boldsymbol{x}}$, depending on observations $\boldsymbol{y}$, and parametrized by $\boldsymbol{\Lambda}$.  \\ 
\textit{(i)}~The mapping $\boldsymbol{y} \mapsto \widehat{\boldsymbol{x}}(\boldsymbol{y};\boldsymbol{\Lambda})$ is uniformly $L_1$-Lipschitz .\\
\textit{(ii)}~$\forall \, \boldsymbol{\Lambda} \in \mathbb{R}^L, \, \widehat{\boldsymbol{x}}(\boldsymbol{0}_{P}; \boldsymbol{\Lambda}) = \boldsymbol{0}_{N}$, with $\boldsymbol{0}_{N}$ (resp. $\boldsymbol{0}_{P}$) the null vector of $\mathcal{H}$ (resp. $\mathcal{G}$).\\
\end{hypothesis}

\noindent \textbf{Step 1.}
\underline{Trace estimation via Monte Carlo:}\\
In the way to practical degrees of freedom estimation, the first step is to remark that it far less costly to compute the product of the $P \times P$ matrix $\boldsymbol{\mathcal{S}} \textbf{A}^* \boldsymbol{\Pi}$ with $\partial_{\boldsymbol{y}}  ( \widehat{\boldsymbol{x}}(\boldsymbol{y}, \boldsymbol{\Lambda})) [\boldsymbol{\varepsilon}] \in \mathbb{R}^P $, the Jacobian matrix applied on a vector $\boldsymbol{\varepsilon} \in \mathbb{R}^P$.
Further, straightforward computation shows that if $\boldsymbol{\varepsilon} \in \mathbb{R}^P$ is a normalized random variable $\boldsymbol{\varepsilon} \sim \mathcal{N}(\boldsymbol{0}_P, \boldsymbol{I}_P)$, and $\textbf{M} \in \mathbb{R}^{P\times P}$ any matrix, then 
\begin{align*}
\mathrm{Tr}(\textbf{M}) = \mathbb{E}_{\boldsymbol{\varepsilon}} \langle \textbf{M}\boldsymbol{\varepsilon}, \boldsymbol{\varepsilon}\rangle. 
\end{align*}
Thus, following the suggestion of~\cite{girard1989fast, ramani2008monte, deledalle2014stein}, if one has access to $\partial_{\boldsymbol{y}} ( \widehat{\boldsymbol{x}}(\boldsymbol{y}, \boldsymbol{\Lambda})) \left[ \boldsymbol{\varepsilon} \right]$, then, since $\boldsymbol{\mathcal{S}}$ is a covariance matrix and hence is symmetric,
\begin{align}
\label{eq:DOF_MC}
\mathrm{Tr} \left( \boldsymbol{\mathcal{S}}\textbf{A}^* \boldsymbol{\Pi} \partial_{\boldsymbol{y}}  \widehat{\boldsymbol{x}}(\boldsymbol{y} ; \boldsymbol{\Lambda})  \right) = \mathbb{E}_{\boldsymbol{\varepsilon}}\left\langle \boldsymbol{\mathcal{S}}\textbf{A}^* \boldsymbol{\Pi} \partial_{\boldsymbol{y}}  \widehat{\boldsymbol{x}}(\boldsymbol{y} ; \boldsymbol{\Lambda}) [\boldsymbol{\varepsilon}], \boldsymbol{\varepsilon} \right\rangle \nonumber \\
= \mathbb{E}_{\boldsymbol{\varepsilon}}\left\langle \textbf{A}^* \boldsymbol{\Pi} \partial_{\boldsymbol{y}}  \widehat{\boldsymbol{x}}(\boldsymbol{y} ; \boldsymbol{\Lambda}) [\boldsymbol{\varepsilon}], \boldsymbol{\mathcal{S}}\boldsymbol{\varepsilon} \right\rangle,
\end{align}
and
$
\left\langle \textbf{A}^* \boldsymbol{\Pi} \partial_{\boldsymbol{y}}  \widehat{\boldsymbol{x}}(\boldsymbol{y} ; \boldsymbol{\Lambda}) [\boldsymbol{\varepsilon}], \boldsymbol{\mathcal{S}}\boldsymbol{\varepsilon} \right\rangle
$
provides an estimator of degrees of freedom.\\

\vspace{2mm}
\noindent \textbf{Step 2.}
 \underline{First-order derivative estimation with Finite Differences:} \\
 Second step consists in tackling the problem of estimating $\partial_{\boldsymbol{y}} ( \widehat{\boldsymbol{x}}(\boldsymbol{y}, \boldsymbol{\Lambda})) \left[ \boldsymbol{\varepsilon} \right]$ when no direct access to the Jacobian $\partial_{\boldsymbol{y}} ( \widehat{\boldsymbol{x}}(\boldsymbol{y}, \boldsymbol{\Lambda})) $ is possible. In this case, the derivative can be estimated using the normalized random variable $\boldsymbol{\varepsilon}$ and a step $\nu>0$ making use of Taylor expansion 
\begin{align*}
&\widehat{\boldsymbol{x}}(\boldsymbol{y} + \nu \boldsymbol{\varepsilon}; \boldsymbol{\Lambda}) - \widehat{\boldsymbol{x}}(\boldsymbol{y} ; \boldsymbol{\Lambda})  \underset{\nu \rightarrow 0}{\simeq}  \partial_{\boldsymbol{y}} (\widehat{\boldsymbol{x}}(\boldsymbol{y}; \boldsymbol{\Lambda}))  \left[ \nu\boldsymbol{\varepsilon} \right] \\
\Longleftrightarrow  \quad & \partial_{\boldsymbol{y}}  \widehat{\boldsymbol{x}}(\boldsymbol{y} ; \boldsymbol{\Lambda}) \left[ \boldsymbol{\varepsilon} \right] = \lim\limits_{\nu \rightarrow 0} \frac{1}{\nu} \left( \widehat{\boldsymbol{x}}(\boldsymbol{y} + \nu \boldsymbol{\varepsilon}; \boldsymbol{\Lambda}) - \widehat{\boldsymbol{x}}(\boldsymbol{y} ; \boldsymbol{\Lambda})  \right).
\end{align*}

It follows
\begin{align}
\mathrm{Tr} \left( \boldsymbol{\mathcal{S}}\textbf{A}^* \boldsymbol{\Pi} \partial_{\boldsymbol{y}}  \widehat{\boldsymbol{x}}(\boldsymbol{y} ; \boldsymbol{\Lambda}) \right) &= \mathbb{E}_{\boldsymbol{\varepsilon}} \lim\limits_{\nu \rightarrow 0} \frac{1}{\nu} \left\langle \boldsymbol{\mathcal{S}}\textbf{A}^* \boldsymbol{\Pi} \left( \widehat{\boldsymbol{x}}(\boldsymbol{y} + \nu \boldsymbol{\varepsilon}; \boldsymbol{\Lambda}) - \widehat{\boldsymbol{x}}(\boldsymbol{y} ; \boldsymbol{\Lambda})\right), \boldsymbol{\varepsilon}  \right\rangle \nonumber\\
\label{eq:DOF_FDMC} &= \mathbb{E}_{\boldsymbol{\varepsilon}} \lim\limits_{\nu \rightarrow 0}  \frac{1}{\nu} \left\langle \textbf{A}^* \boldsymbol{\Pi} \left( \widehat{\boldsymbol{x}}(\boldsymbol{y} + \nu \boldsymbol{\varepsilon}; \boldsymbol{\Lambda}) - \widehat{\boldsymbol{x}}(\boldsymbol{y} ; \boldsymbol{\Lambda})\right), \boldsymbol{\mathcal{S}}\boldsymbol{\varepsilon}  \right\rangle.
\end{align}
Elaborating on Formula~\eqref{eq:DOF_FDMC} and Assumption~\ref{hyp:lip_L1}, the following theorem provides an asymptotically unbiased Finite Differences Monte Carlo estimator of the risk, which can be used in a vast variety of estimation problems.\\

\vspace{2mm}

\begin{theorem}
\label{thm:SURE_FDMC}
Consider the observation Model~\eqref{eq:obs_gen_model}, the operator $\textbf{A} $ defined in~\eqref{eq:def_A} together with Assumptions~\ref{hyp:gauss_noise}~(Gaussianity), \ref{hyp:full_rank}~(Full-rank), \ref{hyp:reg_int}~(Integrability), and \ref{hyp:lip_L1}~(Lipschitzianity w.r.t. $\boldsymbol{y}$). 
Generalized Finite Differences Monte Carlo SURE, writing
\begin{align}
\label{eq:SURE_FDMC_def}
& \widehat{R}_{\nu, \boldsymbol{\varepsilon}}(\boldsymbol{y}; \boldsymbol{\Lambda} \lvert \boldsymbol{\mathcal{S}}) \triangleq  \left\lVert  \textbf{A}\left( \boldsymbol{\Phi}\widehat{\boldsymbol{x}}(\boldsymbol{y} ; \boldsymbol{\Lambda}) - \boldsymbol{y} \right) \right\rVert_2^2\\
& + \frac{2}{\nu} \left\langle \textbf{A}^* \boldsymbol{\Pi} \left( \widehat{\boldsymbol{x}}(\boldsymbol{y} + \nu \boldsymbol{\varepsilon}; \boldsymbol{\Lambda}) - \widehat{\boldsymbol{x}}(\boldsymbol{y} ; \boldsymbol{\Lambda})\right), \boldsymbol{\mathcal{S}}\boldsymbol{\varepsilon}  \right\rangle- \mathrm{Tr}(\textbf{A} \boldsymbol{\mathcal{S}} \textbf{A}^*  ), \nonumber
\end{align}
is an asymptotically unbiased estimator of the risk $R[\widehat{\boldsymbol{x}}]( \boldsymbol{\Lambda}) $ as $\nu \rightarrow 0$, meaning that
\begin{align}
 \lim\limits_{\nu \rightarrow 0} \mathbb{E}_{\boldsymbol{\zeta}, \boldsymbol{\varepsilon}} \widehat{R}_{\nu, \boldsymbol{\varepsilon}}(\boldsymbol{y} ; \boldsymbol{\Lambda}\lvert \boldsymbol{\mathcal{S}}) = R[\widehat{\boldsymbol{x}}]( \boldsymbol{\Lambda}) .
\end{align}
\end{theorem}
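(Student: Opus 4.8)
The plan is to take the double expectation $\mathbb{E}_{\boldsymbol{\zeta}, \boldsymbol{\varepsilon}}$ of the estimator $\widehat{R}_{\nu, \boldsymbol{\varepsilon}}(\boldsymbol{y}; \boldsymbol{\Lambda} \lvert \boldsymbol{\mathcal{S}})$ defined in~\eqref{eq:SURE_FDMC_def} and to compare it term-by-term with the expression for $R[\widehat{\boldsymbol{x}}](\boldsymbol{\Lambda})$ furnished by Theorem~\ref{thm:SURE} in~\eqref{eq:SURE_thm}. The first summand $\lVert \textbf{A}(\boldsymbol{\Phi}\widehat{\boldsymbol{x}}(\boldsymbol{y};\boldsymbol{\Lambda}) - \boldsymbol{y}) \rVert_2^2$ depends on $\boldsymbol{y}$ alone, so its expectation over the joint law of $(\boldsymbol{\zeta}, \boldsymbol{\varepsilon})$ reduces to $\mathbb{E}_{\boldsymbol{\zeta}}$ of itself and matches the data-fidelity term of~\eqref{eq:SURE_thm} verbatim; its integrability is guaranteed by Assumption~\ref{hyp:lip_L1}, which forces at most linear growth $\lVert \widehat{\boldsymbol{x}}(\boldsymbol{y};\boldsymbol{\Lambda}) \rVert \leq L_1 \lVert \boldsymbol{y} \rVert$ and hence square-integrability against the Gaussian measure. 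The deterministic closing term $-\mathrm{Tr}(\textbf{A}\boldsymbol{\mathcal{S}}\textbf{A}^*)$ is unaffected by the expectation. Consequently the entire content of the theorem reduces to controlling the finite-difference middle term.

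Concretely, I would prove that
\begin{align}
\lim_{\nu \to 0} \mathbb{E}_{\boldsymbol{\zeta}, \boldsymbol{\varepsilon}} \frac{1}{\nu} \left\langle \textbf{A}^* \boldsymbol{\Pi} \left( \widehat{\boldsymbol{x}}(\boldsymbol{y} + \nu \boldsymbol{\varepsilon}; \boldsymbol{\Lambda}) - \widehat{\boldsymbol{x}}(\boldsymbol{y}; \boldsymbol{\Lambda}) \right), \boldsymbol{\mathcal{S}} \boldsymbol{\varepsilon} \right\rangle = \mathbb{E}_{\boldsymbol{\zeta}} \mathrm{Tr} \left( \boldsymbol{\mathcal{S}} \textbf{A}^* \boldsymbol{\Pi} \partial_{\boldsymbol{y}} \widehat{\boldsymbol{x}}(\boldsymbol{y}; \boldsymbol{\Lambda}) \right).
\end{align}
For each fixed $\boldsymbol{\zeta}$ and $\boldsymbol{\varepsilon}$, and at every $\boldsymbol{y}$ where $\widehat{\boldsymbol{x}}(\cdot; \boldsymbol{\Lambda})$ is differentiable, the Taylor expansion recalled before~\eqref{eq:DOF_FDMC} gives the pointwise limit $\langle \textbf{A}^* \boldsymbol{\Pi} \partial_{\boldsymbol{y}} \widehat{\boldsymbol{x}}(\boldsymbol{y};\boldsymbol{\Lambda})[\boldsymbol{\varepsilon}], \boldsymbol{\mathcal{S}} \boldsymbol{\varepsilon} \rangle$; since the $L_1$-Lipschitz map is differentiable Lebesgue-almost everywhere by Rademacher's theorem and the law of $\boldsymbol{y}$ is absolutely continuous, this convergence holds almost surely. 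The remaining and decisive step is to exchange the limit with the double expectation.

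The hard part will be justifying this interchange, and this is exactly where Assumption~\ref{hyp:lip_L1}\textit{(i)} is indispensable. I would bound the difference quotient uniformly in $\nu$ and $\boldsymbol{\zeta}$: by Cauchy--Schwarz, $\lVert \boldsymbol{\Pi} \rVert = 1$, the uniform Lipschitz bound $\lVert \widehat{\boldsymbol{x}}(\boldsymbol{y} + \nu \boldsymbol{\varepsilon}; \boldsymbol{\Lambda}) - \widehat{\boldsymbol{x}}(\boldsymbol{y};\boldsymbol{\Lambda}) \rVert \leq L_1 \nu \lVert \boldsymbol{\varepsilon} \rVert$, and the operator norms of $\textbf{A}$ and $\boldsymbol{\mathcal{S}}$, one obtains
\begin{align}
\left| \frac{1}{\nu} \left\langle \textbf{A}^* \boldsymbol{\Pi} \left( \widehat{\boldsymbol{x}}(\boldsymbol{y} + \nu \boldsymbol{\varepsilon}; \boldsymbol{\Lambda}) - \widehat{\boldsymbol{x}}(\boldsymbol{y}; \boldsymbol{\Lambda}) \right), \boldsymbol{\mathcal{S}} \boldsymbol{\varepsilon} \right\rangle \right| \leq L_1 \lVert \textbf{A} \rVert \, \lVert \boldsymbol{\mathcal{S}} \rVert \, \lVert \boldsymbol{\varepsilon} \rVert_2^2.
\end{align}
The dominating function $C \lVert \boldsymbol{\varepsilon} \rVert_2^2$, with $C = L_1 \lVert \textbf{A} \rVert \lVert \boldsymbol{\mathcal{S}} \rVert$, is independent of both $\nu$ and $\boldsymbol{\zeta}$ and satisfies $\mathbb{E}_{\boldsymbol{\varepsilon}} \lVert \boldsymbol{\varepsilon} \rVert_2^2 = P < \infty$, hence is integrable against the product Gaussian measure of $(\boldsymbol{\zeta}, \boldsymbol{\varepsilon})$. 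The dominated convergence theorem then legitimates pushing $\lim_{\nu \to 0}$ inside $\mathbb{E}_{\boldsymbol{\zeta}, \boldsymbol{\varepsilon}}$.

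Finally, once the limit is inside, I would invoke Fubini to integrate over $\boldsymbol{\varepsilon}$ first and apply the Monte Carlo trace identity of Step~1, namely~\eqref{eq:DOF_MC}, to recognize $\mathbb{E}_{\boldsymbol{\varepsilon}} \langle \textbf{A}^* \boldsymbol{\Pi} \partial_{\boldsymbol{y}} \widehat{\boldsymbol{x}}(\boldsymbol{y};\boldsymbol{\Lambda})[\boldsymbol{\varepsilon}], \boldsymbol{\mathcal{S}} \boldsymbol{\varepsilon} \rangle = \mathrm{Tr}(\boldsymbol{\mathcal{S}} \textbf{A}^* \boldsymbol{\Pi} \partial_{\boldsymbol{y}} \widehat{\boldsymbol{x}}(\boldsymbol{y};\boldsymbol{\Lambda}))$, after which the residual $\mathbb{E}_{\boldsymbol{\zeta}}$ reassembles exactly the degrees-of-freedom term of~\eqref{eq:SURE_thm}. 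Collecting the three matched terms, with the middle one carrying the factor $2$ of~\eqref{eq:SURE_FDMC_def}, reproduces $R[\widehat{\boldsymbol{x}}](\boldsymbol{\Lambda})$ in the limit, which is the claim.
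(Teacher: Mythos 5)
Your proposal is correct and follows essentially the same route as the paper's proof in Appendix~\ref{app:SURE_FDMC}: reduce the claim to the degrees-of-freedom term, use the pointwise Taylor/a.e.-differentiability limit, and dominate the difference quotient by $L_1 \lVert \textbf{A}\rVert\,\lVert\boldsymbol{\Pi}\rVert\,\lVert\boldsymbol{\mathcal{S}}\rVert\,\lVert\boldsymbol{\varepsilon}\rVert^2$, which is integrable against the product Gaussian measure and independent of $\nu$ and $\boldsymbol{\zeta}$, so dominated convergence justifies the interchange. The only cosmetic difference is that the paper interchanges the limit first with the $\boldsymbol{\varepsilon}$-integral and then with the $\boldsymbol{\zeta}$-expectation in two steps, whereas you do both at once on the product measure; the substance is identical.
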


\begin{proof}
The proof of Theorem~\ref{thm:SURE_FDMC} is postponed to Appendix~\ref{app:SURE_FDMC}.
\end{proof}

\begin{remark}
\label{rq:sparsity_S}
The use of Monte Carlo strategy is advocated in~\cite{deledalle2014stein} so that to reduce the complexity of SURE evaluation, replacing costly $P \times P$ matrices product by products of $P \times P$ matrix by vector of size $P$. 
Yet, the product of $\boldsymbol{S} \in \mathbb{R}^{P\times P}$ with $\boldsymbol{\varepsilon} \in \mathbb{R}^P$, as well as the product of $\textbf{A}^* \boldsymbol{\Pi} \in \mathbb{R}^{P \times N}$ with $\widehat{\boldsymbol{x}}(\boldsymbol{y}; \boldsymbol{\Lambda}) \in \mathbb{R}^N$, might still be extremely costly. Hopefully, we will see that, in data processing problems (e.g. for texture segmentation in Section~\ref{sec:text_seg}), both the covariance matrix $\boldsymbol{\mathcal{S}}$ and linear operator $\textbf{A}$ (through the degradation $\boldsymbol{\Phi}$) benefit from sufficient sparsity so that the calculations can be handled at a reasonable cost.
\end{remark}

\section{Stein's Unbiased GrAdient estimator of the Risk (SUGAR)}
\label{sec:SUGAR}

From the estimator of the risk $\widehat{R}_{\nu, \boldsymbol{\varepsilon}}(\boldsymbol{y}; \boldsymbol{\Lambda}\lvert \boldsymbol{\mathcal{S}})$ provided in previous Section~\ref{subsec:SURE}, basic grid search approach could be performed, in order to estimate the optimal $\boldsymbol{\Lambda}^\dagger$, as defined in~\eqref{eq:risk_min}. 
Yet, the exploration of a fine grid of $\boldsymbol{\Lambda} \in \mathbb{R}^L$ might be time consuming if the evaluation of $\widehat{\boldsymbol{x}}(\boldsymbol{y};\boldsymbol{\Lambda})$ is costly, which is the case when $\widehat{\boldsymbol{x}}(\boldsymbol{y};\boldsymbol{\Lambda})$ is \textit{sequential}, \textit{i.e.} obtained from an optimization scheme. 
Moreover, the size of a grid in $\mathbb{R}^L$ with given step size grows algebraically with $L$.
Altogether, this precludes grid search when $L > 2$.\\ 
Inspiring from~\cite{deledalle2014stein}, this section addresses this issue in the extended case of correlated noise.
We provide in Equation~\eqref{eq:SUGAR_FDMC} a generalized estimator $\nablaL_{\boldsymbol{\Lambda}}\widehat{R}_{\nu, \boldsymbol{\varepsilon}}(\boldsymbol{y}; \boldsymbol{\Lambda}\lvert \boldsymbol{\mathcal{S}})\in \mathbb{R}^L$ of the gradient of the 
risk with respect to hyperparameters $\boldsymbol{\Lambda}$.
Further, we demonstrate that the Finite Difference Monte Carlo estimator $\nablaL_{\boldsymbol{\Lambda}}\widehat{R}_{\nu, \boldsymbol{\varepsilon}}(\boldsymbol{y}; \boldsymbol{\Lambda}\lvert \boldsymbol{\mathcal{S}})$ is an asymptotically unbiased estimator of the gradient of the risk~\eqref{eq:risk_def} with respect to $\boldsymbol{\Lambda}$.\\
In Algorithm~\ref{alg:PD}, we provide an example of sequential estimator, relying on an accelerated primal-dual scheme, designed to solve~\eqref{eq:LS_pen}, with its differentiated 
counterpart, providing both $\widehat{\boldsymbol{x}}(\boldsymbol{y};\boldsymbol{\Lambda})$ and its Jacobian $\nablaL_{\boldsymbol{\Lambda}}\widehat{\boldsymbol{x}}(\boldsymbol{y};\boldsymbol{\Lambda}) \in \mathbb{R}^{N \times L}$ with respect to $\boldsymbol{\Lambda}$.\\
Hence, costly grid search can be avoided, the estimation of $\boldsymbol{\Lambda}^\dagger$ being performed by a quasi-Newton descent, described in Algorithm~\ref{alg:BFGS}, which minimizes the estimated risk $\widehat{R}_{\nu, \boldsymbol{\varepsilon}}(\boldsymbol{y}; \boldsymbol{\Lambda}\lvert \boldsymbol{\mathcal{S}})$, making use of its gradient $\nablaL_{\boldsymbol{\Lambda}}\widehat{R}_{\nu, \boldsymbol{\varepsilon}}(\boldsymbol{y}; \boldsymbol{\Lambda}\lvert \boldsymbol{\mathcal{S}})$.

\subsection{Differentiation of Stein Unbiased Risk Estimate}
\label{subsec:SUGAR}

\begin{claim}
\label{claim:SUGAR}
Consider the observation Model~\eqref{eq:obs_gen_model}, the operator $\textbf{A} $ defined in~\eqref{eq:def_A} together with Assumptions~\ref{hyp:gauss_noise}~(Gaussianity), \ref{hyp:full_rank}~(Full-rank), \ref{hyp:reg_int}~(Integrability), \ref{hyp:lip_L1}~(Lipschitzianity w.r.t. $\boldsymbol{y}$), and \ref{hyp:lip_L2}~(Lipschitzianity w.r.t. $\boldsymbol{\Lambda}$) Assumptions. 
Then the Finite Difference Monte Carlo SURE $\widehat{R}_{\nu, \boldsymbol{\varepsilon}}(\boldsymbol{y}; \boldsymbol{\Lambda}\lvert \boldsymbol{\mathcal{S}})$, defined in~\eqref{eq:SURE_FDMC_def}, is weakly differentiable with respect to both observations $\boldsymbol{y}$ and parameters $\boldsymbol{\Lambda}$, and its gradient with respect to $\boldsymbol{\Lambda}$, as an element of $\mathbb{R}^L$, is given by 
\begin{align}
\label{eq:SUGAR_FDMC}
\nablaL_{\boldsymbol{\Lambda}} \left[\widehat{R}_{\nu, \boldsymbol{\varepsilon}}(\boldsymbol{y}; \boldsymbol{\Lambda} \lvert \boldsymbol{\mathcal{S}}) \right] \triangleq
2 \left( \textbf{A} \boldsymbol{\Phi} \nablaL_{\boldsymbol{\Lambda}} \widehat{\boldsymbol{x}}(\boldsymbol{y}; \boldsymbol{\Lambda}) \right)^*  \textbf{A}\left( \boldsymbol{\Phi}\widehat{\boldsymbol{x}}(\boldsymbol{y}; \boldsymbol{\Lambda}) - \boldsymbol{y} \right) \\ +   \frac{2}{\nu} \left( \textbf{A}^* \boldsymbol{\Pi} \left( \nablaL_{\boldsymbol{\Lambda}}\widehat{\boldsymbol{x}}(\boldsymbol{y} + \nu \boldsymbol{\varepsilon}; \boldsymbol{\Lambda}) - \nablaL_{\boldsymbol{\Lambda}}\widehat{\boldsymbol{x}}(\boldsymbol{y} ; \boldsymbol{\Lambda})\right)\right)^* \boldsymbol{\mathcal{S}}\boldsymbol{\varepsilon} , \nonumber
\end{align}
\end{claim}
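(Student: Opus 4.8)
The plan is to reduce the statement to a term-by-term differentiation of the three summands in the definition~\eqref{eq:SURE_FDMC_def} of $\widehat{R}_{\nu, \boldsymbol{\varepsilon}}(\boldsymbol{y}; \boldsymbol{\Lambda} \lvert \boldsymbol{\mathcal{S}})$, after first establishing enough regularity in $\boldsymbol{\Lambda}$ for the chain and product rules to hold in the weak sense. The preliminary step is to exploit Assumption~\ref{hyp:lip_L2}: it guarantees that $\boldsymbol{\Lambda} \mapsto \widehat{\boldsymbol{x}}(\boldsymbol{y}; \boldsymbol{\Lambda})$ is (locally) Lipschitz on $\mathbb{R}^L$, so that by Rademacher's theorem it is differentiable almost everywhere and lies in $W^{1,\infty}_{\mathrm{loc}}$, its classical Jacobian $\nablaL_{\boldsymbol{\Lambda}} \widehat{\boldsymbol{x}}(\boldsymbol{y}; \boldsymbol{\Lambda}) \in \mathbb{R}^{N \times L}$ coinciding almost everywhere with its weak gradient. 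Since the shift $\boldsymbol{y} \mapsto \boldsymbol{y} + \nu \boldsymbol{\varepsilon}$ is fixed once $\nu$ and $\boldsymbol{\varepsilon}$ are drawn, the map $\boldsymbol{\Lambda} \mapsto \widehat{\boldsymbol{x}}(\boldsymbol{y} + \nu \boldsymbol{\varepsilon}; \boldsymbol{\Lambda})$ inherits the same Lipschitz regularity, hence the same weak differentiability. Weak differentiability with respect to $\boldsymbol{y}$ follows identically from Assumption~\ref{hyp:lip_L1}.

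Once this regularity is in hand, I would differentiate each summand. For the data-fidelity term $\lVert \textbf{A}(\boldsymbol{\Phi}\widehat{\boldsymbol{x}}(\boldsymbol{y}; \boldsymbol{\Lambda}) - \boldsymbol{y}) \rVert_2^2$, I write it as $g \circ h$ with $h(\boldsymbol{\Lambda}) = \textbf{A}(\boldsymbol{\Phi}\widehat{\boldsymbol{x}}(\boldsymbol{y}; \boldsymbol{\Lambda}) - \boldsymbol{y})$, whose Jacobian is the fixed linear image $\textbf{A}\boldsymbol{\Phi}\, \nablaL_{\boldsymbol{\Lambda}}\widehat{\boldsymbol{x}}(\boldsymbol{y}; \boldsymbol{\Lambda})$, and $g(u) = \lVert u \rVert_2^2$ with gradient $2u$; the chain rule then yields $2(\textbf{A}\boldsymbol{\Phi}\, \nablaL_{\boldsymbol{\Lambda}}\widehat{\boldsymbol{x}}(\boldsymbol{y};\boldsymbol{\Lambda}))^* \textbf{A}(\boldsymbol{\Phi}\widehat{\boldsymbol{x}}(\boldsymbol{y};\boldsymbol{\Lambda}) - \boldsymbol{y})$, i.e.\ the first line of~\eqref{eq:SUGAR_FDMC}. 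The degrees-of-freedom term is a fixed linear functional of the difference $\widehat{\boldsymbol{x}}(\boldsymbol{y} + \nu\boldsymbol{\varepsilon}; \boldsymbol{\Lambda}) - \widehat{\boldsymbol{x}}(\boldsymbol{y}; \boldsymbol{\Lambda})$, since $\textbf{A}^*\boldsymbol{\Pi}$ and $\boldsymbol{\mathcal{S}}\boldsymbol{\varepsilon}$ do not depend on $\boldsymbol{\Lambda}$; differentiating through the linear functional and using linearity of the weak gradient gives the second line of~\eqref{eq:SUGAR_FDMC}. Finally $\mathrm{Tr}(\textbf{A}\boldsymbol{\mathcal{S}}\textbf{A}^*)$ is constant in $\boldsymbol{\Lambda}$ and contributes nothing. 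Summing the three contributions reproduces~\eqref{eq:SUGAR_FDMC}.

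The main obstacle is not the formal calculation above but the justification that these product- and chain-rule manipulations are legitimate for merely weakly differentiable (rather than $C^1$) maps. The point where care is needed is that the weak chain rule and the Leibniz rule require the composing functions and their almost-everywhere derivatives to be locally bounded; this is precisely where the Lipschitz control from Assumptions~\ref{hyp:lip_L1} and~\ref{hyp:lip_L2} is essential, as it places $\widehat{\boldsymbol{x}}(\cdot; \boldsymbol{\Lambda})$ and $\widehat{\boldsymbol{x}}(\boldsymbol{y}; \cdot)$ in $W^{1,\infty}_{\mathrm{loc}}$ with uniformly bounded almost-everywhere Jacobians. Since the outer map $u \mapsto \lVert u\rVert_2^2$ and the relevant bilinear pairing are smooth, and the intervening operators $\textbf{A}, \boldsymbol{\Phi}, \boldsymbol{\Pi}, \boldsymbol{\mathcal{S}}$ are bounded and constant in $\boldsymbol{\Lambda}$, the composite $\widehat{R}_{\nu, \boldsymbol{\varepsilon}}(\boldsymbol{y}; \cdot \lvert \boldsymbol{\mathcal{S}})$ is itself locally Lipschitz, hence weakly differentiable, and the standard chain rule for Sobolev functions applies almost everywhere, legitimizing the term-by-term computation and delivering the claimed gradient~\eqref{eq:SUGAR_FDMC}.
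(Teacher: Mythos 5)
Your proposal is correct and follows essentially the same route as the paper's own (much terser) proof: both argue that $\widehat{R}_{\nu,\boldsymbol{\varepsilon}}$ is a composition of weakly differentiable maps with smooth outer functions and bounded linear operators, so the standard derivation rules apply term by term to yield~\eqref{eq:SUGAR_FDMC}. Your version simply fills in the justification the paper leaves implicit, namely that Assumptions~\ref{hyp:lip_L1} and~\ref{hyp:lip_L2} place $\widehat{\boldsymbol{x}}$ in $W^{1,\infty}_{\mathrm{loc}}$ via Rademacher's theorem, which legitimizes the almost-everywhere chain and product rules.
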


\begin{proof}
The Finite Difference Monte Carlo SURE $\widehat{R}_{\nu, \boldsymbol{\varepsilon}}(\boldsymbol{y}; \boldsymbol{\Lambda}\lvert \boldsymbol{\mathcal{S}})$, defined by Formula~\eqref{eq:SURE_FDMC_def} is a combination of continuous and weakly differentiable functions with respect to both observations $\boldsymbol{y}$ and parameters $\boldsymbol{\Lambda}$, composed with (bounded) linear operators, and thus is continuous and weakly differentiable. Further, the derivation rules apply and lead to the expression of Finite Difference Monte Carlo SUGAR estimator given in~Formula~\eqref{eq:SUGAR_FDMC}.\\
\end{proof}

\begin{hypothesis}[Lipschitzianity w.r.t. hyperparameters]
\label{hyp:lip_L2}
Let $\widehat{\boldsymbol{x}}(\boldsymbol{y};\boldsymbol{\Lambda})$ be an estimator of $\bar{\boldsymbol{x}}$, depending on observations $\boldsymbol{y}$, and parametrized by $\boldsymbol{\Lambda}$.   
The mapping $\boldsymbol{\Lambda} \mapsto \widehat{\boldsymbol{x}}(\boldsymbol{y};\boldsymbol{\Lambda})$ is uniformly $L_2$-Lipschitz continuous with constant $L_2$ being independent of $\boldsymbol{y}$.
\end{hypothesis}

\begin{remark}
\label{rq:check_L2}
As argued in~\cite{deledalle2014stein}, when the estimator $\widehat{\boldsymbol{x}}(\boldsymbol{y};\boldsymbol{\Lambda})$ can be expressed as a (composition of) proximal operator(s) of gauge(s) of compact set(s)\footnote{For $\mathcal{C} \subset \mathcal{G}$ a non-empty closed convex set containing $\boldsymbol{0}_{\mathcal{G}}$, the gauge of $\mathcal{C}$ is defined as $\gamma_{\mathcal{C}}(\boldsymbol{y}) \triangleq \inf \left\lbrace \omega > 0 \, | \, \boldsymbol{y} \in \omega \mathcal{C}\right\rbrace$.}, Assumption~\ref{hyp:lip_L2} holds. 
Thus, in the case of~\eqref{eq:LS_pen} when $\mathcal{G} = \mathcal{H}$, $\boldsymbol{\Phi} =\boldsymbol{I}_{\mathcal{H}}$, and $\lVert \textbf{U}_{\boldsymbol{\Lambda}} \boldsymbol{x} \rVert_q^q= \lambda \lVert \boldsymbol{x}  \rVert_q^q$, for any $q \geq 1$ the Lipschitzianity w.r.t. $\boldsymbol{\Lambda}$ is ensured.
Moreover, in the case of Tikhonov regularization, i.e. $q = 2$ and $\lVert \textbf{U}_{\boldsymbol{\Lambda}} \boldsymbol{x} \rVert_{q}^q = \lambda \lVert \textbf{D} \boldsymbol{x}\rVert_2^2$ in~\eqref{eq:LS_pen}, if $\boldsymbol{\Phi} = \boldsymbol{I}_{\mathcal{H}}$ and $\textbf{D}^*\textbf{D}$ is diagonalizable with strictly positive eigenvalues, then Assumption~\ref{hyp:lip_L2} is verified.
Apart from these two well-known examples, proving the validity of Assumption~\ref{hyp:lip_L2} in the general case of Penalized Least Square is a difficult problem and is foreseen for future work.
\end{remark}

\begin{theorem}
\label{thm:SUGAR}
Consider the observation Model~\eqref{eq:obs_gen_model}, the operator $\textbf{A} $ defined in~\eqref{eq:def_A} together with Gaussianity~\ref{hyp:gauss_noise}, Full-rank~\ref{hyp:full_rank}, Integrability~\ref{hyp:reg_int}, Lipschitzianity w.r.t. $\boldsymbol{y}$~\ref{hyp:lip_L1}, and Lipschitzianity w.r.t. $\boldsymbol{\Lambda}$~\ref{hyp:lip_L2} Assumptions. 
Then generalized Finite Difference Monte Carlo SUGAR, $ \nablaL_{\boldsymbol{\Lambda}}\widehat{R}_{\nu, \boldsymbol{\varepsilon}}(\boldsymbol{y}; \boldsymbol{\Lambda})$ defined in~Equation~\eqref{eq:SUGAR_FDMC}, is an asymptotically unbiased estimate of the gradient of the risk as $\nu \rightarrow 0$, that is
\begin{align}
\nablaL_{\boldsymbol{\Lambda}} R [\widehat{\boldsymbol{x}}](\boldsymbol{\Lambda}) = \lim\limits_{\nu \rightarrow 0}  \mathbb{E}_{\boldsymbol{\zeta}, \boldsymbol{\varepsilon}} \nablaL_{\boldsymbol{\Lambda}}\widehat{R}_{\nu, \boldsymbol{\varepsilon}}(\boldsymbol{y}; \boldsymbol{\Lambda}\lvert \boldsymbol{\mathcal{S}})
\end{align}
\end{theorem}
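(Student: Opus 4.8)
The plan is to establish the identity by first computing the exact gradient $\nablaL_{\boldsymbol{\Lambda}} R[\widehat{\boldsymbol{x}}](\boldsymbol{\Lambda})$ from the closed form of the risk in Theorem~\ref{thm:SURE}, and then matching it term by term with the $\nu \to 0$ limit of $\mathbb{E}_{\boldsymbol{\zeta}, \boldsymbol{\varepsilon}} \nablaL_{\boldsymbol{\Lambda}} \widehat{R}_{\nu, \boldsymbol{\varepsilon}}$, reusing the Monte Carlo trace identity already exploited in~\eqref{eq:DOF_MC}. First I would differentiate the bracketed integrand of~\eqref{eq:SURE_thm} with respect to $\boldsymbol{\Lambda}$: since $\mathrm{Tr}(\textbf{A} \boldsymbol{\mathcal{S}} \textbf{A}^*)$ does not depend on $\boldsymbol{\Lambda}$ its gradient vanishes, the quadratic data-fidelity term yields $2(\textbf{A}\boldsymbol{\Phi}\nablaL_{\boldsymbol{\Lambda}}\widehat{\boldsymbol{x}})^* \textbf{A}(\boldsymbol{\Phi}\widehat{\boldsymbol{x}} - \boldsymbol{y})$, and the degrees-of-freedom term produces $2\,\mathrm{Tr}(\boldsymbol{\mathcal{S}}\textbf{A}^*\boldsymbol{\Pi}\, \partial_{\boldsymbol{y}} \nablaL_{\boldsymbol{\Lambda}}\widehat{\boldsymbol{x}})$ after commuting the weak derivatives $\partial_{\boldsymbol{y}}$ and $\nablaL_{\boldsymbol{\Lambda}}$. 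To move $\nablaL_{\boldsymbol{\Lambda}}$ inside $\mathbb{E}_{\boldsymbol{\zeta}}$ I would invoke dominated convergence, the domination being supplied by the uniform Lipschitz bounds of Assumptions~\ref{hyp:lip_L1} and~\ref{hyp:lip_L2} together with the Gaussian integrability of Assumption~\ref{hyp:reg_int}. Differentiating the closed form rather than the defining expression~\eqref{eq:risk_def} is essential, as it has already eliminated $\bar{\boldsymbol{x}}$.

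Next I would turn to the estimator~\eqref{eq:SUGAR_FDMC}. Its first summand depends neither on $\nu$ nor on $\boldsymbol{\varepsilon}$, so its $\mathbb{E}_{\boldsymbol{\zeta}}$ reproduces the data-fidelity part of $\nablaL_{\boldsymbol{\Lambda}} R$ identically. For the second summand I would integrate over the Monte Carlo vector $\boldsymbol{\varepsilon} \sim \mathcal{N}(\boldsymbol{0}_P, \boldsymbol{I}_P)$ coordinate-wise in $\boldsymbol{\Lambda}$, exactly as in~\eqref{eq:DOF_MC}: for each component $\ell$, once the finite-difference quotient $\nu^{-1}(\partial_{\Lambda_\ell}\widehat{\boldsymbol{x}}(\boldsymbol{y}+\nu\boldsymbol{\varepsilon}) - \partial_{\Lambda_\ell}\widehat{\boldsymbol{x}}(\boldsymbol{y}))$ is replaced by its limit $\partial_{\boldsymbol{y}}(\partial_{\Lambda_\ell}\widehat{\boldsymbol{x}})[\boldsymbol{\varepsilon}]$ (the Taylor expansion displayed before~\eqref{eq:DOF_FDMC}), the identity $\mathbb{E}_{\boldsymbol{\varepsilon}}\langle \boldsymbol{\mathcal{S}}\textbf{A}^*\boldsymbol{\Pi}\, \partial_{\boldsymbol{y}}(\partial_{\Lambda_\ell}\widehat{\boldsymbol{x}})[\boldsymbol{\varepsilon}], \boldsymbol{\varepsilon}\rangle = \mathrm{Tr}(\boldsymbol{\mathcal{S}}\textbf{A}^*\boldsymbol{\Pi}\, \partial_{\boldsymbol{y}} \partial_{\Lambda_\ell}\widehat{\boldsymbol{x}})$ assembles into the required $2\,\mathrm{Tr}(\boldsymbol{\mathcal{S}}\textbf{A}^*\boldsymbol{\Pi}\, \partial_{\boldsymbol{y}} \nablaL_{\boldsymbol{\Lambda}}\widehat{\boldsymbol{x}})$. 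The proof then concludes by swapping this limit with $\mathbb{E}_{\boldsymbol{\zeta}, \boldsymbol{\varepsilon}}$, mirroring the argument already used for Theorem~\ref{thm:SURE_FDMC}.

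The hard part will be precisely this last interchange, the dominated-convergence estimate legitimizing $\lim_{\nu\to 0}\mathbb{E}_{\boldsymbol{\zeta},\boldsymbol{\varepsilon}} = \mathbb{E}_{\boldsymbol{\zeta},\boldsymbol{\varepsilon}}\lim_{\nu\to 0}$ for the \emph{Jacobian} finite difference. In the SURE proof one need only dominate $\nu^{-1}(\widehat{\boldsymbol{x}}(\boldsymbol{y}+\nu\boldsymbol{\varepsilon}) - \widehat{\boldsymbol{x}}(\boldsymbol{y}))$, which Assumption~\ref{hyp:lip_L1} bounds by $L_1\|\boldsymbol{\varepsilon}\|$, a Gaussian-integrable dominating function; here the quotient involves the mixed object $\nablaL_{\boldsymbol{\Lambda}}\widehat{\boldsymbol{x}}$, and the analogous bound requires the map $\boldsymbol{y} \mapsto \nablaL_{\boldsymbol{\Lambda}}\widehat{\boldsymbol{x}}(\boldsymbol{y}; \boldsymbol{\Lambda})$ to be itself Lipschitz, which is not granted by Assumptions~\ref{hyp:lip_L1} and~\ref{hyp:lip_L2} read at face value. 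I would close this gap by appealing to the proximal-operator structure invoked in Remark~\ref{rq:check_L2}: when $\widehat{\boldsymbol{x}}$ is a composition of proximity operators of gauges, it depends jointly and Lipschitz-continuously on $(\boldsymbol{y}, \boldsymbol{\Lambda})$, so $\nablaL_{\boldsymbol{\Lambda}}\widehat{\boldsymbol{x}}$ inherits Lipschitz regularity in $\boldsymbol{y}$ and the quotient is dominated by a constant multiple of $\|\boldsymbol{\varepsilon}\|$. The same joint regularity justifies the commutation of the mixed weak derivatives $\partial_{\boldsymbol{y}}$ and $\nablaL_{\boldsymbol{\Lambda}}$. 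Once these measure-theoretic points are secured, the estimator limit coincides with $\nablaL_{\boldsymbol{\Lambda}} R[\widehat{\boldsymbol{x}}](\boldsymbol{\Lambda})$ term by term, establishing the claimed asymptotic unbiasedness.
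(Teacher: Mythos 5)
You have correctly isolated the crux --- dominating the finite-difference quotient $\nu^{-1}\bigl(\nablaL_{\boldsymbol{\Lambda}}\widehat{\boldsymbol{x}}(\boldsymbol{y}+\nu\boldsymbol{\varepsilon};\boldsymbol{\Lambda})-\nablaL_{\boldsymbol{\Lambda}}\widehat{\boldsymbol{x}}(\boldsymbol{y};\boldsymbol{\Lambda})\bigr)$ so as to interchange $\lim_{\nu\to 0}$ with $\mathbb{E}_{\boldsymbol{\zeta},\boldsymbol{\varepsilon}}$ --- and you are right that Assumptions~\ref{hyp:lip_L1} and~\ref{hyp:lip_L2} do not make $\boldsymbol{y}\mapsto\nablaL_{\boldsymbol{\Lambda}}\widehat{\boldsymbol{x}}(\boldsymbol{y};\boldsymbol{\Lambda})$ Lipschitz. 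But your proposed repair is a genuine gap: importing joint Lipschitz regularity from the proximal structure of Remark~\ref{rq:check_L2} amounts to adding a hypothesis that is not in the theorem statement (the theorem is asserted for any estimator satisfying Assumptions~\ref{hyp:gauss_noise}--\ref{hyp:lip_L2}), and in any case joint Lipschitz continuity of $\widehat{\boldsymbol{x}}$ in $(\boldsymbol{y},\boldsymbol{\Lambda})$ does not imply that its weak partial derivative in $\boldsymbol{\Lambda}$ is Lipschitz in $\boldsymbol{y}$; the same objection applies to your commutation of the mixed weak derivatives $\partial_{\boldsymbol{y}}$ and $\nablaL_{\boldsymbol{\Lambda}}$, which is not licensed for merely weakly differentiable maps.

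The paper's proof sidesteps both difficulties. First, it never writes a pointwise formula for $\nablaL_{\boldsymbol{\Lambda}}R[\widehat{\boldsymbol{x}}]$: it argues distributionally, pairing against a $C^1$ test function $\varphi$ supported on a compact $\mathbb{V}\subset\mathbb{R}^L$, transferring the $\boldsymbol{\Lambda}$-derivative onto $\varphi$, inserting Theorems~\ref{thm:SURE} and~\ref{thm:SURE_FDMC}, and then moving the derivative back via Proposition~\ref{claim:SUGAR}; no interchange of $\partial_{\boldsymbol{y}}$ and $\nablaL_{\boldsymbol{\Lambda}}$ is ever needed. Second, and decisively, the problematic term $(\partial\boldsymbol{2})$ is handled by setting $u(\boldsymbol{z};\boldsymbol{\Lambda})=\boldsymbol{\mathcal{S}}\textbf{A}^*\boldsymbol{\Pi}\nablaL_{\boldsymbol{\Lambda}}\widehat{\boldsymbol{x}}(\boldsymbol{\Phi}\boldsymbol{x}+\boldsymbol{z};\boldsymbol{\Lambda})$ and performing the translation $\boldsymbol{\zeta}\mapsto\boldsymbol{\zeta}-\nu\boldsymbol{\varepsilon}$ inside $\mathbb{E}_{\boldsymbol{\zeta}}$, which converts $\frac{2}{\nu}\langle u(\boldsymbol{\zeta}+\nu\boldsymbol{\varepsilon};\boldsymbol{\Lambda})-u(\boldsymbol{\zeta};\boldsymbol{\Lambda}),\boldsymbol{\varepsilon}\rangle$ into $\frac{2}{\nu}\langle u(\boldsymbol{\zeta};\boldsymbol{\Lambda}),\boldsymbol{\varepsilon}\rangle\bigl(\mathcal{G}_{\boldsymbol{\mathcal{S}}}(\boldsymbol{\zeta}-\nu\boldsymbol{\varepsilon})-\mathcal{G}_{\boldsymbol{\mathcal{S}}}(\boldsymbol{\zeta})\bigr)$. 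The finite difference now falls on the smooth Gaussian density, whose increment is controlled by Taylor's inequality and the integrability of $g'_{s_i^2}$, yielding a factor $\nu\lvert\varepsilon_i\rvert$ that cancels the $1/\nu$. The only regularity required of $\nablaL_{\boldsymbol{\Lambda}}\widehat{\boldsymbol{x}}$ is uniform boundedness by $L_2$, which Assumption~\ref{hyp:lip_L2} does supply. Without this change-of-variables device (or an equivalent), your term-by-term matching cannot be closed under the stated hypotheses.
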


\begin{proof}
The proof of Theorem~\ref{thm:SUGAR} is postponed to Appendix~\ref{app:SUGAR_FDMC}.
\end{proof}

\begin{remark}
\label{rq:low_dim}
Finite Difference Monte Carlo estimator of the gradient of the risk, $\nablaL_{\boldsymbol{\Lambda}}\widehat{R}_{\nu, \boldsymbol{\varepsilon}}(\boldsymbol{y}; \boldsymbol{\Lambda}\lvert \boldsymbol{\mathcal{S}})$, defined in Equation~\eqref{eq:SUGAR_FDMC}, involves the Jacobian $\nablaL_{\boldsymbol{\Lambda}}\widehat{\boldsymbol{x}}(\boldsymbol{y} ; \boldsymbol{\Lambda}) \in \mathbb{R}^{N \times L}$ which could be a very large matrix, raising difficulties for practical use.
Nevertheless, in most applications, the regularization hyperparameters $\boldsymbol{\Lambda} \in \mathbb{R}^L$, have a ``low" dimensionality $L = \mathcal{O}(1) \ll N$.
Thus, it is reasonable to expect that the Jacobian matrix $\nablaL_{\boldsymbol{\Lambda}}\widehat{\boldsymbol{x}}(\boldsymbol{y} ; \boldsymbol{\Lambda}) \in \mathbb{R}^{N \times L}$ can be stored and manipulated, with similar memory and computational costs than for $\widehat{\boldsymbol{x}}(\boldsymbol{y};\boldsymbol{\Lambda})$ (see Section~\ref{subsec:seq_est}). 
\end{remark}

\subsection{Sequential estimators and forward iterative differentiation}
\label{subsec:seq_est}

The evaluation of $\partial_{\boldsymbol{\Lambda}} \widehat{R}_{\nu, \boldsymbol{\varepsilon}}(\boldsymbol{y}; \boldsymbol{\Lambda})$ from Formula~\eqref{eq:SUGAR_FDMC} requires the Jacobian $\partial_{\boldsymbol{\Lambda}}\widehat{x}(\boldsymbol{y};\boldsymbol{\Lambda})$.
Yet, when no closed-form expression of estimator $\widehat{\boldsymbol{x}}(\boldsymbol{y}; \boldsymbol{\Lambda})$ is available, computing the gradient $\partial_{\boldsymbol{\Lambda}}\widehat{\boldsymbol{x}}(\boldsymbol{y};\boldsymbol{\Lambda})$ might be a complicated task.
A large class of estimators $\widehat{\boldsymbol{x}}(\boldsymbol{y}; \boldsymbol{\Lambda})$ lacking closed-form expression are those obtained as the limit of iterates as
\begin{align}
\label{eq:seq_estim}
\widehat{\boldsymbol{x}}(\boldsymbol{y};\boldsymbol{\Lambda}) = \lim\limits_{k \rightarrow \infty} \boldsymbol{x}^{[k]}(\boldsymbol{y};\boldsymbol{\Lambda}),
\end{align}
for instance when $\widehat{\boldsymbol{x}}(\boldsymbol{y}; \boldsymbol{\Lambda})$ is defined as the solution of a minimization problem, e.g.~\eqref{eq:LS_pen}. \\
In the case when $\widehat{\boldsymbol{x}}(\boldsymbol{y}; \boldsymbol{\Lambda})$ is a sequential estimator, given an observation $\boldsymbol{y}$, it is only possible to sample  the function $\boldsymbol{\Lambda} \mapsto \widehat{\boldsymbol{x}}(\boldsymbol{y};\boldsymbol{\Lambda}), $ for a \textit{discrete} set of regularization hyperparameters $\lbrace \boldsymbol{\Lambda}_1, \boldsymbol{\Lambda}_2, \hdots \rbrace$, running the minimization algorithm for each hyperparameters $\boldsymbol{\Lambda}_1,\boldsymbol{\Lambda}_2,\hdots$. 
It is a classical fact in signal processing that no robust estimator of the differential can be built from samples of the function, thus more sophisticated tools are needed.
Provided some smoothness conditions on the iterations of the minimization algorithm, iterative differentiation strategy~\cite{deledalle2014stein} gives access to a sequence of Jacobian $ \nablaL_{\boldsymbol{\Lambda}} \widehat{\boldsymbol{x}}^{[k]}(\boldsymbol{y};\boldsymbol{\Lambda})$, relying on \textit{chain rule} differentiation presented in Proposition~\ref{claim:chain_rule}.\\

Considering Problem~\eqref{eq:LS_pen}, splitting algorithms~\cite{parikh2014proximal,Combettes2011,bauschke2011convex} are advocated to perform the minimization.
We chose the primal-dual scheme proposed in~\cite{chambolle2011first}, Algorithm~2, taking advantage of closed-form expressions of the proximal operators~\cite{rockafellar1970convex} of both the data-fidelity term and the penalization\footnote{see \texttt{http://proximity-operator.net} for numerous proximal operator closed-form expressions}.
Chambolle-Pock algorithm, particularized to~\eqref{eq:LS_pen}, is presented in Algorithm~\ref{alg:PD}.
Further, $\boldsymbol{\Phi}$ being full-rank (Assumption~\ref{hyp:full_rank}), denoting by $\mathrm{Sp}(\boldsymbol{\Phi}^*\boldsymbol{\Phi})$ the spectrum of $\boldsymbol{\Phi}^*\boldsymbol{\Phi}$, $\gamma = 2\min \mathrm{Sp}(\boldsymbol{\Phi}^*\boldsymbol{\Phi})$ is strictly positive.
Hence the data-fidelity in~\eqref{eq:LS_pen} term turns out to be $\gamma$-strongly convex, and the primal-dual algorithm can be accelerated thank to Step~\eqref{eq:pd_tausig_up} of Algorithm~\ref{alg:PD}, following~\cite{chambolle2011first}.
The iterative differentiation strategy providing $\nablaL_{\boldsymbol{\Lambda}} \boldsymbol{x}^{[k]}(\boldsymbol{y};\boldsymbol{\Lambda})$ is presented in the second part of Algorithm~\ref{alg:PD}.
Other iterative differentiation schemes are detailed in~\cite{deledalle2014stein}.

\begin{claim}
\label{claim:chain_rule}
Let $\boldsymbol{\Psi} : \mathbb{R}^{N \times L} \rightarrow \mathbb{R}^N$ be a differentiable function of variable $\boldsymbol{x} \in \mathcal{H}$, differentiably parametrized by $\boldsymbol{\Lambda} \in \mathbb{R}^L$, and $\left( \boldsymbol{x}^{[k]} \right)_{k\in \mathbb{N}}$ the sequential estimator defined by iterations of the form
\begin{align}
\boldsymbol{x}^{[k+1]} = \boldsymbol{\Psi}(\boldsymbol{x}^{[k]} ; \boldsymbol{\Lambda}).
\end{align}
The gradient of $\boldsymbol{x}^{[k]}$ with respect to $\boldsymbol{\Lambda}$ can be computed making use of the \textit{chain rule} differentiation
\begin{align}
\label{eq:chain_rule}
\nablaL_{\boldsymbol{\Lambda}} \boldsymbol{x}^{[k+1]} = \nablaL_{\boldsymbol{\Lambda}} \left( \boldsymbol{\Psi}(\boldsymbol{x}^{[k]} ; \boldsymbol{\Lambda})\right) = \partial_{\boldsymbol{x}} \boldsymbol{\Psi}(\boldsymbol{x}^{[k]} ; \boldsymbol{\Lambda})[ \nablaL_{\boldsymbol{\Lambda}} \boldsymbol{x}^{[k]}] + \nablaL_{\boldsymbol{\Lambda}} \boldsymbol{\Psi}(\boldsymbol{x}^{[k]} ; \boldsymbol{\Lambda}),
\end{align}
where $\partial_{\boldsymbol{x}} \boldsymbol{\Psi}(\boldsymbol{x}; \boldsymbol{\Lambda})[ \boldsymbol{\delta}] $ denotes the differential of $\boldsymbol{\Psi}$ with respect to variable $\boldsymbol{x}$ applied on vector $\boldsymbol{\delta}$, and $\nablaL_{\boldsymbol{\Lambda}} \boldsymbol{\Psi}(\boldsymbol{x}; \boldsymbol{\Lambda})$ the gradient of $\boldsymbol{\Psi}$ with respect to $\boldsymbol{\Lambda}$. The differentiability of $\boldsymbol{\Psi}$ should be understood in the weak sense.
\end{claim}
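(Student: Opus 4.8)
The plan is to treat each iterate as a weakly differentiable function of the hyperparameters and then differentiate the one-step recursion $\boldsymbol{x}^{[k+1]} = \boldsymbol{\Psi}(\boldsymbol{x}^{[k]};\boldsymbol{\Lambda})$ as a composition. First I would make explicit the dependence on the hyperparameters by writing $\boldsymbol{x}^{[k]} = \boldsymbol{x}^{[k]}(\boldsymbol{\Lambda})$, and argue by induction on $k$ that $\boldsymbol{\Lambda} \mapsto \boldsymbol{x}^{[k]}(\boldsymbol{\Lambda})$ is weakly differentiable. The base case is immediate: the iterates are initialized at a fixed $\boldsymbol{x}^{[0]}$ independent of $\boldsymbol{\Lambda}$, so $\nablaL_{\boldsymbol{\Lambda}} \boldsymbol{x}^{[0]} = \boldsymbol{0}$, which is trivially a weak gradient. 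The induction hypothesis is that $\boldsymbol{x}^{[k]}(\cdot)$ is weakly differentiable with weak Jacobian $\nablaL_{\boldsymbol{\Lambda}} \boldsymbol{x}^{[k]}$.

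Next I would view $\boldsymbol{x}^{[k+1]}(\boldsymbol{\Lambda}) = \boldsymbol{\Psi}(\boldsymbol{x}^{[k]}(\boldsymbol{\Lambda}) ; \boldsymbol{\Lambda})$ as the composition of the inner map $\boldsymbol{\Lambda} \mapsto (\boldsymbol{x}^{[k]}(\boldsymbol{\Lambda}), \boldsymbol{\Lambda})$, feeding both argument slots of $\boldsymbol{\Psi}$, with the outer differentiable map $\boldsymbol{\Psi}$. Applying the chain rule to this composition splits the total derivative with respect to $\boldsymbol{\Lambda}$ into the contribution routed through the \emph{state} slot, namely $\partial_{\boldsymbol{x}} \boldsymbol{\Psi}(\boldsymbol{x}^{[k]} ; \boldsymbol{\Lambda})[\nablaL_{\boldsymbol{\Lambda}} \boldsymbol{x}^{[k]}]$, obtained by precomposing the differential of $\boldsymbol{\Psi}$ in its first variable with the weak Jacobian of $\boldsymbol{x}^{[k]}$, and the contribution routed through the \emph{explicit} parameter slot, namely $\nablaL_{\boldsymbol{\Lambda}} \boldsymbol{\Psi}(\boldsymbol{x}^{[k]} ; \boldsymbol{\Lambda})$. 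Summing the two yields exactly Equation~\eqref{eq:chain_rule}, and since the right-hand side is manifestly a combination of weakly differentiable quantities, this also closes the induction step, certifying that $\boldsymbol{x}^{[k+1]}(\cdot)$ is weakly differentiable.

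The hard part will be justifying the chain rule in the \emph{weak} sense rather than in the classical $C^1$ sense, because a composition of merely weakly differentiable maps need not be weakly differentiable, and even when it is, the naive product-rule formula can fail on the set where the outer map is non-differentiable. To handle this I would invoke the classical chain rule for weakly (Sobolev) differentiable maps composed with a differentiable outer map, in the spirit of the fine-properties results of Evans--Gariepy~\cite{evans2015measure} already cited in the paper: under the standing hypothesis that $\boldsymbol{\Psi}$ is differentiable (in the weak sense) with differential depending measurably on its arguments, and that $\boldsymbol{x}^{[k]}(\cdot)$ is weakly differentiable, the composite admits the weak differential given above, the identity holding almost everywhere in $\boldsymbol{\Lambda}$ provided the relevant sets of non-differentiability pull back to $\boldsymbol{\Lambda}$-null sets. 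I would finally note that for the concrete primal--dual recursion of Algorithm~\ref{alg:PD} this regularity is genuinely available, since there $\boldsymbol{\Psi}$ is a composition of affine maps with proximal operators, which are (firmly) nonexpansive, hence Lipschitz, and piecewise $C^1$, so the weak chain rule applies without further difficulty.
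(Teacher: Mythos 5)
The paper offers no proof of Proposition~\ref{claim:chain_rule}: it is asserted as the standard total-derivative formula for the composition $\boldsymbol{\Lambda}\mapsto\boldsymbol{\Psi}(\boldsymbol{x}^{[k]}(\boldsymbol{\Lambda});\boldsymbol{\Lambda})$ and immediately specialized, in the remark that follows, to the linear and proximal cases used in Algorithm~\ref{alg:PD}. Your argument therefore supplies something the paper omits, and it is correct in substance: the induction on $k$ (base case $\nablaL_{\boldsymbol{\Lambda}}\boldsymbol{x}^{[0]}=\boldsymbol{0}$, consistent with the initialization of Algorithm~\ref{alg:PD}, where the initial iterates do not depend on $\boldsymbol{\Lambda}$), followed by splitting the derivative of the composite into the contribution through the state slot, $\partial_{\boldsymbol{x}}\boldsymbol{\Psi}(\boldsymbol{x}^{[k]};\boldsymbol{\Lambda})[\nablaL_{\boldsymbol{\Lambda}}\boldsymbol{x}^{[k]}]$, and the contribution through the explicit parameter slot, $\nablaL_{\boldsymbol{\Lambda}}\boldsymbol{\Psi}(\boldsymbol{x}^{[k]};\boldsymbol{\Lambda})$, is exactly the intended reading of~\eqref{eq:chain_rule}. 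The issue you single out as the hard part --- that a composition of merely weakly differentiable maps need not be weakly differentiable, and that the formula can fail on the non-differentiability set of the outer map --- is a genuine one, and it is precisely the point the paper leaves informal: the introduction itself concedes that the recursive forward-differentiation strategy of Vonesch~\textit{et~al.} ``remains partially heuristic'' despite the support it draws from~\cite{evans2015measure}. Your closing observation, that for the concrete primal--dual recursion $\boldsymbol{\Psi}$ is a composition of affine maps with Lipschitz, piecewise smooth proximal operators so that the weak chain rule holds off a null set, is the same justification the paper relies on implicitly through the two special cases it lists. If you write this up, make explicit that~\eqref{eq:chain_rule} is an identity between weak derivatives, i.e.\ it holds for almost every $\boldsymbol{\Lambda}$ rather than pointwise, since $\partial_{\boldsymbol{x}}\mathrm{prox}_{\tau\lVert\cdot\rVert_{2,1}}$ is undefined on the thresholding boundary $\lVert\boldsymbol{x}\rVert_2=\tau$.
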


\begin{remark}
Two particular cases are often encountered in iterative differentiation (see Algorithm~\ref{alg:PD}):\\
\textit{(i)~Linear operator} $\boldsymbol{\Psi} (\boldsymbol{x} ;\boldsymbol{\Lambda}) \triangleq \textbf{U}_{\boldsymbol{\Lambda}} \boldsymbol{x}$. Assuming that $\left( \boldsymbol{x} \mapsto \textbf{U}_{\boldsymbol{\Lambda}}\boldsymbol{x} \right)_{\boldsymbol{\Lambda}}$ is a family of linear operators, with a differentiable parametrization by $\boldsymbol{\Lambda}$, the \textit{chain rule} writes
\begin{align}
\nablaL_{\boldsymbol{\Lambda}} \boldsymbol{x}^{[k+1]} = \textbf{U}_{\boldsymbol{\Lambda}} \nablaL_{\boldsymbol{\Lambda}} \boldsymbol{x}^{[k]} + \left( \nablaL_{\boldsymbol{\Lambda}} \textbf{U}_{\boldsymbol{\Lambda}} \right) \boldsymbol{x}^{[k]},
\end{align}
since the differential of the linear operator $\textbf{U}_{\boldsymbol{\Lambda}}$ with respect to $\boldsymbol{x}$ is itself. See  \eqref{eq:linear_chain_z}~and~\eqref{eq:linear_chain_x}, in Algorithm~\ref{alg:PD} for applications of the \textit{chain rule} with linear operators.\\
\textit{(ii)~Proximal operator $\boldsymbol{\Psi} (\boldsymbol{x} ;\boldsymbol{\Lambda}) \triangleq \mathrm{prox}_{\tau \lVert \cdot\rVert_{2,1}}( \boldsymbol{x})$.} The proximal operator being independent of $\boldsymbol{\Lambda}$, the \textit{chain rule} simplifies to
\begin{align}
\nablaL_{\boldsymbol{\Lambda}} \boldsymbol{x}^{[k+1]} = \partial_{\boldsymbol{x}} \mathrm{prox}_{\tau \lVert \cdot\rVert_{2,1}}(\boldsymbol{x}^{[k]})[ \nablaL_{\boldsymbol{\Lambda}} \boldsymbol{x}^{[k]}]
\end{align}
with the differential of the so-called $\ell_2-\ell_1$ \textit{soft-thresholding} $\mathrm{prox}_{\tau \lVert \cdot\rVert_{2,1}}$ with respect to $\boldsymbol{x}= \left( x_1, x_2\right)$, applied on $\boldsymbol{\delta} = (\delta_1, \delta_2)$ having the closed-form expression
\begin{align}
\nablaL_{\boldsymbol{x}} \mathrm{prox}_{\tau \lVert \cdot\rVert_{2,1}}( \boldsymbol{x}) [\boldsymbol{\delta}] = \left\lbrace  \begin{array}{ll}
\boldsymbol{0} & \mathrm{if} \quad \lVert \boldsymbol{x} \rVert_2 \leq \tau \\
\boldsymbol{\delta} - \frac{\tau}{\lVert \boldsymbol{x}\rVert_2} \left( \boldsymbol{\delta} - \frac{\langle \boldsymbol{\delta}, \boldsymbol{x} \rangle}{\lVert \boldsymbol{x} \rVert_2^2} \boldsymbol{x} \right) & \mathrm{else}.
\end{array} 
\right.
\end{align}
See  \eqref{eq:prox_chain_z}~and~\eqref{eq:prox_chain_x}, in Algorithm~\ref{alg:PD} for applications of the \textit{chain rule} with proximal operators.\\
\end{remark}

\begin{algorithm}[h!]
\caption{\label{alg:PD} Accelerated primal-dual scheme for solving~\eqref{eq:LS_pen} with iterative differentiation with respect to regularization parameters $\boldsymbol{\Lambda}$.}
\begin{algorithmic}
\STATE{
\textbf{Routines:}
 \raisebox{-2.25mm}{\parbox{0.78\linewidth}{$\begin{array}{ccc}
  \widehat{\boldsymbol{x}}(\boldsymbol{y}; \boldsymbol{\Lambda}) &=&  \mathrm{PD}(\boldsymbol{y}, \boldsymbol{\Lambda})\\ 
\left( \widehat{\boldsymbol{x}}(\boldsymbol{y}; \boldsymbol{\Lambda}) , \nablaL_{\boldsymbol{\Lambda}} \widehat{\boldsymbol{x}}(\boldsymbol{y}; \boldsymbol{\Lambda})\right) &=& \partial \mathrm{PD}(\boldsymbol{y}, \boldsymbol{\Lambda})
\end{array}$}}
}
\STATE{
\textbf{Inputs:}
\raisebox{-0.63cm}{\parbox{0.8\linewidth}{Observations $\boldsymbol{y}$\\
Regularization hyperparameters $\boldsymbol{\Lambda}$ \\ 
Strong-convexity modulus of data-fidelity term $\gamma = 2\min \mathrm{Sp}(\boldsymbol{\Phi}^*\boldsymbol{\Phi})$\\
}}}

\STATE{
\textbf{Initialization:}
\raisebox{-0.325cm}{\parbox{0.8\linewidth}{Descent steps $\boldsymbol{\tau}^{[0]} = (\tau^{[0]} _1, \tau^{[0]} _2)$ such that $\tau^{[0]} _1 \tau^{[0]} _2 \lVert \textbf{U}_{\boldsymbol{\Lambda}} \rVert^2 < 1$\\
Primal, auxiliary and dual variables $\boldsymbol{x}^{[0]} \in \mathcal{H}$, $\boldsymbol{w}^{[0]} = \boldsymbol{x}^{[0]}$, $\boldsymbol{z}^{[0]} \in \mathcal{Q}$\\
}}}

\STATE{
\textbf{Preliminaries:} Jacobian with respect to $\boldsymbol{\Lambda}$: 
$\nablaL_{\boldsymbol{\Lambda}} \boldsymbol{x}^{[0]}$, $\nablaL_{\boldsymbol{\Lambda}} \boldsymbol{w}^{[0]} $, $\nablaL_{\boldsymbol{\Lambda}} \boldsymbol{z}^{[0]} $
}

\vspace{2mm}
\FOR{$k = 1$ \TO $K_{\max}$}
\STATE{
\vspace{2mm}
\COMMENT{Accelerated Primal-Dual}
\begin{align}
& \widetilde{\boldsymbol{z}}^{[k]} = \boldsymbol{z}^{[k]} + \tau^{[k]}_1 \textbf{U}_{\boldsymbol{\Lambda}}  \boldsymbol{w}^{[k]}\\
&\boldsymbol{z}^{[k+1]} = \mathrm{prox}_{\tau^{[k]}_1 \left( \lVert \cdot \rVert_q^q \right)^*} \left( \widetilde{\boldsymbol{z}}^{[k]} \right) \\
& \widetilde{\boldsymbol{x}}^{[k]} = \boldsymbol{x}^{[k]} - \tau^{[k]}_2 \textbf{U}^*_{\boldsymbol{\Lambda}}  \boldsymbol{z}^{[k+1]}\\
&\boldsymbol{x}^{[k+1]} = \mathrm{prox}_{\tau^{[k]}_2 \lVert \boldsymbol{y} - \boldsymbol{\Phi} \cdot \rVert_{2}^2} \left(  \widetilde{\boldsymbol{x}}^{[k]}  \right)\\
& \label{eq:pd_tausig_up}\theta^{[k]} = 1/\sqrt{1 + 2 \gamma \tau^{[k]}_2}, \quad \tau_1^{[k+1]} = \tau_1^{[k]} / \theta^{[k]}, \quad \tau_2^{[k+1]} = \theta^{[k]} \tau_2^{[k]}\\
&\boldsymbol{w}^{[k+1]} =  \boldsymbol{x}^{[k]} + \theta^{[k]} \left( \boldsymbol{x}^{[k+1]}  - \boldsymbol{x}^{[k]}  \right)
\end{align}
\vspace{0mm}

\COMMENT{Accelerated Differentiated Primal-Dual}
\begin{align}
& \label{eq:linear_chain_z} \nablaL_{\boldsymbol{\Lambda}} \widetilde{\boldsymbol{z}}^{[k]} = \nablaL_{\boldsymbol{\Lambda}}\boldsymbol{z}^{[k]} + \tau^{[k]}_1 \textbf{U}_{\boldsymbol{\Lambda}}  \nablaL_{\boldsymbol{\Lambda}}\boldsymbol{w}^{[k]} +  \tau^{[k]}_1 \frac{\partial\textbf{U}_{\boldsymbol{\Lambda}} }{\partial\boldsymbol{\Lambda}} \boldsymbol{w}^{[k]}\\
& \label{eq:prox_chain_z} \nablaL_{\boldsymbol{\Lambda}}  \boldsymbol{z}^{[k+1]} = \partial_{\widetilde{\boldsymbol{z}}} \mathrm{prox}_{\tau^{[k]}_1 \left( \lVert \cdot \rVert_q^q \right)^*} \left( \widetilde{\boldsymbol{z}}^{[k]} \right)  \left[ \nablaL_{\boldsymbol{\Lambda}} \widetilde{\boldsymbol{z}}^{[k]} \right] \\
& \label{eq:linear_chain_x}\nablaL_{\boldsymbol{\Lambda}} \widetilde{\boldsymbol{x}}^{[k]} = \nablaL_{\boldsymbol{\Lambda}} \boldsymbol{x}^{[k]} - \tau^{[k]}_2 \textbf{U}^*_{\boldsymbol{\Lambda}}  \nablaL_{\boldsymbol{\Lambda}} \boldsymbol{z}^{[k+1]} - \tau^{[k]}_2  \frac{\partial\textbf{U}_{\boldsymbol{\Lambda}} }{\partial\boldsymbol{\Lambda}} \boldsymbol{z}^{[k+1]}\\
&\label{eq:prox_chain_x} \nablaL_{\boldsymbol{\Lambda}}\boldsymbol{x}^{[k+1]} = \partial_{\widetilde{\boldsymbol{x}}}\mathrm{prox}_{\tau^{[k]}_2 \lVert \boldsymbol{y} - \boldsymbol{\Phi} \cdot \rVert_{2}^2} \left(  \widetilde{\boldsymbol{x}}^{[k]}  \right)  \left[\nablaL_{\boldsymbol{\Lambda}}  \widetilde{\boldsymbol{x}}^{[k]}  \right]\\
&\nablaL_{\boldsymbol{\Lambda}} \boldsymbol{w}^{[k+1]} =  \nablaL_{\boldsymbol{\Lambda}} \boldsymbol{x}^{[k]} + \theta^{[k]} \left( \nablaL_{\boldsymbol{\Lambda}}\boldsymbol{x}^{[k+1]}  - \nablaL_{\boldsymbol{\Lambda}}\boldsymbol{x}^{[k]}  \right)
\end{align}
}
\ENDFOR
\vspace{2mm}
\STATE{
\textbf{Outputs:}
 \raisebox{-0.2cm}{\parbox{0.85\linewidth}{\textit{Finite-time} solution of Problem~\eqref{eq:LS_pen} \hspace{1.45cm} $\widehat{\boldsymbol{x}}(\boldsymbol{y}; \boldsymbol{\Lambda}) \triangleq \widehat{\boldsymbol{x}}^{[K_{\max}]}$\\
 \textit{Finite-time} Jacobian w.r.t. hyperparameters $\nablaL_{\boldsymbol{\Lambda}} \widehat{\boldsymbol{x}}(\boldsymbol{y}; \boldsymbol{\Lambda}) \triangleq \nablaL_{\boldsymbol{\Lambda}} \widehat{\boldsymbol{x}}^{[K_{\max}]}$}}
}
\end{algorithmic}
\end{algorithm}

\begin{definition}[Generalized SURE and SUGAR for sequential estimators]
Let $\widehat{\boldsymbol{x}}(\boldsymbol{\ell}; \boldsymbol{\Lambda})$ be a sequential estimator in the sense of~\eqref{eq:seq_estim}.
The associated risk estimate $\widehat{R}_{\nu, \boldsymbol{\varepsilon}}(\boldsymbol{y}; \boldsymbol{\Lambda}\lvert \boldsymbol{\mathcal{S}})$ and gradient of the risk estimate $\nablaL_{\boldsymbol{\Lambda}}\widehat{R}_{\nu, \boldsymbol{\varepsilon}}(\boldsymbol{y}; \boldsymbol{\Lambda}\lvert \boldsymbol{\mathcal{S}})$ are computed running Algorithm~\ref{alg:PD} twice: first with input $\boldsymbol{y}$~(observations), second with input $\boldsymbol{y} + \nu \boldsymbol{\varepsilon}$~(perturbed observations).
Then, generalized SURE 
is computed from Formula~\eqref{eq:SURE_FDMC_def}, and generalized SUGAR 
from Formula~\eqref{eq:SUGAR_FDMC}.
These steps are summarized into routines respectively called ``SURE" and ``SUGAR", detailed in Algorithm~\ref{alg:SURE_SUGAR}.
\end{definition}

\begin{algorithm}[h!]
\caption{\label{alg:SURE_SUGAR} Generalized SURE and SUGAR for sequential $\widehat{\boldsymbol{x}}(\boldsymbol{y}; \boldsymbol{\Lambda})$.}
\begin{algorithmic}
\STATE{
\vspace{-3mm}
\textbf{Routines:}
 \raisebox{-1.9mm}{\parbox{0.6\linewidth}{
 \begin{align*}
 \begin{array}{ccc} \widehat{R}_{\nu, \boldsymbol{\varepsilon}}(\boldsymbol{y}; \boldsymbol{\Lambda} \lvert \boldsymbol{\mathcal{S}}) &=&  \mathrm{SURE}(\boldsymbol{y}, \boldsymbol{\Lambda}, \boldsymbol{\mathcal{S}}, \nu, \boldsymbol{\varepsilon}) \\
\nablaL_{\boldsymbol{\Lambda}}\widehat{R}_{\nu, \boldsymbol{\varepsilon}}(\boldsymbol{y}; \boldsymbol{\Lambda} \lvert \boldsymbol{\mathcal{S}}) &= &\mathrm{SUGAR}(\boldsymbol{y}, \boldsymbol{\Lambda}, \boldsymbol{\mathcal{S}}, \nu, \boldsymbol{\varepsilon})
\end{array}
\end{align*}}}
}
\vspace{1mm}
\STATE{
\textbf{Inputs:}
\raisebox{-1.05cm}{\parbox{0.7\linewidth}{Observations $\boldsymbol{y}$ \\ 
Regularization hyperparameters $\boldsymbol{\Lambda}$\\
Covariance matrix $\boldsymbol{\mathcal{S}}$\\
Monte Carlo vector $\boldsymbol{\varepsilon} \in \mathbb{R}^P \sim \mathcal{N}(\boldsymbol{0}_{P}, \boldsymbol{I}_{P})$\\
Finite Difference step $\nu > 0$\\
}}}
\vspace{2mm}
\STATE{
\COMMENT{Solution of~\eqref{eq:LS_pen} from Algorithm~\ref{alg:PD}}
\begin{align}
\widehat{\boldsymbol{x}}(\boldsymbol{y}; \boldsymbol{\Lambda}) &= \mathrm{PD}(\boldsymbol{y}, \boldsymbol{\Lambda})\\
\widehat{\boldsymbol{x}}(\boldsymbol{y} + \nu \boldsymbol{\varepsilon}; \boldsymbol{\Lambda}) &=  \mathrm{PD}(\boldsymbol{y}+ \nu \boldsymbol{\varepsilon}, \boldsymbol{\Lambda})
\end{align}
}
\vspace{1mm}
\STATE{
\COMMENT{Finite Difference Monte Carlo SURE~\eqref{eq:SURE_FDMC_def}}
\begin{align}
& \label{eq:SURE_eval} \widehat{R}_{\nu, \boldsymbol{\varepsilon}}(\boldsymbol{y}; \boldsymbol{\Lambda} \lvert \boldsymbol{\mathcal{S}}) =  \left\lVert  \textbf{A}\left( \boldsymbol{\Phi}\widehat{\boldsymbol{x}}(\boldsymbol{y} ; \boldsymbol{\Lambda}) - \boldsymbol{y} \right) \right\rVert_2^2   \\ \nonumber
&+ \frac{2}{\nu} \left\langle \textbf{A}^* \boldsymbol{\Pi} \left( \widehat{\boldsymbol{x}}(\boldsymbol{y} + \nu \boldsymbol{\varepsilon}; \boldsymbol{\Lambda}) - \widehat{\boldsymbol{x}}(\boldsymbol{y} ; \boldsymbol{\Lambda})\right), \boldsymbol{\mathcal{S}}\boldsymbol{\varepsilon}  \right\rangle- \mathrm{Tr}(\textbf{A} \boldsymbol{\mathcal{S}} \textbf{A}^*  ) 
\end{align}
}
\vspace{1mm}
\STATE{
\COMMENT{Solution of~\eqref{eq:LS_pen} and its differential w.r.t. $\boldsymbol{\Lambda}$ from Algorithm~\ref{alg:PD}}
\begin{align}
\left( \widehat{\boldsymbol{x}}(\boldsymbol{y}; \boldsymbol{\Lambda}) ,  \nablaL_{\boldsymbol{\Lambda}} \widehat{\boldsymbol{x}}(\boldsymbol{y}; \boldsymbol{\Lambda}) \right)  &= \partial \mathrm{PD}(\boldsymbol{y}, \boldsymbol{\Lambda})\\
\left( \widehat{\boldsymbol{x}}(\boldsymbol{y} + \nu \boldsymbol{\varepsilon}; \boldsymbol{\Lambda}) ,  \nablaL_{\boldsymbol{\Lambda}} \widehat{\boldsymbol{x}}(\boldsymbol{y}+ \nu \boldsymbol{\varepsilon}; \boldsymbol{\Lambda})  \right)&= \partial \mathrm{PD}(\boldsymbol{y}+ \nu \boldsymbol{\varepsilon}, \boldsymbol{\Lambda})
\end{align}
}
\vspace{1mm}
\STATE{
\COMMENT{Finite Difference Monte Carlo estimators~\eqref{eq:SURE_FDMC_def}~and~\eqref{eq:SUGAR_FDMC}}
\begin{align}
& \label{eq:SUGAR_eval} \widehat{R}_{\nu, \boldsymbol{\varepsilon}}(\boldsymbol{y}; \boldsymbol{\Lambda} \lvert \boldsymbol{\mathcal{S}}) =  \left\lVert  \textbf{A}\left( \boldsymbol{\Phi}\widehat{\boldsymbol{x}}(\boldsymbol{y} ; \boldsymbol{\Lambda}) - \boldsymbol{y} \right) \right\rVert_2^2   \\ \nonumber
&+ \frac{2}{\nu} \left\langle \textbf{A}^* \boldsymbol{\Pi} \left( \widehat{\boldsymbol{x}}(\boldsymbol{y} + \nu \boldsymbol{\varepsilon}; \boldsymbol{\Lambda}) - \widehat{\boldsymbol{x}}(\boldsymbol{y} ; \boldsymbol{\Lambda})\right), \boldsymbol{\mathcal{S}}\boldsymbol{\varepsilon}  \right\rangle- \mathrm{Tr}(\textbf{A} \boldsymbol{\mathcal{S}} \textbf{A}^*  ) \\
&\nablaL_{\boldsymbol{\Lambda}} \widehat{R}_{\nu,\boldsymbol{\varepsilon}}(\boldsymbol{y}; \boldsymbol{\Lambda} \lvert \boldsymbol{\mathcal{S}}) = 2 \left( \textbf{A} \boldsymbol{\Phi} \nablaL_{\boldsymbol{\Lambda}} \widehat{\boldsymbol{x}}(\boldsymbol{y}; \boldsymbol{\Lambda}) \right)^*  \textbf{A}\left( \boldsymbol{\Phi}\widehat{\boldsymbol{x}}(\boldsymbol{y} ; \boldsymbol{\Lambda}) - \boldsymbol{y} \right)  \\ \nonumber
 &+   \frac{2}{\nu} \left( \textbf{A}^* \boldsymbol{\Pi} \left( \nablaL_{\boldsymbol{\Lambda}}\widehat{\boldsymbol{x}}(\boldsymbol{y} + \nu \boldsymbol{\varepsilon}; \boldsymbol{\Lambda}) - \nablaL_{\boldsymbol{\Lambda}}\widehat{\boldsymbol{x}}(\boldsymbol{y} ; \boldsymbol{\Lambda})\right)\right)^* \boldsymbol{\mathcal{S}}\boldsymbol{\varepsilon} 
\end{align}
}
\vspace{1mm}
\STATE{
\textbf{Output:}
\raisebox{-2.15mm}{\parbox{0.7\linewidth}{
Risk estimate $\widehat{R}_{\nu, \boldsymbol{\varepsilon}}(\boldsymbol{y}; \boldsymbol{\Lambda} \lvert \boldsymbol{\mathcal{S}})$ \\
Gradient of the risk estimate $\nablaL_{\boldsymbol{\Lambda}}\widehat{R}_{\nu, \boldsymbol{\varepsilon}}(\boldsymbol{y}; \boldsymbol{\Lambda} \lvert \boldsymbol{\mathcal{S}})$} 
}}
\end{algorithmic}
\end{algorithm}

\subsection{Automatic risk minimization}
\label{subsec:BFGS}

Theorem~\ref{thm:SURE_FDMC} provides an asymptotically unbiased estimator of the risk $R[\widehat{\boldsymbol{x}}](\boldsymbol{\Lambda})$, denoted $\widehat{R}_{\nu, \boldsymbol{\varepsilon}}(\boldsymbol{y}; \boldsymbol{\Lambda}\lvert \boldsymbol{\mathcal{S}})$, based on Finite Difference Monte Carlo strategy.
Hence, for sufficiently small Finite Difference step $\nu > 0$, we can expect that the solution $\boldsymbol{\Lambda}^\dagger$ of Problem~\eqref{eq:min_risk}, minimizing the true risk, is well approximated by the hyperparameters $\widehat{\boldsymbol{\Lambda}}_{\nu, \boldsymbol{\varepsilon}}^\dagger$ minimizing the estimated risk
\begin{align}
\label{eq:min_sure}
\widehat{\boldsymbol{\Lambda}}_{\nu, \boldsymbol{\varepsilon}}^\dagger (\boldsymbol{y} \lvert \boldsymbol{\mathcal{S}}) \in \underset{\boldsymbol{\Lambda} \in \mathbb{R}^L}{\mathrm{Argmin}} \,\, \widehat{R}_{\nu, \boldsymbol{\varepsilon}}(\boldsymbol{y}; \boldsymbol{\Lambda}\lvert \boldsymbol{\mathcal{S}}) .
\end{align}
Then, since the dimensionality of $\boldsymbol{\Lambda} \in \mathbb{R}^L$ is ``low" enough (see Remark~\ref{rq:low_dim}), Problem~\eqref{eq:min_sure} is addressed performing a quasi-Newton descent algorithm, using 
the estimated gradient of the risk $\nablaL_{\boldsymbol{\Lambda}}\widehat{R}_{\nu, \boldsymbol{\varepsilon}}(\boldsymbol{y}; \boldsymbol{\Lambda}\lvert \boldsymbol{\mathcal{S}})$, provided by Theorem~\ref{thm:SUGAR}. \\
\begin{algorithm}[h!]
\caption{\label{alg:BFGS} Automated selection of hyperparameters minimizing quadratic risk.}
\begin{algorithmic}
\STATE{
\textbf{Inputs:}
\raisebox{-0.83cm}{\parbox{0.7\linewidth}{Observations $\boldsymbol{y}$ \\ 
Covariance matrix $\boldsymbol{\mathcal{S}}$\\
Monte Carlo vector $\boldsymbol{\varepsilon} \in \mathbb{R}^P \sim \mathcal{N}(\boldsymbol{0}_{P}, \boldsymbol{I}_{P})$\\
Finite Difference step $\nu > 0$\\
}}}
\STATE{
\textbf{Initialization:}
\raisebox{-1.75mm}{\parbox{0.7\linewidth}{ $\boldsymbol{\Lambda}^{[0]} \in \mathbb{R}^L$, Inverse Hessian $\boldsymbol{H}^{[0]} \in \mathbb{R}^{L\times L}$,\\
Gradient $\nablaL_{\boldsymbol{\Lambda}}\widehat{R}^{[0]} =  \mathrm{SUGAR}(\boldsymbol{y}, \boldsymbol{\Lambda}^{[0]}, \boldsymbol{\mathcal{S}}, \nu, \boldsymbol{\varepsilon}) $
}}}
\vspace{2mm}
\FOR{$t = 0$ \TO $ T_{\max}-1$}
\STATE{
\vspace{2mm}
\COMMENT{Descent direction from gradient of the risk estimate:}
\begin{align}
\label{eq:dir}
\boldsymbol{d}^{[t]} = - \boldsymbol{H}^{[t]} \nablaL_{\boldsymbol{\Lambda}}\widehat{R} ^{[t]}
\end{align}
\vspace{2mm}
 \COMMENT{Line search to find descent step:}
\begin{align}
\label{eq:line_search} 
\alpha^{[t]} &\in \underset{\alpha \in \mathbb{R}}{\mathrm{Argmin}} \, \widehat{R}( \boldsymbol{\Lambda}^{[t]} + \alpha \boldsymbol{d}^{[t]}),\quad  \text{with }  \widehat{R}( \boldsymbol{\Lambda}  ) = \mathrm{SURE}(\boldsymbol{y}, \boldsymbol{\Lambda}   , \boldsymbol{\mathcal{S}}, \nu , \boldsymbol{\varepsilon}) 
\end{align}
\vspace{2mm}
\COMMENT{Quasi-Newton descent step on $\boldsymbol{\Lambda}$:}
\begin{align}
\label{eq:grd_dsct}
\boldsymbol{\Lambda}^{[t+1]} = \boldsymbol{\Lambda}^{[t]} + \alpha^{[t]} \boldsymbol{d}^{[t]}
\end{align}
\vspace{2mm}
\COMMENT{Gradient update:}
\begin{align}
\nablaL_{\boldsymbol{\Lambda}}\widehat{R}^{[t+1]} =  \mathrm{SUGAR}(\boldsymbol{y}, \boldsymbol{\Lambda}^{[t+1]}, \boldsymbol{\mathcal{S}}, \nu, \boldsymbol{\varepsilon}) 
\end{align}
\vspace{2mm}
\COMMENT{Gradient increment}
\begin{align}
\boldsymbol{u}^{[t]} &=\nablaL_{\boldsymbol{\Lambda}}\widehat{R}^{[t+1]}  - \nablaL_{\boldsymbol{\Lambda}}\widehat{R}^{[t]}
\end{align}
\vspace{2mm}
\COMMENT{BFGS update of inverse Hessian~\eqref{eq:BFGS_update}:}
\begin{align}
\boldsymbol{H}^{[t+1]} &= \mathrm{BFGS}(\boldsymbol{H}^{[t]} ,  \boldsymbol{d}^{[t]} , \boldsymbol{u}^{[t]})
\end{align}
}
\ENDFOR
\vspace{2mm}
\STATE{
\textbf{Outputs:}
 \raisebox{-1.85mm}{\parbox{0.86\linewidth}{\textit{Finite-time} solution of Problem~\eqref{eq:min_sure} \hspace{1.5mm} $\widehat{\boldsymbol{\Lambda}}_{\nu, \boldsymbol{\varepsilon}}^\mathrm{BFGS} (\boldsymbol{y}\lvert \boldsymbol{\mathcal{S}}) \triangleq \boldsymbol{\Lambda}^{[T_{\max}]}$ \\
 Estimate with automated selection of $\boldsymbol{\Lambda}$ \hspace{1mm} $\widehat{\boldsymbol{x}}^{\mathrm{BFGS}}_{\nu,\boldsymbol{\varepsilon}}(\boldsymbol{y}\lvert \boldsymbol{\mathcal{S}}) \triangleq  \mathrm{PD}(\boldsymbol{y}, \boldsymbol{\Lambda}^{[T_{\max}]})$}} 
}
\end{algorithmic}
\end{algorithm}

A sketch of quasi-Newton descent, particularized to Problem~\eqref{eq:min_sure}, is detailed in Algorithm~\ref{alg:BFGS}.
It generates a sequence $\left(\boldsymbol{\Lambda}^{[t]}\right)_{t \in \mathbb{N}}$ converging toward a minimizer of $\widehat{R}_{\nu, \boldsymbol{\varepsilon}}(\boldsymbol{y}; \boldsymbol{\Lambda} \lvert \boldsymbol{\mathcal{S}})$.
This algorithm relies on a gradient descent step~\eqref{eq:grd_dsct} involving a descent direction $\boldsymbol{d}^{[t]}$ obtained from the product of BFGS approximated inverse Hessian matrix $\boldsymbol{H}^{[t]}$ and the gradient $\partial_{\boldsymbol{\Lambda}} \widehat{R}_{\nu, \boldsymbol{\varepsilon}}(\boldsymbol{y}; \boldsymbol{\Lambda} \lvert \boldsymbol{\mathcal{S}})$ obtained from SUGAR (see Algorithm~\ref{alg:SURE_SUGAR}).
The descent step size $\alpha^{[t]}$ is obtained from a line search, derived in~\eqref{eq:line_search}, which stops when Wolfe conditions are fulfilled~\cite{nocedal2006numerical, curtis2017bfgs}.
Finally, the approximated inverse Hessian matrix $\boldsymbol{H}^{[t]}$ is updated according to 
Definition~\ref{def:BFGS_update}.

\begin{remark}
The line search, Step~\eqref{eq:line_search}, is the most time consuming.
Indeed, the routines $\mathrm{SURE}$ and $\mathrm{SUGAR}$ are called for several hyperparameters of the form $\boldsymbol{\Lambda}^{[t]} + \alpha \boldsymbol{d}^{[t]}$, each call requiring to run differentiated primal-dual scheme twice.
\end{remark}

\begin{definition}[Broyden–Fletcher–Goldfarb–Shanno (BFGS)]
\label{def:BFGS_update}
Let $\boldsymbol{d}^{[t]}$ be the descent direction and $\boldsymbol{u}^{[t]}$ the gradient increment at iteration $t$, the approximated inverse Hessian matrix $\boldsymbol{H}^{[t]}$ BFGS update writes
\begin{align}
\label{eq:BFGS_update}
\boldsymbol{H}^{[t+1]}  &=\left( \boldsymbol{I}_L  - \frac{\boldsymbol{d}^{[t]} \left( \boldsymbol{u}^{[t]} \right)^\top }{\left( \boldsymbol{u}^{[t]} \right)^\top \boldsymbol{d}^{[t]} }\right)\boldsymbol{H}^{[t]} \left( \boldsymbol{I}_L  - \frac{\boldsymbol{u}^{[t]} \left( \boldsymbol{d}^{[t]} \right)^\top }{\left( \boldsymbol{u}^{[t]} \right)^\top \boldsymbol{d}^{[t]} }\right) + \alpha^{[t]}  \frac{\boldsymbol{d}^{[t]} \left( \boldsymbol{d}^{[t]}\right)^\top  }{  \left( \boldsymbol{u}^{[t]} \right)^\top \boldsymbol{d}^{[t]} }.
\end{align} 
This step constitutes a routine, named ``BFGS", defined as
\begin{align}
\boldsymbol{H}^{[t+1]}\triangleq \mathrm{BFGS}(\boldsymbol{H}^{[t]} , \boldsymbol{d}^{[t]}, \boldsymbol{u}^{[t]}).
\end{align}
\end{definition}
For detailed discussions on low memory implementations of BFGS, box constraints management, and others algorithmic tricks the interested reader is referred to~\cite{byrd1995limited, nocedal2006numerical, curtis2017bfgs}.\\

Convergence conditions for quasi-Newton algorithms relies on the behavior of second derivatives of 
the objective function~\cite{nocedal2006numerical}. 
Most of the time, when it comes to sequential estimators, one has no information about the twice differentiability of generalized SURE with respect to hyperparameters.
Hence, the convergence of Algorithm~\ref{alg:BFGS} will be assessed numerically.
Further, quasi-Newton algorithms being known to be sensitive to initialization, special attention needs to be paid to the initialization of both hyperparameters $\boldsymbol{\Lambda}$ and approximated inverse Hessian $\boldsymbol{H}$~(see Section~\ref{subsec:BFGS_num}).

\begin{remark}
Given a parametric estimator $\widehat{\boldsymbol{x}}(\boldsymbol{y};\boldsymbol{\Lambda})$, possibly obtained by another routine than $\mathrm{PD}$, Algorithms~\ref{alg:SURE_SUGAR}~and~\ref{alg:BFGS} can be used, provided that one has a routine equivalent to $\partial \mathrm{PD}$, computing $\nablaL_{\boldsymbol{\Lambda}}\widehat{\boldsymbol{x}}(\boldsymbol{y};\boldsymbol{\Lambda})$.
The reader can find other differentiated proximal algorithms in~\cite{deledalle2014stein}.
\end{remark}

\section{Hyperparameter tuning for texture segmentation}
\label{sec:text_seg}

The formalism proposed above for the automated selection of the regularization hyperparameters is now specified to total-variation based texture segmentation. 
Section~\ref{sec:tv_seg} formulates the texture segmentation problem as the minimization of a convex objective function. 
Then, in Section~\ref{sec:cast}, this segmentation procedure is cast into the general formalism of Sections~\ref{sec:SURE}~and~\ref{sec:SUGAR}. 
The hypothesis needed to apply Theorems~\ref{thm:SURE_FDMC}~and~\ref{thm:SUGAR} are discussed one by one in the context of texture segmentation. 
Finally, the practical evaluation of the estimators of the risk $\widehat{R}_{\nu, \boldsymbol{\varepsilon}}(\boldsymbol{\ell}; \boldsymbol{\Lambda}\lvert \boldsymbol{\mathcal{S}})$ and of the gradient of the risk $\nablaL_{\boldsymbol{\Lambda}} \widehat{R}_{\nu, \boldsymbol{\varepsilon}}(\boldsymbol{\ell}; \boldsymbol{\Lambda}\lvert \boldsymbol{\mathcal{S}})$ is discussed in Section~\ref{sec:practical}.

\subsection{Total-variation based texture segmentation}
\label{sec:tv_seg}

\subsubsection{Piecewise homogeneous fractal texture model}
\label{subsec:monofractal}

Let  $X \in \mathbb{R}^{N_1\times N_2}$ denote the texture to be segmented, consisting of a real-valued discrete field defined on a grid of pixels $\Omega = \lbrace 1, \hdots, N_1\rbrace \times \lbrace 1, \hdots, N_2\rbrace$. 
Texture $X$ is assumed to be formed as the union of $M$ independent Gaussian textures, existing on a set of disjoint supports,
\begin{align}
\label{eq:partition}
\Omega = \overline{\Omega}_1 \cup \cdots \cup \overline{\Omega}_M,\quad \text{with} \quad \overline{\Omega}_m \cap \overline{\Omega}_{m'} = \emptyset \quad \text{if} \quad m \neq m'.
\end{align}
Each homogeneous Gaussian texture, defined on $\Omega_m$ is characterized by two global fractal features, the scaling (or Hurst) exponent $\overline{H}_m$ and the variance $\overline{\Sigma}_m^2$, that fully control its statistics. 
Interested readers are referred to e.g., \cite{pascal2019nonsmooth} for the detailed definition of Gaussian fractal textures. 
Figures~\ref{fig:configA} and \ref{fig:configB} propose examples of such piecewise Gaussian fractal textures, with $M = 2$ and mask shown in Figure~\ref{fig:mask_vh_ellipse}.

\begin{figure}[t!]
\centering
\begin{subfigure}{3.5cm}
\centering
\includegraphics[width = 3cm]{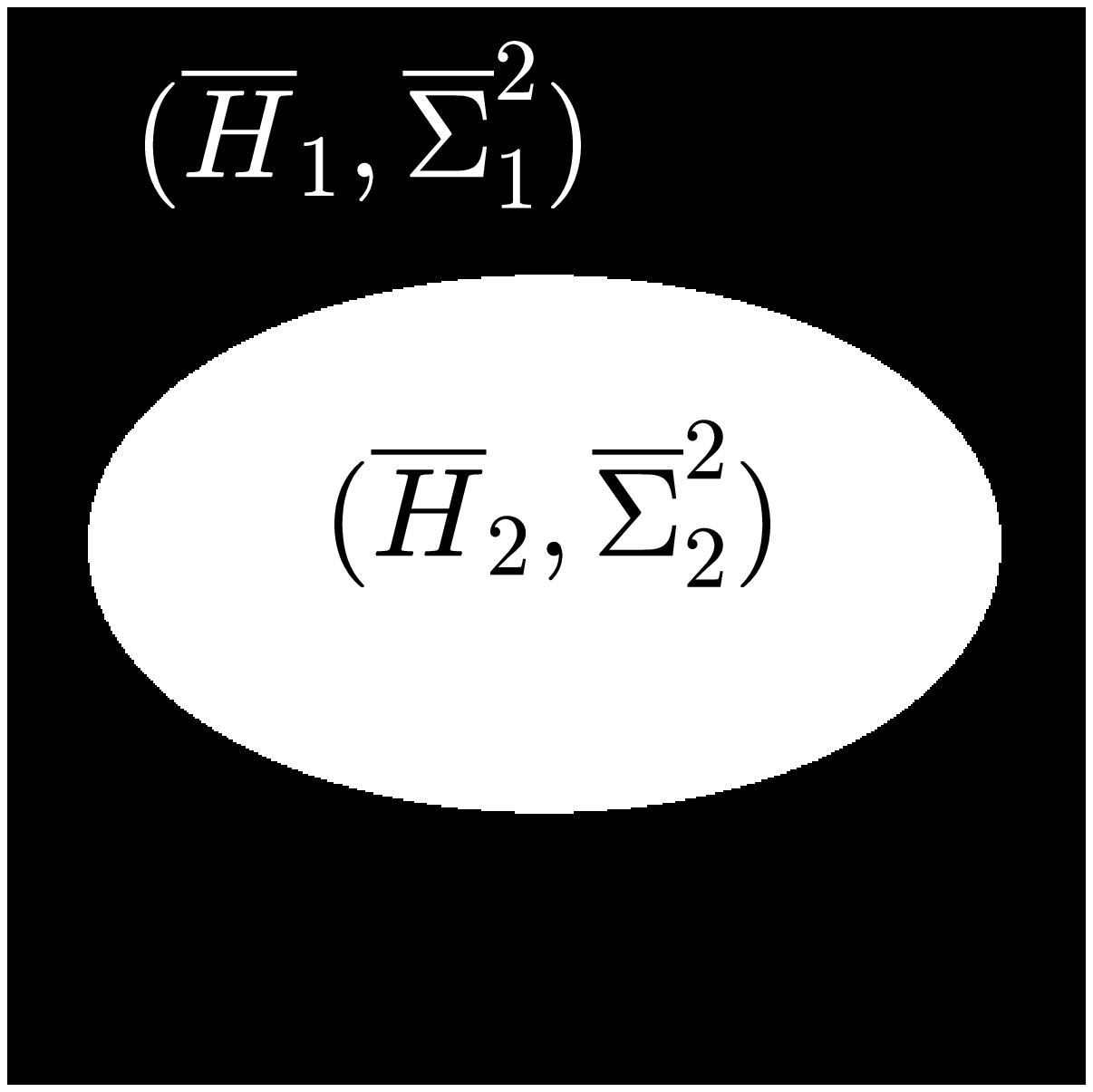}
\subcaption{ \label{fig:mask_vh_ellipse} Elliptic mask}
\end{subfigure}
\begin{subfigure}{3.5cm}
\centering
\includegraphics[width = 3cm]{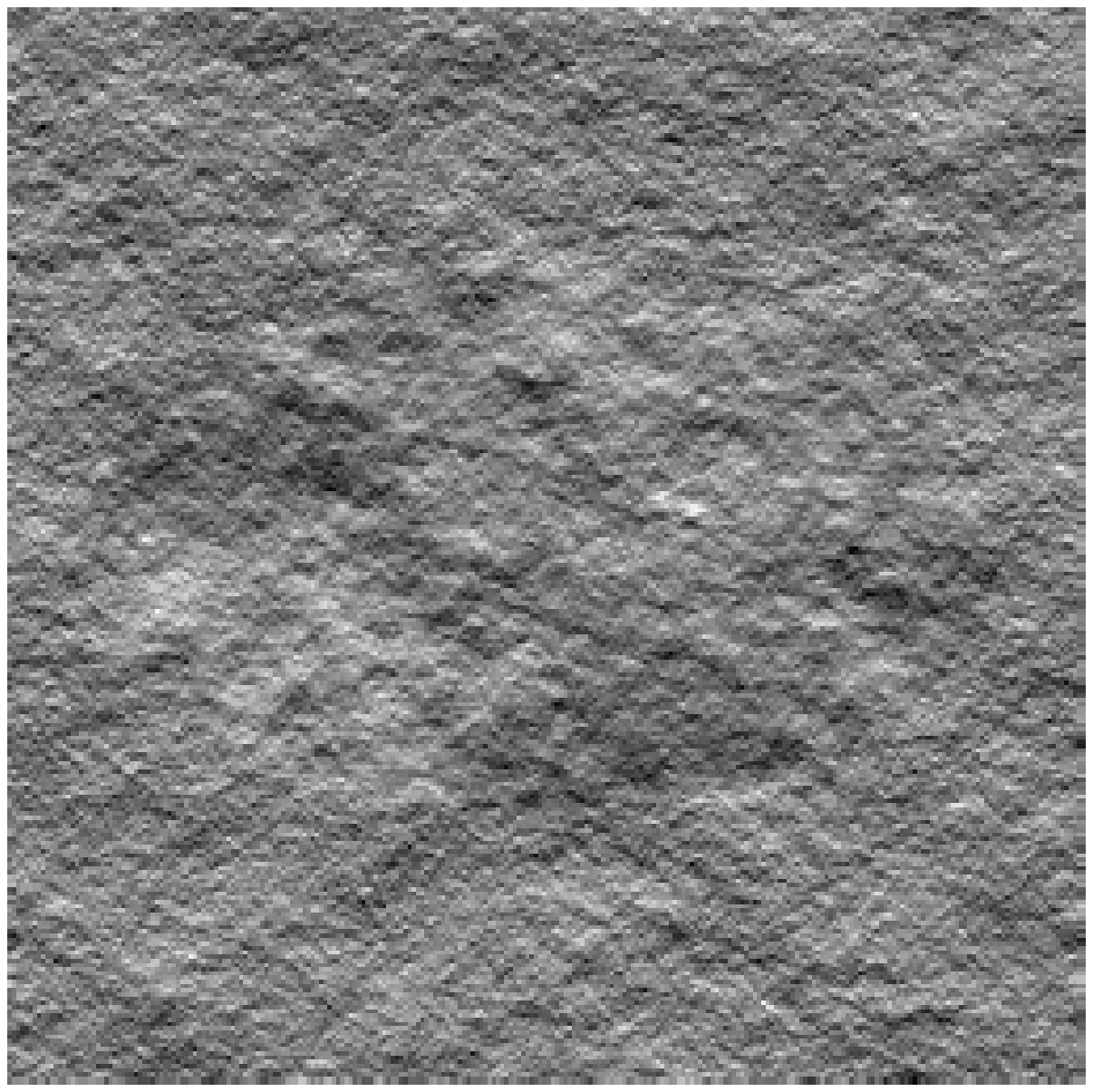}
\subcaption{\label{fig:configA} $ X$: Texture~``\textbf{D}"}
\end{subfigure}
\begin{subfigure}{3.5cm}
\centering
\includegraphics[width = 3cm]{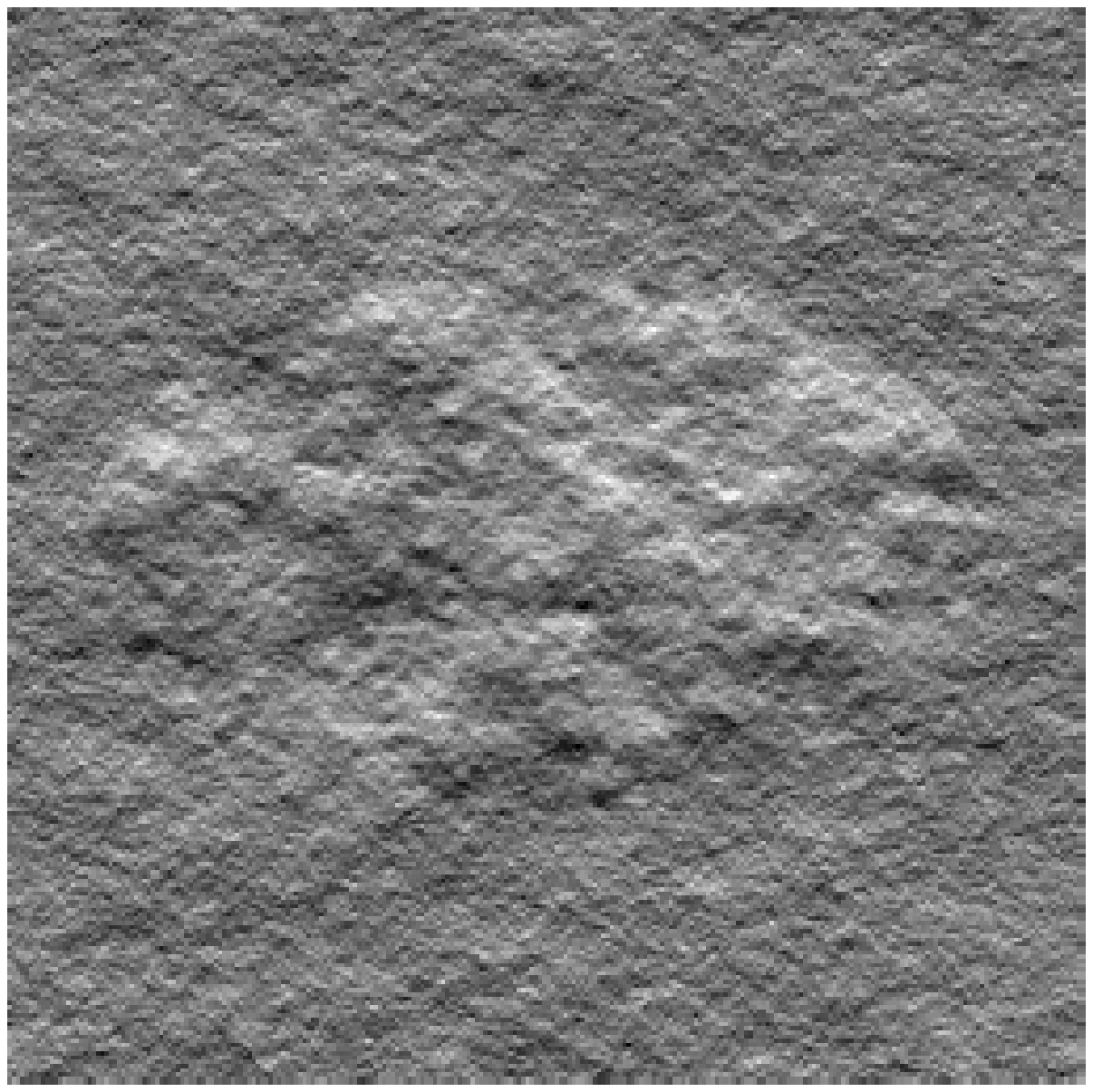}
\subcaption{\label{fig:configB} $X$: Texture~``\textbf{E}"}
\end{subfigure}
\caption{\label{fig:seg_setup}(a)~Mask for piecewise textures composed of two regions: ``background" (in black) on which the texture is characterized by homogeneous local regularity~$\bar{\boldsymbol{h}} \equiv \overline{H}_0$ and local variance~$\bar{\boldsymbol{\sigma}}^2\equiv \overline{\Sigma}^2_0$ and ``foreground" (in white) on which the texture is characterized by homogeneous local regularity~$\bar{\boldsymbol{h}} \equiv \overline{H}_1$ and local variance~$\bar{\boldsymbol{\sigma}}^2\equiv \overline{\Sigma}^2_1$. (b)~and~(c)~Synthetic piecewise homogeneous textures used for performance assessment, with resolution $256\times 256$ pixels.}
\end{figure}

\subsubsection{Local regularity and wavelet \textit{leader} coefficients}

It was abundantly discussed in the literature (cf. e.g.~\cite{Wendt2008c, Wendt2009b, wendt2009wavelet, pont2011optimized, nelson2016semi}) that textures can be well-analyzed by local fractal features (local regularity and local variance), that can be accurately estimated from wavelet \textit{leader} coefficients, as extensively described and studied in e.g.~\cite{Wendt2009b,pustelnik_combining_2016}, to which the reader is referred for a detailed presentation. 

Let  $\chi^{(d)}_{j,\underline{n}}$ 
denote the coefficients of the undecimated 2D Discrete Wavelet Transform of image $X$, at octave $ j = j_1, \hdots, j_2$ and pixel $\underline{n} \in \Omega  $, with 
the 2D-wavelet basis being defined from the 4 combination (hence the orientations $d \in \lbrace 0, 1, 2, 3 \rbrace$) of 1D wavelet $\psi $ and scaling functions. 
Interested readers are referred to e.g., \cite{Mallat:2008:WTS:1525499} for a full definition of the $\chi^{(d)}_{j,\underline{n}}$.
Wavelet \textit{leaders}, $\lbrace \mathcal{L}_{j,\underline{n}}, \, j = j_1,\hdots, j_2, \, \underline{n} \in \Omega \rbrace$, are further defined as local suprema over a spatial neighborhood and across all finest scales of the $\chi^{(d)}_{j,\underline{n}}$~\cite{Wendt2009b}:
\begin{align}
\label{eq:leaders_def}
\mathcal{L}_{j, \underline{n}}= \underset{\begin{array}{c}
d = \lbrace 1, 2, 3 \rbrace \\
\lambda_{j', \underline{n}'} \subset 3\lambda_{j,\underline{n}}
\end{array}}{\sup} \, \left \lvert 2^{j } \chi_{j', \underline{n}'}^{(d)} \right\rvert, \quad \text{where} \quad \left\lbrace \begin{array}{l}
\lambda_{j, \underline{n}} = \left[ \underline{n}, \underline{n}+2^j \right[,\\
3\lambda_{j, \underline{n}} =\underset{\underline{p} \in \lbrace -2^j, 0, 2^j\rbrace^2}{ \cup} \lambda_{j, \underline{n} + \underline{p}} .
\end{array}\right.
\end{align}
Local regularity $\bar{h}_{\underline{n}} $  and local variance $ \bar{\sigma}^2_{\underline{n}} $ at pixel $n$ can be defined via the local power law behavior of the wavelet leaders across scales \cite{Wendt2009b,wendt2009wavelet}: 
\begin{align}
\label{eq:model}
 \mathcal{L}_{j, \underline{n}} =\bar{\sigma}_{\underline{n}}  2^{j \bar{h}_{\underline{n}}}  \beta_{j,\underline{n}}, \quad \text{as} \, \, 2^j \rightarrow 0,
\end{align}
where $\beta_{j,\underline{n}}$ can be well approximated for large classes of textures~\cite{Wendt2008c} as log-normal random variables, with log-mean $\mu = 0$.
For piecewise fractal textures $X$ described in Section~\ref{subsec:monofractal}, local regularity $\bar{\boldsymbol{h}}\in \mathbb{R}^{N_1 \times N_2}$ and local variance $\bar{\boldsymbol{\sigma}}^2\in \mathbb{R}^{N_1 \times N_2}$ maps are piecewise constant, reflecting the global scaling exponent $H$ and variance $\Sigma^2$ of the homogeneous textures as: 
\begin{align}
\label{eq:leaders_model}
\left( \forall m \in \lbrace 1, \hdots, M \rbrace\right) \,  \left( \forall \underline{n} \in \overline{\Omega}_m \right) \quad\bar{h}_{\underline{n}} \equiv \overline{H}_m \text{ and } \bar{\sigma}_{\underline{n}}^2 \equiv \overline{\Sigma}_m^2 F(\overline{H}_m,\psi),
\end{align}
with $F(\overline{H}_m,\psi)$ a deterministic function studied in \cite{veitch1999wavelet} and not of interest here.\\
Taking the logarithm of Equation~\eqref{eq:model} leads to the following linear formulation
\begin{align}
\label{eq:logmodel} \ell_{j,\underline{n}} =  \bar{v}_{ \underline{n}} + j \bar{h}_{\underline{n}} + \zeta_{j,\underline{n}}, \quad \text{as} \, 2^j \rightarrow 0
\end{align}
with log-\textit{leaders} $\ell_{j,\underline{n}}=\log_2(\mathcal{L}_{j, \underline{n}})$, log-variance $ \bar{v}_{ \underline{n}} = \log_2 \bar{\sigma}_{\underline{n}}$ and zero-mean Gaussian noise $\zeta_{j,\underline{n}} = \log_2(\beta_{j,\underline{n}})$.
In the following, the \textit{leader} coefficients at scale $2^j$ are denoted $\boldsymbol{\ell}_j \in \mathbb{R}^{N_1N_2}$, and the complete collection of \textit{leaders} is stored in $\boldsymbol{\ell} \in \mathbb{R}^{JN_1N_2}$.

\subsubsection{Total variation regularization and iterative thresholding}
\label{subsec:joint}
The linear regression estimator inspired by~\eqref{eq:logmodel} 
\begin{align}
\label{eq:h_v_LR}
&\begin{pmatrix}
\widehat{\boldsymbol{h}}_{\mathrm{LR}}(\boldsymbol{\ell})\\
\widehat{\boldsymbol{v}}_{\mathrm{LR}}(\boldsymbol{\ell})
\end{pmatrix} = \underset{ \scriptsize \begin{pmatrix} \boldsymbol{h}\\ \boldsymbol{v} \end{pmatrix} \in \mathbb{R}^{2N_1N_2}}{\mathrm{argmin}} \,  \sum_{j = j_1}^{j_2} \lVert  j \boldsymbol{h} + \boldsymbol{v} - \boldsymbol{\ell}_j \rVert_2^2
\end{align}
achieves poor performance in estimating piecewise constant local regularity and local power, hence precluding an accurate segmentation of the piecewise homogeneous textures.
Thus, a functional for \textit{joint} attribute estimation and segmentation was proposed in~\cite{pascal2019nonsmooth}, leading to the following Penalized Least Squares~\eqref{eq:LS_pen}: 
\begin{align}
\label{eq:jointpb}
&\begin{pmatrix}
\widehat{\boldsymbol{h}}(\boldsymbol{\ell} ;\boldsymbol{\Lambda}) \\
\widehat{\boldsymbol{v}}(\boldsymbol{\ell} ; \boldsymbol{\Lambda})
\end{pmatrix} \in \underset{ \scriptsize \begin{pmatrix} \boldsymbol{h}\\ \boldsymbol{v} \end{pmatrix} \in \mathbb{R}^{2N_1N_2}}{\mathrm{Argmin}} \,  \sum_{j = j_1}^{j_2} \lVert  j \boldsymbol{h} + \boldsymbol{v} - \boldsymbol{\ell}_j \rVert_2^2 + \lambda_h \mathrm{TV}(\boldsymbol{h}) + \lambda_v \mathrm{TV}(\boldsymbol{v}),
\end{align}
where TV stands for the well-known isotropic Total Variation, 
defined as a mixed $\ell_{2,1}$-norm composed with spatial gradient operators
\begin{align}
\mathrm{TV}(\boldsymbol{h})  = \sum_{\underline{n} \in \Omega} \sqrt{\left( \textbf{D}_1 \boldsymbol{h} \right)^2_{\underline{n}} + \left( \textbf{D}_2 \boldsymbol{h} \right)^2_{\underline{n}}} = \sum_{\underline{n} \in \Omega} \lVert\left( \textbf{D} \boldsymbol{h} \right)_{\underline{n}}\rVert_2,
\end{align}
where $\textbf{D}_1 : \mathbb{R}^{N_1N_2} \rightarrow \mathbb{R}^{N_1N_2}$ (resp. $\textbf{D}_2 : \mathbb{R}^{N_1N_2} \rightarrow \mathbb{R}^{N_1N_2}$) stand for the discrete spatial horizontal (resp. vertical) gradient operator.
This TV-penalized least square estimator is designed to favor piecewise constancy of the estimates $\widehat{\boldsymbol{h}}$ and $\widehat{\boldsymbol{v}}$, making used of $\ell_1$-norm, i.e. $q=1$ in~\eqref{eq:LS_pen}. \\
Finally, following~\cite{cai2013multiclass,cai2018linkage}, the estimate $\widehat{\boldsymbol{h}}(\boldsymbol{\ell};\boldsymbol{\Lambda})$ is \textit{thresholded} to yield a posterior \textit{piecewise constant} map of local regularity $T \widehat{\boldsymbol{h}}(\boldsymbol{\ell};\boldsymbol{\Lambda})$, taking \textit{exactly} $M$ different values $\widehat{H}_1(\boldsymbol{\ell};\boldsymbol{\Lambda}), \hdots, \widehat{H}_M(\boldsymbol{\ell};\boldsymbol{\Lambda})$.
The resulting segmentation 
\begin{align}
\label{eq:estim_partition}
\Omega = \widehat{\Omega}_1(\boldsymbol{\ell};\boldsymbol{\Lambda}) \cup \cdots \cup \widehat{\Omega}_M(\boldsymbol{\ell};\boldsymbol{\Lambda})
\end{align}
is deduced from $T \widehat{\boldsymbol{h}}(\boldsymbol{\ell};\boldsymbol{\Lambda})$, defining
\begin{align}
\label{eq:cai_partition}
\left(\forall m\in\lbrace 1, \hdots, M\rbrace\right), \,   \widehat{\Omega}_m(\boldsymbol{\ell};\boldsymbol{\Lambda}) = \left\lbrace  \underline{n} \in \Omega \left\lvert
\left(  T \widehat{\boldsymbol{h}}(\boldsymbol{\ell};\boldsymbol{\Lambda}) \right)_{\underline{n}} \equiv \widehat{H}_m(\boldsymbol{\ell};\boldsymbol{\Lambda}) \right. \right\rbrace.
\end{align}
This is illustrated in Figure~\ref{fig:cai}, for a two-region synthetic texture with ground truth piecewise constant local regularity $\bar{\boldsymbol{h}}$~in Figure~\ref{fig:true_h}.

\begin{figure}[h!]
\centering
\begin{subfigure}{0.32\linewidth}
\flushright
\includegraphics[width = 3.5cm]{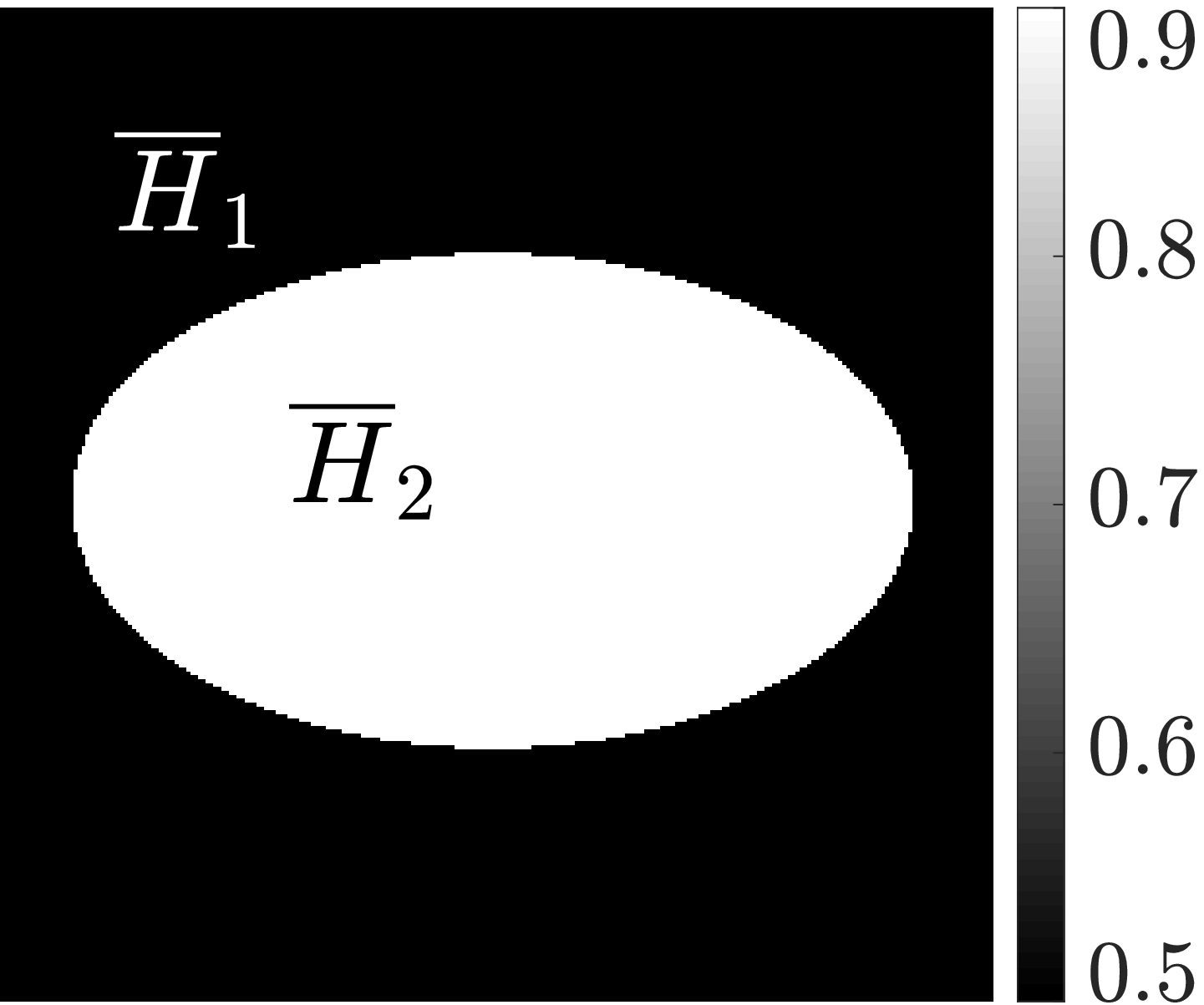}
\subcaption{\label{fig:true_h}Ground truth $\bar{\boldsymbol{h}}$}
\end{subfigure}
\begin{subfigure}{0.32\linewidth}
\flushright
\includegraphics[width = 3.5cm]{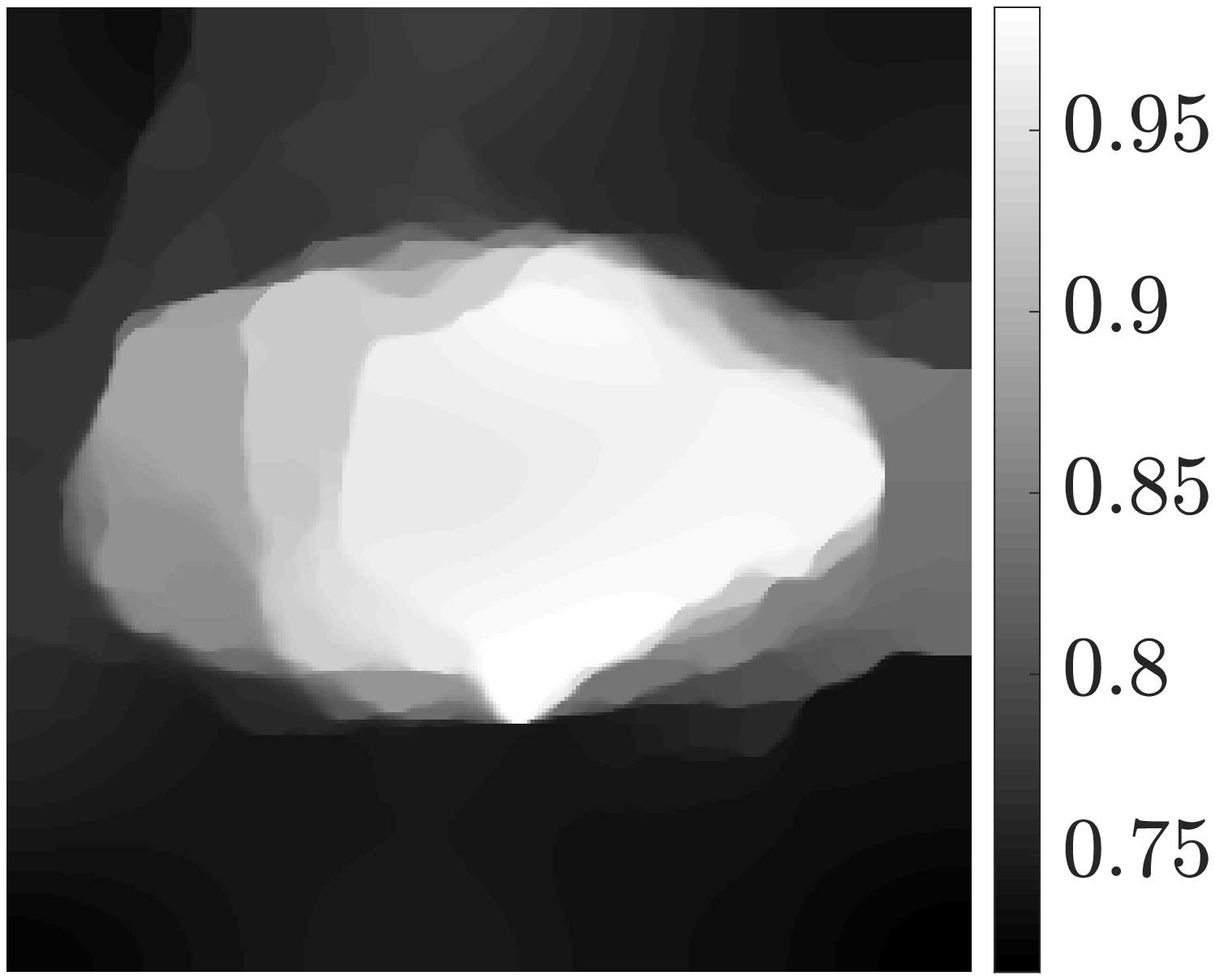}
\subcaption{Estimate $\widehat{\boldsymbol{h}}(\boldsymbol{\ell};\boldsymbol{\Lambda})$}
\end{subfigure}
\begin{subfigure}{0.32\linewidth}
\flushright
\includegraphics[width = 3.5cm]{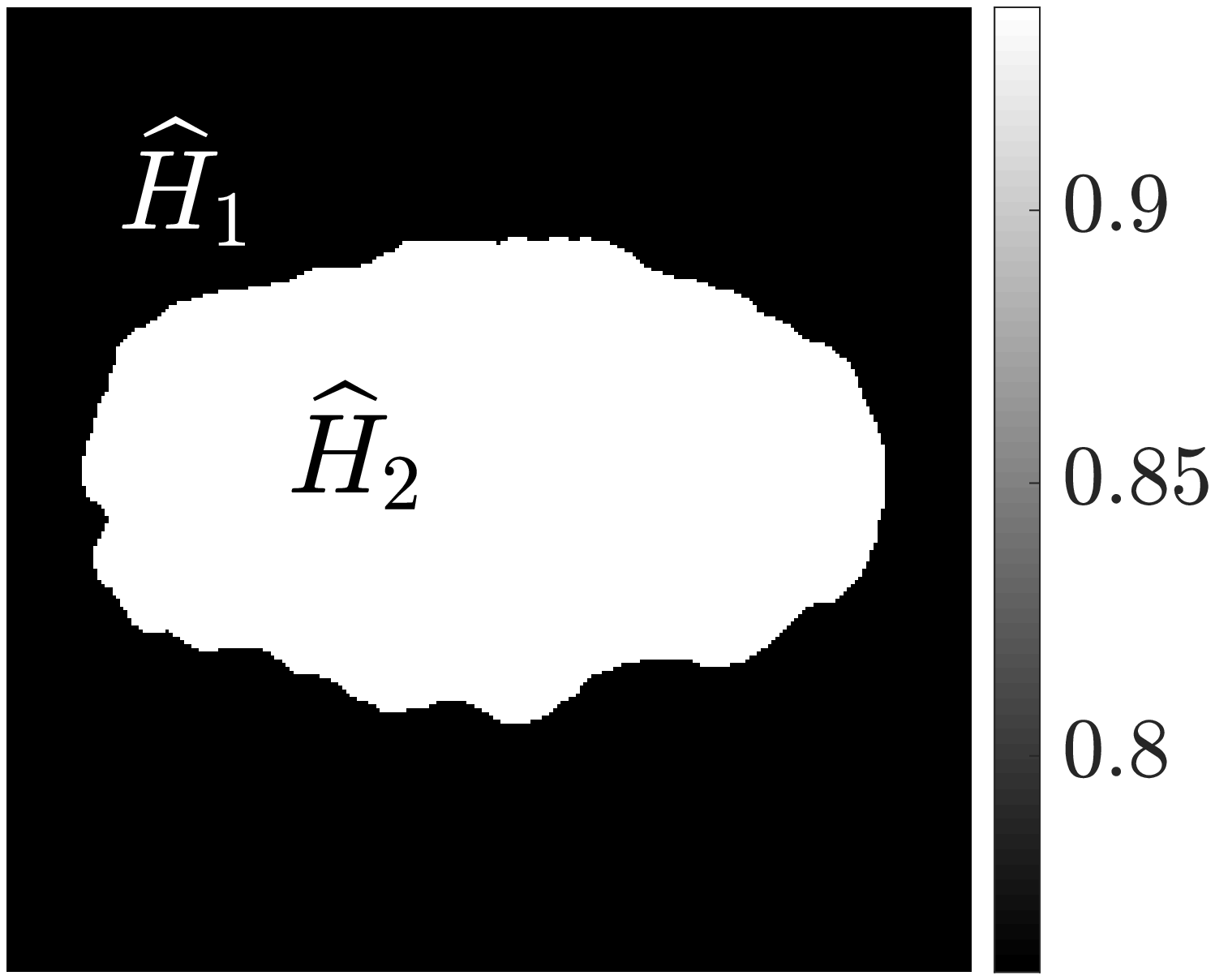}
\subcaption{Thresholded $T \widehat{\boldsymbol{h}}(\boldsymbol{\ell};\boldsymbol{\Lambda})$}
\end{subfigure}
\caption{\label{fig:cai} Example of the thresholding of $\widehat{\boldsymbol{h}}(\boldsymbol{\ell};\boldsymbol{\Lambda})$ to obtain a two-region segmentation where $\boldsymbol{\ell}$ denotes the wavelet \textit{learders} associated with the Texture $X$:~``\textbf{E}" displayed in Figure~\ref{fig:configB}}
\end{figure}

\subsection{Reformulation in term of Model~\eqref{eq:obs_gen_model}}
\label{sec:cast}
\subsubsection{Observation $\boldsymbol{y}$}

To cast the log-linear behavior~\eqref{eq:logmodel} into the general model~\eqref{eq:obs_gen_model}, vectorized quantities for $\bar{\boldsymbol{v}}$, $\bar{\boldsymbol{h}}$ and $\boldsymbol{\ell}$ are used.
The $N_1 \times N_2$ maps $\bar{\boldsymbol{h}}$ and $\bar{\boldsymbol{v}}$ are reshapped into vectors $\bar{\boldsymbol{x}}\in \mathbb{R}^{N}$, with $N = 2N_1N_2$, ordering the pixels in the lexicographic order.
The log-\textit{leaders} $\boldsymbol{\ell} = \left( \boldsymbol{\ell}_{j}\right)_{j_1\leq j \leq j_2}$, composed of $J \triangleq j_2-j_1 + 1$ octaves of resolution $N_1 \times N_2$  are vectorized, octaves by octaves, with lexical ordering of pixels, $\boldsymbol{\ell} \in \mathbb{R}^{P}$, with $P = JN_1N_2$.
 
Equation~\eqref{eq:logmodel} can then be cast into general model~\eqref{eq:obs_gen_model} as: 
\begin{align}
\label{eq:part_y}
&\text{Observations } \hspace{16mm} \boldsymbol{y} = \boldsymbol{\ell} \in \mathbb{R}^{P}, \hspace{22mm}   P = JN_1N_2\\
\label{eq:part_bx}
&\text{Ground truth } \hspace{15mm}\bar{\boldsymbol{x}} = \begin{pmatrix} \bar{\boldsymbol{h}} \\ \bar{\boldsymbol{v}} \end{pmatrix} \in \mathbb{R}^{N}, \hspace{15mm} N = 2N_1N_2\\
&\text{Linear degradation } \hspace{7mm} \label{eq:def_Phi} \boldsymbol{\Phi} : \left\lbrace \begin{array}{ll}
\mathbb{R}^{N} \rightarrow \mathbb{R}^{P} \\
\begin{pmatrix}\bar{\boldsymbol{h}} \\\bar{\boldsymbol{v}} \end{pmatrix} \mapsto \begin{pmatrix}  j\bar{\boldsymbol{h}} + \bar{\boldsymbol{v}} \end{pmatrix}_{j_1 \leq j \leq j_2}.
\end{array} \right. 
\end{align}

\subsubsection{Full-rank operator $\boldsymbol{\Phi}$}

Proposition~\ref{claim:phi_full} asserts that $\boldsymbol{\Phi}^*\boldsymbol{\Phi}$ is invertible (Assumption~\ref{hyp:full_rank}). 

\begin{claim}
\label{claim:phi_full}
The linear operator $\boldsymbol{\Phi}$ defined in~\eqref{eq:def_Phi} is bounded and its adjoint writes
\begin{align}
\label{eq:def_Phi*}
 \boldsymbol{\Phi}^* : \left\lbrace \begin{array}{ll}
\mathbb{R}^{P} \rightarrow  \mathbb{R}^{N} \\
\left(\boldsymbol{\ell}_j  \right)_{j_1 \leq j \leq j_2} \mapsto \begin{pmatrix}  \sum_{j=j_1}^{j_2} j\boldsymbol{\ell}_j \\  \\ \sum_{j=j_1}^{j_2} \boldsymbol{\ell}_j \end{pmatrix} 
\end{array} \right. 
\end{align}
Further, $\boldsymbol{\Phi}$ is full rank, and the following inversion formula holds
\begin{align}
\label{eq:phi*phi_inv}
 \left( \boldsymbol{\Phi}^*\boldsymbol{\Phi} \right)^{-1} = \frac{1}{F_2 F_0 - F_1^2} \begin{pmatrix}
 F_0 \boldsymbol{I}_{N/2} & - F_1 \boldsymbol{I}_{N/2} \\
- F_1 \boldsymbol{I}_{N/2} & F_2 \boldsymbol{I}_{N/2} 
  \end{pmatrix}, \, \, F_\alpha \triangleq \sum_{j=j_1}^{j_2} j^\alpha, \, \,\alpha \in \lbrace 0, 1, 2 \rbrace.
\end{align}
\end{claim}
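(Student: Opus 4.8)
The plan is to establish the four assertions in the order they are stated: boundedness of $\boldsymbol{\Phi}$, the closed form of the adjoint $\boldsymbol{\Phi}^*$, full-rankness, and finally the inversion formula~\eqref{eq:phi*phi_inv}.

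\textbf{Boundedness.} Since $\boldsymbol{\Phi}$ is linear between the finite-dimensional Hilbert spaces $\mathbb{R}^{N}$ and $\mathbb{R}^{P}$, it is automatically bounded. If an explicit bound is desired, I would note that each output block obeys $\lVert j\boldsymbol{h}+\boldsymbol{v}\rVert_2 \le |j|\,\lVert\boldsymbol{h}\rVert_2 + \lVert\boldsymbol{v}\rVert_2$, and summing the squares over $j_1\le j\le j_2$ controls $\lVert\boldsymbol{\Phi}(\boldsymbol{h},\boldsymbol{v})\rVert_2$ by a constant depending only on $j_1,j_2$ times $\lVert(\boldsymbol{h},\boldsymbol{v})\rVert_2$.

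\textbf{Adjoint.} I would obtain $\boldsymbol{\Phi}^*$ by expanding the canonical pairing. For $\boldsymbol{x}=(\boldsymbol{h},\boldsymbol{v})$ and $\boldsymbol{y}=(\boldsymbol{\ell}_j)_{j_1\le j\le j_2}$, bilinearity gives
\[
\langle\boldsymbol{\Phi}\boldsymbol{x},\boldsymbol{y}\rangle = \sum_{j=j_1}^{j_2}\langle j\boldsymbol{h}+\boldsymbol{v},\boldsymbol{\ell}_j\rangle = \Big\langle\boldsymbol{h},\sum_{j=j_1}^{j_2} j\boldsymbol{\ell}_j\Big\rangle + \Big\langle\boldsymbol{v},\sum_{j=j_1}^{j_2}\boldsymbol{\ell}_j\Big\rangle,
\]
so that by the defining property $\langle\boldsymbol{\Phi}\boldsymbol{x},\boldsymbol{y}\rangle=\langle\boldsymbol{x},\boldsymbol{\Phi}^*\boldsymbol{y}\rangle$ one reads off~\eqref{eq:def_Phi*}.

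\textbf{The Gram operator and the crux.} Composing the two maps yields $\boldsymbol{\Phi}^*\boldsymbol{\Phi}(\boldsymbol{h},\boldsymbol{v}) = (F_2\boldsymbol{h}+F_1\boldsymbol{v},\,F_1\boldsymbol{h}+F_0\boldsymbol{v})$ with $F_\alpha=\sum_{j=j_1}^{j_2} j^\alpha$, that is
\[
\boldsymbol{\Phi}^*\boldsymbol{\Phi} = \begin{pmatrix} F_2\boldsymbol{I}_{N/2} & F_1\boldsymbol{I}_{N/2}\\ F_1\boldsymbol{I}_{N/2} & F_0\boldsymbol{I}_{N/2}\end{pmatrix}.
\]
The only genuinely nontrivial point, and the step I expect to be the main obstacle, is to show the scalar discriminant $F_2F_0-F_1^2$ is strictly positive, since this is exactly what makes $\boldsymbol{\Phi}^*\boldsymbol{\Phi}$ invertible, hence $\boldsymbol{\Phi}$ full rank (Assumption~\ref{hyp:full_rank}). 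This is precisely the Cauchy--Schwarz inequality applied to the vectors $(j)_{j_1\le j\le j_2}$ and $(1,\dots,1)$: one has $F_1^2=\big(\sum_j j\big)^2\le\big(\sum_j j^2\big)\big(\sum_j 1\big)=F_2F_0$, with equality if and only if the two vectors are proportional, i.e. all octaves coincide. Since at least two distinct scales are used, $J=j_2-j_1+1\ge 2$, the inequality is strict and $F_2F_0-F_1^2>0$.

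\textbf{Inversion.} Finally I would verify~\eqref{eq:phi*phi_inv} by direct multiplication. Because all four blocks of $\boldsymbol{\Phi}^*\boldsymbol{\Phi}$ are scalar multiples of $\boldsymbol{I}_{N/2}$ they commute, so the $2\times2$ block matrix inverts exactly as a scalar symmetric $2\times2$ matrix: multiplying $\boldsymbol{\Phi}^*\boldsymbol{\Phi}$ by the claimed right-hand side and collecting the blocks, the off-diagonal contributions cancel and the diagonal ones reduce to $(F_2F_0-F_1^2)\boldsymbol{I}_{N/2}/(F_2F_0-F_1^2)=\boldsymbol{I}_{N/2}$, giving $\boldsymbol{I}_{N}$. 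This closes the argument.
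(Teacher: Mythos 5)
Your proof is correct and follows essentially the same route as the paper: read off the adjoint from the canonical pairing, compute the block Gram matrix $\boldsymbol{\Phi}^*\boldsymbol{\Phi}$, and invert it via the $2\times 2$ cofactor formula. The one point where you go beyond the paper's (very terse) proof is the explicit Cauchy--Schwarz argument showing $F_0F_2-F_1^2>0$ whenever $J\geq 2$, which the paper leaves implicit but which is indeed the step that makes the inversion legitimate.
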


\begin{proof}
Formula~\eqref{eq:def_Phi*} is obtained from straightforward computations.
Then, combining~\eqref{eq:def_Phi}~and~\eqref{eq:def_Phi*}, leads to 
\begin{align}
\label{eq:phi*phi}
\boldsymbol{\Phi}^*\boldsymbol{\Phi} = 
\begin{pmatrix}
F_2 \boldsymbol{I}_{N/2} & F_1 \boldsymbol{I}_{N/2} \\
 F_1 \boldsymbol{I}_{N/2} & F_0 \boldsymbol{I}_{N/2} 
  \end{pmatrix}
\end{align}
which is finally inverted using the $2\times 2$ cofactor matrix formula.
\end{proof}

\subsubsection{Projection operator}

Performing texture segmentation the discriminant attribute is the local regularity~$\boldsymbol{h}$, while local power $\boldsymbol{v}$ is an auxiliary feature.
Hence the \textit{projected} quadratic risk~\eqref{eq:risk_def} customized to texture segmentation reads:
\begin{align}
\label{eq:part_risk}
 R[\widehat{\boldsymbol{h}}]( \boldsymbol{\Lambda}) \triangleq \mathbb{E}_{\boldsymbol{\zeta}}  \left\lVert  \widehat{\boldsymbol{h}}(\boldsymbol{\ell} ; \boldsymbol{\Lambda}) -\bar{\boldsymbol{h}} \right\rVert_2^2,
\end{align}
with $\widehat{\boldsymbol{h}}(\boldsymbol{\ell} ; \boldsymbol{\Lambda})$ defined in~\eqref{eq:jointpb}.  

Then, the particularized projection operator in Definition~\ref{def:Pi} takes the matrix form
\begin{align}
\label{eq:part_pi}
\boldsymbol{\Pi} \triangleq \begin{pmatrix}
\boldsymbol{I}_{N/2} &\boldsymbol{Z}_{N/2} \\
\boldsymbol{Z}_{N/2} & \boldsymbol{Z}_{N/2}
\end{pmatrix} \quad \text{so that} \quad \boldsymbol{\Pi}\begin{pmatrix} \bar{\boldsymbol{h}} \\ \bar{\boldsymbol{v}} \end{pmatrix}  = \begin{pmatrix} \bar{\boldsymbol{h}} \\ \boldsymbol{0}_{N/2}\end{pmatrix}
\end{align}
where $\boldsymbol{I}_{N/2}$ (resp. $\boldsymbol{Z}_{N/2}$) denotes the identity (resp. null) matrix of size $N/2\times N/2$ and $\boldsymbol{0}_{N/2} $ the null vector of $\mathbb{R}^{N/2}$.

\subsubsection{Regularity of the estimates}

\begin{claim}
\label{claim:joint_hyp}
Problem~\eqref{eq:jointpb} has a unique solution $\begin{pmatrix}
\widehat{\boldsymbol{h}}(\boldsymbol{\ell} ;\boldsymbol{\Lambda}) \\
\widehat{\boldsymbol{v}}(\boldsymbol{\ell} ; \boldsymbol{\Lambda})
\end{pmatrix}$. \\
This solution is continuous and weakly differentiable w.r.t. $\boldsymbol{\ell}$ and integrable against the Gaussian probability density function~(Assumption~\ref{hyp:reg_int}). 
Further, both $\widehat{\boldsymbol{h}}(\boldsymbol{\ell};\boldsymbol{\Lambda})$ and $\widehat{\boldsymbol{v}}(\boldsymbol{\ell};\boldsymbol{\Lambda})$ are uniformly $L_1$-Lipschitz w.r.t. $\boldsymbol{\ell}$~(Assumption~\ref{hyp:lip_L1}). 
\end{claim}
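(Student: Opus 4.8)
The plan is to recognize Problem~\eqref{eq:jointpb} as a particular instance of the penalized Ordinary Least Squares~\eqref{eq:LS_pen}, with degradation operator $\boldsymbol{\Phi}$ from~\eqref{eq:def_Phi} and convex penalty $g(\boldsymbol{x}) = \lambda_h \mathrm{TV}(\boldsymbol{h}) + \lambda_v \mathrm{TV}(\boldsymbol{v})$, and then to verify the three required properties (uniqueness, regularity and integrability of Assumption~\ref{hyp:reg_int}, and $L_1$-Lipschitzianity of Assumption~\ref{hyp:lip_L1}) in turn. For uniqueness, I would note that the data-fidelity term $\lVert \boldsymbol{\ell} - \boldsymbol{\Phi}\boldsymbol{x} \rVert_2^2$ has Hessian $2\boldsymbol{\Phi}^*\boldsymbol{\Phi}$, which is positive definite by Proposition~\ref{claim:phi_full}; hence it is $\gamma$-strongly convex with $\gamma = 2\min\mathrm{Sp}(\boldsymbol{\Phi}^*\boldsymbol{\Phi}) > 0$. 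Since the TV penalty is convex, proper and lower semi-continuous, the full objective is strongly convex and coercive, so it admits a unique minimizer, as already anticipated in Remark~\ref{rq:full_rank_use}.

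For the Lipschitz property, the key is a monotonicity argument. Given two data vectors $\boldsymbol{\ell}, \boldsymbol{\ell}'$ with respective minimizers $\widehat{\boldsymbol{x}}, \widehat{\boldsymbol{x}}'$, the first-order optimality conditions read $2\boldsymbol{\Phi}^*(\boldsymbol{\ell} - \boldsymbol{\Phi}\widehat{\boldsymbol{x}}) \in \partial g(\widehat{\boldsymbol{x}})$ and likewise for $\widehat{\boldsymbol{x}}'$. Subtracting the two inclusions and using the monotonicity of the subdifferential $\partial g$ to discard the subgradient contribution, I would obtain $\langle 2\boldsymbol{\Phi}^*(\boldsymbol{\ell} - \boldsymbol{\ell}'), \widehat{\boldsymbol{x}} - \widehat{\boldsymbol{x}}'\rangle \ge 2\langle \boldsymbol{\Phi}^*\boldsymbol{\Phi}(\widehat{\boldsymbol{x}} - \widehat{\boldsymbol{x}}'), \widehat{\boldsymbol{x}} - \widehat{\boldsymbol{x}}'\rangle \ge 2\min\mathrm{Sp}(\boldsymbol{\Phi}^*\boldsymbol{\Phi})\,\lVert \widehat{\boldsymbol{x}} - \widehat{\boldsymbol{x}}'\rVert_2^2$. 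Bounding the left-hand side by Cauchy--Schwarz then yields $\lVert \widehat{\boldsymbol{x}} - \widehat{\boldsymbol{x}}'\rVert_2 \le L_1 \lVert \boldsymbol{\ell} - \boldsymbol{\ell}'\rVert_2$ with $L_1 = \lVert \boldsymbol{\Phi}\rVert / \min\mathrm{Sp}(\boldsymbol{\Phi}^*\boldsymbol{\Phi})$, a constant independent of $\boldsymbol{\ell}$. Since extracting the $\boldsymbol{h}$- and $\boldsymbol{v}$-blocks is a $1$-Lipschitz (orthogonal-projection) operation, both $\widehat{\boldsymbol{h}}(\boldsymbol{\ell};\boldsymbol{\Lambda})$ and $\widehat{\boldsymbol{v}}(\boldsymbol{\ell};\boldsymbol{\Lambda})$ inherit this uniform Lipschitz constant, establishing part~\textit{(i)} of Assumption~\ref{hyp:lip_L1} and, a fortiori, continuity. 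Part~\textit{(ii)} follows by observing that for $\boldsymbol{\ell} = \boldsymbol{0}_P$ every term of~\eqref{eq:jointpb} is nonnegative and vanishes at $(\boldsymbol{h},\boldsymbol{v}) = (\boldsymbol{0},\boldsymbol{0})$, so $\widehat{\boldsymbol{x}}(\boldsymbol{0}_P;\boldsymbol{\Lambda}) = \boldsymbol{0}_N$.

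It remains to address Assumption~\ref{hyp:reg_int}. Weak differentiability comes for free from the Lipschitz property: a globally Lipschitz map belongs to $W^{1,\infty}_{\mathrm{loc}}$ and is, by Rademacher's theorem, differentiable almost everywhere with weak Jacobian $\partial_{\boldsymbol{\ell}} \widehat{\boldsymbol{x}}(\boldsymbol{\ell};\boldsymbol{\Lambda})$ bounded a.e. in operator norm by $L_1$ --- precisely the regularity framework invoked in~\cite{deledalle2014stein,evans2015measure}. The two integrability requirements then follow from the linear growth bound $\lVert \widehat{\boldsymbol{x}}(\boldsymbol{\ell};\boldsymbol{\Lambda})\rVert_2 \le L_1 \lVert \boldsymbol{\ell}\rVert_2$ (a consequence of Lipschitzianity together with $\widehat{\boldsymbol{x}}(\boldsymbol{0}_P;\boldsymbol{\Lambda}) = \boldsymbol{0}_N$): writing $\boldsymbol{\ell} = \boldsymbol{\Phi}\bar{\boldsymbol{x}} + \boldsymbol{\zeta}$, the quantity $\langle \textbf{A}^* \boldsymbol{\Pi}\widehat{\boldsymbol{x}}(\boldsymbol{\ell};\boldsymbol{\Lambda}), \boldsymbol{\zeta}\rangle$ grows at most quadratically in $\lVert \boldsymbol{\zeta}\rVert_2$ and the Jacobian is essentially bounded, so both are integrable against the Gaussian density, which has finite moments of every order.

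The main obstacle, as I see it, is the rigorous handling of the nonsmoothness of the Total Variation penalty: classical differentiability of $\boldsymbol{\ell} \mapsto \widehat{\boldsymbol{x}}(\boldsymbol{\ell};\boldsymbol{\Lambda})$ fails on the (measure-zero) set where the support of the active $\ell_{2,1}$ terms changes, so one cannot differentiate the optimality system pointwise and must instead rely on the weak, almost-everywhere notion of derivative. The uniformity of the Lipschitz constant is what makes the almost-everywhere Jacobian genuinely bounded and keeps the linear-growth estimate global; it is secured here through the subdifferential-monotonicity argument rather than through any naive perturbation of the optimality conditions, which would be unavailable at the nondifferentiability points.
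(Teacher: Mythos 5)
Your proof is correct, and it reaches the same conclusions as the paper's, but the route to the central Lipschitz estimate is genuinely different. The paper proves Assumption~\ref{hyp:lip_L1}\textit{(i)} by reformulating~\eqref{eq:jointpb} in the prediction variable $\boldsymbol{z} = \boldsymbol{\Phi}\boldsymbol{x}$: it identifies $\widehat{\boldsymbol{z}}(\boldsymbol{y};\boldsymbol{\Lambda})$ with $\mathrm{prox}_{(1/2)\boldsymbol{\Phi}\mathcal{J}_{\boldsymbol{\Lambda}}}(\boldsymbol{y})$, where $\boldsymbol{\Phi}\mathcal{J}_{\boldsymbol{\Lambda}}$ is the pre-image (marginal) function of the penalty under $\boldsymbol{\Phi}$, invokes the $1$-Lipschitzianity of proximal operators, and then pulls the estimate back through $\widehat{\boldsymbol{x}} = (\boldsymbol{\Phi}^*\boldsymbol{\Phi})^{-1}\boldsymbol{\Phi}^*\widehat{\boldsymbol{z}}$. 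You instead work directly on the first-order optimality inclusions and exploit the strong monotonicity of $\boldsymbol{x} \mapsto 2\boldsymbol{\Phi}^*\boldsymbol{\Phi}\boldsymbol{x} + \partial g(\boldsymbol{x})$, which after Cauchy--Schwarz gives the explicit constant $L_1 = \lVert\boldsymbol{\Phi}\rVert/\min\mathrm{Sp}(\boldsymbol{\Phi}^*\boldsymbol{\Phi})$. Your argument is more elementary (no need to introduce the marginal function, verify it is proper, convex and lower semicontinuous, or recognize the prox structure) and it yields a cleaner, self-contained constant; the paper's argument has the advantage of slotting directly into the framework of~\cite{deledalle2014stein,vaiter2017degrees} and of making the non-expansiveness in the prediction domain explicit, which is reused elsewhere. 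The remaining pieces --- uniqueness via strong convexity of the data-fidelity term plus convexity of TV, part~\textit{(ii)} via nonnegativity of the objective at $\boldsymbol{\ell}=\boldsymbol{0}_P$ and uniqueness, weak differentiability from global Lipschitzianity via~\cite{evans2015measure}, and integrability from the linear-growth bound against Gaussian moments --- coincide with the paper's treatment. One small remark: like the paper, you can only assert Assumption~\ref{hyp:lip_L2} (Lipschitzianity in $\boldsymbol{\Lambda}$) as a conjecture here, so your proof rightly does not claim it; the statement of the Proposition itself does not require it.
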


\begin{proof}
As shown in~\cite{pascal2019nonsmooth}, the objective function 
\begin{align}
\label{eq:obj_fun}
\left(\boldsymbol{h}, \boldsymbol{v} \right) \mapsto \underbrace{\sum_{j = j_1}^{j_2} \lVert  j \boldsymbol{h} + \boldsymbol{v} - \boldsymbol{\ell}_j \rVert_2^2}_{\left\lVert  \boldsymbol{\Phi} {\scriptsize\begin{pmatrix}  \boldsymbol{h}\\ \boldsymbol{v} \end{pmatrix}}   - \boldsymbol{\ell} \right\rVert_2^2} + \lambda_h \mathrm{TV}(\boldsymbol{v}) + \lambda_v \mathrm{TV}(\boldsymbol{v})
\end{align}
is convex, being the sum of convex terms. Further, computing the eigenvalues of $\boldsymbol{\Phi}^*\boldsymbol{\Phi}$ shows that the least squares data fidelity term is $\gamma$-strongly convex, with 
\begin{align}
\label{eq:gamma_J}
\gamma = 2 \min \mathrm{Sp}(\boldsymbol{\Phi}^*\boldsymbol{\Phi}) > 0
\end{align}
where $\mathrm{Sp}(\boldsymbol{\Phi}^*\boldsymbol{\Phi})$ stand for the spectrum of the (bounded) linear operator $\boldsymbol{\Phi}^*\boldsymbol{\Phi}$. 
Hence, the objective function~\eqref{eq:obj_fun} has a unique minimum, being the unique solution of Problem~\eqref{eq:jointpb}, as mentioned in Remark~\ref{rq:full_rank_use}.\\
Further, \eqref{eq:jointpb} falls under the general formulation of Penalized Least Squares~\eqref{eq:LS_pen}, which can be written
\begin{align}
\label{eq:LS_pen_J}
\widehat{\boldsymbol{x}}(\boldsymbol{y};\boldsymbol{\Lambda} ) = \underset{{\boldsymbol{x}\in \mathcal{H}}}{\mathrm{argmin}} \, \Vert \boldsymbol{y} - \boldsymbol{\Phi} {\boldsymbol{x}}\Vert_{\boldsymbol{\mathcal{W}}}^2
+ \mathcal{J}_{\boldsymbol{\Lambda}}(\boldsymbol{x}),
\end{align}
where $\mathcal{J}_{\boldsymbol{\Lambda}}(\boldsymbol{x}) =  \lVert \textbf{U}_{\boldsymbol{\Lambda}} \boldsymbol{x} \rVert_{1}$ is built from a linear operator $\textbf{U}_{\boldsymbol{\Lambda}}$ depending on regularization parameters $\lambda_h$ and $\lambda_v$ as
\begin{align}
\boldsymbol{\Lambda} = \begin{pmatrix}
\lambda_h \\
\lambda_v
\end{pmatrix} \in \mathbb{R}_+^2, \quad \text{and} \quad \textbf{U}_{\boldsymbol{\Lambda}} = \left\lbrace \begin{array}{ll}
\mathbb{R}^{2N_1N_2} \rightarrow \mathbb{R}^{2N_1N_2} \times \mathbb{R}^{2N_1N_2} \\
\begin{pmatrix}
\boldsymbol{h} \\
\boldsymbol{v}
\end{pmatrix} \mapsto \left( \begin{pmatrix}
\lambda_h \textbf{D}_1 \boldsymbol{h} \\ 
\lambda_v \textbf{D}_1 \boldsymbol{v} 
\end{pmatrix}, 
\begin{pmatrix}
\lambda_h \textbf{D}_2 \boldsymbol{h}\\
\lambda_v \textbf{D}_2 \boldsymbol{v}
\end{pmatrix}\right).
\end{array}\right.
\end{align}
$\mathcal{J}_{\boldsymbol{\Lambda}}$ is convex, proper and lower semicontinuous, then, following~\cite{vaiter2017degrees}, \eqref{eq:LS_pen_J} can be rewritten as a constrained optimization problem
\begin{align}
\label{eq:x_Phix} \left( \widehat{\boldsymbol{x}}(\boldsymbol{y};\boldsymbol{\Lambda} ),  \widehat{\boldsymbol{z}}(\boldsymbol{y};\boldsymbol{\Lambda} ) \right) &= \underset{{\boldsymbol{x}\in \mathcal{H}, \boldsymbol{z}\in \mathcal{G}}}{\mathrm{argmin}} \, \Vert \boldsymbol{y} - \boldsymbol{z}\Vert_{\boldsymbol{\mathcal{W}}}^2
+ \mathcal{J}_{\boldsymbol{\Lambda}}(\boldsymbol{x}), \, \text{such that} \, \boldsymbol{z} = \boldsymbol{\Phi} \boldsymbol{x}\\
\label{eq:z_prox} \Longleftrightarrow \quad  \widehat{\boldsymbol{z}}(\boldsymbol{y};\boldsymbol{\Lambda} ) &= \underset{{\boldsymbol{z}\in \mathcal{G}}}{\mathrm{argmin}} \, \Vert \boldsymbol{y} -\boldsymbol{z} \Vert_{\boldsymbol{\mathcal{W}}}^2
+ \left( \boldsymbol{\Phi} \mathcal{J}_{\boldsymbol{\Lambda}}\right) (\boldsymbol{z}) \\
\label{eq:z_prox2}\Longleftrightarrow \quad  \widehat{\boldsymbol{z}}(\boldsymbol{y};\boldsymbol{\Lambda} ) &= \mathrm{prox}_{(1/2) \boldsymbol{\Phi} \mathcal{J}_{\boldsymbol{\Lambda}}}(\boldsymbol{y})
\end{align}
where 
\begin{align}
\left( \boldsymbol{\Phi} \mathcal{J}_{\boldsymbol{\Lambda}}\right) (\boldsymbol{z}) \triangleq \underset{\lbrace\boldsymbol{x} \lvert \boldsymbol{\Phi}\boldsymbol{x} = \boldsymbol{z}\rbrace}{\min} \, \mathcal{J}_{\boldsymbol{\Lambda}}(\boldsymbol{x})
\end{align}
denotes the pre-image of $\mathcal{J}_{\boldsymbol{\Lambda}}$ under $\boldsymbol{\Phi}$, which is as well convex, proper and lower semicontinuous.\\
Then, from~\eqref{eq:z_prox2}, the estimator $\widehat{\boldsymbol{z}}(\boldsymbol{y};\boldsymbol{\Lambda} )$ is non expansive, i.e. 1-Lipschitz, because the proximal operators share that same property.
Moreover, from~\eqref{eq:x_Phix}, $\widehat{\boldsymbol{z}}(\boldsymbol{y};\boldsymbol{\Lambda} ) = \boldsymbol{\Phi} \widehat{\boldsymbol{x}}(\boldsymbol{y};\boldsymbol{\Lambda} )$, and since $\boldsymbol{\Phi}$ is full-rank according to Proposition~\ref{claim:phi_full}
\begin{align}
\widehat{\boldsymbol{x}}(\boldsymbol{y};\boldsymbol{\Lambda} ) = \left( \boldsymbol{\Phi}^* \boldsymbol{\Phi} \right)^{-1} \boldsymbol{\Phi}^* \widehat{\boldsymbol{z}}(\boldsymbol{y};\boldsymbol{\Lambda} ).
\end{align}
$\boldsymbol{\Phi}$ being bounded, we conclude that the estimator $\widehat{\boldsymbol{x}}(\boldsymbol{y};\boldsymbol{\Lambda} )$ is uniformly $L_1$-Lipschitz, with $L_1 = \lVert \boldsymbol{\Phi}\rVert^{-1}$ justifying Assumption~\ref{hyp:lip_L1},~\textit{(i)}.\\
Being uniformly $L_1$-Lipschitz, $\widehat{\boldsymbol{x}}(\boldsymbol{y};\boldsymbol{\Lambda} )$ is continuous and weakly-differentiable (see Theorem~5 of Section~4.2.3 in~\cite{evans2015measure}).
As a consequence, both $\left\langle \textbf{A}^* \boldsymbol{\Pi}\widehat{\boldsymbol{x}}(\boldsymbol{y};\boldsymbol{\Lambda}), \boldsymbol{\zeta} \right\rangle$ and $\nablaL_{\boldsymbol{\Lambda}} \widehat{\boldsymbol{x}}(\boldsymbol{y};\boldsymbol{\Lambda} )$ are integrable against the Gaussian density and Assumption~\ref{hyp:reg_int} holds.\\
Finally, 
setting $\boldsymbol{y} = \boldsymbol{0}_P$, for any $\boldsymbol{\Lambda} \in \mathbb{R}^L$, $\widehat{\boldsymbol{x}}(\boldsymbol{0}_P; \boldsymbol{\Lambda}) = \boldsymbol{0}_N$ reaches the minimum. The solution being unique from Proposition~\ref{claim:phi_full}, $\boldsymbol{0}_N$ is the unique solution and Assumption~\ref{hyp:lip_L1},~\textit{(ii)} is verified. \\
Further, it is reasonable to expect that the uniform Lipschitzianity with respect to hyperparameters results of Remark~\ref{rq:check_L2}, extend to the estimator $\widehat{\boldsymbol{h}}(\boldsymbol{\ell}; \boldsymbol{\Lambda})$, defined in~\eqref{eq:jointpb}.
Yet, to the best of our knowledge, no direct proof that Lipschitzianity Assumption~\ref{hyp:lip_L2} holds for general Penalized Least Squares exists.
This issue is a scientific question in itself and will be addressed in future work.
\end{proof}

\subsection{Practical computation of $\widehat{R}_{\nu, \boldsymbol{\varepsilon}}$ and $\nablaL_{\boldsymbol{\Lambda}}\widehat{R}_{\nu, \boldsymbol{\varepsilon}}$}
\label{sec:practical}

This section addresses all technical issues encountered in running Algorithm~\ref{alg:SURE_SUGAR}, in the context of texture segmentation described above.
 
\subsubsection{Covariance structure of the observations}
\label{subsec:leaders_cov}

The additive noise $\zeta_{j,\underline{n}}$ appearing in Equation~\eqref{eq:logmodel} being Gaussian, Gaussianity Assumption~\ref{hyp:gauss_noise} holds.
The covariance matrix $\boldsymbol{\mathcal{S}}$ of noise $\boldsymbol{\zeta}$ reads
\begin{align}
\label{eq:spatial_cor}
\mathcal{S}_{j,\underline{n}}^{j', \underline{n}'}  \triangleq \mathbb{E} \, \zeta_{j,\underline{n}} \zeta_{j',\underline{n}'}  = \mathcal{C}_{j}^{j'} \Xi_{j}^{j'}(\underline{n} - \underline{n}'),
\end{align}
where
\begin{align}
\label{eq:inter_cor}
\mathcal{C}_{j}^{j'} \triangleq \mathbb{E} \, \zeta_{j,\underline{n}} \zeta_{j',\underline{n}}, \quad \mathcal{C}_{j}^{j'}\text{ independent of }\underline{n} 
\end{align}
quantifies the inter-scale covariance, and $\boldsymbol{\Xi}_{j}^{j'}$ encapsulate the stationary spatial correlations, with correlation length proportional to $\max(2^j, 2^{j'})$.

\subsubsection{Matrix product $\boldsymbol{\mathcal{S}}\boldsymbol{\varepsilon}$}

Following Remark~\ref{rq:sparsity_S}, in general, the direct product $\boldsymbol{\mathcal{S}}\boldsymbol{\varepsilon}$ required for the practical evaluation of Finite Difference Monte Carlo SURE~\eqref{eq:SURE_FDMC_def} is intractable because of the large size of matrix $\boldsymbol{\mathcal{S}}$. 
Yet, in the case of log-\textit{leaders}, the spatial correlations presenting the Toeplitz structure~\eqref{eq:spatial_cor}, the product $\boldsymbol{\mathcal{S}} \boldsymbol{\varepsilon}$ can be computed efficiently, making~\eqref{eq:SURE_FDMC_def} usable in practice.\\
Indeed, given $\boldsymbol{\varepsilon} \in \mathbb{R}^{JN_1N_2} = \left( \boldsymbol{\varepsilon}_j\right)_{j = j_1}^{j_2}J$, with $\boldsymbol{\varepsilon}_j \in \mathbb{R}^{N_1N_2}$, 
\begin{align}
\left( \boldsymbol{\mathcal{S}} \boldsymbol{\varepsilon} \right)_{j, \underline{n}} &= \sum_{j'=j_1}^{j_2} \sum_{\underline{n}' \in \Omega} \mathcal{S}_{j,\underline{n}}^{j', \underline{n}'}  \varepsilon_{j',\underline{n}'} 
= \sum_{j'=j_1}^{j_2} \mathcal{C}_{j}^{j'} \sum_{\underline{n}' \in \Omega} \Xi_{j}^{j'}(\underline{n} - \underline{n}') \varepsilon_{j', \underline{n}'} 
 = \sum_{j'=j_1}^{j_2}  \mathcal{C}_{j}^{j'}  \boldsymbol{\Xi}_{j,j'} \ast \boldsymbol{\varepsilon}_j.
\end{align}
which is the sum of $J$ convolution products, denoted $\ast$, of high dimensional vector $\boldsymbol{\varepsilon} \in \mathbb{R}^{N_1N_2}$ with ``low dimensional" finite support window $\mathcal{C}_{j}^{j'} \boldsymbol{\Xi}_{j,j'} $. Hence evaluating $\boldsymbol{\mathcal{S}} \boldsymbol{\varepsilon} $ appears to be far less costly than a general product of matrix of size $P \times P$ by a vector of size $P$.

\subsubsection{Operator $\textbf{A}$}

In the same vein, the matrices $\boldsymbol{\Phi}^*$~\eqref{eq:def_Phi*}, $\left(\boldsymbol{\Phi}^* \boldsymbol{\Phi}\right)^{-1}$~\eqref{eq:phi*phi_inv} and $\boldsymbol{\Pi}$~\eqref{eq:part_pi} turn out to be very sparse, since they act independently on each pixel. 
Thus, the same sparse (pixel-wise) structure follows for 
\begin{align}
\label{eq:sparse_A}
\textbf{A} = \boldsymbol{\Pi} \left(\boldsymbol{\Phi}^* \boldsymbol{\Phi}\right)^{-1}\boldsymbol{\Phi}^* = \frac{1}{F_0F_2 -F_1^2} \begin{pmatrix}
(F_0+j_1F_1) \boldsymbol{I}_{N/2} & \cdots & (F_0+j_2F_1) \boldsymbol{I}_{N/2} \\
\boldsymbol{Z}_{N/2} & \cdots & \boldsymbol{Z}_{N/2}
\end{pmatrix}.
\end{align}
Hence the products $\textbf{A}^*\boldsymbol{\Pi} \widehat{\boldsymbol{x}}(\boldsymbol{y}; \boldsymbol{\Lambda})$ and $\textbf{A}^*\boldsymbol{\Pi} \nablaL_{\boldsymbol{\Lambda}} \widehat{\boldsymbol{x}}(\boldsymbol{y}; \boldsymbol{\Lambda})$, appearing in the Finite Difference Monte Carlo risk~\eqref{eq:SURE_FDMC_def} and gradient of the risk~\eqref{eq:SUGAR_FDMC} estimators, are very cheap to compute, involving $\mathcal{O}(N)$ operations.\\

\subsubsection{Evaluation of $\mathrm{Tr}(\textbf{A} \boldsymbol{\mathcal{S}} \textbf{A}^*)$}

The evaluation of risk estimate $\widehat{R}_{\nu, \boldsymbol{\varepsilon}}(\boldsymbol{\ell}; \boldsymbol{\Lambda}\lvert \boldsymbol{\mathcal{S}})$, at Step~\eqref{eq:SURE_eval} of generalized SURE and SUGAR Algorithm~\ref{alg:SURE_SUGAR} requires the computation of the trace of an $N\times N$ matrix, with $N$ possibly of order $10^6$, e.g. in image processing.\\
In the present application, combining the structure of covariance matrix $\boldsymbol{\mathcal{S}}$~\eqref{eq:spatial_cor} and the sparse expression of $\textbf{A}$~\eqref{eq:sparse_A}, provides a compact expression of the third term of generalized SURE~\eqref{eq:SURE_FDMC_def} detailed in Proposition~\ref{claim:const_term}, which can be evaluated with very little computational effort.

\begin{claim}[Third term of Stein Unbiased Risk Estimate]
\label{claim:const_term}
Consider texture's \textit{leader} coefficients~\eqref{eq:logmodel}, whose covariance matrix $\boldsymbol{\mathcal{S}}$ evidences the sparse structure described in~\eqref{eq:spatial_cor}. 
Define the linear operator $\textbf{A}$ from Formula~\eqref{eq:def_A}, using operator $\boldsymbol{\Phi}$~\eqref{eq:def_Phi} and projector $\boldsymbol{\Pi}$~\eqref{eq:part_pi}.
Then, the third term of Stein estimator of the risk~\eqref{eq:SURE_FDMC_def} reads 
\begin{align}
\label{eq:const_term}
\mathrm{Tr}(\textbf{A} \boldsymbol{\mathcal{S}} \textbf{A}^*  )  = \frac{N/2}{\left( F_0 F_2 - F_1^2 \right)^2}  \left(\sum_{j,j' } \left( F_1^2 \mathcal{C}_{j}^{j'} - 2 F_0F_1  j' \mathcal{C}_{j}^{j'} +  F_0^2  j j' \mathcal{C}_{j}^{j'} \right)  \right),
\end{align}
where the quantities $\left\lbrace F_{\alpha}, \, \alpha = 0, 1, 2\right\rbrace$ are defined in~\eqref{eq:phi*phi_inv} and $\mathcal{C}_j^{j'}$ denotes the covariance between scales $2^j$ and $2^{j'}$, as defined in~\eqref{eq:inter_cor}.
\end{claim}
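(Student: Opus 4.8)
The plan is to exploit the block-sparse structure of $\textbf{A}$ exhibited in~\eqref{eq:sparse_A} together with the scale/pixel factorization of the covariance~\eqref{eq:spatial_cor}, so that the trace of the large $N \times N$ matrix $\textbf{A}\boldsymbol{\mathcal{S}}\textbf{A}^*$ collapses to a double sum over octaves only. Viewing $\textbf{A}$ as a $2 \times J$ array of $N/2 \times N/2$ blocks, only its first block-row is non-zero; I denote by $a_j \boldsymbol{I}_{N/2}$ its $(1,j)$ block, as read off from~\eqref{eq:sparse_A}, the second block-row being $\boldsymbol{Z}_{N/2}$. Correspondingly, $\boldsymbol{\mathcal{S}}$ is a $J \times J$ array of blocks whose $(j,j')$ block $\boldsymbol{\mathcal{S}}_j^{j'}$ has entries $\mathcal{C}_j^{j'}\,\Xi_j^{j'}(\underline{n}-\underline{n}')$.

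First I would perform the block matrix multiplication. Since only the first block-row of $\textbf{A}$ is non-zero and $\textbf{A}^*$ inherits the transposed pattern, every block of $\textbf{A}\boldsymbol{\mathcal{S}}\textbf{A}^*$ vanishes except the top-left one, which equals $\sum_{j,j'} a_j a_{j'}\,\boldsymbol{\mathcal{S}}_j^{j'}$. By linearity of the trace this yields
\begin{equation}
\mathrm{Tr}(\textbf{A}\boldsymbol{\mathcal{S}}\textbf{A}^*) = \sum_{j,j'} a_j a_{j'}\,\mathrm{Tr}\!\left(\boldsymbol{\mathcal{S}}_j^{j'}\right),
\end{equation}
so the entire computation reduces to evaluating the trace of each scale block.

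The key step, and the one that needs care, is the evaluation of $\mathrm{Tr}(\boldsymbol{\mathcal{S}}_j^{j'})$. Its diagonal entries are $\mathcal{C}_j^{j'}\,\Xi_j^{j'}(\underline{0})$; comparing the full covariance~\eqref{eq:spatial_cor} with the inter-scale covariance~\eqref{eq:inter_cor} at coincident pixels $\underline{n}=\underline{n}'$ forces the zero-lag spatial correlation to be normalized, $\Xi_j^{j'}(\underline{0}) = 1$, so each diagonal entry is exactly $\mathcal{C}_j^{j'}$. As the grid $\Omega$ carries $N_1 N_2 = N/2$ pixels, summing over the diagonal gives $\mathrm{Tr}(\boldsymbol{\mathcal{S}}_j^{j'}) = (N/2)\,\mathcal{C}_j^{j'}$, independent of the detailed shape of the spatial correlation window.

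Finally I would substitute the explicit entries $a_j$ from~\eqref{eq:sparse_A} and expand the product $a_j a_{j'}$, which over the common denominator $(F_0 F_2 - F_1^2)^2$ produces a term quadratic in the octaves ($\propto F_0^2 j j'$), a constant term ($\propto F_1^2$), and a cross term linear in the octaves. Symmetrizing the cross term via $\mathcal{C}_j^{j'} = \mathcal{C}_{j'}^{j}$ identifies $\sum_{j,j'} j\,\mathcal{C}_j^{j'} = \sum_{j,j'} j'\,\mathcal{C}_j^{j'}$ and collapses the two linear contributions into the single $-2 F_0 F_1 j'$ term appearing in~\eqref{eq:const_term}; pulling out the overall factor $N/2$ then gives precisely the claimed identity. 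The only genuinely non-routine points are the normalization $\Xi_j^{j'}(\underline{0}) = 1$ and this symmetrization — everything else is block algebra and linearity of the trace.
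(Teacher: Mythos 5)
Your proof is correct and follows essentially the same route as the paper's: both reduce $\mathrm{Tr}(\textbf{A}\boldsymbol{\mathcal{S}}\textbf{A}^*)$ to a double sum over octaves by block algebra, invoke the zero-lag normalization $\Xi_j^{j'}(\underline{0})=1$ to obtain $\mathrm{Tr}(\boldsymbol{\mathcal{S}}_j^{j'})=(N/2)\,\mathcal{C}_j^{j'}$, and symmetrize the cross term using $\mathcal{C}_j^{j'}=\mathcal{C}_{j'}^{j}$; the only difference is organizational, the paper routing the computation through $\mathrm{Tr}(\textbf{A}^*\textbf{A}\,\boldsymbol{\mathcal{S}})$ and $\boldsymbol{\Phi}^*\boldsymbol{\mathcal{S}}\boldsymbol{\Phi}$ via cyclicality of the trace, whereas you multiply the row-sparse $\textbf{A}$ directly. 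One caveat: the blocks of $\textbf{A}$ as printed in~\eqref{eq:sparse_A} read $(F_0+jF_1)\boldsymbol{I}_{N/2}$, which is a typo for $(jF_0-F_1)\boldsymbol{I}_{N/2}$ (recomputing $\boldsymbol{\Pi}(\boldsymbol{\Phi}^*\boldsymbol{\Phi})^{-1}\boldsymbol{\Phi}^*$ from~\eqref{eq:def_Phi*} and~\eqref{eq:phi*phi_inv} confirms the latter), so to land on the signs and the $F_0/F_1$ placement of~\eqref{eq:const_term} --- as your final grouping indeed does --- you must take $a_j=(jF_0-F_1)/(F_0F_2-F_1^2)$ rather than the literal entries of~\eqref{eq:sparse_A}.
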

\begin{proof}
Proof is postponed to Appendix~\ref{app:const_term}.
\end{proof}

\section{Hyperparameter tuning performance assessment}
\label{sec:numerics}

The aim of this section is to assess quantitatively, by means of numerical simulations, the performance in the estimation of the optimal hyperparamaters.
To that end, Section~\ref{subsec:num_set} will detail the numerical simulation set-up and Section~\ref{subsec:algo_set} will concentrate on several algorithmic issues. 
Section~\ref{subsec:log_cov_struct} will show on the prominent role of covariance matrix $\boldsymbol{\mathcal{S}}$, evaluating the impact of partial vs. full covariance matrix in Section~\ref{subsec:inf_cor} and comparing true vs. estimated covariance matrix in Section~\ref{subsec:estim_cov}. 
Section~\ref{subsec:BFGS_exp} will further assess quantitatively how well optimal hyperparameters are estimated in the absence of available ground truth, with respect to different quality metrics.

\subsection{Numerical simulation set-up}
\label{subsec:num_set}

\subsubsection{Textures}
\label{subsec:synth_text}

For sake of simplicity, we consider the two-region case $M=2$, with elliptic mask displayed in Figure~\ref{fig:mask_vh_ellipse}. 
Synthetic textures of resolution $N_1 \times N_2 = 256 \times 256$, characterized by two attributes configurations:
\begin{itemize}
\item Configuration ``\textbf{D}", ``difficult", one realization being displayed in Figure~\ref{fig:configA} 
\begin{center}
\begin{tabular}{ll}
$\left( \overline{H}_1, \overline{\Sigma}_1^2 \right) = (0.5,0.6)$ & (background),\\
$\left( \overline{H}_2, \overline{\Sigma}_2^2 \right) = (0.75,0.7)$ & (central ellipse).\\
\end{tabular}
\end{center}
\vspace{3mm}
\item Configuration ``\textbf{E}", ``easy", one realization being displayed in Figure~\ref{fig:configB} 
\begin{center}
\begin{tabular}{ll}
$\left( \overline{H}_1, \overline{\Sigma}_1^2 \right) = (0.5,0.6)$ & (background),\\
$\left( \overline{H}_2, \overline{\Sigma}_2^2 \right) = (0.9,1.1)$ & (central ellipse).\\
\end{tabular}
\end{center}
\end{itemize}
are generated from a Matlab routine designed by ourselves (see~\cite{pascal2019nonsmooth}).

\subsubsection{Multiscale analysis}

A 2D undecimated wavelet transform of the textured image is computed at scale $2^j$, with mother wavelet obtained as a tensor product of 1D least asymmetric Daubechies wavelets, with 3 vanishing moments, see~\cite{Mallat:2008:WTS:1525499} for more details.

\subsubsection{Performance evaluation}

Following~\cite{pascal2018joint,pascal2019nonsmooth}, for a given textured image $X$, and the derived log-\textit{leaders} $\left( \boldsymbol{\ell}_j\right)_{j = j_1}^{j_2}$,
two performance indices are used:
\begin{itemize}
\item The \textit{one-sample quadratic risk} on local regularity, computed from \textit{one sample} of log-\textit{leaders} $\boldsymbol{\ell}$ computed on the single image $X$
\begin{align}
\label{eq:def_R}
\mathcal{R}(\boldsymbol{\ell}; \boldsymbol{\Lambda}) \triangleq  \left\lVert \widehat{\boldsymbol{h}}(\boldsymbol{\ell}; \boldsymbol{\Lambda}) - \bar{\boldsymbol{h}} \right\rVert^2_2,
\end{align}
with estimator $\widehat{\boldsymbol{h}}(\boldsymbol{\ell};\boldsymbol{\Lambda})$ defined in~\eqref{eq:jointpb} and ground truth $\bar{\boldsymbol{h}}$ defined in~\eqref{eq:leaders_model}. 
\item The \textit{segmentation error}, defined as the percentage of incorrectly classified pixels
\begin{align}
\label{eq:def_P}
\mathcal{P}(\boldsymbol{\ell}; \boldsymbol{\Lambda}) \triangleq \left\lvert \overline{\Omega}_1 \cap \widehat{\Omega}_2(\boldsymbol{\ell}; \boldsymbol{\Lambda})\right\rvert + \left\lvert \widehat{\Omega}_1(\boldsymbol{\ell}; \boldsymbol{\Lambda}) \cap \overline{\Omega}_2\right\rvert,
\end{align}
where $\cup_m \widehat{\Omega}_m(\boldsymbol{\ell}\boldsymbol{\Lambda})$ is the estimated partition~\eqref{eq:estim_partition}, obtain from TV-based texture segmentation, as described in Section~\ref{subsec:joint}.
\end{itemize}

\begin{remark}
By definition of the quadratic risk~\eqref{eq:part_risk} and one-sample quadratic risk~\eqref{eq:def_R}, 
$\mathbb{E}_{\boldsymbol{\zeta}}\mathcal{R}(\boldsymbol{\ell};\boldsymbol{\Lambda}) =R[\widehat{\boldsymbol{h}}](\boldsymbol{\Lambda})$. 
In practice however, only one realization of $\boldsymbol{\ell}$ is available, hence the quadratic risk $R[\widehat{\boldsymbol{h}}](\boldsymbol{\Lambda})$ is not accessible. 
Thus, in the following experiments, the \textit{one-sample quadratic risk} $\mathcal{R}(\boldsymbol{\ell};\boldsymbol{\Lambda})$, defined in~\eqref{eq:def_R}, is used as a reference to which Stein risk estimator $\widehat{R}_{\nu, \boldsymbol{\varepsilon}}(\boldsymbol{\ell}; \boldsymbol{\Lambda} \lvert \boldsymbol{\mathcal{S}})$ will be compared.
\end{remark}

\subsection{Algorithmic set-up}
\label{subsec:algo_set}
\subsubsection{Primal dual with iterative differentiation}

Problem~\eqref{eq:jointpb} is solved using the accelerated primal-dual algorithm~\ref{alg:PD}, with primal variable $\boldsymbol{x} \triangleq \left( \boldsymbol{h}, \boldsymbol{v}\right)$, taking advantage of strong-convexity of the data fidelity term. 
The maximal number of iterations is set to $K_{\max} = 5 \, 10^5$, and a threshold on the normalized duality gap is set to $10^{-4}$ 
(see~\cite{pascal2019nonsmooth}).

\subsubsection{Scaling range}

The estimation of piecewise constant local attributes requires to focus on fine scales.
Thus, ideally, the least square term~\eqref{eq:h_v_LR} would involve the two finest scales of the multiscale representation, and range from $j_1 = 1$ to $j_2 = 2$. 
Yet, the efficiency of acceleration strategy of Algorithm~\ref{alg:PD} increases with the strong-convexity modulus $\gamma$~\eqref{eq:gamma_J}, displayed in Table~\ref{tab:gamma_J}, which is observed to increase with $j_2$, as $j_1 = 1$ is fixed. 
Thus, a trade-off between locality and convergence speed leads to select $j_2 = 3$.

\begin{table}[h!]
\centering
\begin{tabular}{lccccc}
\toprule
 & $j_2=2$ & $j_2=3$ & $j_2=4$ & $j_2=5$ & $j_2=6$ \\
 \midrule
$\gamma$ & $0.29$ & $\boldsymbol{0.72}$ & $1.20$ & $1.69$ & $2.20$ \\
\bottomrule
\end{tabular}
\caption{\label{tab:gamma_J} Strong-convexity modulus $\gamma$ of data-fidelity term of~\eqref{eq:obj_fun}, computed from Formula~\eqref{eq:gamma_J}, for fixed $j_1=1$ and varied $j_2$. The bold entry correspond to the range of scales used in the experiments of Sections~\ref{sec:numerics}.}
\end{table}

\subsubsection{Finite Difference Monte Carlo parameters}

The Monte Carlo vector $\boldsymbol{\varepsilon} \in \mathbb{R}^{P}$, $P = JN_1N_2$, is drawn randomly, according to a i.i.d. normalized Gaussian $\mathcal{N}(\boldsymbol{0}_P, \boldsymbol{I}_P)$. 
We adapt the heuristic of~\cite{deledalle2014stein} or the Finite Difference step $\nu$ to the case of correlated noise as 
\begin{align}
\nu = \frac{2}{P^{\alpha}}\max \left(\sqrt{C_j^j}, \, j \in \lbrace 1, \hdots, J\rbrace \right), \quad \alpha = 0.3,
\end{align}
where $C_j^j$ is the variance of the log-\textit{leaders} $\boldsymbol{\ell}_j$ at scale $2^j$.\\
The derivatives with respect to hyperparameters of the estimates, $\nablaL_{\boldsymbol{\Lambda}} \widehat{\boldsymbol{h}}$, $\nablaL_{\boldsymbol{\Lambda}} \widehat{\boldsymbol{v}}$, are obtained by iterative differentiation of primal dual algorithm, customized to texture segmentation in Appendix~\ref{alg:PD}.

\subsubsection{BFGS quasi-Newton initialization and parameters}
\label{subsec:BFGS_num}
To perform the risk minimization sketched in Algorithm~\ref{alg:BFGS}, we used the GRadient-based Algorithm for Non-Smooth Optimization, implemented in GRANSO toolbox\footnote{\texttt{http://www.timmitchell.com/software/GRANSO/}}, from the BFGS quasi-Newton algorithm proposed in~\cite{curtis2017bfgs}.
It consists of a low memory BFGS algorithm with box constraints, enabling to enforce positive $\lambda_h$ and $\lambda_v$. 
The maximal number of iterations of BFGS Algorithm~\ref{alg:BFGS} is set to $K_{\max} = 250$, while the stopping criterion on the gradient norm is set to $10^{-6}$.\\
As mentioned in Section~\ref{subsec:BFGS}, the initialization of quasi-Newton algorithms might drastically impact their convergence.
Hence, we propose a model-based strategy for initializing $\boldsymbol{\Lambda}$ and $\boldsymbol{H}$.
The initialization of $\lambda_h$ and $\lambda_v$ is performed by balancing the data fidelity term and the penalization appearing of functional~\eqref{eq:obj_fun}.
The data fidelity term grows like the variance of the noise
\begin{align}
\mathbb{E} \sum_{j = j_1}^{j_2} \lVert  j \bar{\boldsymbol{h}} + \bar{\boldsymbol{v}} - \boldsymbol{\ell}_j \rVert_2^2 = \mathrm{tr}(\boldsymbol{\mathcal{S}}),
\end{align}
and the penalization term can be evaluated using $\left(\widehat{\boldsymbol{h}}_{\mathrm{LR}},\widehat{\boldsymbol{v}}_{\mathrm{LR}}\right)$ introduced in~\eqref{eq:h_v_LR}.
Thus, the initial hyperparameters $\boldsymbol{\Lambda}$ for BFGS Algorithm~\ref{alg:BFGS} are set to
\begin{align}
\label{eq:init_Lambda}
\boldsymbol{\Lambda}^{[0]} = \left( \lambda_h^{[0]},\lambda_v^{[0]} \right), \quad \text{where} \quad \lambda_h^{[0]} = \frac{\mathrm{tr}(\boldsymbol{\mathcal{S}})}{2\, \mathrm{TV}(\widehat{\boldsymbol{h}}_{\mathrm{LR}}(\boldsymbol{\ell}))}, \text{ and } \lambda_v^{[0]} = \frac{\mathrm{tr}(\boldsymbol{\mathcal{S}})}{2\, \mathrm{TV}(\widehat{\boldsymbol{v}}_{\mathrm{LR}}(\boldsymbol{\ell}))}.
\end{align}
The \textit{inverse} Hessian matrix $\boldsymbol{H}^{[0]} \in \mathbb{R}^{2 \times 2}$, is initialized to enforce $\boldsymbol{\Lambda}^{[1]} = (1 \pm \kappa) \boldsymbol{\Lambda}^{[0]}$.
It is chosen diagonal with coefficients
\begin{align}
\label{eq:init_H}
\boldsymbol{H}^{[0]} = \mathrm{diag} \left( 
 \left\lvert \frac{\kappa \lambda_h^{[0]}}{\partial_{\lambda_h} \widehat{R}_{\nu, \boldsymbol{\varepsilon}}(\boldsymbol{\ell}; \boldsymbol{\Lambda}^{[0]}\lvert \boldsymbol{\mathcal{S}})} \right\rvert ,  \left\lvert \frac{\kappa \lambda_v^{[0]}}{\partial_{\lambda_v} \widehat{R}_{\nu, \boldsymbol{\varepsilon}}(\boldsymbol{\ell}; \boldsymbol{\Lambda}^{[0]}\lvert \boldsymbol{\mathcal{S}})} \right\rvert  \right).
\end{align}
In practice, we used $\kappa = 0.5$ for all experiments.
It is observed that this choice of $\boldsymbol{H}^{[0]}$ avoids the first iteration falling away from natural hyperpamaters scaling~\eqref{eq:init_Lambda}, which would induce huge computational cost to reach to optimal hyperparameters.

\subsection{Covariance of \textit{leaders}}
\label{subsec:log_cov_struct}

\subsubsection{Covariance estimation procedure}

No closed-form formula exists to compute exactly the covariance matrix $\boldsymbol{\mathcal{S}}$ from the texture's attributes. 
Hence, from \textit{one} sample $\boldsymbol{\ell}$, computed from a single texture $X$, the \textit{estimated} covariance matrix, denoted $\widehat{\boldsymbol{\mathcal{S}}}$, is computed using classic sample covariance estimator:
\begin{align}
\label{eq:spat_av}
\widehat{\mathcal{S}}_{j,\underline{n}}^{j',\underline{n}'} \triangleq \frac{1}{\lvert \Omega \rvert} \sum_{\underline{n} \in \Omega } \ell_{j,\underline{n}} \ell_{j',\underline{n} + \delta \underline{n}} - \left( \frac{1}{\lvert \Omega \rvert} \sum_{\underline{n} \in \Omega }\ell_{j,\underline{n}} \right) \left( \frac{1}{\lvert \Omega \rvert} \sum_{\underline{n} \in \Omega }\ell_{j',\underline{n} } \right),
\end{align}
for spatial lag $\delta \underline{n} \triangleq \underline{n}' - \underline{n}$, leading to inter-scale covariance
\begin{align}
\label{eq:spat_av_c}
\widehat{\mathcal{C}}_j^{j'} = \widehat{\mathcal{S}}_{j,\underline{n}}^{j',\underline{n}}
\end{align}
and spatial correlations
\begin{align}
\label{eq:spat_av_xi}
\widehat{\Xi}_{j}^{j'}(\delta \underline{n}) = \frac{\widehat{\mathcal{S}}_{j,\underline{n}}^{j',\underline{n}'}}{\widehat{\mathcal{C}}_j^{j'}}.
\end{align}
Then, for Textures~``\textbf{D}"~and~``\textbf{E}", a \textit{true} covariance matrix $\boldsymbol{\mathcal{S}}$ is obtained numerically by averaging the above estimated covariance matrix $\widehat{\boldsymbol{\mathcal{S}}}^{(q)}$ over $Q = 5000$ texture samples as:
\begin{align}
\label{eq:smpl_av}
\boldsymbol{\mathcal{S}} \triangleq \left\langle \widehat{\boldsymbol{\mathcal{S}}}^{(q)}\right\rangle_{q=1}^Q,
\end{align}
the samples being generated with the mathematical model of~\cite{pascal2019nonsmooth}.

\subsubsection{Impact of partial versus full covariance on estimated risk}
\label{subsec:inf_cor}

We now assess the impact of using two partial versions of the \textit{full} \textit{true} covariance matrix $\boldsymbol{\mathcal{S}}$, described in~\eqref{eq:smpl_av}: 
\begin{enumerate}
\item \textit{Variance} matrix $\boldsymbol{\mathcal{S}}_{\mathrm{var}}$ neglecting both inter-scale and spatial correlations, reduces to the variances $\mathcal{C}_j^j$ of the $\boldsymbol{\ell}_j$'s, and hence is diagonal
\begin{align}
{S_{\mathrm{var}}}_{j, \underline{n}}^{j', \underline{n}'} = \mathcal{C}_j^j \delta_{j,j'} \delta_{\underline{n}, \underline{n}'}.
\end{align}
\item \textit{Inter-scale} covariance matrix $\boldsymbol{\mathcal{S}}_{\mathrm{int}}$, neglecting spatial correlations, reduces to cross-correlations $\mathcal{C}_j^{j'}$ between the $\boldsymbol{\ell}_j$'s and the $\boldsymbol{\ell}_{j'}$'s at same location
\begin{align}
\label{eq:inter_scale_cov}
{S_{\mathrm{int}}}_{j, \underline{n}}^{j', \underline{n}'} = \mathcal{C}_j^{j'} \delta_{\underline{n}, \underline{n}'}.
\end{align}
\end{enumerate}

For texture~``\textbf{D}", both $\widehat{R}_{\nu,\boldsymbol{\varepsilon}}(\boldsymbol{\ell};\boldsymbol{\Lambda} \lvert \boldsymbol{\mathcal{S}}_{\mathrm{var}})$ (Fig.~\ref{subfig:fdmc_var_1}) and $\widehat{R}_{\nu,\boldsymbol{\varepsilon}}(\boldsymbol{\ell};\boldsymbol{\Lambda} \lvert \boldsymbol{\mathcal{S}}_{\mathrm{int}})$ (Fig.~\ref{subfig:fdmc_inter_1}) fail to reproduce $\mathcal{R}(\boldsymbol{\ell};\boldsymbol{\Lambda})$ (Fig.~\ref{subfig:risk_1}).
Hence, the selected hyperparameters $\widehat{\boldsymbol{\Lambda}}_{\nu, \boldsymbol{\varepsilon}}^\dagger (\boldsymbol{\ell}\lvert \boldsymbol{\mathcal{S}}_{\mathrm{var}})$~(`$\boldsymbol{\square}$') and $\widehat{\boldsymbol{\Lambda}}_{\nu, \boldsymbol{\varepsilon}}^\dagger (\boldsymbol{\ell}\lvert \boldsymbol{\mathcal{S}}_{\mathrm{int}})$~(`$\boldsymbol{\diamond}$') do not coincide with the optimal $\boldsymbol{\Lambda}_{\mathcal{R}}$~(`$\boldsymbol{+}$').
The corresponding segmentations, $T \widehat{\boldsymbol{h}}(\boldsymbol{\ell};\widehat{\boldsymbol{\Lambda}}_{\nu, \boldsymbol{\varepsilon}}^\dagger (\boldsymbol{\ell}\lvert \boldsymbol{\mathcal{S}}_{\mathrm{var}}))$ (Fig.~\ref{subfig:segh_var_1}) and $T \widehat{\boldsymbol{h}}(\boldsymbol{\ell};\widehat{\boldsymbol{\Lambda}}_{\nu, \boldsymbol{\varepsilon}}^\dagger (\boldsymbol{\ell}\lvert \boldsymbol{\mathcal{S}}_{\mathrm{int}}))$ (Fig.~\ref{subfig:segh_inter_1}), differ significantly from the targeted $T \widehat{\boldsymbol{h}}(\boldsymbol{\ell};\boldsymbol{\Lambda}_{\mathcal{R}})$~(Fig.~\ref{subfig:segh_risk_1}).\\
On the opposite, $\widehat{R}_{\nu,\boldsymbol{\varepsilon}}(\boldsymbol{\ell};\boldsymbol{\Lambda} \lvert \boldsymbol{\mathcal{S}})$ (Fig.~\ref{subfig:fdmc_true_1}) perfectly matches $\mathcal{R}(\boldsymbol{\ell};\boldsymbol{\Lambda})$ (Fig.~\ref{subfig:risk_1}). Thanks to the exact computation of the constant term $\mathrm{Tr}(\textbf{A} \boldsymbol{\mathcal{S}} \textbf{A}^* )$ in Proposition~\ref{claim:const_term}, the order of magnitude $\mathcal{R}(\boldsymbol{\ell};\boldsymbol{\Lambda})$ is well reproduced by $\widehat{R}_{\nu,\boldsymbol{\varepsilon}}(\boldsymbol{\ell};\boldsymbol{\Lambda} \lvert \boldsymbol{\mathcal{S}})$, as observed on the colorbars in Figure~\ref{fig:S_DE_13}.
Further, $\widehat{\boldsymbol{\Lambda}}_{\nu, \boldsymbol{\varepsilon}}^\dagger (\boldsymbol{\ell}\lvert \boldsymbol{\mathcal{S}}_{\mathrm{int}})$~(`$\boldsymbol{\bigtriangleup}$') coincides with $\boldsymbol{\Lambda}_{\mathcal{R}}$~(`$\boldsymbol{+}$'), leading to segmentation $T \widehat{\boldsymbol{h}}(\boldsymbol{\ell};\widehat{\boldsymbol{\Lambda}}_{\nu, \boldsymbol{\varepsilon}}^\dagger (\boldsymbol{\ell}\lvert \boldsymbol{\mathcal{S}}))$ (Fig.~\ref{subfig:segh_true_1}) similar to $T \widehat{\boldsymbol{h}}(\boldsymbol{\ell};\boldsymbol{\Lambda}_{\mathcal{R}})$~(Fig.~\ref{subfig:segh_risk_1}).\\
Similar observations can be made for Texture~``\textbf{E}" at columns~3,~4 of Figure~\ref{fig:S_DE_13}.\\

Altogether, these two examples illustrate that the \textit{full} covariance is necessary so that $\widehat{R}_{\nu,\boldsymbol{\varepsilon}}(\boldsymbol{\ell};\boldsymbol{\Lambda} \lvert \boldsymbol{\mathcal{S}})$ provides an accurate estimate of $\mathcal{R}(\boldsymbol{\ell};\boldsymbol{\Lambda})$.
Moreover, $\boldsymbol{\Lambda}_{\mathcal{R}}$ appears to be well approximated by the optimal hyperparameters $\boldsymbol{\Lambda}_{\nu, \boldsymbol{\varepsilon}}^\dagger (\boldsymbol{\ell}\lvert \boldsymbol{\mathcal{S}})$, obtained using \textit{full} covariance.

\begin{figure}[h!]
\centering
\begin{subfigure}{0.49\linewidth}
\centering
Texture ``\textbf{D}"
\end{subfigure}
\begin{subfigure}{0.49\linewidth}
\centering
Texture ``\textbf{E}"
\end{subfigure}

\vspace{5mm}

\begin{subfigure}{0.24\linewidth}
\centering
\includegraphics[height = 2.5cm]{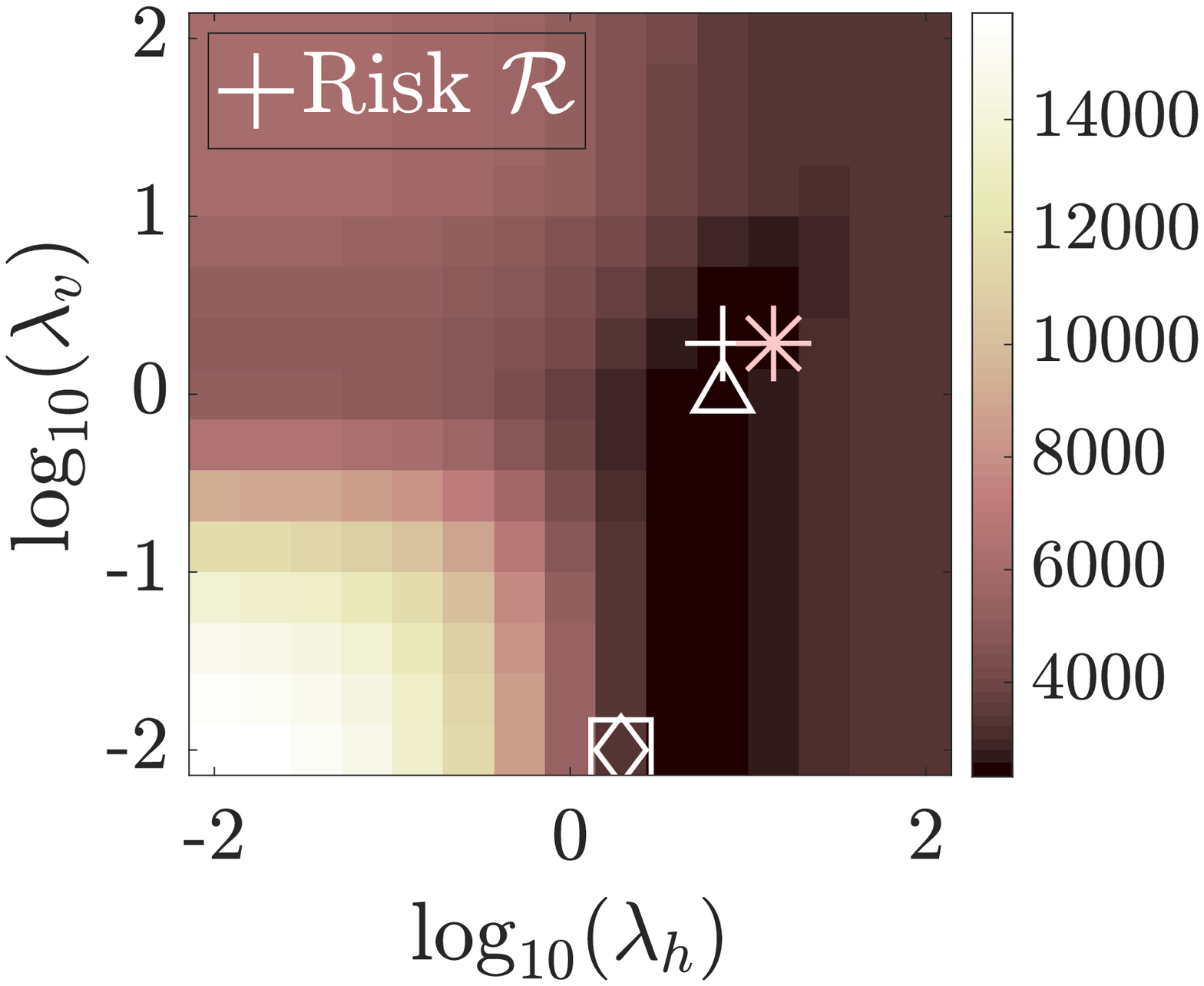}
\subcaption{\label{subfig:risk_1}$\mathcal{R}(\boldsymbol{\ell};\boldsymbol{\Lambda})$}
\end{subfigure}
\begin{subfigure}{0.24\linewidth}
\centering
\begin{minipage}[t][2.5cm]{3cm}
\centering
\includegraphics[width = 2cm]{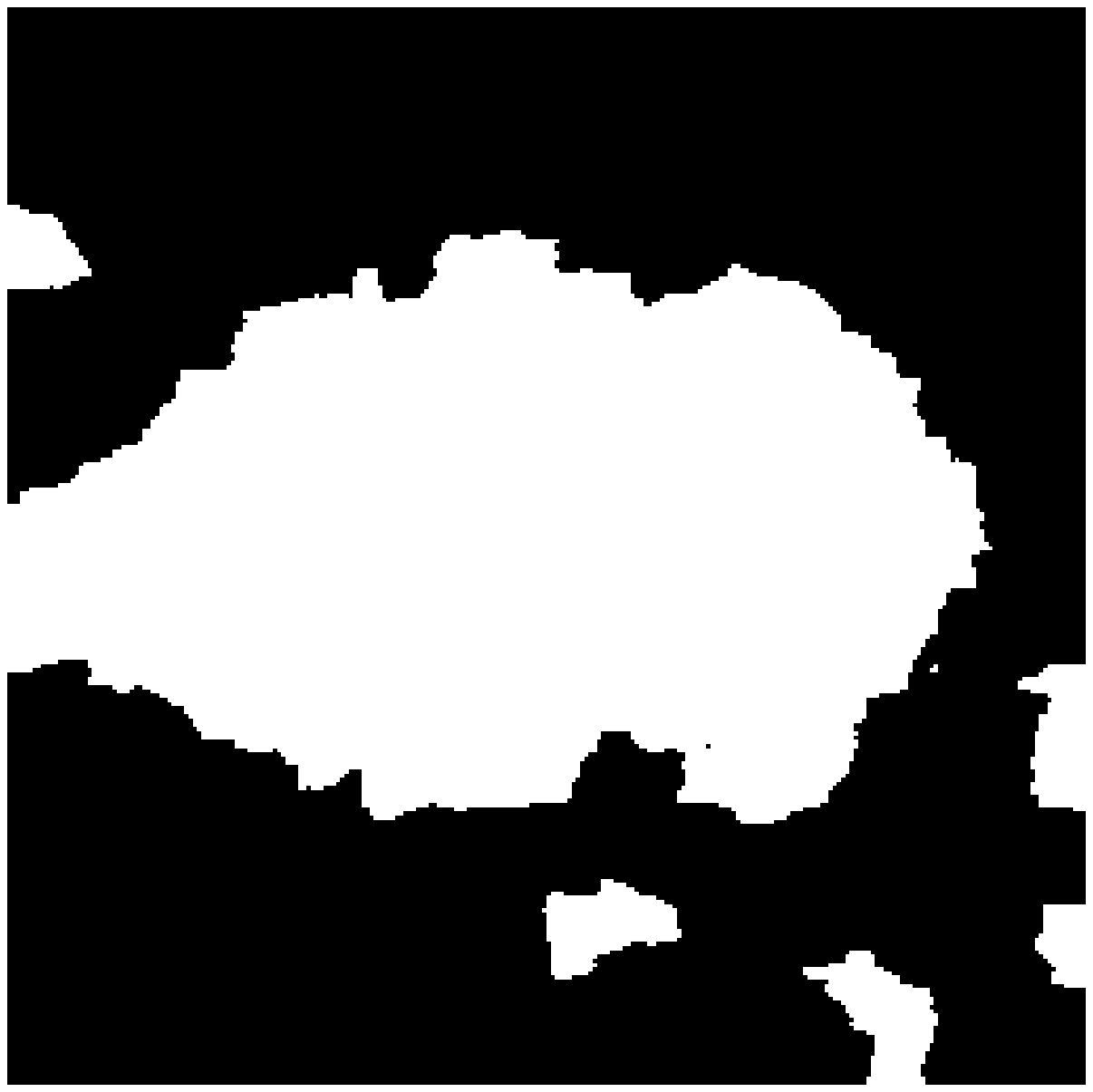}
\end{minipage}
\subcaption{\label{subfig:segh_risk_1}Min. $\mathcal{R}$ `+'}
\end{subfigure}
\begin{subfigure}{0.24\linewidth}
\centering
\includegraphics[height = 2.5cm]{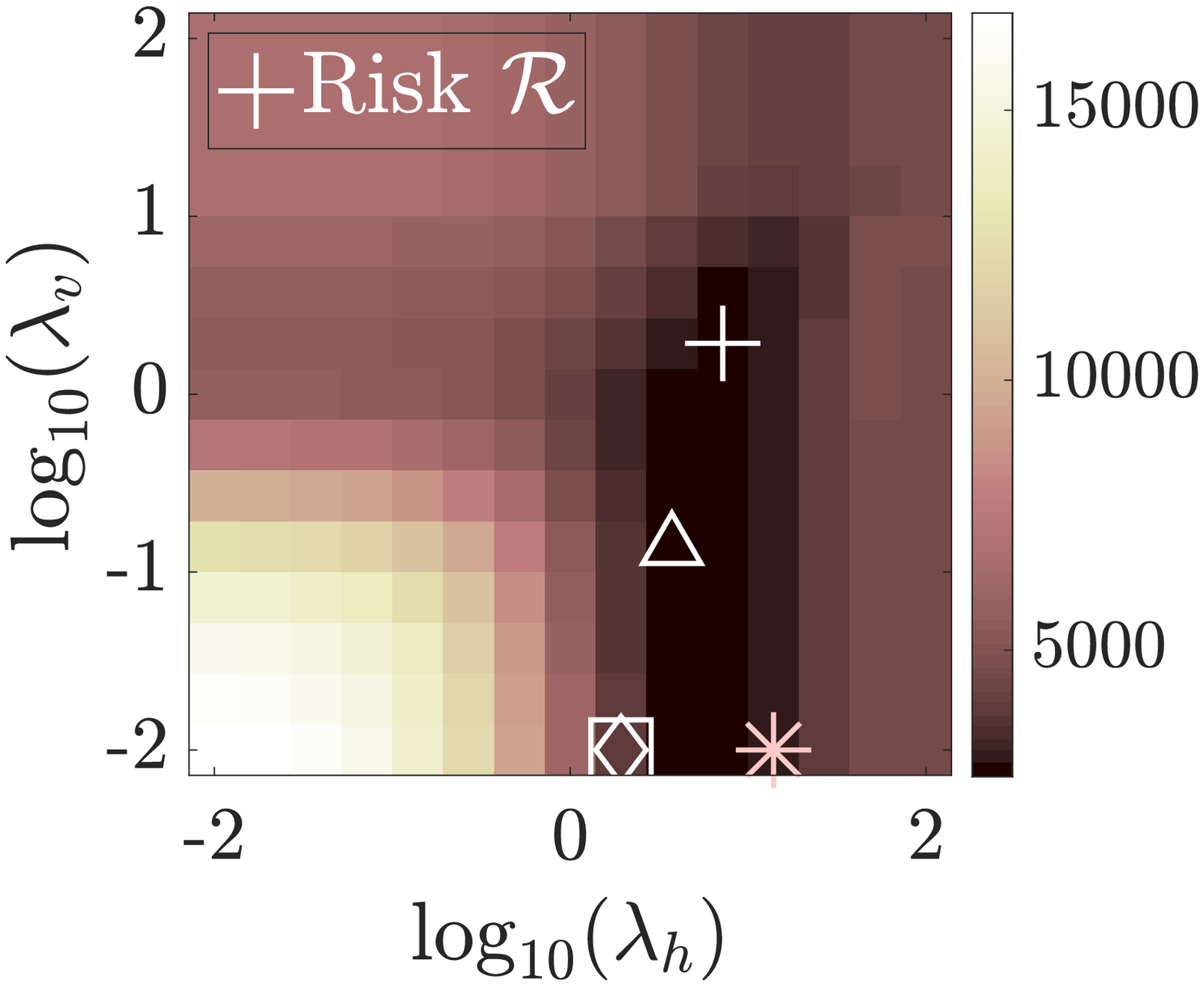}
\subcaption{\label{subfig:risk_4}$\mathcal{R}(\boldsymbol{\ell};\boldsymbol{\Lambda})$}
\end{subfigure}
\begin{subfigure}{0.24\linewidth}
\centering
\begin{minipage}[t][2.5cm]{3cm}
\centering
\includegraphics[width = 2cm]{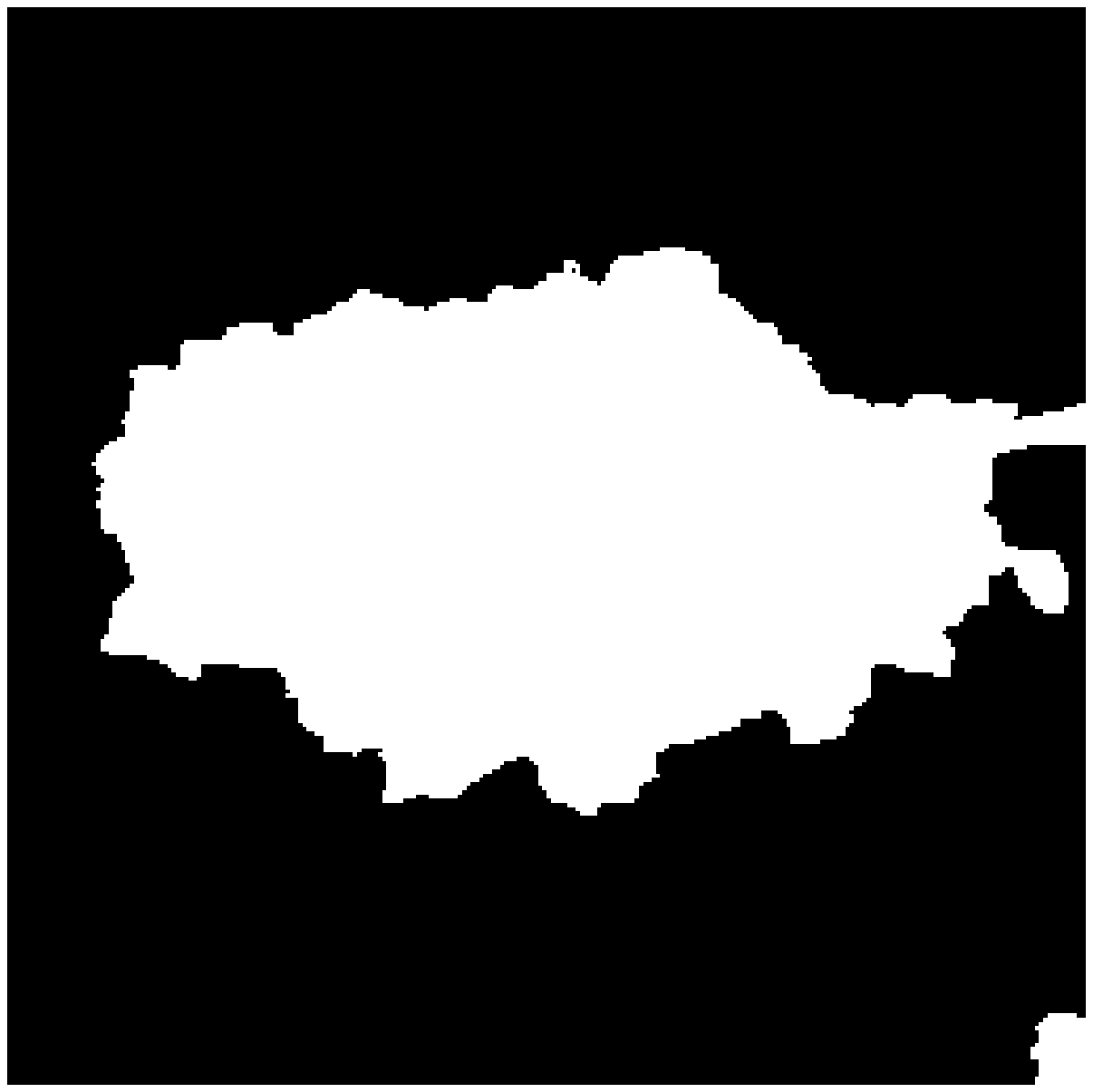}
\end{minipage}
\subcaption{\label{subfig:segh_risk_4}Min. $\mathcal{R}$ `+'}
\end{subfigure}\\

\begin{subfigure}{0.24\linewidth}
\centering
\includegraphics[height = 2.5cm]{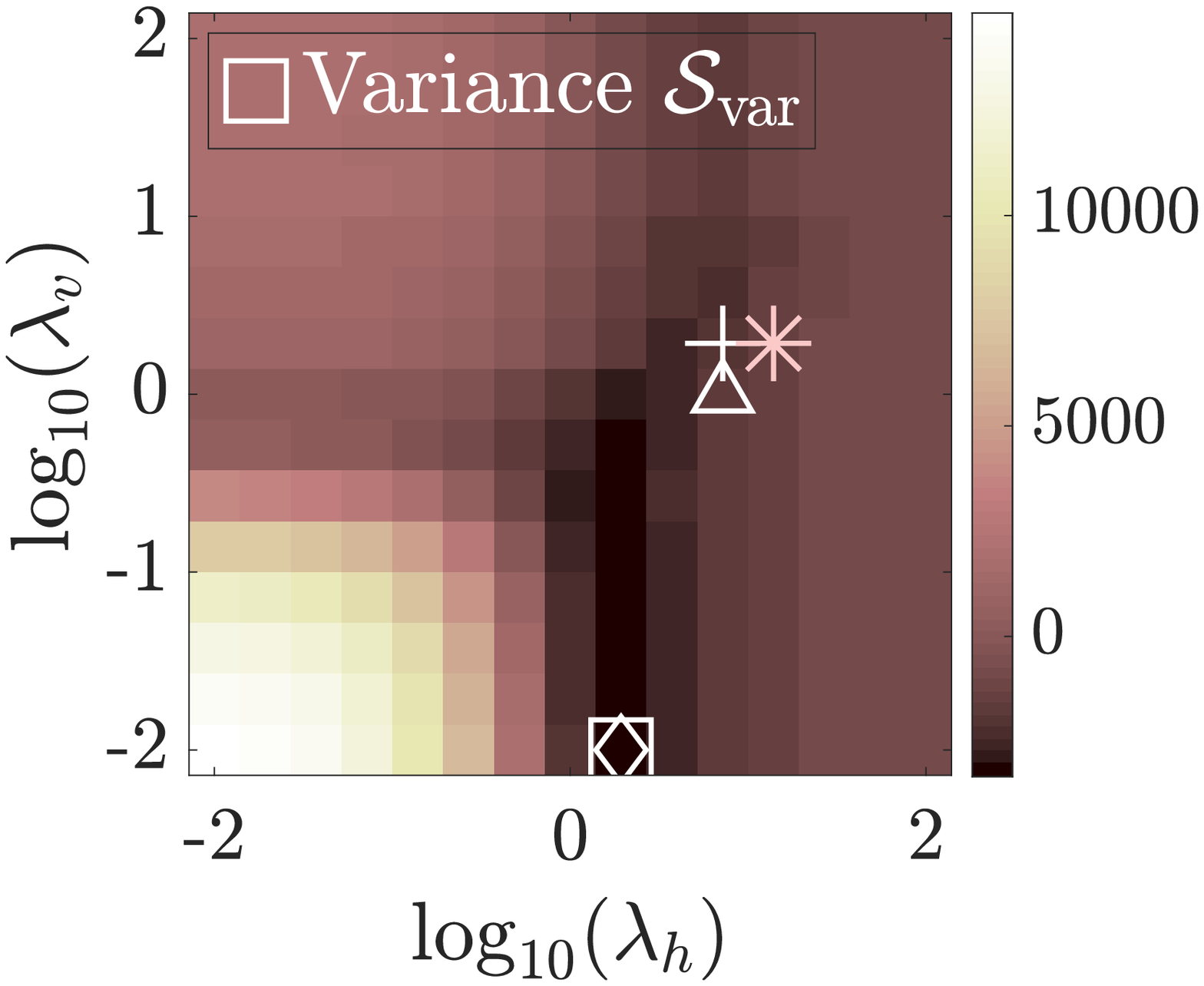}
\subcaption{\label{subfig:fdmc_var_1}$\widehat{R}_{\nu,\boldsymbol{\varepsilon}}(\boldsymbol{\ell};\boldsymbol{\Lambda} \lvert \boldsymbol{\mathcal{S}}_{\mathrm{var}})$}
\end{subfigure}
\begin{subfigure}{0.24\linewidth}
\centering
\begin{minipage}[t][2.5cm]{3cm}
\centering
\includegraphics[width = 2cm]{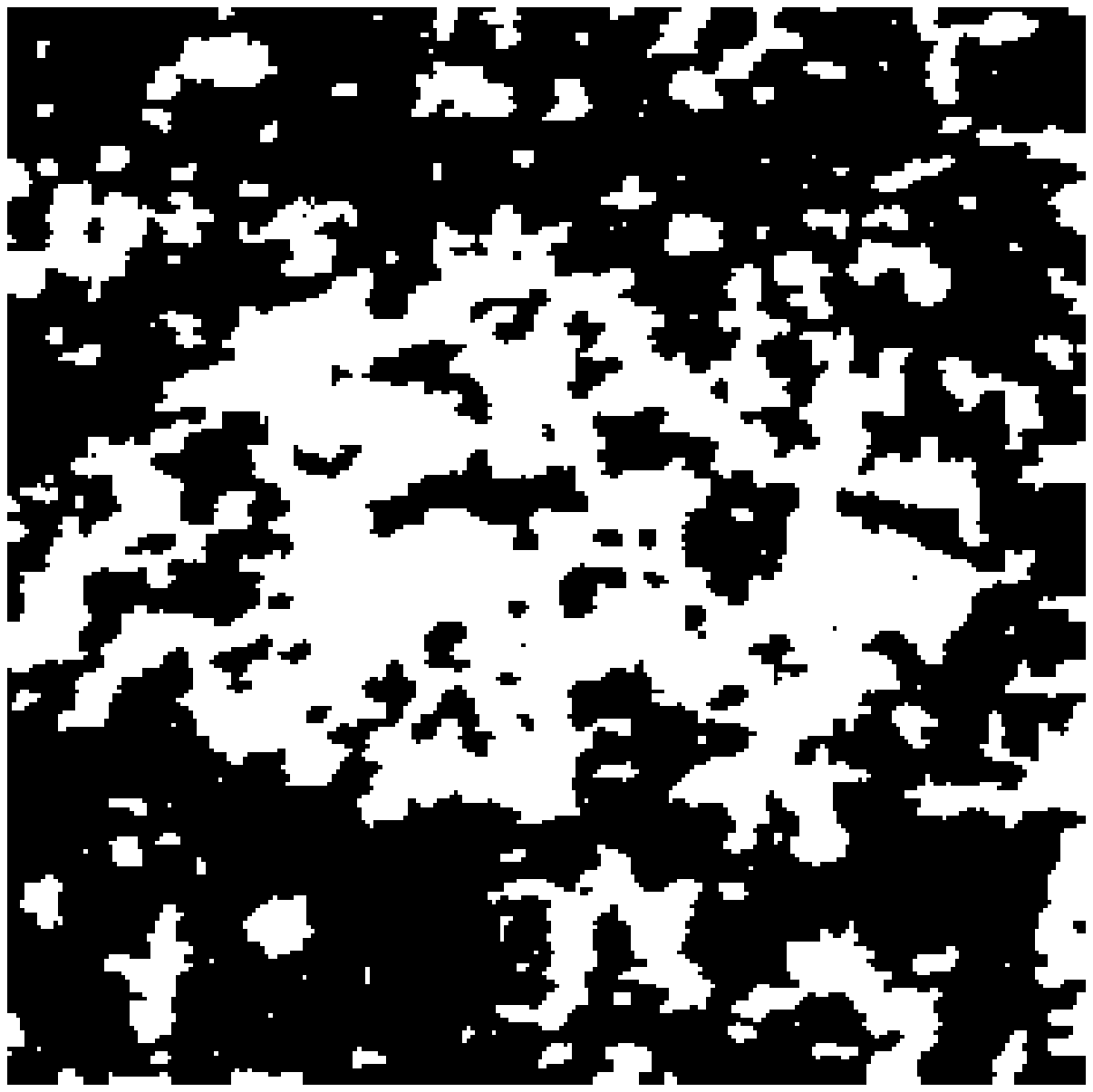}
\end{minipage}
\subcaption{\label{subfig:segh_var_1}Min. $\widehat{R}_{.}(\cdot \lvert \boldsymbol{\mathcal{S}}_{\mathrm{var}})$ `$\boldsymbol{\square}$'}
\end{subfigure}
\begin{subfigure}{0.24\linewidth}
\centering
\includegraphics[height = 2.5cm]{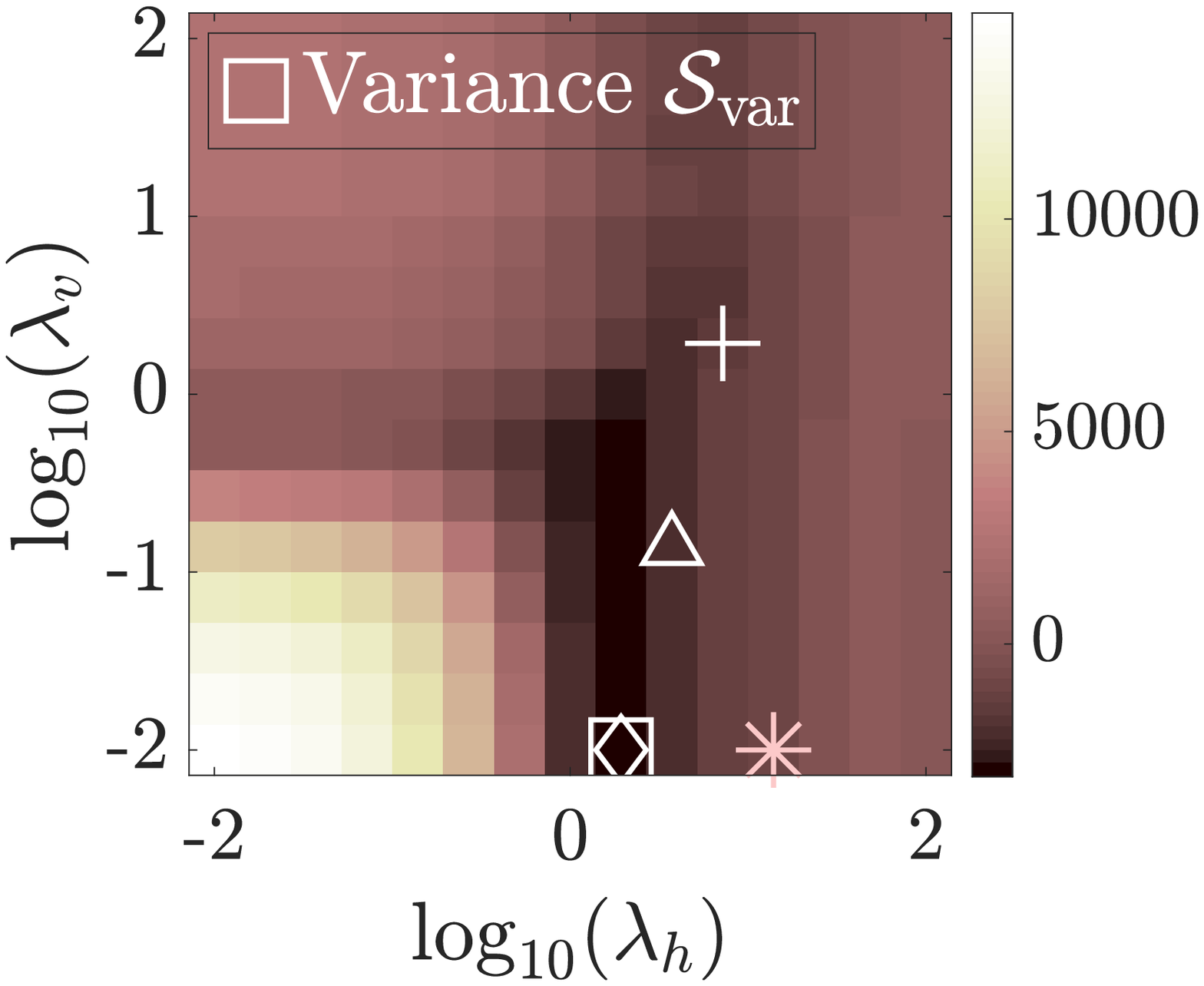}
\subcaption{\label{subfig:fdmc_var_4}$\widehat{R}_{\nu,\boldsymbol{\varepsilon}}(\boldsymbol{\ell};\boldsymbol{\Lambda} \lvert \boldsymbol{\mathcal{S}}_{\mathrm{var}})$}
\end{subfigure}
\begin{subfigure}{0.24\linewidth}
\centering
\begin{minipage}[t][2.5cm]{3cm}
\centering
\includegraphics[width = 2cm]{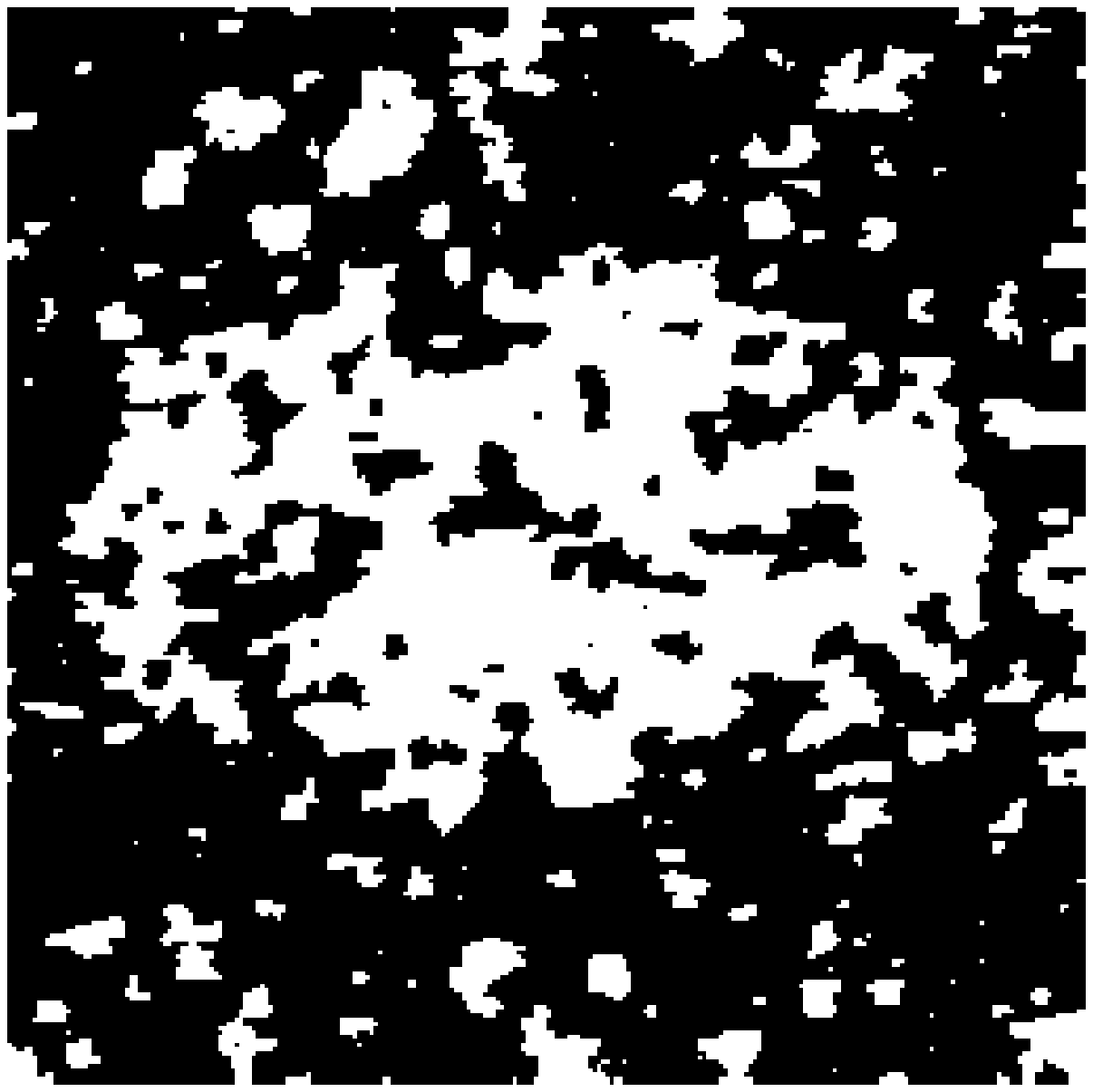}
\end{minipage}
\subcaption{\label{subfig:segh_var_4}Min. $\widehat{R}_{.}(\cdot \lvert \boldsymbol{\mathcal{S}}_{\mathrm{var}})$ `$\boldsymbol{\square}$'}
\end{subfigure}\\

\begin{subfigure}{0.24\linewidth}
\centering
\includegraphics[height = 2.5cm]{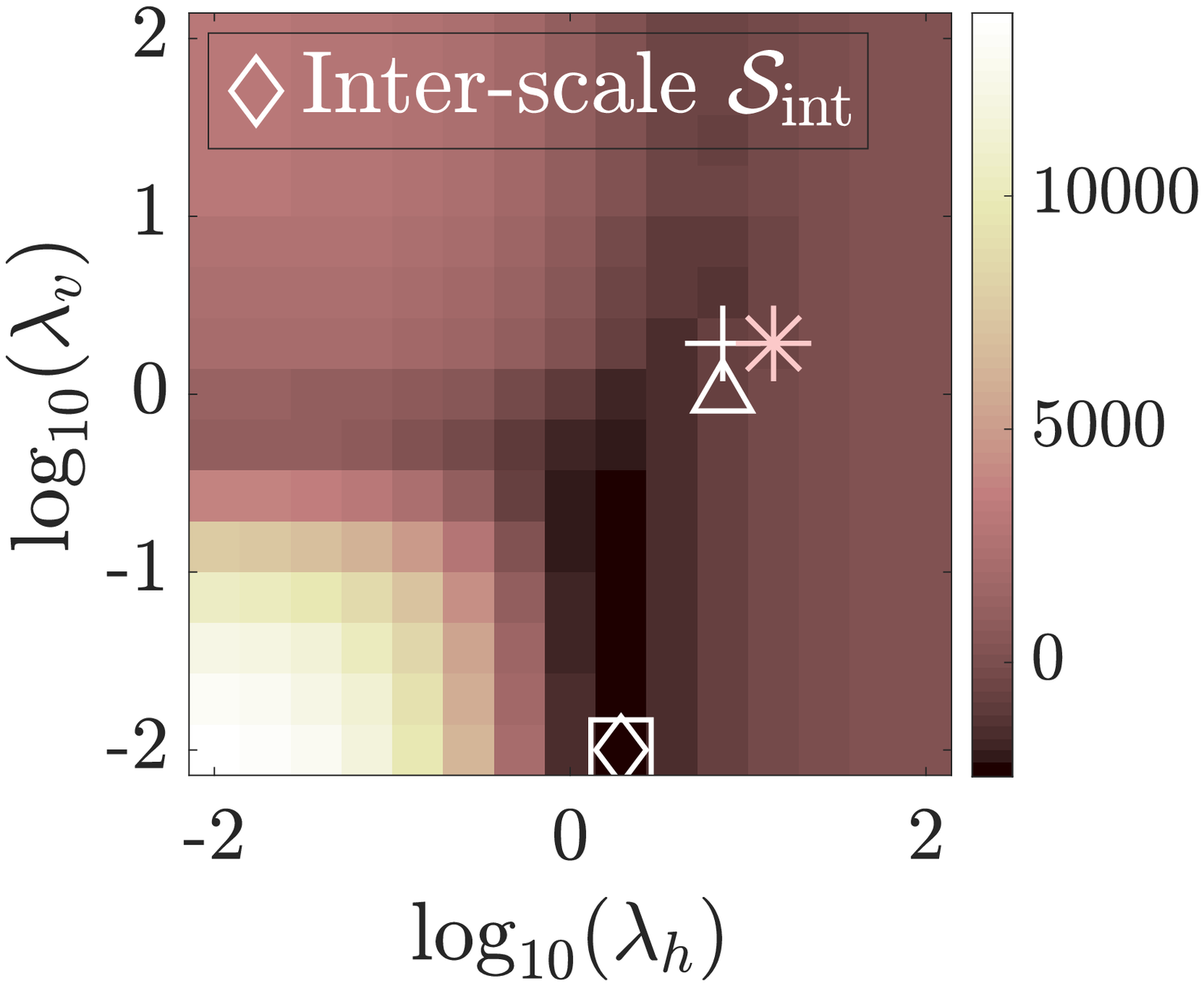}
\subcaption{\label{subfig:fdmc_inter_1} $\widehat{R}_{\nu, \boldsymbol{\varepsilon}}(\boldsymbol{\ell}; \boldsymbol{\Lambda} \lvert \boldsymbol{\mathcal{S}}_{\mathrm{int}})$}
\end{subfigure}
\begin{subfigure}{0.24\linewidth}
\centering
\begin{minipage}[t][2.5cm]{3cm}
\centering
\includegraphics[width = 2cm]{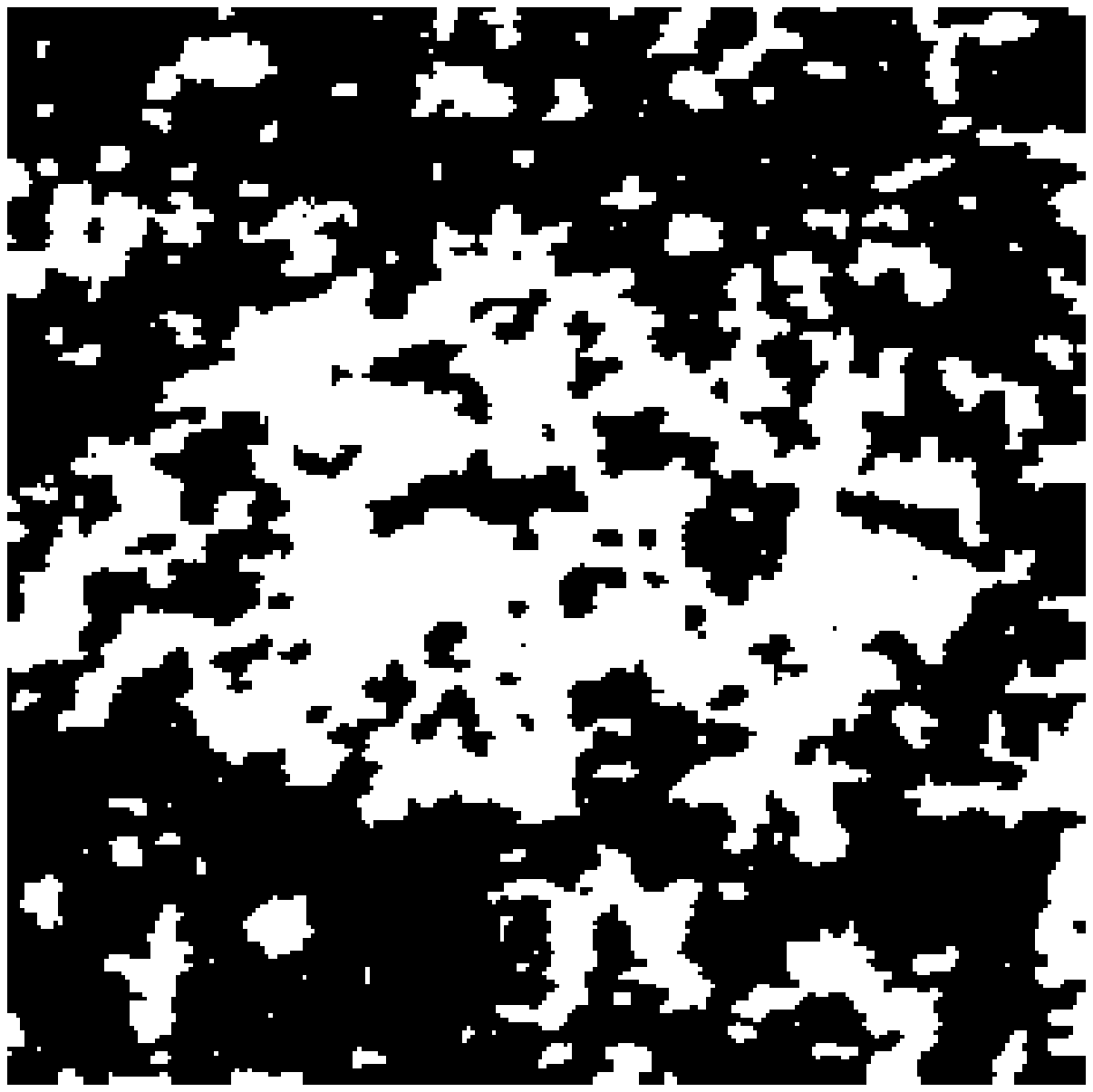}
\end{minipage}
\subcaption{\label{subfig:segh_inter_1}Min. $\widehat{R}_{.}(\cdot \lvert \boldsymbol{\mathcal{S}}_{\mathrm{int}})$ `$\boldsymbol{\diamond}$'}
\end{subfigure}
\begin{subfigure}{0.24\linewidth}
\centering
\includegraphics[height = 2.5cm]{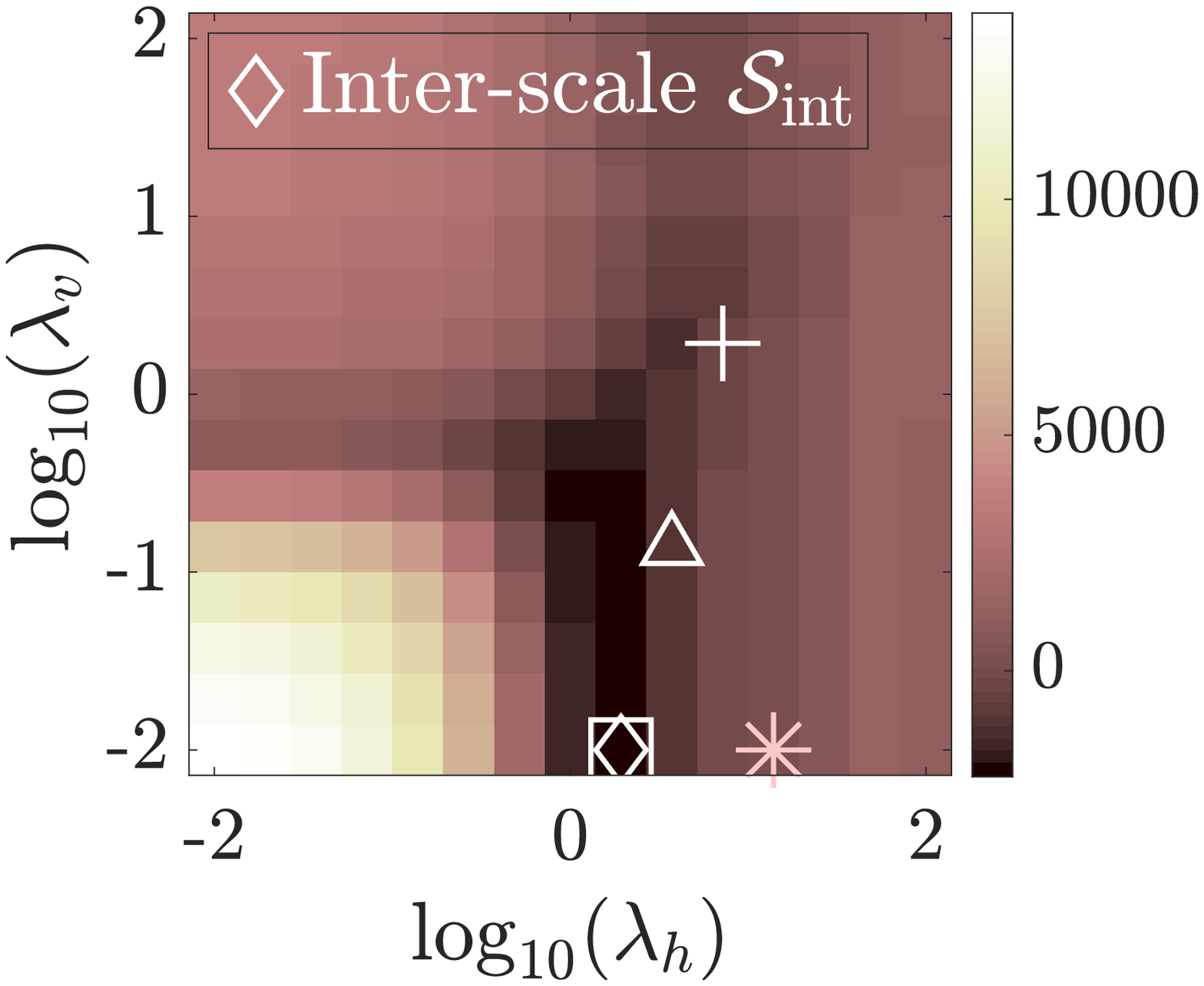}
\subcaption{\label{subfig:fdmc_inter_4}$\widehat{R}_{\nu, \boldsymbol{\varepsilon}}(\boldsymbol{\ell}; \boldsymbol{\Lambda}  \lvert \boldsymbol{\mathcal{S}}_{\mathrm{int}})$}
\end{subfigure}
\begin{subfigure}{0.24\linewidth}
\centering
\begin{minipage}[t][2.5cm]{3cm}
\centering
\includegraphics[width = 2cm]{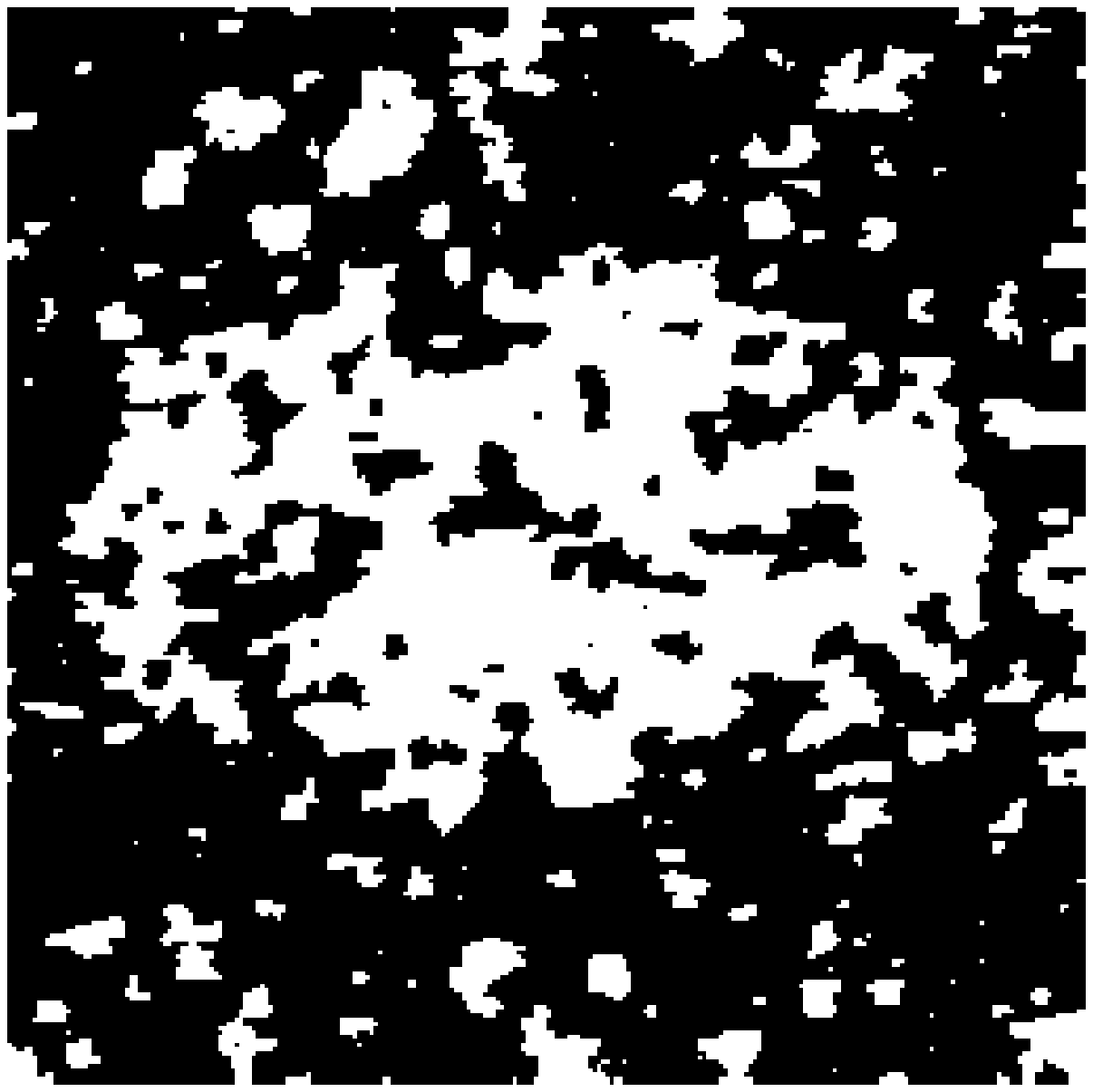}
\end{minipage}
\subcaption{\label{subfig:segh_inter_4}Min. $\widehat{R}_{.}(\cdot \lvert \boldsymbol{\mathcal{S}}_{\mathrm{int}})$ `$\boldsymbol{\diamond}$'}
\end{subfigure}\\

\begin{subfigure}{0.24\linewidth}
\centering
\includegraphics[height = 2.5cm]{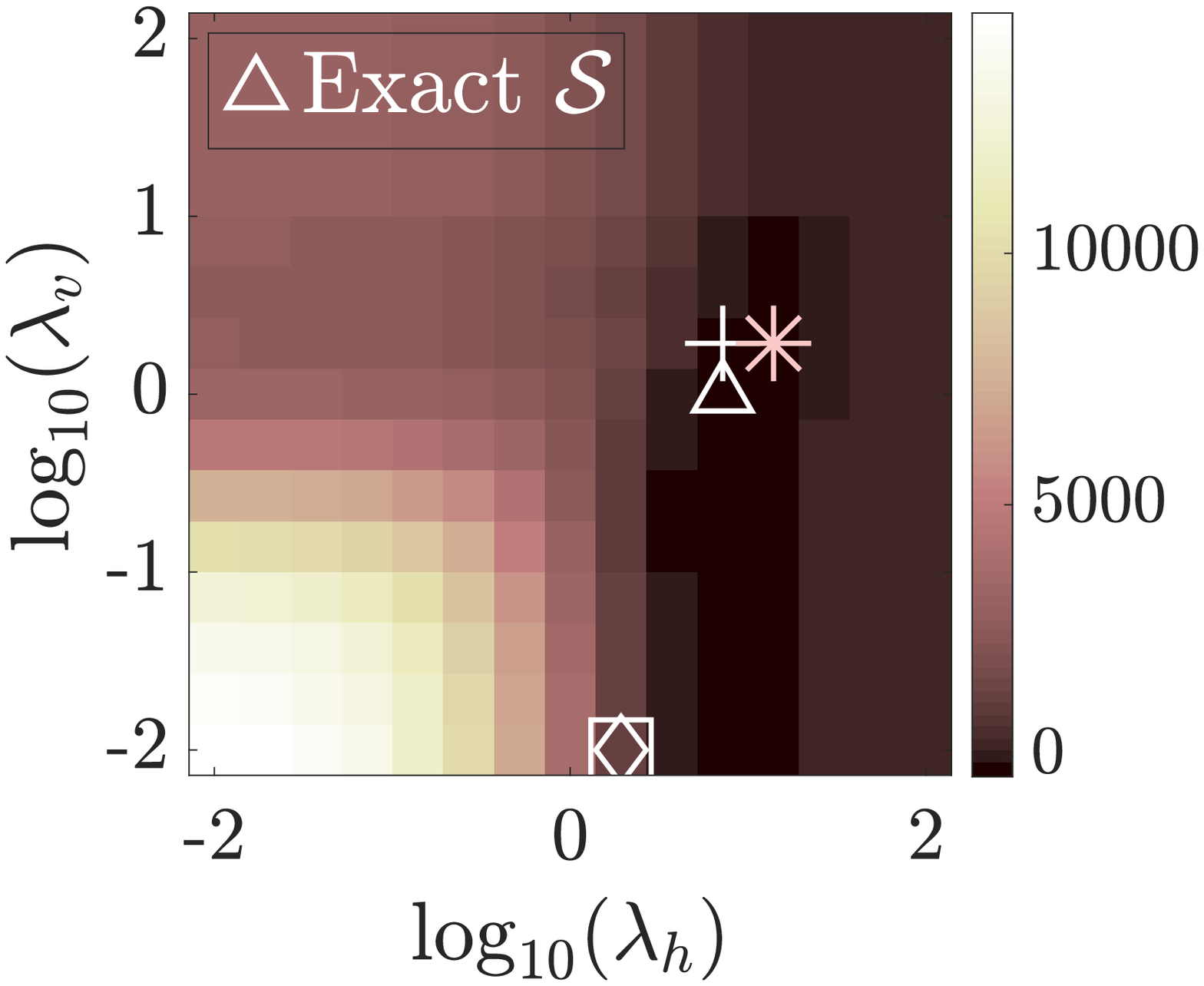}
\subcaption{\label{subfig:fdmc_true_1}$\widehat{R}_{\nu,\boldsymbol{\varepsilon}}(\boldsymbol{\ell};\boldsymbol{\Lambda} \lvert \boldsymbol{\mathcal{S}})$ }
\end{subfigure}
\begin{subfigure}{0.24\linewidth}
\centering
\begin{minipage}[t][2.5cm]{3cm}
\centering
\includegraphics[width = 2cm]{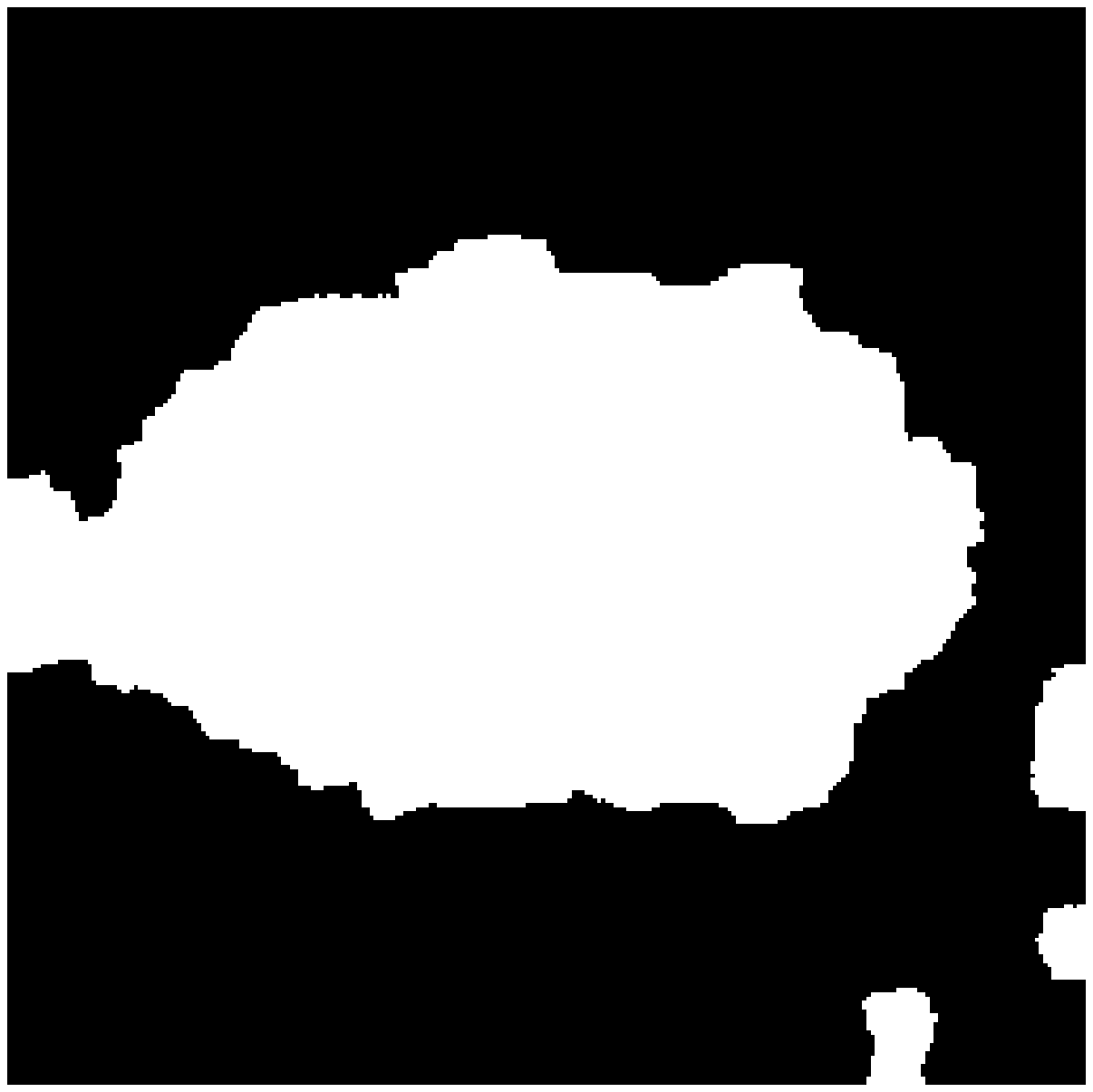}
\end{minipage}
\subcaption{\label{subfig:segh_true_1}Min. $\widehat{R}_{.}(\cdot \lvert \boldsymbol{\mathcal{S}})$ `$\boldsymbol{\bigtriangleup}$'}
\end{subfigure}
\begin{subfigure}{0.24\linewidth}
\centering
\includegraphics[height = 2.5cm]{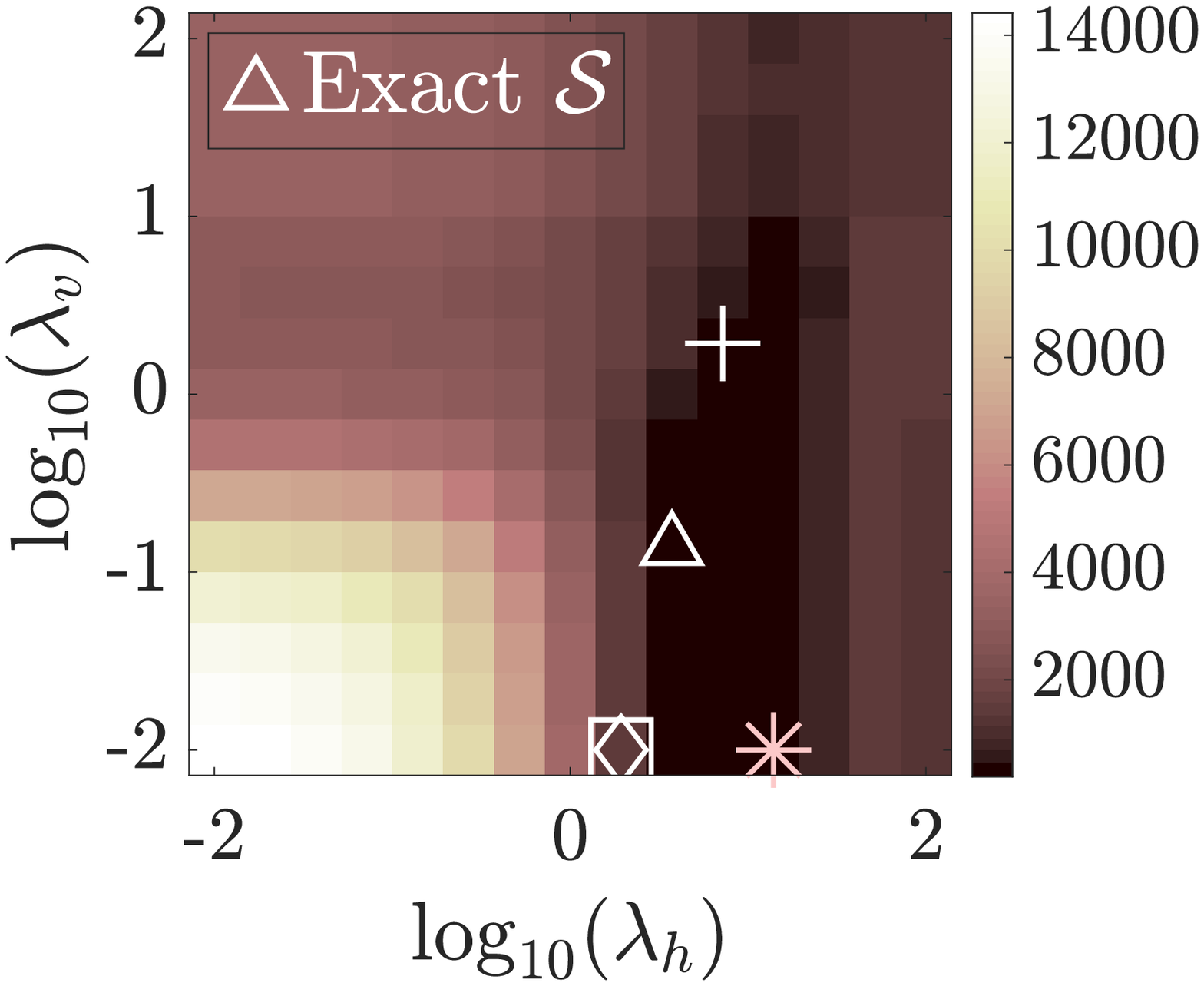}
\subcaption{\label{subfig:fdmc_true_4}$\widehat{R}_{\nu,\boldsymbol{\varepsilon}}(\boldsymbol{\ell};\boldsymbol{\Lambda} \lvert \boldsymbol{\mathcal{S}})$ }
\end{subfigure}
\begin{subfigure}{0.24\linewidth}
\centering
\begin{minipage}[t][2.5cm]{3cm}
\centering
\includegraphics[width = 2cm]{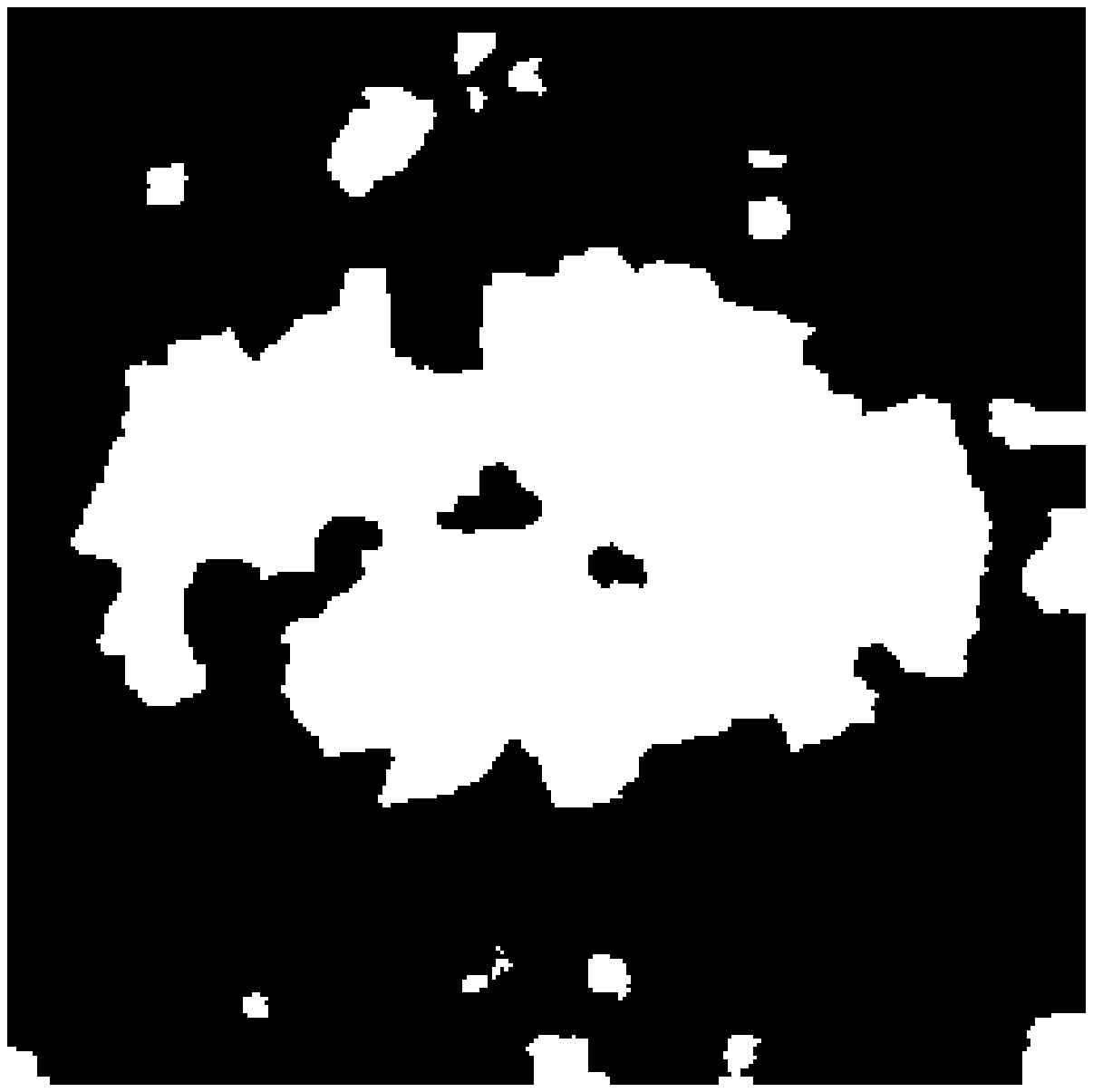}
\end{minipage}
\subcaption{\label{subfig:segh_true_4}Min. $\widehat{R}_{.}(\cdot \lvert \boldsymbol{\mathcal{S}})$ `$\boldsymbol{\bigtriangleup}$'}
\end{subfigure}\\

\begin{subfigure}{0.24\linewidth}
\centering
\includegraphics[height = 2.5cm]{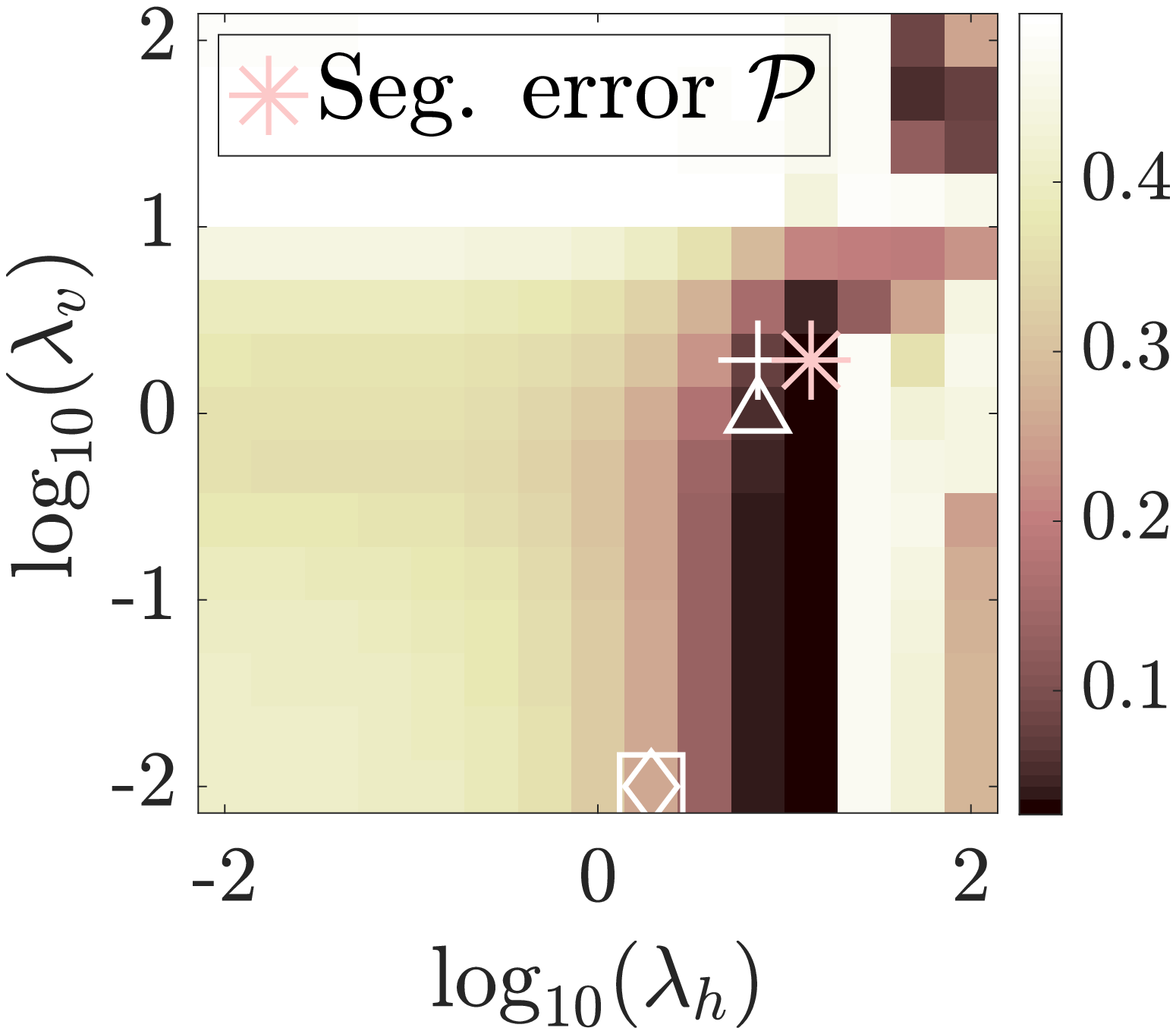}
\subcaption{\label{subfig:seg_1}$\mathcal{P}(\boldsymbol{\ell};\boldsymbol{\Lambda})$}
\end{subfigure}
\begin{subfigure}{0.24\linewidth}
\centering
\begin{minipage}[t][2.5cm]{3cm}
\centering
\includegraphics[width = 2cm]{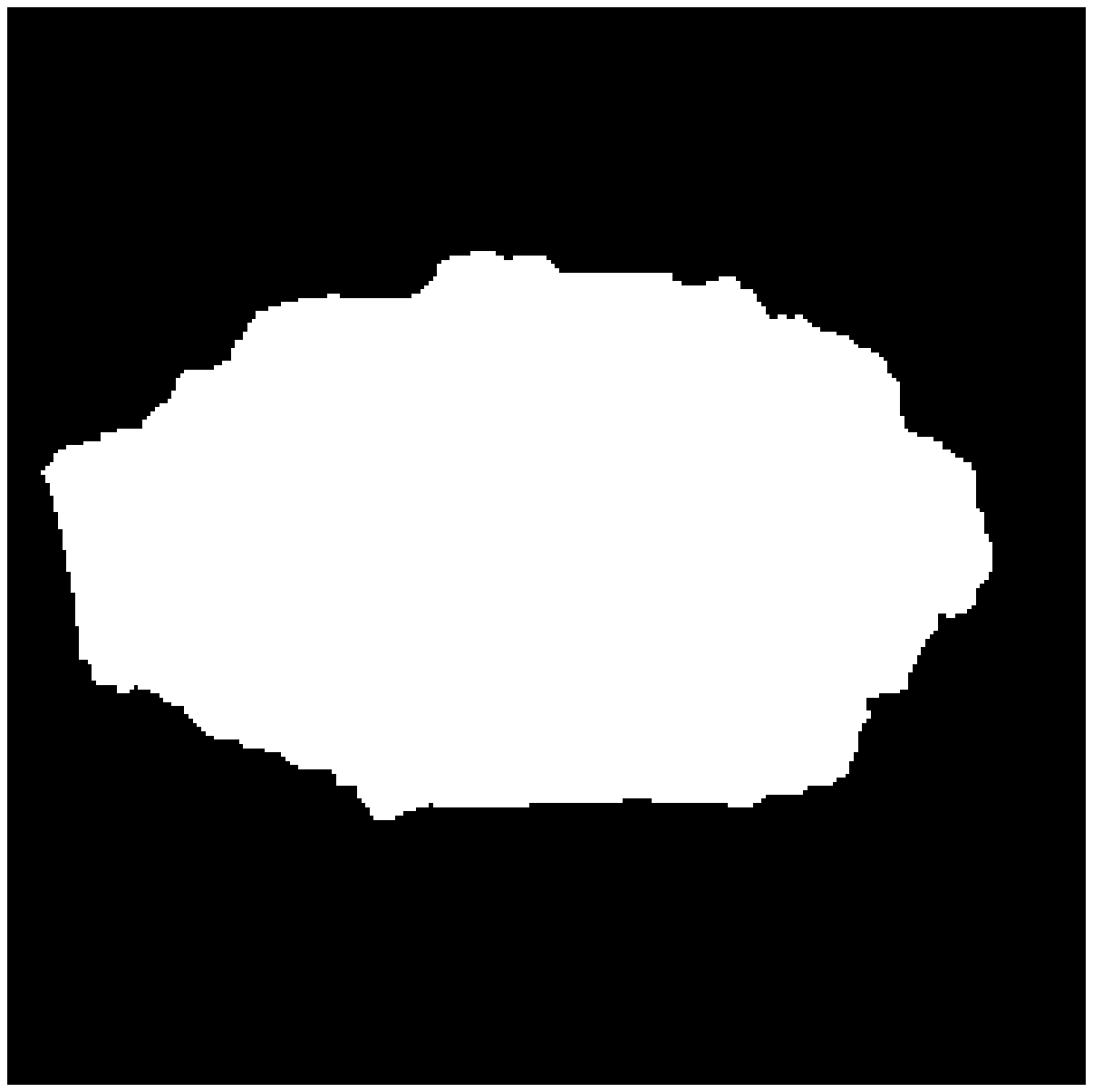}
\end{minipage}
\subcaption{Min. $\mathcal{P}$ `${\color{rose} \Large \boldsymbol{\ast}}$'}
\end{subfigure}
\begin{subfigure}{0.24\linewidth}
\centering
\includegraphics[height = 2.5cm]{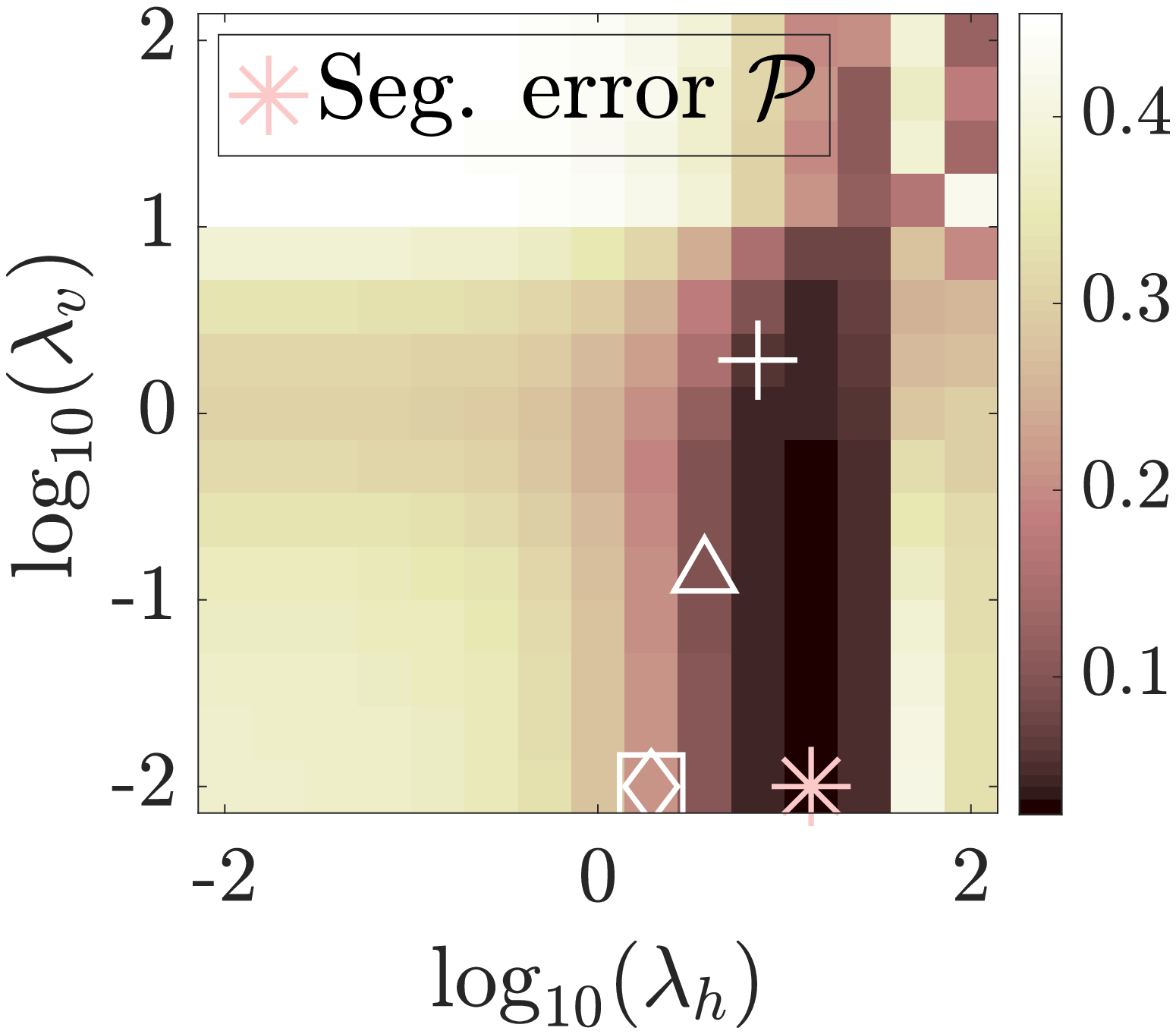}
\subcaption{\label{subfig:seg_4}$\mathcal{P}(\boldsymbol{\ell};\boldsymbol{\Lambda})$}
\end{subfigure}
\begin{subfigure}{0.24\linewidth}
\centering
\begin{minipage}[t][2.5cm]{3cm}
\centering
\includegraphics[width = 2cm]{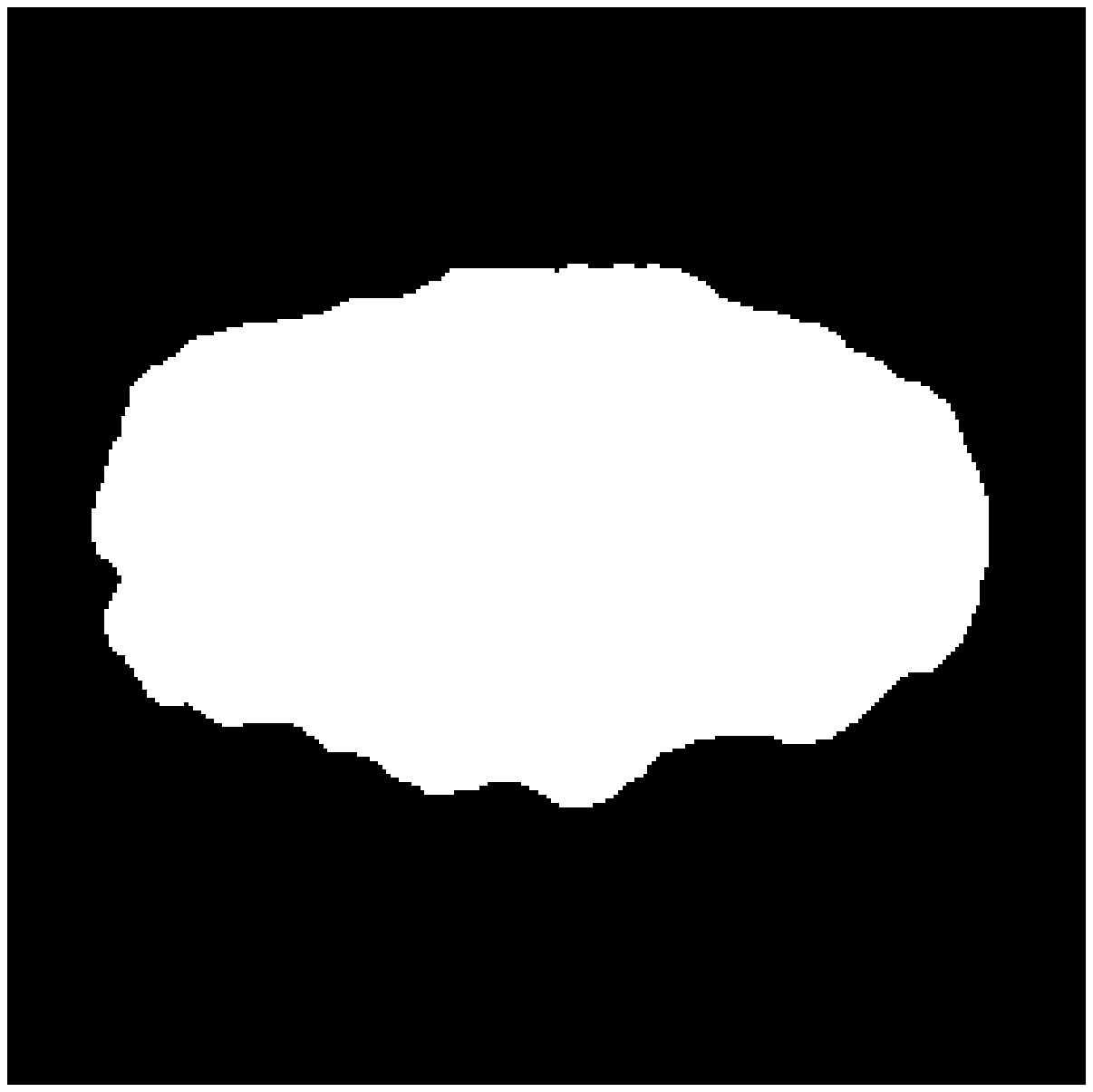}
\end{minipage}
\subcaption{Min. $\mathcal{P}$ `${\color{rose} \Large \boldsymbol{\ast}}$'}
\end{subfigure}

\vspace{-3mm}
\caption{\label{fig:S_DE_13} Error maps for TV-based texture segmentation on a grid of $\boldsymbol{\Lambda} = (\lambda_h,\lambda_v)$, and segmentation obtained with associated optimal hyperparameters for piecewise Textures~``\textbf{D}" (column 1,~2) and ``\textbf{E}" (column 3,~4). 
Estimated risks $\widehat{R}_{\nu \boldsymbol{\varepsilon}}(\boldsymbol{\ell};\boldsymbol{\Lambda} \lvert \boldsymbol{\mathcal{S}})$ computed either with variance matrix $\boldsymbol{\mathcal{S}}_{\mathrm{var}}$~(second row),  inter-scale covariance matrix $\boldsymbol{\mathcal{S}}_{\mathrm{int}}$~(third row), or full  covariance matrix $\boldsymbol{\mathcal{S}}$~(fourth row) are compared.
}
\end{figure}

\subsubsection{Impact of estimating the covariance matrix}
\label{subsec:estim_cov}

In practice, on has access to only the \textit{estimated} covariance matrix $\widehat{\boldsymbol{S}}$. 
This Section compares generalized SURE computed from \textit{estimated} covariance $\widehat{\boldsymbol{\mathcal{S}}}$ to SURE computed assuming the knowledge of \textit{true} covariance $\boldsymbol{\mathcal{S}}$.\\

For Texture~``\textbf{D}", $\widehat{R}_{\nu,\boldsymbol{\varepsilon}}(\boldsymbol{\ell};\boldsymbol{\Lambda} \lvert \widehat{\boldsymbol{\mathcal{S}}})$~(Figure~\ref{subfig:FDMC_LWT_1_fdmc_est_8_JJ_13}) is identical to $\widehat{R}_{\nu,\boldsymbol{\varepsilon}}(\boldsymbol{\ell};\boldsymbol{\Lambda} \lvert \boldsymbol{\mathcal{S}})$~(Figure~\ref{subfig:FDMC_LWT_1_fdmc_true_8_JJ_13}).
Further, optimal hyperparameters $\widehat{\boldsymbol{\Lambda}}^{\dagger}_{\nu, \boldsymbol{\varepsilon}}(\boldsymbol{\ell} \lvert \widehat{\boldsymbol{\mathcal{S}}})$~(`$\color{bclair} \boldsymbol{\bigtriangleup}$') perfectly matches $\widehat{\boldsymbol{\Lambda}}^{\dagger}_{\nu, \boldsymbol{\varepsilon}}(\boldsymbol{\ell} \lvert \boldsymbol{\mathcal{S}})$~(`$\boldsymbol{\bigtriangleup}$') and lead to similar segmentations, $T\widehat{\boldsymbol{h}}(\boldsymbol{\ell};\widehat{\boldsymbol{\Lambda}}^{\dagger}_{\nu, \boldsymbol{\varepsilon}}(\boldsymbol{\ell} \lvert \widehat{\boldsymbol{\mathcal{S}}}))$ (Figure~\ref{subfig:FDMC_LWT_1_fdmc_est_8_JJ_13_segh}) and $T\widehat{\boldsymbol{h}}(\boldsymbol{\ell};\widehat{\boldsymbol{\Lambda}}^{\dagger}_{\nu, \boldsymbol{\varepsilon}}(\boldsymbol{\ell} \lvert \boldsymbol{\mathcal{S}}))$ (Figure~\ref{subfig:FDMC_LWT_1_fdmc_true_8_JJ_13_segh}).
These observations are precisely quantified in Table~\ref{tab:grid_bfgs} in term of values of $\mathcal{R}(\boldsymbol{\ell};\boldsymbol{\Lambda})$ and percentage of misclassified pixels.
The same observations can be made for Texture~``\textbf{E}".\\

Altogether, Figure~\ref{fig:True_Est_DE_13} and the quantitative results reported in Table~\ref{tab:grid_bfgs} show that $\widehat{R}_{\nu,\boldsymbol{\varepsilon}}(\boldsymbol{\ell};\boldsymbol{\Lambda} \lvert \widehat{\boldsymbol{\mathcal{S}}})$ provides an accurate estimate of $\mathcal{R}(\boldsymbol{\ell};\boldsymbol{\Lambda})$, and that $\widehat{\boldsymbol{\Lambda}}^{\dagger}_{\nu, \boldsymbol{\varepsilon}}(\boldsymbol{\ell} \lvert \widehat{\boldsymbol{\mathcal{S}}})$ is a good estimate of $\boldsymbol{\Lambda}_{\mathcal{R}}$.

\begin{table}[h!]
\centering
\begin{tabular}{ccccc}
\toprule
 & \multicolumn{2}{c}{Texture~``\textbf{D}"}  & \multicolumn{2}{c}{Texture~``\textbf{E}"} \\ 
 \midrule
Hyperparameter $\boldsymbol{\Lambda}$ & $\mathcal{R}(\boldsymbol{\ell};\boldsymbol{\Lambda})$ & $ \mathcal{P}(\boldsymbol{\ell};\boldsymbol{\Lambda})$ & $\mathcal{R}(\boldsymbol{\ell};\boldsymbol{\Lambda})$ & $ \mathcal{P}(\boldsymbol{\ell};\boldsymbol{\Lambda})$ \\
 \midrule
$\boldsymbol{\Lambda}_{\mathcal{R}}$ `+' & $2.32 \, 10^3$ & $7.79\%$ & $2.66 \, 10^3$ & $5.34\%$\\
$\widehat{\boldsymbol{\Lambda}}_{\nu, \boldsymbol{\varepsilon}}^\dagger(\boldsymbol{\ell} \lvert \boldsymbol{\mathcal{S}})$ `$\boldsymbol{\bigtriangleup}$' & $2.35 \, 10^3$ & $5.51\%$& $2.83 \, 10^3$ & $9.58\%$\\
$\widehat{\boldsymbol{\Lambda}}_{\nu, \boldsymbol{\varepsilon}}^\dagger (\boldsymbol{\ell} \lvert \widehat{\boldsymbol{\mathcal{S}}})$ `${\color{bclair}\boldsymbol{\bigtriangleup}}$'  & $2.35 \, 10^3$ & $5.51\%$& $2.96 \, 10^3$ & $4.61\%$\\
$\widehat{\boldsymbol{\Lambda}}_{\nu, \boldsymbol{\varepsilon}}^\mathrm{BFGS} (\boldsymbol{\ell} \lvert \boldsymbol{\mathcal{S}})$ `$\boldsymbol{\bigtriangledown}$'  & $2.36 \, 10^3$ & $4.66\%$& $2.83 \, 10^3$ & $3.71\%$\\
$\widehat{\boldsymbol{\Lambda}}_{\nu, \boldsymbol{\varepsilon}}^\mathrm{BFGS}(\boldsymbol{\ell} \lvert \widehat{\boldsymbol{\mathcal{S}}})$ `${\color{bclair}\boldsymbol{\bigtriangledown}}$' & $2.36 \, 10^3$ & $6.22\%$& $2.83 \, 10^3$ & $3.27\%$\\
\bottomrule
\end{tabular}
\caption{\label{tab:grid_bfgs}Grid search v.s. BFGS Algorithm~\ref{alg:BFGS} performance in term of quadratic error $\mathcal{R}(\boldsymbol{\ell};\boldsymbol{\Lambda})$ and segmentation error $ \mathcal{P}(\boldsymbol{\ell};\boldsymbol{\Lambda})$ for the two different Textures~``\textbf{D}"~and~``\textbf{E}".}
\end{table}

\begin{figure}[h!]
\centering
\begin{subfigure}{0.49\linewidth}
\centering
Texture ``\textbf{D}"
\end{subfigure}
\begin{subfigure}{0.49\linewidth}
\centering
Texture ``\textbf{E}"
\end{subfigure}

\vspace{5mm}

\begin{subfigure}{0.24\linewidth}
\centering
\includegraphics[height = 2.5cm]{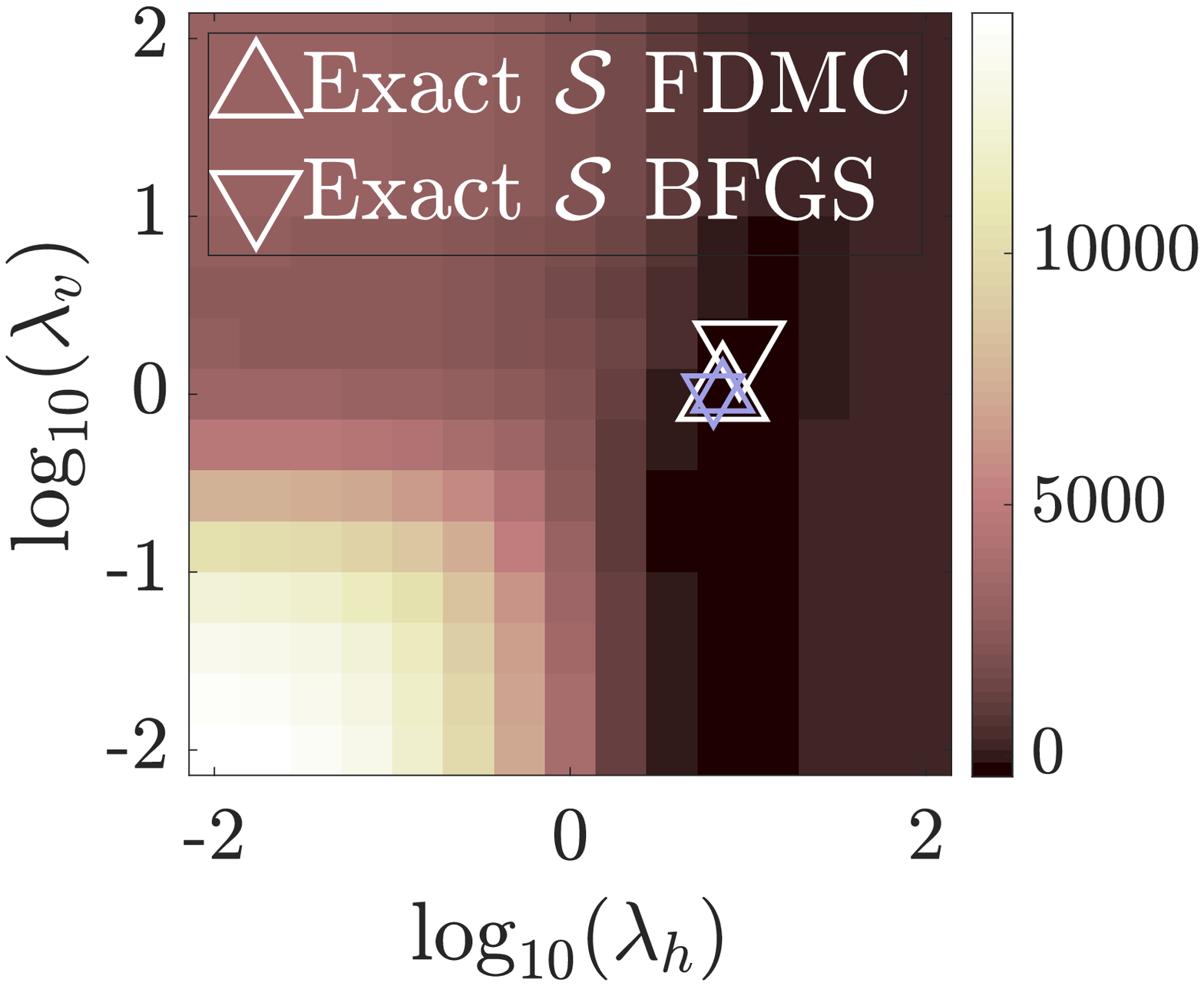}
\subcaption{\label{subfig:FDMC_LWT_1_fdmc_true_8_JJ_13}$\widehat{R}_{\nu,\boldsymbol{\varepsilon}}(\boldsymbol{\ell};\boldsymbol{\Lambda} \lvert \boldsymbol{\mathcal{S}})$ }
\end{subfigure}
\begin{subfigure}{0.24\linewidth}
\centering
\includegraphics[height = 2.5cm]{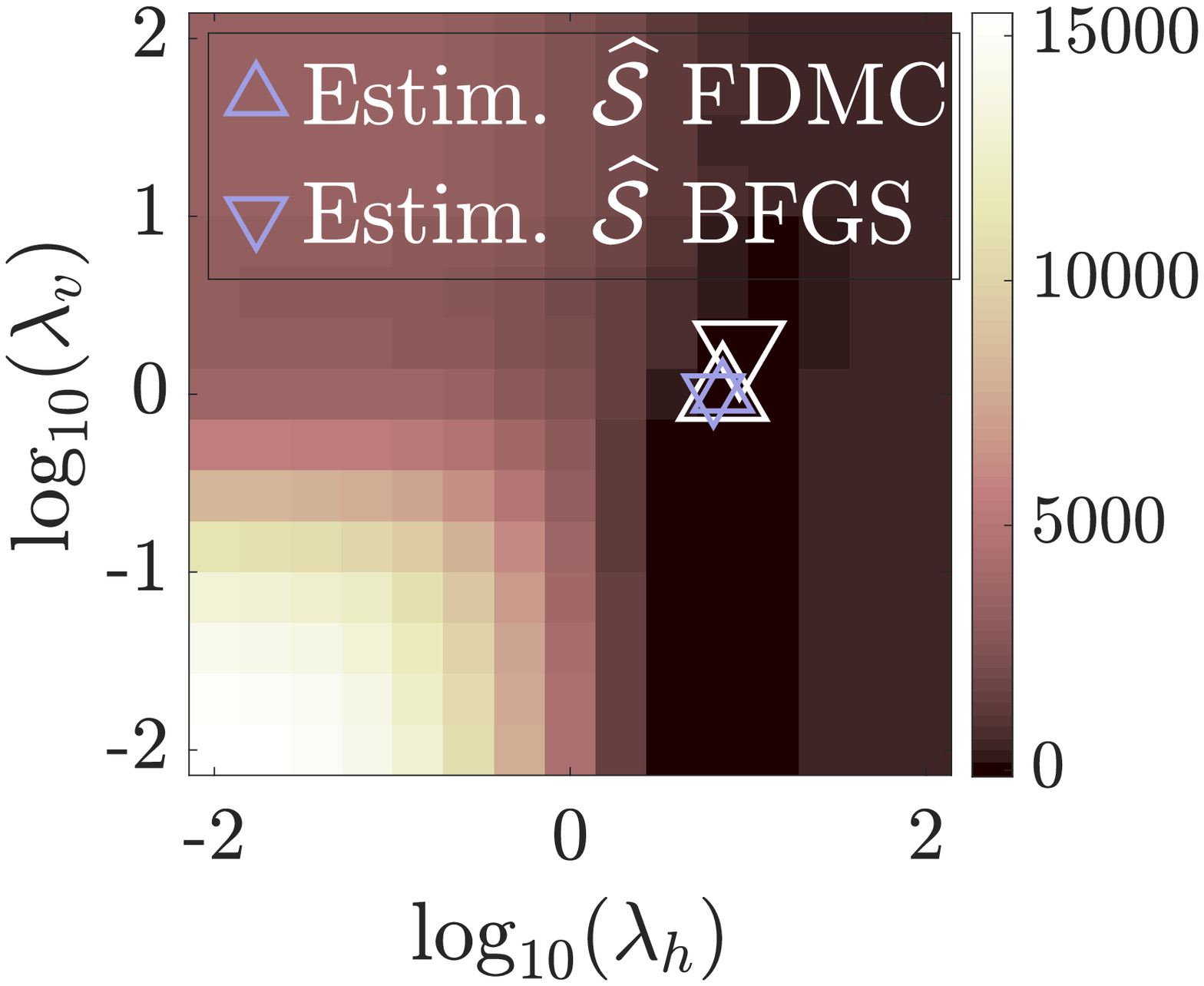}
\subcaption{\label{subfig:FDMC_LWT_1_fdmc_est_8_JJ_13}$\widehat{R}_{\nu,\boldsymbol{\varepsilon}}(\boldsymbol{\ell};\boldsymbol{\Lambda} \lvert \widehat{\boldsymbol{\mathcal{S}}})$ }
\end{subfigure}
\begin{subfigure}{0.24\linewidth}
\centering
\includegraphics[height = 2.5cm]{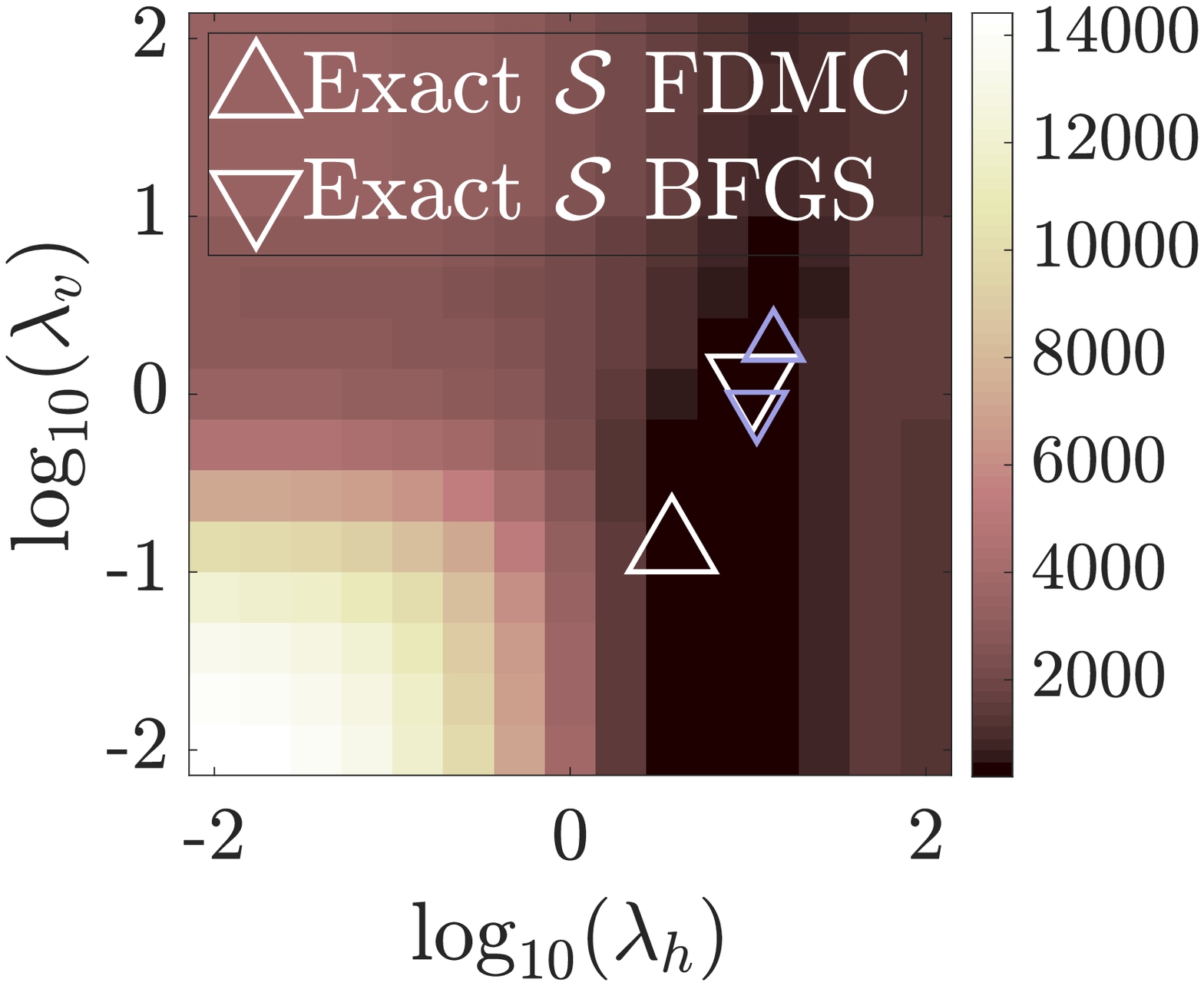}
\subcaption{\label{subfig:FDMC_LWT_4_fdmc_true_8_JJ_13}$\widehat{R}_{\nu,\boldsymbol{\varepsilon}}(\boldsymbol{\ell};\boldsymbol{\Lambda} \lvert \boldsymbol{\mathcal{S}})$ }
\end{subfigure}
\begin{subfigure}{0.24\linewidth}
\centering
\includegraphics[height = 2.5cm]{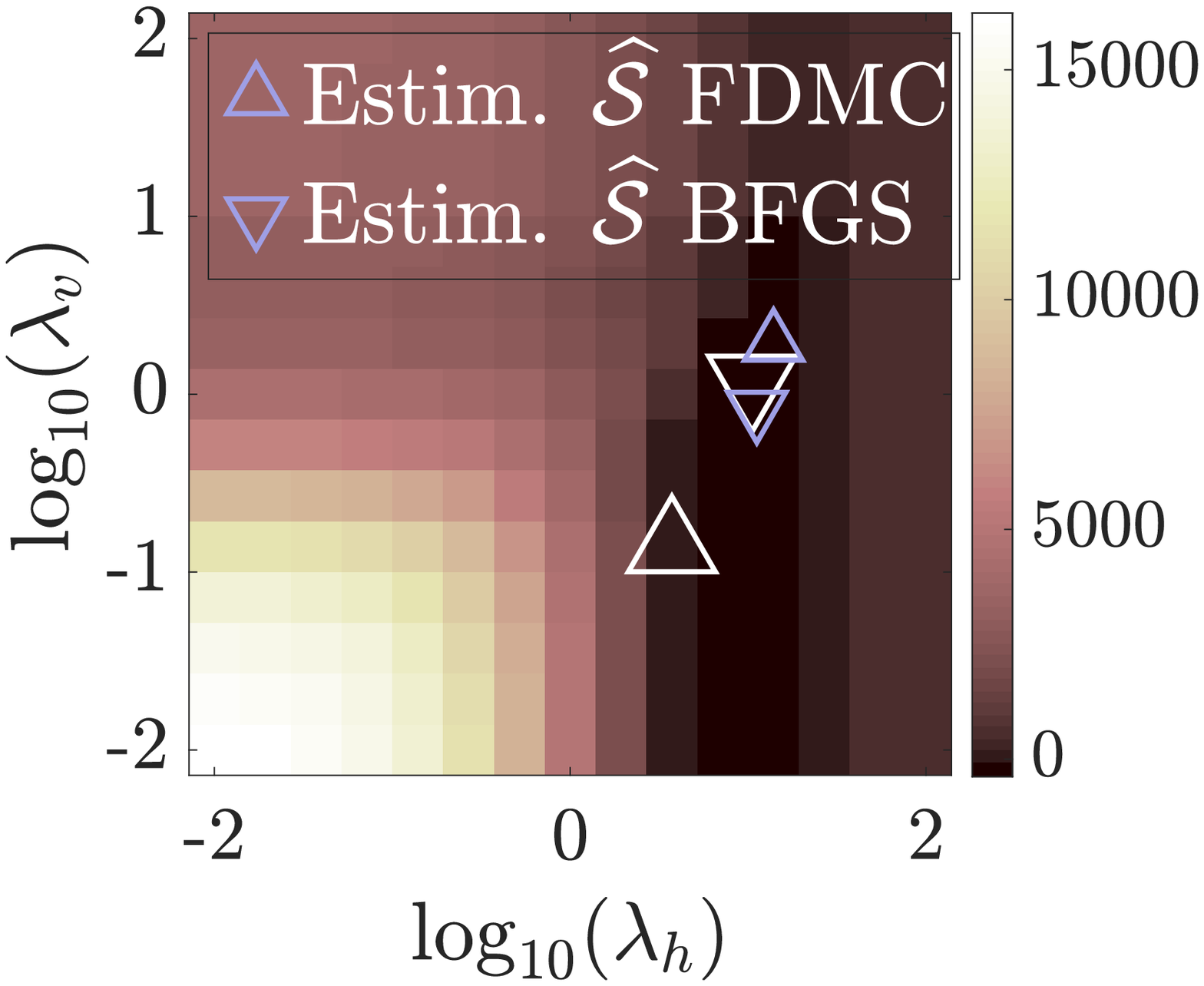}
\subcaption{\label{subfig:FDMC_LWT_4_fdmc_est_8_JJ_13}$\widehat{R}_{\nu,\boldsymbol{\varepsilon}}(\boldsymbol{\ell};\boldsymbol{\Lambda} \lvert \widehat{\boldsymbol{\mathcal{S}}})$ }
\end{subfigure}

\begin{subfigure}{0.24\linewidth}
\centering
\includegraphics[width = 2cm]{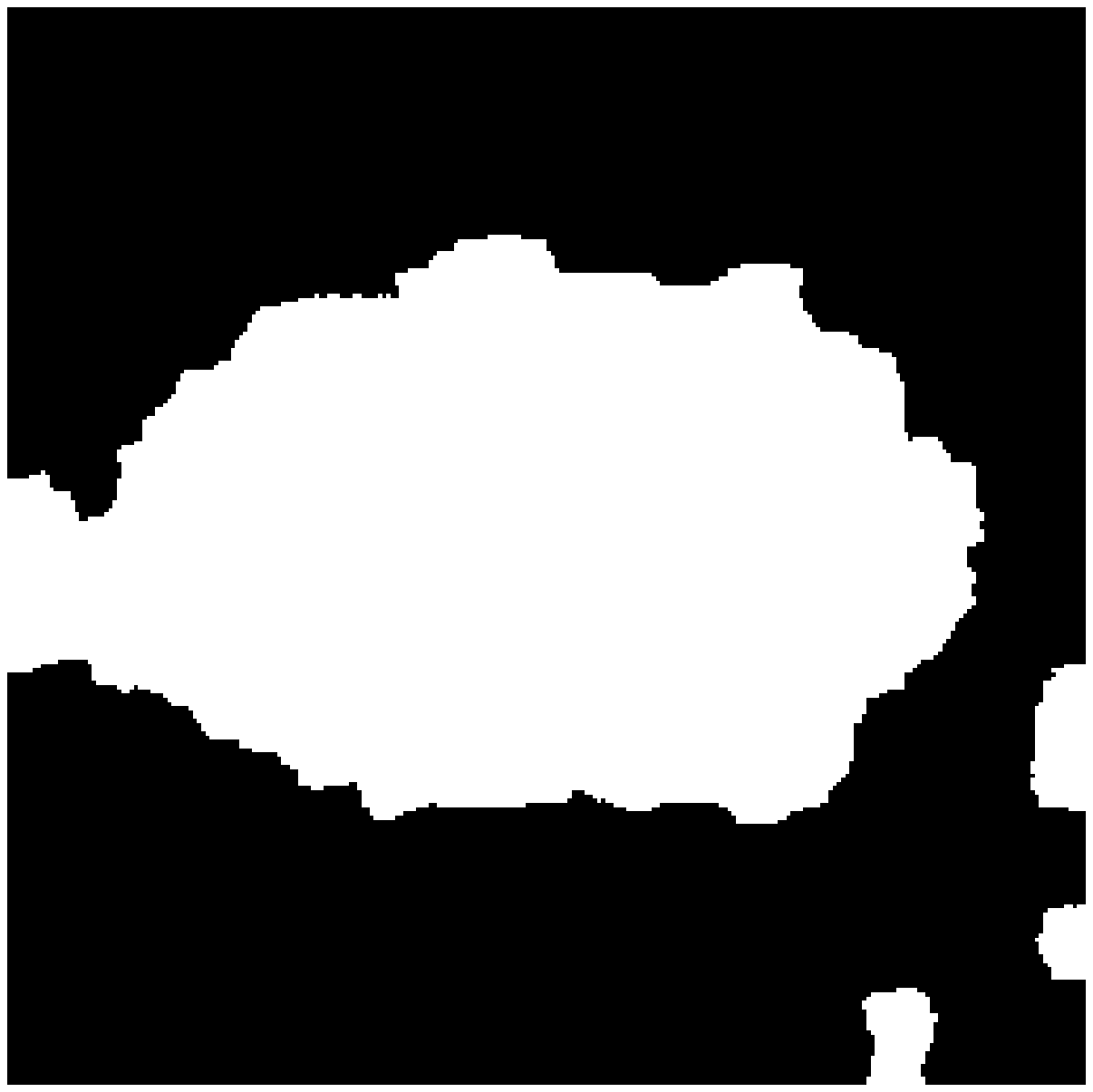}\\
\vspace{3mm}
\subcaption{\label{subfig:FDMC_LWT_1_fdmc_true_8_JJ_13_segh}Min. $\widehat{R}_{.}(\cdot \lvert \boldsymbol{\mathcal{S}})$ `$\boldsymbol{\bigtriangleup}$'}
\end{subfigure}
\begin{subfigure}{0.24\linewidth}
\centering
\includegraphics[width = 2cm]{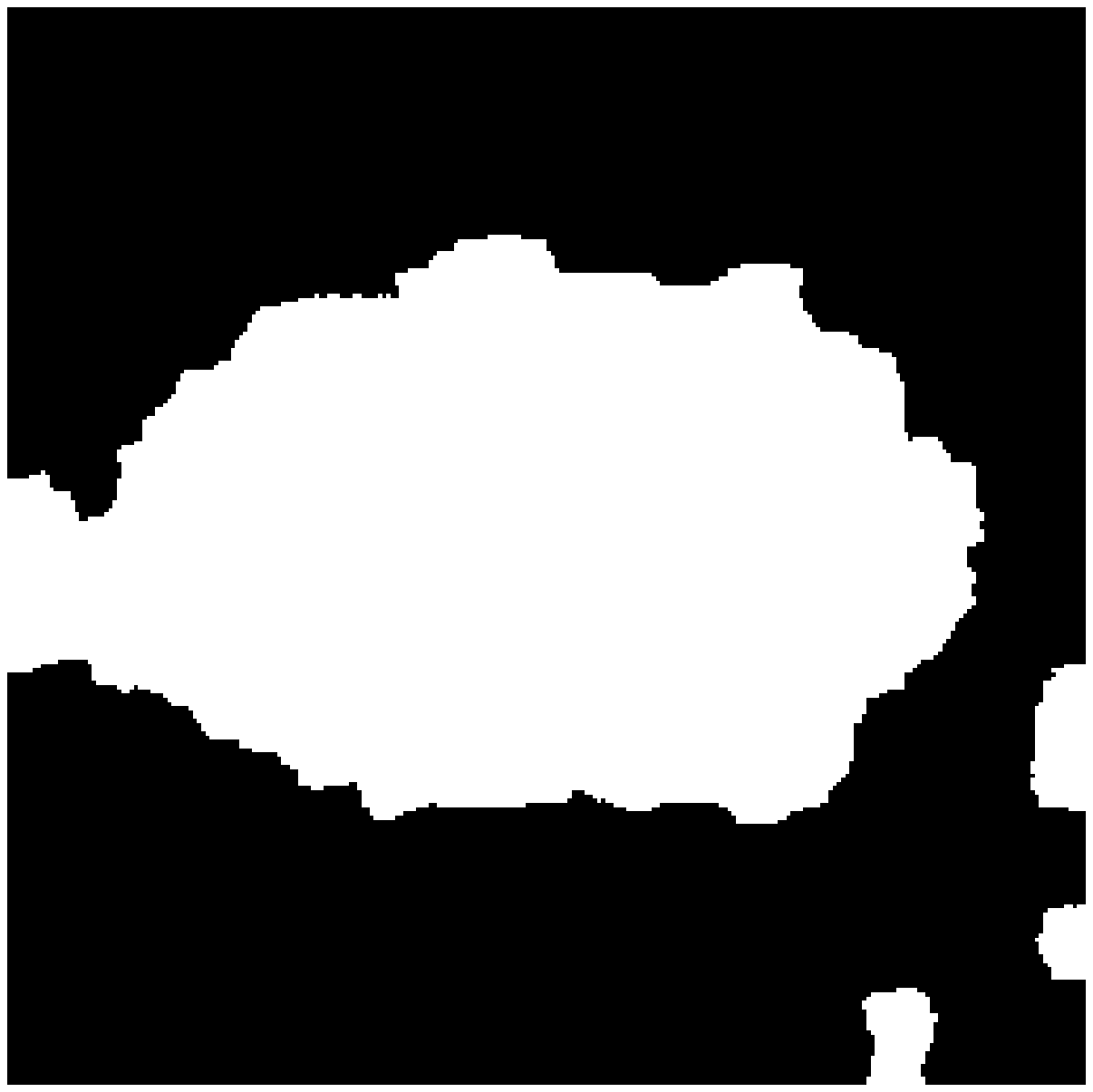}\\
\vspace{3mm}
\subcaption{\label{subfig:FDMC_LWT_1_fdmc_est_8_JJ_13_segh}Min. $\widehat{R}_{.}(\cdot \lvert \widehat{\boldsymbol{\mathcal{S}}})$ `$\color{bclair} \boldsymbol{\bigtriangleup}$'}
\end{subfigure}
\begin{subfigure}{0.24\linewidth}
\centering
\includegraphics[width = 2cm]{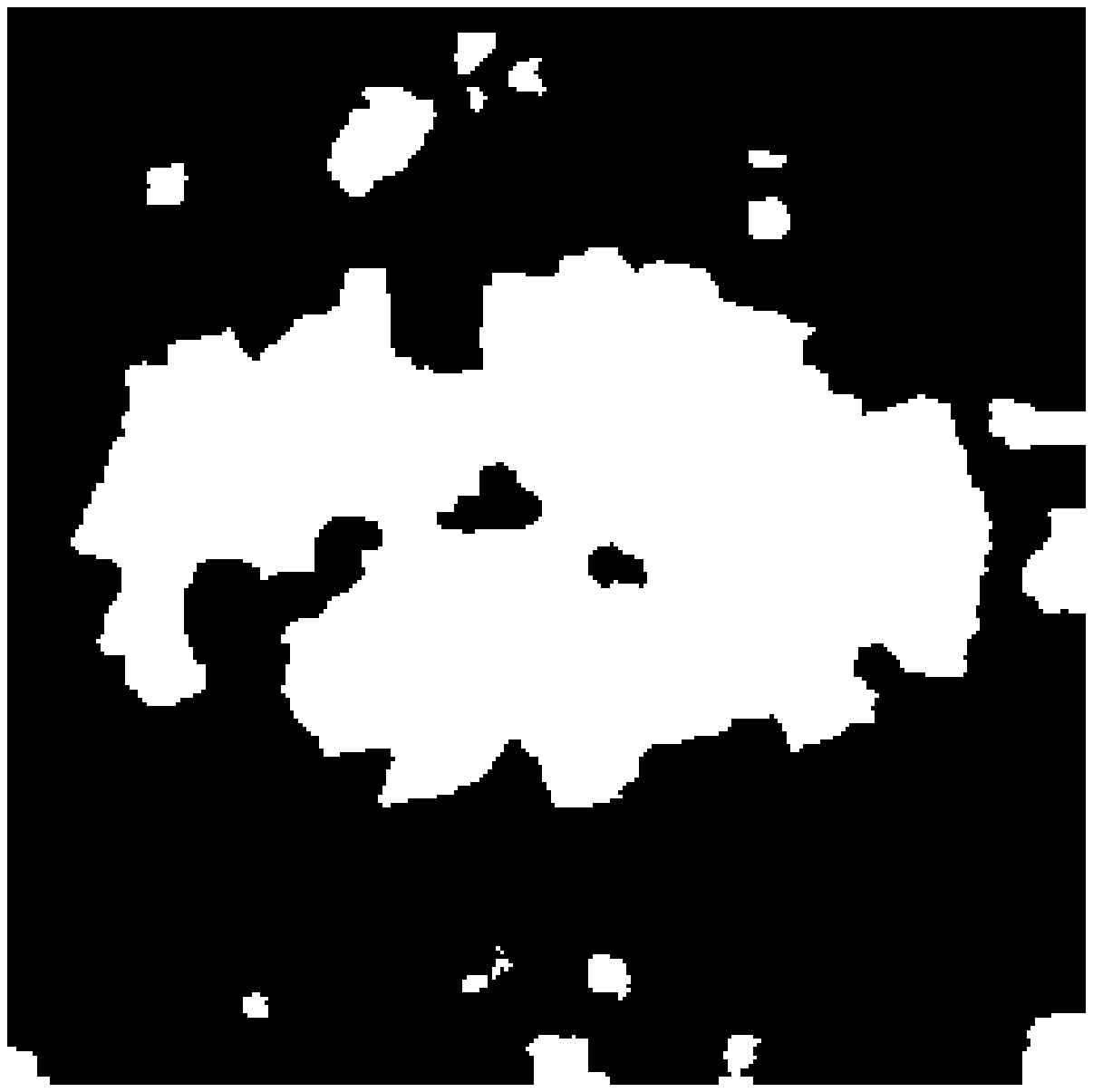}\\
\vspace{3mm}
\subcaption{\label{subfig:FDMC_LWT_4_fdmc_true_8_JJ_13_segh}Min. $\widehat{R}_{.}(\cdot \lvert \boldsymbol{\mathcal{S}})$ `$\boldsymbol{\bigtriangleup}$'}
\end{subfigure}
\begin{subfigure}{0.24\linewidth}
\centering
\includegraphics[width = 2cm]{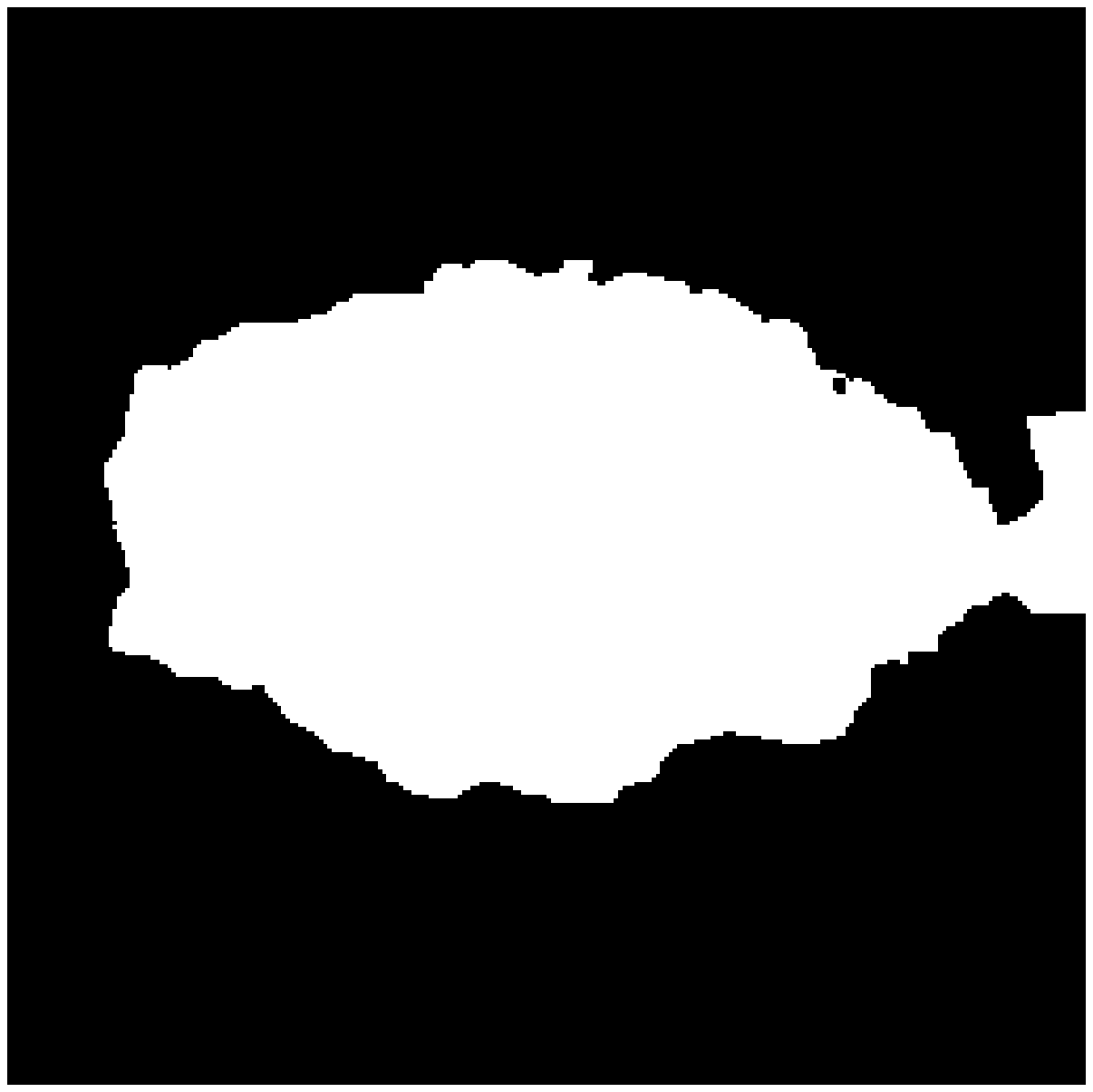}\\
\vspace{3mm}
\subcaption{\label{subfig:FDMC_LWT_4_fdmc_est_8_JJ_13_segh}Min. $\widehat{R}_{.}(\cdot \lvert \widehat{\boldsymbol{\mathcal{S}}})$ `$\color{bclair} \boldsymbol{\bigtriangleup}$'}
\end{subfigure}

\begin{subfigure}{0.24\linewidth}
\centering
\includegraphics[width = 2cm]{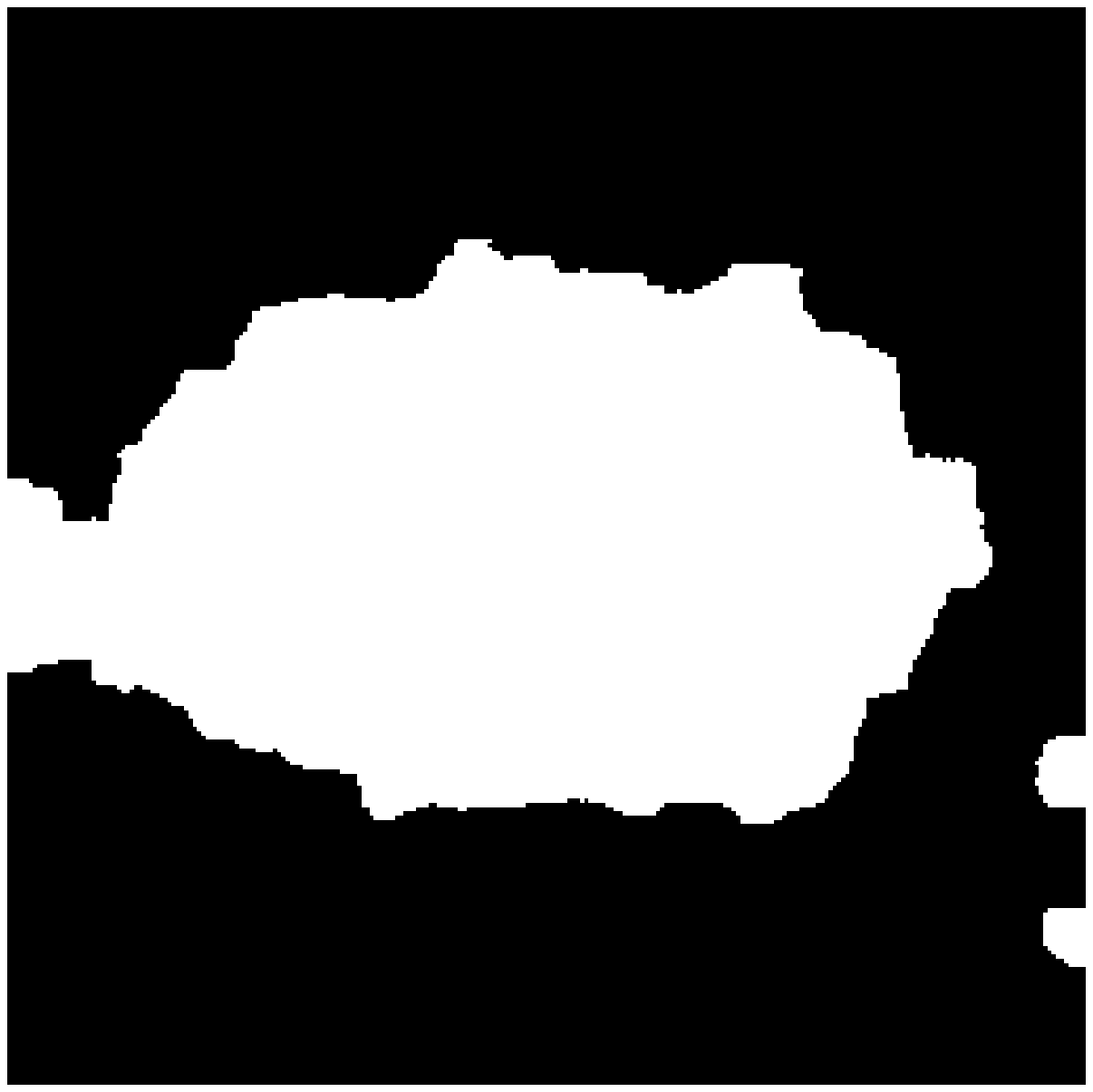}\\
\vspace{3mm}
\subcaption{\label{subfig:FDMC_LWT_1_bfgs_true_8_JJ_13_segh}Auto. selec. `$\boldsymbol{\bigtriangledown}$'}
\end{subfigure}
\begin{subfigure}{0.24\linewidth}
\centering
\includegraphics[width = 2cm]{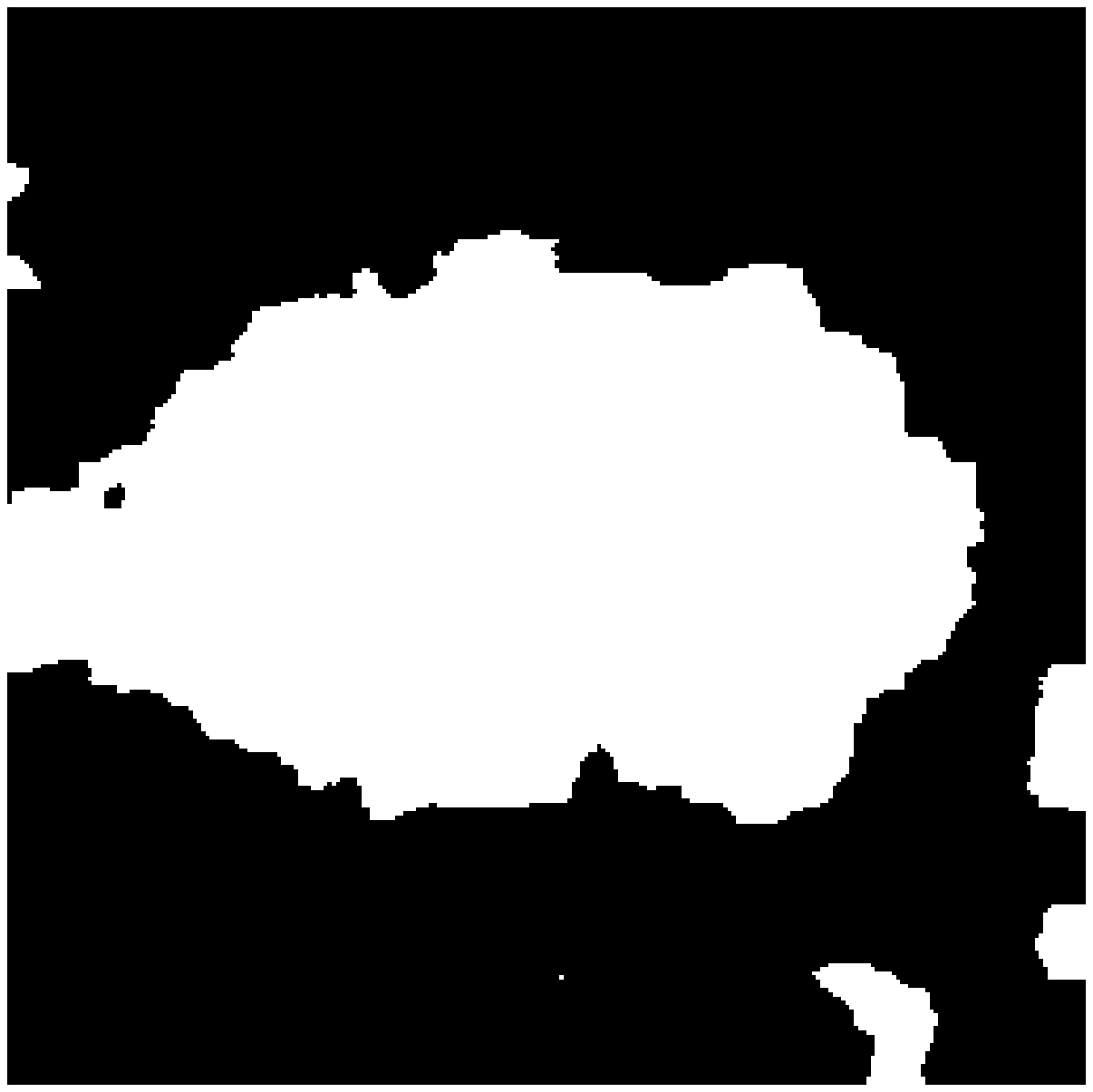}\\
\vspace{3mm}
\subcaption{\label{subfig:FDMC_LWT_1_bfgs_8_JJ_13_segh}Auto. selec. `$\color{bclair} \boldsymbol{\bigtriangledown}$'}
\end{subfigure}
\begin{subfigure}{0.24\linewidth}
\centering
\includegraphics[width = 2cm]{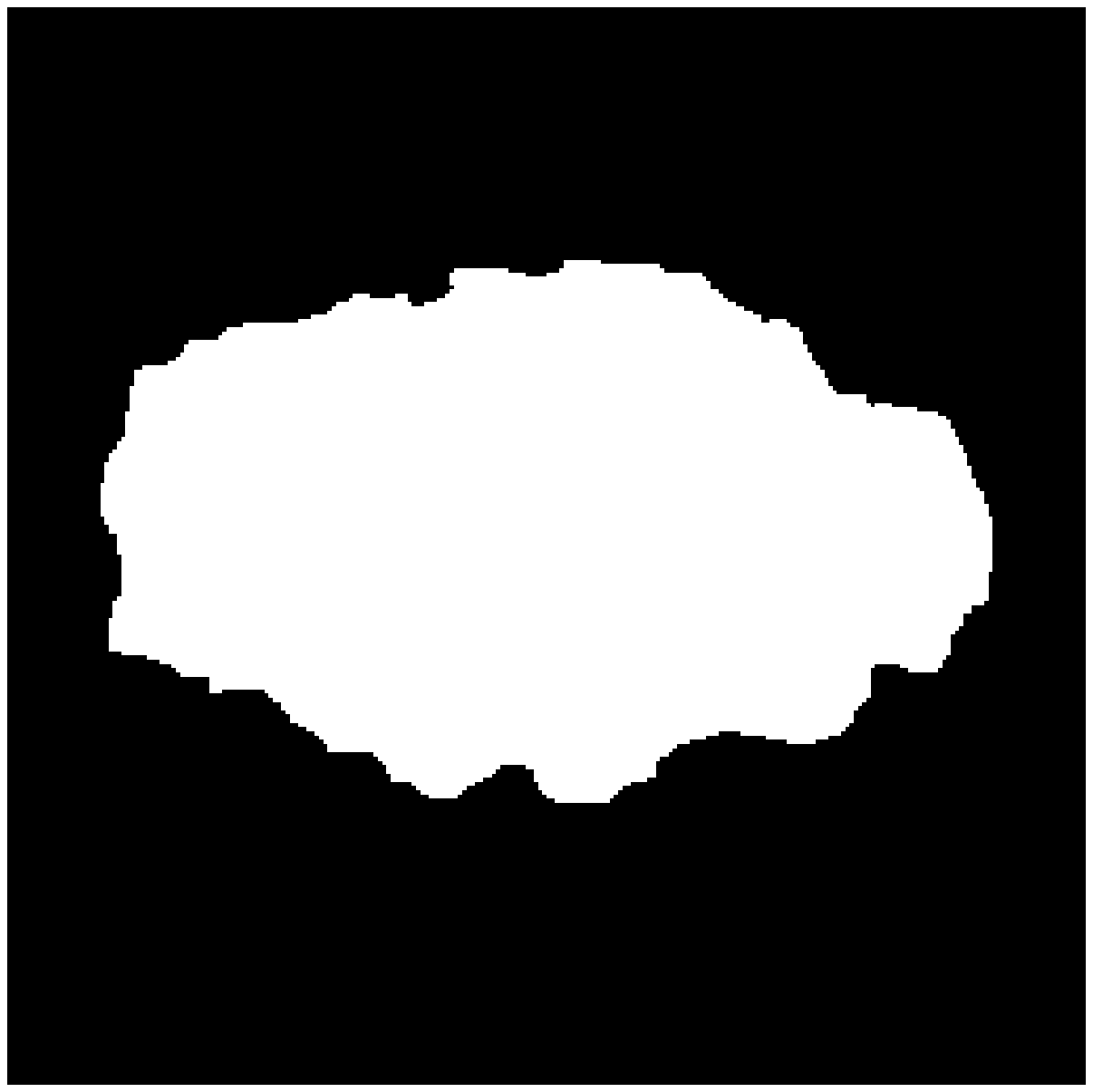}\\
\vspace{3mm}
\subcaption{\label{subfig:FDMC_LWT_4_bfgs_true_8_JJ_13_segh}Auto. selec. `$\boldsymbol{\bigtriangledown}$'}
\end{subfigure}
\begin{subfigure}{0.24\linewidth}
\centering
\includegraphics[width = 2cm]{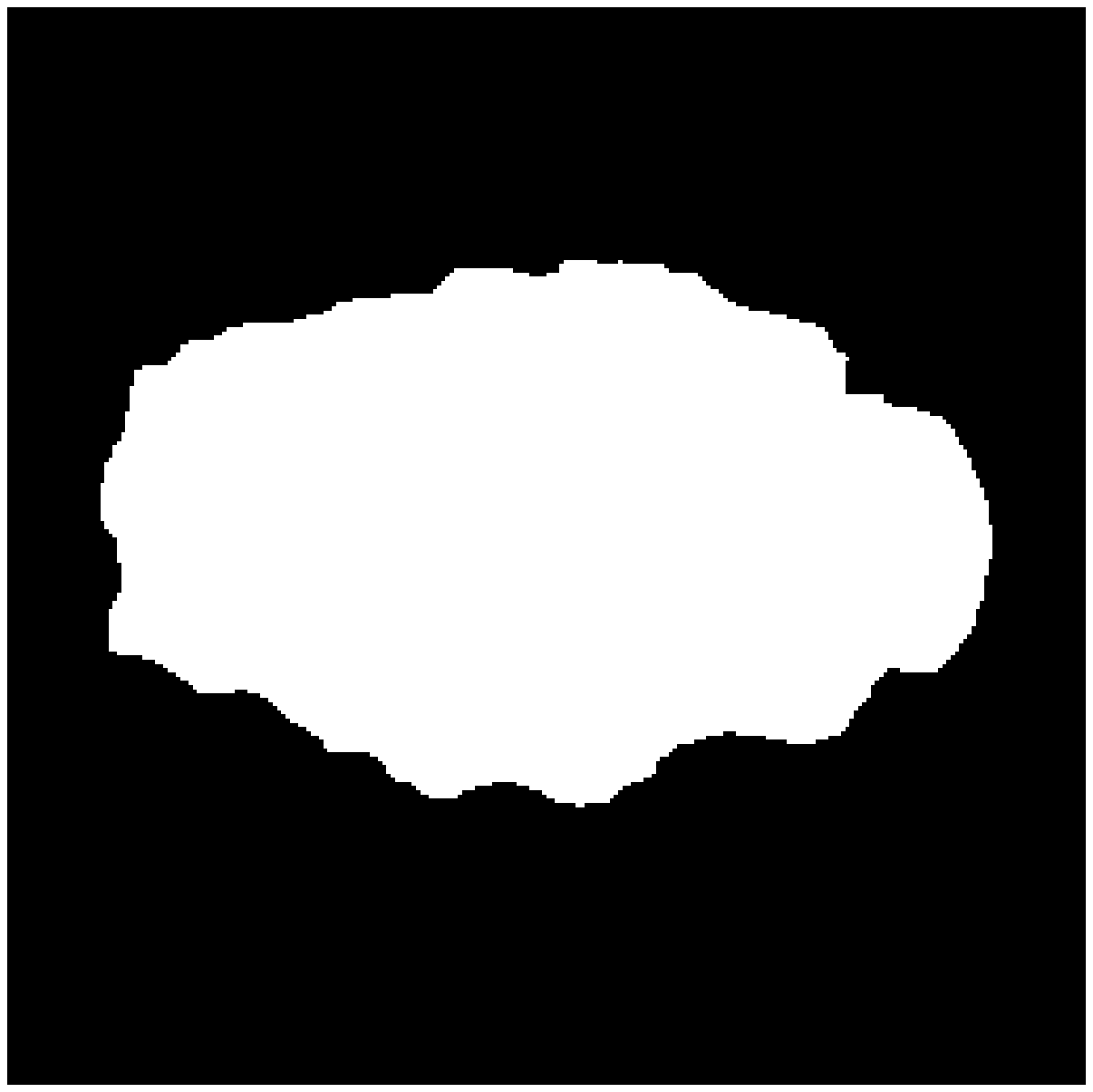}\\
\vspace{3mm}
\subcaption{\label{subfig:FDMC_LWT_4_bfgs_est_8_JJ_13_segh}Auto. selec. `$\color{bclair} \boldsymbol{\bigtriangledown}$'}
\end{subfigure}

\caption{\label{fig:True_Est_DE_13} Generalized SURE computed either from \textit{true} covariance matrix $\boldsymbol{\mathcal{S}}$~\eqref{eq:smpl_av}, or from \textit{estimated} covariance matrix~\eqref{eq:spat_av} for Textures ``\textbf{D}" and ``\textbf{E}" (first row).
Segmentations obtained minimizing the above generalized SURE (second row).
Segmentations obtained with automated selection of hyperparameters from Algorithm~\ref{alg:BFGS}, using generalized SUGAR with either \textit{true} covariance matrix or \textit{estimated} covariance matrix (third row).
}
\end{figure}

\subsection{Automated selection of hyperparameters}
\label{subsec:BFGS_exp}

Section~\ref{subsec:log_cov_struct} has shown the relevance of Algorithm~\ref{alg:BFGS} by comparing its performance against those obtained from a grid search on hyperparameters $\Lambda$. 
Section~\ref{subsec:BFGS_exp} will now test the practical effectiveness of the proposed procedure by assessing the convergence of the quasi-Newton algorithm and corresponding performance in hyperparameter selection and segmentation, avoiding the recourse to any ground truth and hence to the greedy and unfeasible grid search.

\subsubsection{Effective convergence of quasi-Newton Algorithm}
\label{subsec:grid_bfgs}

The convergence of quasi-Newton Algorithm~\ref{alg:BFGS} is assessed empirically comparing automatically selected hyperparameters $\widehat{\boldsymbol{\Lambda}}_{\nu , \boldsymbol{\varepsilon}}^{\mathrm{BFGS}}$ with optimal hyperparameters found from exhaustive grid search $\widehat{\boldsymbol{\Lambda}}_{\nu , \boldsymbol{\varepsilon}}^{\dagger}$.\\

Figures~\ref{subfig:FDMC_LWT_1_fdmc_true_8_JJ_13}~and~\ref{subfig:FDMC_LWT_1_fdmc_est_8_JJ_13} illustrate that $\widehat{\boldsymbol{\Lambda}}_{\nu , \boldsymbol{\varepsilon}}^{\mathrm{BFGS}}(\boldsymbol{\ell}\lvert \boldsymbol{\mathcal{S}})$ (`$\boldsymbol{\bigtriangledown}$') and $\widehat{\boldsymbol{\Lambda}}_{\nu , \boldsymbol{\varepsilon}}^{\mathrm{BFGS}}(\boldsymbol{\ell}\lvert \widehat{\boldsymbol{\mathcal{S}}})$ (`$\color{bclair}\boldsymbol{\bigtriangledown}$') respectively match $\widehat{\boldsymbol{\Lambda}}_{\nu , \boldsymbol{\varepsilon}}^{\dagger}(\boldsymbol{\ell}\lvert \boldsymbol{\mathcal{S}})$ (`$\boldsymbol{\bigtriangleup}$') and $\widehat{\boldsymbol{\Lambda}}_{\nu , \boldsymbol{\varepsilon}}^{\dagger}(\boldsymbol{\ell}\lvert \widehat{\boldsymbol{\mathcal{S}}})$ (`$\color{bclair}\boldsymbol{\bigtriangleup}$') in the case of Texture~``\textbf{D}".
Similar conclusions can be drawn from Figures~\ref{subfig:FDMC_LWT_4_fdmc_true_8_JJ_13}~and~\ref{subfig:FDMC_LWT_4_fdmc_est_8_JJ_13} for Texture~``\textbf{E}".\\

Figure~\ref{fig:True_Est_DE_13} and the quantitative results provided in Table~\ref{tab:grid_bfgs} show the convergence of Algorithm~\ref{alg:BFGS} using $\boldsymbol{\mathcal{S}}$ (resp. $\widehat{\boldsymbol{\mathcal{S}}}$) toward the minimum of $\widehat{R}_{\nu,\boldsymbol{\varepsilon}}(\boldsymbol{\ell};\boldsymbol{\Lambda} \lvert \boldsymbol{\mathcal{S}})$ (resp. $\widehat{R}_{\nu,\boldsymbol{\varepsilon}}(\boldsymbol{\ell};\boldsymbol{\Lambda} \lvert \widehat{\boldsymbol{\mathcal{S}}})$).\\

In term of computational cost, Algorithm~\ref{alg:BFGS} requires an average of $40$ calls of Algorithm~\ref{alg:PD}, compared to $225$ calls needed to perform grid search at Section~\ref{subsec:inf_cor}.

\subsubsection{Automated selection of $\boldsymbol{\Lambda}$ and segmentation performance}

Ten realizations of Textures~``\textbf{D}"~and~``\textbf{E}" are generated following the procedure described in Section~\ref{subsec:synth_text}.
For each of them, Algorithm~\ref{alg:BFGS} is run twice, first using $\boldsymbol{\mathcal{S}}$ and second using $\widehat{\boldsymbol{\mathcal{S}}}$.\\

Since here no grid search is performed, the minimum value of quadratic risk is unknown.
The performance will hence be measured in terms of \textit{normalized one-sample quadratic risk} $\widetilde{\mathcal{R}}$ defined as
\begin{align}
\widetilde{\mathcal{R}}(\boldsymbol{\ell} \lvert \boldsymbol{\mathcal{S}}) = \frac{\mathcal{R}(\boldsymbol{\ell}; \widehat{\boldsymbol{\Lambda}}^{\mathrm{BFGS}}_{\nu , \boldsymbol{\varepsilon}}(\boldsymbol{\ell} \lvert \boldsymbol{\mathcal{S}}))}{\lVert \widehat{\boldsymbol{h}}_{\mathrm{LR}}(\boldsymbol{\ell}) - \bar{\boldsymbol{h}} \rVert_2^2} = \frac{\lVert \widehat{\boldsymbol{h}}^{\mathrm{BFGS}}_{\nu, \boldsymbol{\varepsilon}}(\boldsymbol{\ell}\lvert \boldsymbol{\mathcal{S}}) - \bar{\boldsymbol{h}} \rVert^2_2}{\lVert \widehat{\boldsymbol{h}}_{\mathrm{LR}}(\boldsymbol{\ell}) - \bar{\boldsymbol{h}} \rVert_2^2},
\end{align}
measuring the improvement of the estimation achieved using TV-based texture segmentation~\eqref{eq:jointpb} with hyperparameters automatically selected by Algorithm~\ref{alg:BFGS}, compared to the classical least square estimate $\widehat{\boldsymbol{h}}_{\mathrm{LR}}$.

Averaged performance over ten realizations, presented in~Table~\ref{tab:bfgs_perf}, show that the quadratic risk $\mathcal{R}$ obtained is decrease by a factor of $16$ for Texture~``\textbf{D}" and of $14$ for Texture~``\textbf{E}".
The corresponding \textit{segmentation error} is as low as $6\%$ for Texture~``\textbf{D}", and $3\%$ for Texture~``\textbf{E}".
Further, the use of \textit{estimated} covariance matrix does not degrade achieved performance compare to using \textit{true} covariance matrix. \\

Hence, Algorithm~\ref{alg:BFGS}, using the \textit{estimated} covariance $\widehat{\boldsymbol{\mathcal{S}}}$, computed from~\eqref{eq:spat_av}, provides an efficient, parameter-free, automated and data-driven texture segmentation procedure.

\setlength{\tabcolsep}{1mm}
\begin{table}[h!]
\centering
\begin{tabular}{lcccc}
\toprule
 & \multicolumn{2}{c}{Texture~``\textbf{D}"}  & \multicolumn{2}{c}{Texture~``\textbf{E}"} \\
 \midrule
 Covariance matrix &  $\boldsymbol{\mathcal{S}}$ &  $\widehat{\boldsymbol{\mathcal{S}}}$ &  $\boldsymbol{\mathcal{S}}$ &  $\widehat{\boldsymbol{\mathcal{S}}}$ \\
 \midrule
$\widetilde{\mathcal{R}}(\boldsymbol{\ell} \lvert \cdot)$ & $0.060\pm 0.003 $ & $0.057\pm 0.002 $ & $0.071\pm 0.003  $ & $0.073\pm 0.004$\\
$\mathcal{P}(\boldsymbol{\ell}; \widehat{\boldsymbol{\Lambda}}_{\nu , \boldsymbol{\varepsilon}}^\mathrm{BFGS}(\boldsymbol{\ell} \lvert \cdot))$ (\%) & $5.4\pm 0.7$& $6.8\pm 1.5$ & $3.3\pm 0.7$ & $2.8\pm 0.3$\\
\bottomrule
\end{tabular}
\caption{\label{tab:bfgs_perf} Averaged performance of TV-based texture segmentation with automated selection of hyperparameters. 
}
\end{table}

\section{Conclusion}

This work was focused on devising a procedure for the automated selection of the hyperparameters of parametric estimators, such as e.g., parametric linear filtering or penalized least squares. 
The main result obtained here consists of a theoretically grounded and practical operational fully-automated data driven procedure, that requires neither ground truth nor expert-based knowledge and work satisfactorily even when applied to a single observation of data.\\

To that end, Stein Unbiased Risk Estimator (SURE) was rewritten to account for additive correlated Gaussian noise, with any covariance structure.
The main contribution compared to state-of-the-art procedure relies on including the covariance matrix of the noise only in SURE, rather than in the data fidelity term.
The benefit is twofold: handling with a strongly convex function when Penalized Least Square is considered, and avoiding costly, if not intractable, inversion of the covariance matrix.
Differentiating this Generalized SURE with respect to hyperparameters, an estimator for the risk gradient was designed, permitting to propose a Generalized Finite Difference Monte Carlo Stein Unbiased GrAdient Risk (SUGAR) estimate.
The asymptotic unbiasedness of Generalized SUGAR was assessed theoretically, based on regularity assumptions on the parametric estimator.\\
Further, the case of sequential parametric estimators is discussed in depth in the case of primal-dual minimization scheme for Penalized Least Squares and a differentiated scheme is derived.\\

Embedding Generalized SURE and SUGAR into a quasi-Newton algorithm enabled to perform an automated risk minimization.
An explicit algorithm permitting to implement the minimization was proposed.\\

To assess the performance of this automated hyperparameter selection procedure devised in a general setting, it has been customized to the specific problem of texture segmentation, based on multiscale descriptors (wavelet leaders) and nonsmooth Total-Variation based penalization. 
This problem is uneasy because observations are in nature multiscale, with inhomogeneous variance across scales and correlations both across scales and in space at each scale. 
Further, variances and correlations are unknown and need to be estimated directly from data. \\

Numerical simulations, conducted on ten realizations of synthetic piecewise fractal textures, permitted to show that the proposed strategy yield satisfactory performance in selecting automatically the penalization hyperparameter, leading to excellent texture segmentation, with no ad-hoc (or expert-based) tuning and without prior knowledge for ground truth, and using one-sample estimate of the covariance matrix. \\

The corresponding {\sc Matlab} routines, developed by ourselves and implementing these tools, ready for applications to real-world texture segmentation, where hyperparameter tuning constitutes an on-going hot topic, will be made publicly available to the research community in a documented toolbox at the time of publication.

\clearpage

\appendix

\section{Proof of Theorem~\ref{thm:SURE}}
\label{app:SURE}

\begin{proof}
For ease of computation we first define the \textit{predictor} in~Definition~\ref{def:predictor} and the ground truth \textit{prediction} in~Definition~\ref{def:prediction}.

\begin{definition}[\textit{Predictor}]
\label{def:predictor}
From the estimator of underlying features $\widehat{\boldsymbol{x}}(\boldsymbol{y}; \boldsymbol{\Lambda})$ one can equivalently consider a \textit{prediction} estimator
\begin{align}
\label{eq:y_est}
\widehat{\boldsymbol{y}}(\boldsymbol{y} ; \boldsymbol{\Lambda}) \triangleq  \boldsymbol{\Phi} \widehat{\boldsymbol{x}}(\boldsymbol{y} ; \boldsymbol{\Lambda}).
\end{align}
\end{definition}
Indeed, from Assumption~\ref{hyp:full_rank}, $\boldsymbol{\Phi}^* \boldsymbol{\Phi}$ is invertible, and the relation~\eqref{eq:y_est} can be inverted computing
\begin{align}
\label{eq:x_est}
\widehat{\boldsymbol{x}}(\boldsymbol{y} ; \boldsymbol{\Lambda}) = \left(\boldsymbol{\Phi}^* \boldsymbol{\Phi}\right)^{-1}  \boldsymbol{\Phi}^* \widehat{\boldsymbol{y}}(\boldsymbol{y} ; \boldsymbol{\Lambda}).
\end{align}

\begin{definition}[\textit{Prediction} ground truth]
\label{def:prediction}
The noise-free observation writes
\begin{align}
\label{eq:y_true}
 \bar{\boldsymbol{y}} \triangleq \mathbb{E}_{\boldsymbol{\zeta}} \boldsymbol{y} =\boldsymbol{\Phi} \bar{\boldsymbol{x}}.
\end{align}
\end{definition}

Thus, the quadratic risk defined in~\eqref{eq:risk_def} can be expressed using operator $\textbf{A}$ defined in~\eqref{eq:def_A} as
\begin{align}
R[\widehat{\boldsymbol{x}}]( \boldsymbol{\Lambda}) = \mathbb{E}_{\boldsymbol{\zeta}}  \left\lVert \boldsymbol{\Pi}\widehat{\boldsymbol{x}}(\boldsymbol{y} ; \boldsymbol{\Lambda}) - \boldsymbol{\Pi}\boldsymbol{x} \right\rVert_2^2
 &=\mathbb{E}_{\boldsymbol{\zeta}}  \lVert \boldsymbol{\Pi}\left(\boldsymbol{\Phi}^* \boldsymbol{\Phi}\right)^{-1}  \boldsymbol{\Phi}^*\left(\widehat{\boldsymbol{y}}(\boldsymbol{y} ; \boldsymbol{\Lambda}) -  \bar{\boldsymbol{y}} \right)\rVert_2^2,\\
 & \overset{\eqref{eq:def_A}}{=} \mathbb{E}_{\boldsymbol{\zeta}}  \lVert \textbf{A}\left(\widehat{\boldsymbol{y}}(\boldsymbol{y} ; \boldsymbol{\Lambda}) -  \bar{\boldsymbol{y}} \right)\rVert_2^2\nonumber
\end{align}
which will be easier to manipulate in the following when expressed in term of noise-free (or noisy) observations $\bar{\boldsymbol{y}}$ (or $\boldsymbol{y}$) and \textit{prediction} $\widehat{\boldsymbol{y}}$.\\

By construction, the matrix $ \textbf{A}$, defined in~\eqref{eq:def_A}, performs both: 
\begin{itemize}
\item The projection on the interest subspace $\mathcal{I}$ of $\mathcal{H}$ via the linear operator $\boldsymbol{\Pi}$.
\item The transition from predicted quantities $\widehat{\boldsymbol{y}}$ to estimated features $\widehat{\boldsymbol{x}}$, making use of relation~\eqref{eq:x_est}.\\
\end{itemize}

From now, for sake of simplicity, we make implicit the dependency of $\widehat{\boldsymbol{x}}$ in $(\boldsymbol{y} ; \boldsymbol{\Lambda})$. 
From the model~\eqref{eq:obs_gen_model} and the Assumption~\ref{hyp:gauss_noise} on the noise probability distribution, one directly derive two useful relations:
\begin{align}
\label{eq:use1}
&\mathbb{E}_{\boldsymbol{\zeta}}\left\lVert \textbf{A}\left( \boldsymbol{y} - \bar{\boldsymbol{y}} \right) \right\rVert_2^2 \overset{\eqref{eq:y_true}}{=} \mathbb{E}_{\boldsymbol{\zeta}}\left\lVert \textbf{A}\left( \boldsymbol{y} - \boldsymbol{\Phi} \bar{\boldsymbol{x}} \right) \right\rVert_2^2 = \mathbb{E}_{\boldsymbol{\zeta}}\left\lVert \textbf{A} \boldsymbol{\zeta} \right\rVert_2^2   &\overset{\text{Hyp.~\ref{hyp:gauss_noise}}}{=}  \mathrm{Tr}(\textbf{A} \boldsymbol{\mathcal{S}} \textbf{A}^*),\\
\label{eq:use2}
& \mathbb{E}_{\boldsymbol{\zeta}} \langle \textbf{A} \boldsymbol{y} , \textbf{A}\left( \boldsymbol{y} - \bar{\boldsymbol{y}} \right) \rangle \overset{\mathbb{E} (\boldsymbol{y} - \bar{\boldsymbol{y}}) = 0}{=} \mathbb{E}_{\boldsymbol{\zeta}} \langle \textbf{A} \left( \boldsymbol{y} - \bar{\boldsymbol{y}}\right) , \textbf{A}\left( \boldsymbol{y} - \bar{\boldsymbol{y}} \right) \rangle &\overset{\text{Hyp.~\ref{hyp:gauss_noise}}}{=}   \mathrm{Tr}(\textbf{A} \boldsymbol{\mathcal{S}} \textbf{A}^*).
\end{align} 
Thus the risk can be expanded as
\begin{align*}
R[\widehat{\boldsymbol{x}}]&( \boldsymbol{\Lambda}) \\
\triangleq \quad&\mathbb{E}_{\boldsymbol{\zeta}} \left\lVert  \textbf{A}\left( \widehat{\boldsymbol{y}} - \bar{\boldsymbol{y}} \right) \right\rVert_2^2 \\
= \quad&\mathbb{E}_{\boldsymbol{\zeta}} \left[\left\lVert  \textbf{A}\left( \widehat{\boldsymbol{y}} - \boldsymbol{y} \right) \right\rVert_2^2 +  \left\lVert \textbf{A}\left( \boldsymbol{y} - \bar{\boldsymbol{y}} \right) \right\rVert_2^2 + 2  \langle \textbf{A}\left( \widehat{\boldsymbol{y}} - \boldsymbol{y} \right), \textbf{A}\left(\boldsymbol{y} - \bar{\boldsymbol{y}} \right)  \rangle \right]\\
\overset{\eqref{eq:use1}}{=}  \, \, \,&\mathbb{E}_{\boldsymbol{\zeta}}\left[ \left\lVert  \textbf{A}\left( \widehat{\boldsymbol{y}} - \boldsymbol{y} \right) \right\rVert_2^2 + 2 \langle \textbf{A}\widehat{\boldsymbol{y}} , \textbf{A}\left( \boldsymbol{y} - \bar{\boldsymbol{y}}\right)  \rangle -  2 \langle \textbf{A} \boldsymbol{y} , \textbf{A}\left( \boldsymbol{y} - \bar{\boldsymbol{y}} \right) \rangle \right] + \mathrm{Tr}(\textbf{A} \boldsymbol{\mathcal{S}} \textbf{A}^*) \\
\overset{\eqref{eq:use2}}{=} \, \, \,&\mathbb{E}_{\boldsymbol{\zeta}}\left[ \left\lVert  \textbf{A}\left( \widehat{\boldsymbol{y}} - \boldsymbol{y} \right) \right\rVert_2^2 + 2 \langle \textbf{A}\widehat{\boldsymbol{y}} , \textbf{A}\left(\boldsymbol{y} - \bar{\boldsymbol{y}}\right) \rangle  \right]  - 2\mathrm{Tr}(\textbf{A} \boldsymbol{\mathcal{S}} \textbf{A}^*)   + \mathrm{Tr}(\textbf{A} \boldsymbol{\mathcal{S}} \textbf{A}^*) \\
= \quad&\mathbb{E}_{\boldsymbol{\zeta}}\left[ \left\lVert  \textbf{A}\left( \widehat{\boldsymbol{y}} - \boldsymbol{y} \right) \right\rVert_2^2 +  2 \langle \textbf{A}  \widehat{\boldsymbol{y}},  \textbf{A}\left( \boldsymbol{y} - \bar{\boldsymbol{y}} \right)  \rangle \right]-   \mathrm{Tr}(\textbf{A} \boldsymbol{\mathcal{S}} \textbf{A}^*)\\
= \quad&\mathbb{E}_{\boldsymbol{\zeta}} \left[ \left\lVert  \textbf{A}\left( \boldsymbol{\Phi}\widehat{\boldsymbol{x}} - \boldsymbol{y} \right) \right\rVert_2^2 +  2 \langle \textbf{A}^* \textbf{A}  \boldsymbol{\Phi}\widehat{\boldsymbol{x}},  \left( \boldsymbol{y} - \boldsymbol{\Phi}\bar{\boldsymbol{x}} \right)  \rangle  \right]-   \mathrm{Tr}(\textbf{A} \boldsymbol{\mathcal{S}} \textbf{A}^*)\\
= \quad&\mathbb{E}_{\boldsymbol{\zeta}} \left\lVert  \textbf{A}\left( \boldsymbol{\Phi}\widehat{\boldsymbol{x}} - \boldsymbol{y} \right) \right\rVert_2^2 +  2 \mathbb{E}_{\boldsymbol{\zeta}}\langle \textbf{A}^* \textbf{A}  \boldsymbol{\Phi}\widehat{\boldsymbol{x}},  \boldsymbol{\zeta}  \rangle -   \mathrm{Tr}(\textbf{A} \boldsymbol{\mathcal{S}} \textbf{A}^*)\\
= \quad&\mathbb{E}_{\boldsymbol{\zeta}} \left\lVert  \textbf{A}\left( \boldsymbol{\Phi}\widehat{\boldsymbol{x}} - \boldsymbol{y} \right) \right\rVert_2^2 +  2 \mathbb{E}_{\boldsymbol{\zeta}}\langle \textbf{A}^* \boldsymbol{\Pi} \widehat{\boldsymbol{x}},  \boldsymbol{\zeta}  \rangle -   \mathrm{Tr}(\textbf{A} \boldsymbol{\mathcal{S}} \textbf{A}^*),
\end{align*}
re-injecting the definition of $\textbf{A}$~\eqref{eq:def_A} in terms of $\boldsymbol{\Phi}$ and $\boldsymbol{\Pi}$.\\

The second term, $\mathbb{E}_{\boldsymbol{\zeta}}\langle \textbf{A}^* \boldsymbol{\Pi} \widehat{\boldsymbol{x}} (\boldsymbol{y} ; \boldsymbol{\Lambda}),  \boldsymbol{\zeta}  \rangle$, is called the \textit{degrees of freedom}~\cite{efron1986biased}. From Assumption~\ref{hyp:reg_int} it is well-defined and writes
\begin{align}
\label{eq:dof_density}
\mathbb{E}_{\boldsymbol{\zeta}}\langle \textbf{A}^*  \boldsymbol{\Pi} \widehat{\boldsymbol{x}} (\boldsymbol{y} ; \boldsymbol{\Lambda}),  \boldsymbol{\zeta}  \rangle &= \\
& \frac{1}{\sqrt{(2\pi)^P \lvert \mathrm{det}(\boldsymbol{\mathcal{S}})\rvert}}\int  \langle \textbf{A}^*  \boldsymbol{\Pi} \widehat{\boldsymbol{x}} (\boldsymbol{y} ; \boldsymbol{\Lambda}),  \boldsymbol{\zeta}  \rangle \, \exp \left(-\frac{\boldsymbol{\zeta}^*\boldsymbol{\mathcal{S}}^{-1}\boldsymbol{\zeta}}{2}\right)  \mathrm{d}\boldsymbol{\zeta},
\end{align}
hence requiring generalized Stein's lemma to be estimated\footnote{
Stein's lemma states that, for a real random variable $\zeta\sim\mathcal{N}(0, \sigma^2)$, if $f : \mathbb{R} \rightarrow \mathbb{R}$ is a function such that both $\mathbb{E}_{\zeta}  [\zeta f(\zeta) ]$ and $\mathbb{E}_{\zeta}  [f'(\zeta) ]$ exist, then $\mathbb{E}_{\zeta}  [\zeta f(\zeta) ] = \sigma^2\mathbb{E}_{\zeta}  [f'(\zeta) ]$. Its demonstration relies on appropriate integration by parts.
}.

Because of the off-diagonal terms in $\boldsymbol{\mathcal{S}}^{-1}$, the Integration by Parts (IP) required to transform~\eqref{eq:dof_density} cannot be directly justified, thus Stein's lemma generalization to $\mathcal{G}$-valued random variable $\boldsymbol{\zeta}$ is not straightforward. 
Hence we propose to first diagonalize $\boldsymbol{\mathcal{S}}^{-1}$ (which is a symmetric matrix) in a orthonormal basis, obtaining
\begin{align*}
\boldsymbol{\mathcal{S}}^{-1} = \boldsymbol{\mathcal{V}}^* \boldsymbol{\mathcal{D}} \boldsymbol{\mathcal{V}},
\end{align*}
with $\boldsymbol{\mathcal{V}}$ an orthonormal matrix (which columns are eigenvectors of $\boldsymbol{\mathcal{S}}^{-1}$) and $\boldsymbol{\mathcal{D}} = \mathrm{diag}(\beta_1, \hdots, \beta_P)$ containing (positive) eigenvalues of $\boldsymbol{\mathcal{S}}^{-1}$. Then, setting $\boldsymbol{\vartheta} = \boldsymbol{\mathcal{V}} \boldsymbol{\zeta}$
\begin{align*}
&\mathbb{E}_{\boldsymbol{\vartheta}}\langle \textbf{A}^*  \boldsymbol{\Pi} \widehat{\boldsymbol{x}} (\boldsymbol{y} ; \boldsymbol{\Lambda}),  \boldsymbol{\vartheta}  \rangle  =\\
&\frac{1}{\sqrt{(2\pi)^P \lvert \mathrm{det}(\boldsymbol{\mathcal{S}})\rvert}}\int  \langle \textbf{A}^*  \boldsymbol{\Pi} \widehat{\boldsymbol{x}} (\boldsymbol{y} ; \boldsymbol{\Lambda}),  \boldsymbol{\mathcal{V}}^{-1} \boldsymbol{\vartheta}   \rangle \, \exp \left(-\frac{ \boldsymbol{\vartheta} ^*\boldsymbol{\mathcal{D}} \boldsymbol{\vartheta}}{2}\right) \, \lvert \mathrm{det}(\boldsymbol{\mathcal{V}}^{-1})\rvert \mathrm{d}\boldsymbol{\vartheta}.
\end{align*}
with $\boldsymbol{\vartheta}^*\boldsymbol{\mathcal{D}}\boldsymbol{\vartheta} = \sum_{p=1}^P \beta_{p} \lvert \vartheta_{p} \rvert^2$.

Since $\boldsymbol{\mathcal{V}}$ is orthonormal: $\boldsymbol{\mathcal{V}}^{-1} = \boldsymbol{\mathcal{V}}^*$ and $\lvert \mathrm{det}(\boldsymbol{\mathcal{V}}^{-1}) \rvert = 1$, leading to
\begin{align*}
&\mathbb{E}_{\boldsymbol{\vartheta}}\langle \textbf{A}^*  \boldsymbol{\Pi} \widehat{\boldsymbol{x}} (\boldsymbol{y} ; \boldsymbol{\Lambda}),  \boldsymbol{\vartheta}  \rangle  \\
&=\frac{1}{\sqrt{(2\pi)^P \lvert \mathrm{det}(\boldsymbol{\mathcal{S}})\rvert}}\int  \langle \boldsymbol{\mathcal{V}} \textbf{A}^*  \boldsymbol{\Pi} \widehat{\boldsymbol{x}} (\boldsymbol{y} ; \boldsymbol{\Lambda}),   \boldsymbol{\vartheta}   \rangle \, \exp \left(-\frac{ \boldsymbol{\vartheta} ^*\boldsymbol{\mathcal{D}} \boldsymbol{\vartheta}}{2}\right) \,  \mathrm{d}\boldsymbol{\vartheta}\\
&=\frac{1}{\sqrt{(2\pi)^P \lvert \mathrm{det}(\boldsymbol{\mathcal{S}})\rvert}}\int  \sum_{p= 1}^P \left( \boldsymbol{\mathcal{V}} \textbf{A}^*  \boldsymbol{\Pi} \widehat{\boldsymbol{x}} (\boldsymbol{y} ; \boldsymbol{\Lambda})\right)_p \vartheta_p    \, \exp \left(-\frac{\sum_{p=1}^P \beta_p \lvert \vartheta_p \rvert^2}{2}\right) \,  \mathrm{d}\vartheta_1 \hdots \mathrm{d}\vartheta_P\\
&\overset{\text{(IP)}}{=} \mathbb{E}_{\boldsymbol{\vartheta}} \left[ \sum_{p=1}^P \frac{1}{\beta_p} \dfrac{\partial \left( \boldsymbol{\mathcal{V}} \textbf{A}^*  \boldsymbol{\Pi} \widehat{\boldsymbol{x}} (\boldsymbol{y} ; \boldsymbol{\Lambda})\right)_p }{\partial \vartheta_p} \right] \\
&= \mathbb{E}_{\boldsymbol{\vartheta}} \left[ \mathrm{Tr} \left( \boldsymbol{\mathcal{D}}^{-1} \dfrac{\partial \left( \boldsymbol{\mathcal{V}} \textbf{A}^*  \boldsymbol{\Pi} \widehat{\boldsymbol{x}} (\boldsymbol{y} ; \boldsymbol{\Lambda})\right) }{\partial \boldsymbol{\vartheta}}\right) \right],
\end{align*}
where $\displaystyle \dfrac{\partial \left( \boldsymbol{\mathcal{V}} \textbf{A}^*  \boldsymbol{\Pi} \widehat{\boldsymbol{x}} (\boldsymbol{y} ; \boldsymbol{\Lambda})\right) }{\partial \boldsymbol{\vartheta}}$ denotes the Jacobian matrix of  $\boldsymbol{\mathcal{V}} \textbf{A}^*  \boldsymbol{\Pi} \widehat{\boldsymbol{x}} (\boldsymbol{y} ; \boldsymbol{\Lambda})$ with respect to the variable $\boldsymbol{\vartheta} \triangleq \boldsymbol{\mathcal{V}}^{-1} \boldsymbol{\zeta}$. In order to go back to variable $\boldsymbol{\zeta}$, we make use of~\eqref{eq:obs_gen_model} relating $\boldsymbol{y}$ and $\boldsymbol{\zeta}$, and apply the reverse change of variable $\boldsymbol{\zeta} \triangleq \boldsymbol{\mathcal{V}} \boldsymbol{\vartheta}$ and obtain
\begin{align*}
 \dfrac{\partial \left( \boldsymbol{\mathcal{V}} \textbf{A}^*  \boldsymbol{\Pi} \widehat{\boldsymbol{x}} (\boldsymbol{y} ; \boldsymbol{\Lambda})\right) }{\partial \boldsymbol{\vartheta}} =  \boldsymbol{\mathcal{V}}  \dfrac{\partial \left( \textbf{A}^*  \boldsymbol{\Pi} \widehat{\boldsymbol{x}} (\boldsymbol{y} ; \boldsymbol{\Lambda})\right) }{\partial \boldsymbol{\zeta}} \boldsymbol{\mathcal{V}}^{-1} = \boldsymbol{\mathcal{V}}  \dfrac{\partial \left( \textbf{A}^*  \boldsymbol{\Pi} \widehat{\boldsymbol{x}} (\boldsymbol{y} ; \boldsymbol{\Lambda})\right) }{\partial \boldsymbol{y}} \boldsymbol{\mathcal{V}}^{-1},
\end{align*}
because $\partial_{\boldsymbol{\zeta}} \boldsymbol{y} = \boldsymbol{I}_P$ (the identity matrix of size $P\times P$).

Using the cyclicality of trace and the fact that $\boldsymbol{\mathcal{V}}$ is orthonormal, we finally obtain a closed-form expression of the degrees of freedom:
\begin{align*}
\mathbb{E}_{\boldsymbol{\vartheta}}\langle \textbf{A}^*  \boldsymbol{\Pi} \widehat{\boldsymbol{x}} (\boldsymbol{y} ; \boldsymbol{\Lambda}),  \boldsymbol{\vartheta}  \rangle  &= \mathbb{E}_{\boldsymbol{\zeta}} \left[ \mathrm{Tr} \left( \boldsymbol{\mathcal{D}}^{-1} \boldsymbol{\mathcal{V}}  \dfrac{\partial \left( \textbf{A}^*  \boldsymbol{\Pi} \widehat{\boldsymbol{x}} (\boldsymbol{y} ; \boldsymbol{\Lambda})\right) }{\partial \boldsymbol{y}} \boldsymbol{\mathcal{V}}^{-1}\right) \right],\\
&= \mathbb{E}_{\boldsymbol{\zeta}} \left[ \mathrm{Tr} \left( \boldsymbol{\mathcal{V}}^{-1}\boldsymbol{\mathcal{D}}^{-1} \boldsymbol{\mathcal{V}}  \dfrac{\partial \left( \textbf{A}^*  \boldsymbol{\Pi} \widehat{\boldsymbol{x}} (\boldsymbol{y} ; \boldsymbol{\Lambda})\right) }{\partial \boldsymbol{y}} \right) \right],\\
&= \mathbb{E}_{\boldsymbol{\zeta}} \left[ \mathrm{Tr} \left( \boldsymbol{\mathcal{S}} \dfrac{\partial \left( \textbf{A}^*  \boldsymbol{\Pi} \widehat{\boldsymbol{x}} (\boldsymbol{y} ; \boldsymbol{\Lambda})\right) }{\partial \boldsymbol{y}} \right) \right],\\
&= \mathbb{E}_{\boldsymbol{\zeta}} \left[ \mathrm{Tr} \left( \boldsymbol{\mathcal{S}}\textbf{A}^* \boldsymbol{\Pi} \partial_{\boldsymbol{y}}  \widehat{\boldsymbol{x}}(\boldsymbol{y} ; \boldsymbol{\Lambda}) \right) \right]
\end{align*}
\end{proof}

\section{Finite Difference Monte Carlo SURE}
\label{app:SURE_FDMC}

\begin{proof}
First, remark that since $\boldsymbol{y} \mapsto \widehat{\boldsymbol{x}}(\boldsymbol{y}; \boldsymbol{\Lambda})$ is Lipschitz continuous from Assumption~\ref{hyp:lip_L1}, it is Lebesgue differentiable almost everywhere and its Lebesgue derivative equals its weak derivative almost everywhere.
Then, based on Theorem~\ref{thm:SURE}, the only difficulty relies in dominating the degrees of freedom, since it is the only term depending on the Finite Difference step $\nu$.  \\
Applying successively both Monte Carlo and Finite Difference strategies presented in Section~\ref{sec:est_dof} we obtain
\begin{align}
\mathrm{Tr} \left( \boldsymbol{\mathcal{S}}\textbf{A}^* \boldsymbol{\Pi} \partial_{\boldsymbol{y}}  \widehat{\boldsymbol{x}}(\boldsymbol{y} ; \boldsymbol{\Lambda}) \right) &\overset{\text{Monte Carlo}}{=} \mathbb{E}_{\boldsymbol{\varepsilon}} \left\langle \boldsymbol{\mathcal{S}}\textbf{A}^* \boldsymbol{\Pi} \dfrac{\partial  \widehat{\boldsymbol{x}}(\boldsymbol{y} ; \boldsymbol{\Lambda}) }{\partial \boldsymbol{y}}\left[ \boldsymbol{\varepsilon}\right], \boldsymbol{\varepsilon} \right\rangle\\
& \overset{\text{Finite Difference}}{=} \mathbb{E}_{\boldsymbol{\varepsilon}} \left\langle \textbf{A}^* \boldsymbol{\Pi} \lim\limits_{\nu \rightarrow 0} \frac{\widehat{\boldsymbol{x}}(\boldsymbol{y} + \nu \boldsymbol{\varepsilon}; \boldsymbol{\Lambda}) - \widehat{\boldsymbol{x}}(\boldsymbol{y} ; \boldsymbol{\Lambda})}{\nu}, \boldsymbol{\mathcal{S}}\boldsymbol{\varepsilon} \right\rangle. \nonumber
\end{align}
Making use of the centered normalized Gaussian probability density function of $\boldsymbol{\varepsilon}$, the above expectation writes
\begin{align}
 &\mathbb{E}_{\boldsymbol{\varepsilon}} \left\langle \textbf{A}^* \boldsymbol{\Pi} \lim\limits_{\nu \rightarrow 0} \frac{\widehat{\boldsymbol{x}}(\boldsymbol{y} + \nu \boldsymbol{\varepsilon}; \boldsymbol{\Lambda}) - \widehat{\boldsymbol{x}}(\boldsymbol{y} ; \boldsymbol{\Lambda})}{\nu}, \boldsymbol{\mathcal{S}}\boldsymbol{\varepsilon} \right\rangle \\ \nonumber
\text{(p.d.f. of $\boldsymbol{\varepsilon}$)}& =  \int_{\mathbb{R}^P}  \lim\limits_{\nu \rightarrow 0} \left\langle \textbf{A}^* \boldsymbol{\Pi}  \frac{\widehat{\boldsymbol{x}}(\boldsymbol{y} + \nu \boldsymbol{\varepsilon}; \boldsymbol{\Lambda}) - \widehat{\boldsymbol{x}}(\boldsymbol{y} ; \boldsymbol{\Lambda})}{\nu}, \boldsymbol{\mathcal{S}}\boldsymbol{\varepsilon}  \right\rangle  \frac{\mathrm{e}^{-\frac{\lVert\boldsymbol{\varepsilon}\rVert^2}{2}} \,\mathrm{d}\boldsymbol{\varepsilon}}{(2\pi)^{P/2}} 
\end{align}
Then the following majorations hold
\begin{align}
\label{eq:maj_varepsilon}
&\left\lvert \left\langle \textbf{A}^* \boldsymbol{\Pi} \frac{\left( \widehat{\boldsymbol{x}}(\boldsymbol{y} + \nu \boldsymbol{\varepsilon}; \boldsymbol{\Lambda}) - \widehat{\boldsymbol{x}}(\boldsymbol{y} ; \boldsymbol{\Lambda})\right)}{\nu}, \boldsymbol{\mathcal{S}}\boldsymbol{\varepsilon}  \right\rangle \right\rvert \mathrm{e}^{-\frac{\lVert\boldsymbol{\varepsilon}\rVert^2}{2}}  \\ \nonumber
\text{(Cauchy-Schwarz) } & \leq \left\lVert\textbf{A}^* \boldsymbol{\Pi} \frac{\left( \widehat{\boldsymbol{x}}(\boldsymbol{y} + \nu \boldsymbol{\varepsilon}; \boldsymbol{\Lambda}) - \widehat{\boldsymbol{x}}(\boldsymbol{y} ; \boldsymbol{\Lambda})\right)}{\nu} \right\rVert \left\lVert \boldsymbol{\mathcal{S}}\boldsymbol{\varepsilon}\right\rVert\mathrm{e}^{-\frac{\lVert\boldsymbol{\varepsilon}\rVert^2}{2}}\\ \nonumber
\text{(Bounded operators) }  &  \leq \lVert \textbf{A}^* \rVert  \lVert \boldsymbol{\Pi} \rVert \left\lVert\frac{\widehat{\boldsymbol{x}}(\boldsymbol{y} + \nu \boldsymbol{\varepsilon}; \boldsymbol{\Lambda}) - \widehat{\boldsymbol{x}}(\boldsymbol{y} ; \boldsymbol{\Lambda})}{\nu} \right\rVert \lVert \boldsymbol{\mathcal{S}} \rVert \lVert \boldsymbol{\varepsilon} \rVert\mathrm{e}^{-\frac{\lVert\boldsymbol{\varepsilon}\rVert^2}{2}}\\ 
\text{(Hyp. \ref{hyp:lip_L1}: $L_1$-Lipschitz) } & \leq \lVert \textbf{A}^* \rVert  \lVert \boldsymbol{\Pi} \rVert  L_1 \lVert \boldsymbol{\varepsilon} \rVert \lVert \boldsymbol{\mathcal{S}} \rVert \lVert \boldsymbol{\varepsilon} \rVert\mathrm{e}^{-\frac{\lVert\boldsymbol{\varepsilon}\rVert^2}{2}}, \nonumber 
\end{align}
with $\lVert \boldsymbol{\varepsilon} \rVert^2 \mathrm{e}^{-\frac{\lVert\boldsymbol{\varepsilon}\rVert^2}{2}}$ integrable over $\mathbb{R}^P$.
Further, the domination being independent of $\nu$ the limit can be interchanged with the integral on variable $\boldsymbol{\varepsilon}$ which gives 
\begin{align}
\label{eq:int_varepsilon}
\int_{\mathbb{R}^P}  \lim\limits_{\nu \rightarrow 0}& \left\langle \textbf{A}^* \boldsymbol{\Pi}  \frac{\widehat{\boldsymbol{x}}(\boldsymbol{y} + \nu \boldsymbol{\varepsilon}; \boldsymbol{\Lambda}) - \widehat{\boldsymbol{x}}(\boldsymbol{y} ; \boldsymbol{\Lambda})}{\nu}, \boldsymbol{\mathcal{S}}\boldsymbol{\varepsilon}  \right\rangle  \frac{\mathrm{e}^{-\frac{\lVert\boldsymbol{\varepsilon}\rVert^2}{2}} \,\mathrm{d}\boldsymbol{\varepsilon}}{(2\pi)^{P/2}} \\
&=\lim\limits_{\nu \rightarrow 0} \int_{\mathbb{R}^P}   \left\langle \textbf{A}^* \boldsymbol{\Pi}  \frac{\widehat{\boldsymbol{x}}(\boldsymbol{y} + \nu \boldsymbol{\varepsilon}; \boldsymbol{\Lambda}) - \widehat{\boldsymbol{x}}(\boldsymbol{y} ; \boldsymbol{\Lambda})}{\nu}, \boldsymbol{\mathcal{S}}\boldsymbol{\varepsilon}  \right\rangle  \frac{\mathrm{e}^{-\frac{\lVert\boldsymbol{\varepsilon}\rVert^2}{2}} \,\mathrm{d}\boldsymbol{\varepsilon}}{(2\pi)^{P/2}}, \nonumber
\end{align}
and 
\begin{align}
\label{eq:int_zeta}
&\left\lvert \lim\limits_{\nu \rightarrow 0} \int_{\mathbb{R}^P}   \left\langle \textbf{A}^* \boldsymbol{\Pi}  \frac{\widehat{\boldsymbol{x}}(\boldsymbol{y} + \nu \boldsymbol{\varepsilon}; \boldsymbol{\Lambda}) - \widehat{\boldsymbol{x}}(\boldsymbol{y} ; \boldsymbol{\Lambda})}{\nu}, \boldsymbol{\mathcal{S}}\boldsymbol{\varepsilon}  \right\rangle  \frac{\mathrm{e}^{-\frac{\lVert\boldsymbol{\varepsilon}\rVert^2}{2}} \,\mathrm{d}\boldsymbol{\varepsilon}}{(2\pi)^{P/2}} \right\rvert  \nonumber \\
&\leq \lVert \textbf{A}^* \rVert  \lVert \boldsymbol{\Pi} \rVert  L_1  \lVert \boldsymbol{\mathcal{S}} \rVert  \int_{\mathbb{R}^P}\lVert \boldsymbol{\varepsilon} \rVert^2 \frac{\mathrm{e}^{-\frac{\lVert\boldsymbol{\varepsilon}\rVert^2}{2}} \,\mathrm{d}\boldsymbol{\varepsilon}}{(2\pi)^{P/2}} < \infty.
\end{align}
Then, Equation~\eqref{eq:int_varepsilon} means that 
\begin{align}
\mathrm{Tr} \left( \boldsymbol{\mathcal{S}}\textbf{A}^* \boldsymbol{\Pi} \partial_{\boldsymbol{y}}  \widehat{\boldsymbol{x}}(\boldsymbol{y} ; \boldsymbol{\Lambda}) \right)  = \lim\limits_{\nu \rightarrow 0} \mathbb{E}_{\boldsymbol{\varepsilon}}  \left\langle \textbf{A}^* \boldsymbol{\Pi}  \frac{\widehat{\boldsymbol{x}}(\boldsymbol{y} + \nu \boldsymbol{\varepsilon}; \boldsymbol{\Lambda}) - \widehat{\boldsymbol{x}}(\boldsymbol{y} ; \boldsymbol{\Lambda})}{\nu}, \boldsymbol{\mathcal{S}}\boldsymbol{\varepsilon}  \right\rangle.
\end{align}
Further, the majoration obtained in~Equation~\eqref{eq:int_zeta} not depending on $\boldsymbol{\zeta}$ (since $L_1$ does not depend on $\boldsymbol{y}$, as stated in Assumption~\ref{hyp:lip_L1}), neither on $\nu$, the limits on $\nu$ and the expected value with respect to Gaussian random noise $\boldsymbol{\zeta}$ can be interchanged so that 
\begin{align}
\mathbb{E}_{\boldsymbol{\zeta}}  \mathrm{Tr} \left( \boldsymbol{\mathcal{S}}\textbf{A}^* \boldsymbol{\Pi} \partial_{\boldsymbol{y}}  \widehat{\boldsymbol{x}}(\boldsymbol{y} ; \boldsymbol{\Lambda}) \right) 
 &= \mathbb{E}_{\boldsymbol{\zeta}}  \lim\limits_{\nu \rightarrow 0} \mathbb{E}_{\boldsymbol{\varepsilon}}  \left\langle \textbf{A}^* \boldsymbol{\Pi}  \frac{\widehat{\boldsymbol{x}}(\boldsymbol{y} + \nu \boldsymbol{\varepsilon}; \boldsymbol{\Lambda}) - \widehat{\boldsymbol{x}}(\boldsymbol{y} ; \boldsymbol{\Lambda})}{\nu}, \boldsymbol{\mathcal{S}}\boldsymbol{\varepsilon}  \right\rangle \\ \nonumber
&=  \lim\limits_{\nu \rightarrow 0}  \mathbb{E}_{\boldsymbol{\zeta},\boldsymbol{\varepsilon}}  \left\langle \textbf{A}^* \boldsymbol{\Pi}  \frac{\widehat{\boldsymbol{x}}(\boldsymbol{y} + \nu \boldsymbol{\varepsilon}; \boldsymbol{\Lambda}) - \widehat{\boldsymbol{x}}(\boldsymbol{y} ; \boldsymbol{\Lambda})}{\nu}, \boldsymbol{\mathcal{S}}\boldsymbol{\varepsilon}  \right\rangle.
\end{align}
giving the asymptotic unbiasedness of the Finite Difference Monte Carlo estimator of degrees of freedom and hence of the  Finite Difference Monte Carlo SURE~\eqref{eq:SURE_FDMC_def}.
\end{proof}

\section{Finite Difference Monte Carlo SUGAR}
\label{app:SUGAR_FDMC}

\begin{proof}
We remind that Finite Difference Monte Carlo SUGAR is composed of two terms, denoted $(\partial \boldsymbol{1})$ and $(\partial \boldsymbol{2})$ in the following:
\begin{align}
\label{eq:app_sugar}
&\nablaL_{\boldsymbol{\Lambda}}\widehat{R}_{\nu,\boldsymbol{\epsilon}}(\boldsymbol{y}; \boldsymbol{\Lambda} \lvert \boldsymbol{\mathcal{S}}) \triangleq \\
 \underset{ \text{\normalsize $(\partial \boldsymbol{1})$}}{2 \left( \textbf{A} \boldsymbol{\Phi} \nablaL_{\boldsymbol{\Lambda}} \widehat{\boldsymbol{x}}(\boldsymbol{y}; \boldsymbol{\Lambda}) \right)^*  \textbf{A}\left( \boldsymbol{\Phi}\widehat{\boldsymbol{x}} - \boldsymbol{y} \right) } +   &\underset{\text{\normalsize $(\partial \boldsymbol{2})$}}{\frac{2}{\nu} \left\langle \textbf{A}^* \boldsymbol{\Pi} \left( \nablaL_{\boldsymbol{\Lambda}}\widehat{\boldsymbol{x}}(\boldsymbol{y} + \nu \boldsymbol{\varepsilon}; \boldsymbol{\Lambda}) - \nablaL_{\boldsymbol{\Lambda}}\widehat{\boldsymbol{x}}(\boldsymbol{y} ; \boldsymbol{\Lambda})\right), \boldsymbol{\mathcal{S}}\boldsymbol{\varepsilon}  \right\rangle}. \nonumber
\end{align}
Since the estimator $\widehat{\boldsymbol{x}}(\boldsymbol{y}; \boldsymbol{\Lambda})$ is weakly differentiable with respect to $\boldsymbol{\Lambda}$, so is the true risk $R[\widehat{\boldsymbol{x}}](\boldsymbol{\Lambda})$. 
Thus, for any continuously differentiable test function $\varphi : \mathbb{R}^L \rightarrow \mathbb{R} \in C^1(\mathbb{V})$ with compact support denoted $\mathbb{V} \subset \mathbb{R}^L$, and any component $l \in \lbrace1, \hdots , L\rbrace$ of the gradient of the risk $\nablaL_{\boldsymbol{\Lambda}} R[\widehat{\boldsymbol{x}}](\boldsymbol{\Lambda})$ 
\begin{align}
\int_{\mathbb{R}^L} \left( \nablaL_{\boldsymbol{\Lambda}} R[\widehat{\boldsymbol{x}}](\boldsymbol{\Lambda}) \right)_l \varphi(\boldsymbol{\Lambda}) \, \mathrm{d}\boldsymbol{\Lambda} &= \int_{\mathbb{V}} \left( \nablaL_{\boldsymbol{\Lambda}} R[\widehat{\boldsymbol{x}}](\boldsymbol{\Lambda}) \right)_l \varphi(\boldsymbol{\Lambda}) \, \mathrm{d}\boldsymbol{\Lambda}\\
 \text{(Weak differentiability)} & = - \int_{\mathbb{V}} R[\widehat{\boldsymbol{x}}](\boldsymbol{\Lambda})\left( \nablaL_{\boldsymbol{\Lambda}} \varphi(\boldsymbol{\Lambda}) \right)_l \, \mathrm{d}\boldsymbol{\Lambda} \nonumber  \\
\text{(Definition of the risk~\eqref{eq:risk_def})} &=- \int_{\mathbb{V}}  \mathbb{E}_{\boldsymbol{\zeta}}  \left\lVert \boldsymbol{\Pi}\widehat{\boldsymbol{x}}(\boldsymbol{y} ; \boldsymbol{\Lambda}) - \boldsymbol{\Pi}\boldsymbol{x} \right\rVert_2^2\left( \nablaL_{\boldsymbol{\Lambda}} \varphi(\boldsymbol{\Lambda}) \right)_l  \, \mathrm{d}\boldsymbol{\Lambda}  \nonumber\\
\text{(Theorem~\ref{thm:SURE})} & = - \int_{\mathbb{V}}  \mathbb{E}_{\boldsymbol{\zeta}, \boldsymbol{\varepsilon}}    \lim\limits_{\nu \rightarrow 0} \widehat{R}_{\nu, \boldsymbol{\varepsilon}}(\boldsymbol{y}; \boldsymbol{\Lambda}\lvert \boldsymbol{\mathcal{S}}) \left( \nablaL_{\boldsymbol{\Lambda}} \varphi(\boldsymbol{\Lambda}) \right)_l  \, \mathrm{d}\boldsymbol{\Lambda}  \nonumber\\
\text{(Theorem~\ref{thm:SURE_FDMC})} & = - \int_{\mathbb{V}} \lim\limits_{\nu \rightarrow 0} \mathbb{E}_{\boldsymbol{\zeta}, \boldsymbol{\varepsilon}}     \widehat{R}_{\nu, \boldsymbol{\varepsilon}}(\boldsymbol{y}; \boldsymbol{\Lambda}\lvert \boldsymbol{\mathcal{S}}) \left( \nablaL_{\boldsymbol{\Lambda}} \varphi(\boldsymbol{\Lambda}) \right)_l  \, \mathrm{d}\boldsymbol{\Lambda} \nonumber \\
\text{(Dominated convergence)} & \overset{\text{\normalsize\color{bleu}(DC 1)}}{=} - \lim\limits_{\nu \rightarrow 0} \int_{\mathbb{V}}  \mathbb{E}_{\boldsymbol{\zeta}, \boldsymbol{\varepsilon}}     \widehat{R}_{\nu, \boldsymbol{\varepsilon}} (\boldsymbol{y}; \boldsymbol{\Lambda}\lvert \boldsymbol{\mathcal{S}}) \left( \nablaL_{\boldsymbol{\Lambda}} \varphi(\boldsymbol{\Lambda}) \right)_l  \, \mathrm{d}\boldsymbol{\Lambda} \nonumber\\
\text{(Fubini)} & \overset{\text{\normalsize\color{bleu}(Fu 1)}}{=}  - \lim\limits_{\nu \rightarrow 0}   \mathbb{E}_{\boldsymbol{\zeta}, \boldsymbol{\varepsilon}}     \int_{\mathbb{V}} \widehat{R}_{\nu, \boldsymbol{\varepsilon}}(\boldsymbol{y}; \boldsymbol{\Lambda}\lvert \boldsymbol{\mathcal{S}}) \left( \nablaL_{\boldsymbol{\Lambda}} \varphi(\boldsymbol{\Lambda}) \right)_l  \, \mathrm{d}\boldsymbol{\Lambda} \nonumber\\
\text{(Proposition~\ref{claim:SUGAR})} & = \lim\limits_{\nu \rightarrow 0}   \mathbb{E}_{\boldsymbol{\zeta}, \boldsymbol{\varepsilon}}     \int_{\mathbb{V}}  \left( \nablaL_{\boldsymbol{\Lambda}} \widehat{R}_{\nu, \boldsymbol{\varepsilon}}(\boldsymbol{y}; \boldsymbol{\Lambda}\lvert \boldsymbol{\mathcal{S}}) \right)_l \varphi(\boldsymbol{\Lambda}) \, \mathrm{d}\boldsymbol{\Lambda} \nonumber\\
& \overset{ \text{\normalsize\color{bleu}(Fu 2)}}{=}  \lim\limits_{\nu \rightarrow 0}  \int_{\mathbb{V}}  \mathbb{E}_{\boldsymbol{\zeta}, \boldsymbol{\varepsilon}}      \left( \nablaL_{\boldsymbol{\Lambda}} \widehat{R}_{\nu, \boldsymbol{\varepsilon}}(\boldsymbol{y}; \boldsymbol{\Lambda}\lvert \boldsymbol{\mathcal{S}}) \right)_l \varphi(\boldsymbol{\Lambda}) \, \mathrm{d}\boldsymbol{\Lambda} \nonumber\\
& \overset{\text{\normalsize\color{bleu}(DC 2)}}{=}  \int_{\mathbb{V}} \lim\limits_{\nu \rightarrow 0}  \mathbb{E}_{\boldsymbol{\zeta}, \boldsymbol{\varepsilon}}      \left( \nablaL_{\boldsymbol{\Lambda}} \widehat{R}_{\nu, \boldsymbol{\varepsilon}}(\boldsymbol{y}; \boldsymbol{\Lambda}\lvert \boldsymbol{\mathcal{S}}) \right)_l \varphi(\boldsymbol{\Lambda}) \, \mathrm{d}\boldsymbol{\Lambda}. \nonumber
\end{align}

\noindent {\color{bleu}(DC 1)} In order to apply dominated convergence theorem interchanging the limit on $\nu$ and the integral on $\mathbb{V}$, since $\varphi$ is a test function with compact domain $\mathbb{V}$, we derive a bound of $\mathbb{E}_{\boldsymbol{\zeta}, \boldsymbol{\varepsilon}}     \widehat{R}_{\nu, \boldsymbol{\varepsilon}}(\boldsymbol{y}; \boldsymbol{\Lambda}\lvert \boldsymbol{\mathcal{S}})$ which is independent of both $\nu$ and $\boldsymbol{\Lambda}$. 
Using the probability density functions we have 
\begin{align}
\mathbb{E}_{\boldsymbol{\zeta}, \boldsymbol{\varepsilon}}     \widehat{R}_{\nu, \boldsymbol{\varepsilon}}(\boldsymbol{y}; \boldsymbol{\Lambda}\lvert \boldsymbol{\mathcal{S}})  = \int_{\boldsymbol{\zeta}} \int_{\boldsymbol{\varepsilon}} \, \widehat{R}_{\nu, \boldsymbol{\varepsilon}}(\boldsymbol{y}; \boldsymbol{\Lambda}\lvert \boldsymbol{\mathcal{S}})  \,\mathcal{G}_{\boldsymbol{\mathcal{S}}}(\boldsymbol{\zeta}) \mathcal{G}_{\boldsymbol{I}}(\boldsymbol{\varepsilon}) \mathrm{d}\boldsymbol{\zeta} \mathrm{d}\boldsymbol{\varepsilon}
\end{align}
where $\mathcal{G}_{\boldsymbol{\mathcal{S}}}$ (resp. $\mathcal{G}_{\boldsymbol{I}}$) denotes the Gaussian probability density function with covariance matrix $\boldsymbol{\mathcal{S}}$ (resp. $\boldsymbol{I}$)
\begin{align}
\mathcal{G}_{\boldsymbol{\mathcal{S}}}(\boldsymbol{\zeta}) \triangleq \frac{\exp\left( -\lVert \boldsymbol{\zeta} \rVert^2_{\boldsymbol{\mathcal{S}}^{-1}}/2 \right)}{\sqrt{(2\pi)^P \lvert \det \boldsymbol{\mathcal{S}}\rvert}}  \quad \left( \text{resp. } \mathcal{G}_{\boldsymbol{I}}(\boldsymbol{\zeta}) \triangleq \frac{\exp\left( -\lVert \boldsymbol{\zeta} \rVert^2_2/2 \right)}{\sqrt{(2\pi)^P}}  \right).
\end{align} 
We remind that $\widehat{R}_{\nu, \boldsymbol{\varepsilon}}(\boldsymbol{y}; \boldsymbol{\Lambda}\lvert \boldsymbol{\mathcal{S}})$ is decomposed of three terms 
\begin{align}
\label{eq:SURE_FDMC_App}
& \widehat{R}_{\nu, \boldsymbol{\varepsilon}}(\boldsymbol{y}; \boldsymbol{\Lambda}\lvert \boldsymbol{\mathcal{S}}) \triangleq \nonumber \\
&\underset{\textbf{\normalsize(1)}}{\left\lVert  \textbf{A}\left( \boldsymbol{\Phi}\widehat{\boldsymbol{x}} - \boldsymbol{y} \right) \right\rVert_2^2} + \underset{\textbf{\normalsize(2)}}{\frac{1}{\nu} \left\langle \textbf{A}^* \boldsymbol{\Pi} \left( \widehat{\boldsymbol{x}}(\boldsymbol{y} + \nu \boldsymbol{\varepsilon}; \boldsymbol{\Lambda}\lvert \boldsymbol{\mathcal{S}}) - \widehat{\boldsymbol{x}}(\boldsymbol{y} ; \boldsymbol{\Lambda}\lvert \boldsymbol{\mathcal{S}})\right), \boldsymbol{\mathcal{S}}\boldsymbol{\varepsilon}  \right\rangle}- \underset{\textbf{\normalsize(3)}}{\mathrm{Tr}(\textbf{A} \boldsymbol{\mathcal{S}} \textbf{A}^*  )} .
\end{align}
which will be bounded separately.\\
{\textbf{(1)}} First, combining Assumptions~\textit{(i)}~and~\textit{(ii)} of Assumption~\ref{hyp:lip_L1}, we have
\begin{align}
\left\lVert \widehat{\boldsymbol{x}}(\boldsymbol{y}; \boldsymbol{\Lambda}) - \widehat{\boldsymbol{x}}(\boldsymbol{0}_P; \boldsymbol{\Lambda}) \right\rVert \leq L_1  \left\lVert \boldsymbol{y} - \boldsymbol{0}_P\right\rVert \Longrightarrow \left\lVert \widehat{\boldsymbol{x}}(\boldsymbol{y}; \boldsymbol{\Lambda})  \right\rVert \leq L_1  \left\lVert \boldsymbol{y}\right\rVert,
\end{align}
which can be used to bound first term (1) of~\eqref{eq:SURE_FDMC_App} as 
\begin{align}
\label{eq:dom(1)}
\left\lVert  \textbf{A}\left( \boldsymbol{\Phi}\widehat{\boldsymbol{x}} - \boldsymbol{y} \right) \right\rVert &\leq \lVert \textbf{A} \rVert \left\lVert \boldsymbol{\Phi}\widehat{\boldsymbol{x}} - \boldsymbol{y} \right\rVert \nonumber \\
& \leq \lVert \textbf{A} \rVert \left( \left\lVert \boldsymbol{\Phi}\widehat{\boldsymbol{x}}\right\rVert  + \left\lVert \boldsymbol{y} \right\rVert \right)\nonumber\\
& \leq \lVert \textbf{A} \rVert \left( \lVert \boldsymbol{\Phi} \rVert  \lVert \widehat{\boldsymbol{x}}\rVert  + \left\lVert \boldsymbol{y} \right\rVert \right)\nonumber\\
& \leq \lVert \textbf{A} \rVert \left( \lVert \boldsymbol{\Phi} \rVert L_1 \lVert \boldsymbol{y}\rVert  + \left\lVert \boldsymbol{y} \right\rVert \right)\nonumber\\
& \leq \lVert \textbf{A} \rVert \left( \lVert \boldsymbol{\Phi} \rVert L_1 + 1\right)\left\lVert \boldsymbol{y} \right\rVert.
\end{align}
Since by definition $\boldsymbol{\zeta} = \boldsymbol{y} - \boldsymbol{\Phi} \boldsymbol{x}$, $\boldsymbol{y} \mapsto \left\lVert \boldsymbol{y} \right\rVert$ is integrable against the Gaussian density $\mathcal{G}_{\boldsymbol{\mathcal{S}}}(\boldsymbol{\zeta}) $, and the above domination being independent of $\nu$, it enable us to apply dominated convergence.\\
{ \textbf{(2)}} Making use of the domination of Equation~\eqref{eq:maj_varepsilon} we have
\begin{align}
&\left\lvert \left\langle \textbf{A}^* \boldsymbol{\Pi} \frac{\left( \widehat{\boldsymbol{x}}(\boldsymbol{y} + \nu \boldsymbol{\varepsilon}; \boldsymbol{\Lambda}) - \widehat{\boldsymbol{x}}(\boldsymbol{y} ; \boldsymbol{\Lambda})\right)}{\nu}, \boldsymbol{\mathcal{S}}\boldsymbol{\varepsilon}  \right\rangle \right\rvert \\
& \leq \lVert \textbf{A}^* \rVert  \lVert \boldsymbol{\Pi} \rVert  L_1 \lVert \boldsymbol{\varepsilon} \rVert \lVert \boldsymbol{\mathcal{S}} \rVert \lVert \boldsymbol{\varepsilon} \rVert,\nonumber
\end{align}
and $\lVert \boldsymbol{\varepsilon} \rVert^2$ being integrable against $\mathcal{G}_{\boldsymbol{I}}(\boldsymbol{\varepsilon})$, dominated convergence applied.\\
{ \textbf{(3)}} The third term being constant, the domination is obvious.\\
Putting altogether the majoration of {\textbf{(1)}}, {\textbf{(2)}}  and {\textbf{(3)}} 
\begin{align}
\label{eq:dom_DC1}
\left\lvert \mathbb{E}_{\boldsymbol{\zeta}, \boldsymbol{\varepsilon}}     \widehat{R}_{\nu, \boldsymbol{\varepsilon}}(\boldsymbol{y}; \boldsymbol{\Lambda}\lvert \boldsymbol{\mathcal{S}})  \right\rvert \nonumber 
&\leq \int_{\boldsymbol{\zeta}} \int_{\boldsymbol{\varepsilon}} \, \lVert \textbf{A} \rVert \left( \lVert \boldsymbol{\Phi} \rVert L_1 + 1\right)\left\lVert \boldsymbol{y} \right\rVert  \,\mathcal{G}_{\boldsymbol{\mathcal{S}}}(\boldsymbol{\zeta}) \mathcal{G}_{\boldsymbol{I}}(\boldsymbol{\varepsilon}) \mathrm{d}\boldsymbol{\zeta} \mathrm{d}\boldsymbol{\varepsilon} \nonumber \\
&+  \int_{\boldsymbol{\zeta}} \int_{\boldsymbol{\varepsilon}} \, \lVert \textbf{A}^* \rVert  \lVert \boldsymbol{\Pi} \rVert  L_1  \lVert \boldsymbol{\mathcal{S}} \rVert \lVert \boldsymbol{\varepsilon} \rVert^2  \,\mathcal{G}_{\boldsymbol{\mathcal{S}}}(\boldsymbol{\zeta}) \mathcal{G}_{\boldsymbol{I}}(\boldsymbol{\varepsilon}) \mathrm{d}\boldsymbol{\zeta} \mathrm{d}\boldsymbol{\varepsilon} \nonumber \\
&+ \int_{\boldsymbol{\zeta}} \int_{\boldsymbol{\varepsilon}} \, \mathrm{Tr}(\textbf{A} \boldsymbol{\mathcal{S}} \textbf{A}^*  ) \,\mathcal{G}_{\boldsymbol{\mathcal{S}}}(\boldsymbol{\zeta}) \mathcal{G}_{\boldsymbol{I}}(\boldsymbol{\varepsilon}) \mathrm{d}\boldsymbol{\zeta} \mathrm{d}\boldsymbol{\varepsilon} \leq \infty,
\end{align}
the majoration being independent of $\nu$ and $\boldsymbol{\Lambda}$ dominated convergence applies.\\

\noindent {\color{bleu}(Fu 1)} The above domination of Equation~\eqref{eq:dom_DC1} being independent of $\boldsymbol{\Lambda}$, then Fubini's theorem applies.\\

\noindent {\color{bleu}(Fu 2)} 
The first term of~\eqref{eq:app_sugar}, denoted as~$(\nablaL\textbf{1})$ can be easily dominated by an integrable function. Indeed, Assumption~\ref{hyp:lip_L2} 
implies that $\nablaL_{\boldsymbol{\Lambda}} \widehat{\boldsymbol{x}}(\boldsymbol{y}; \boldsymbol{\Lambda})$ is uniformly bounded by $L_2$, independently of $\boldsymbol{y}$. 
Then it follows from Cauchy-Schwarz inequality and the domination of Equation~\eqref{eq:dom(1)}
\begin{align}
\nonumber 2 \left\lVert \left( \textbf{A} \boldsymbol{\Phi} \nablaL_{\boldsymbol{\Lambda}} \widehat{\boldsymbol{x}}(\boldsymbol{y}; \boldsymbol{\Lambda}) \right)^*  \textbf{A}\left( \boldsymbol{\Phi}\widehat{\boldsymbol{x}} - \boldsymbol{y} \right) \right\rVert &\leq 2 \lVert \textbf{A} \boldsymbol{\Phi} \nablaL_{\boldsymbol{\Lambda}} \widehat{\boldsymbol{x}}(\boldsymbol{y}; \boldsymbol{\Lambda}) \rVert \lVert \textbf{A}\left( \boldsymbol{\Phi}\widehat{\boldsymbol{x}} - \boldsymbol{y} \right)  \rVert,\\
&\leq 2 \lVert \textbf{A} \rVert \lVert \boldsymbol{\Phi} \rVert L_2 \lVert \textbf{A} \rVert \left( \boldsymbol{\Phi}L_1 +1\right) \lVert \boldsymbol{y} \rVert .
\end{align}
Hence, since $\lVert \boldsymbol{y} \rVert$ is integrable against $\mathcal{G}_{\boldsymbol{\mathcal{S}}}(\boldsymbol{y} - \boldsymbol{\Phi}\boldsymbol{x}) \mathcal{G}_{\boldsymbol{I}}(\boldsymbol{\varepsilon})$ and the domination being independent of $\nu$, both Fubini and dominated convergence theorems apply.\\
 
The second term of~\eqref{eq:app_sugar}, denoted as~$(\nablaL \boldsymbol{2})$, corresponding to the derivative of the estimation of degrees of freedom, can be rewritten as
\begin{align*}
\frac{2}{\nu} \left\langle \textbf{A}^* \boldsymbol{\Pi} \left( \nablaL_{\boldsymbol{\Lambda}}\widehat{\boldsymbol{x}}(\boldsymbol{y} + \nu \boldsymbol{\varepsilon}; \boldsymbol{\Lambda}) - \nablaL_{\boldsymbol{\Lambda}}\widehat{\boldsymbol{x}}(\boldsymbol{y} ; \boldsymbol{\Lambda})\right), \boldsymbol{\mathcal{S}}\boldsymbol{\varepsilon}  \right\rangle & \triangleq \frac{2}{\nu}\left\langle u(\boldsymbol{\zeta} + \nu \boldsymbol{\varepsilon} ; \boldsymbol{\Lambda}) - u(\boldsymbol{\zeta} ; \boldsymbol{\Lambda}),  \boldsymbol{\varepsilon}  \right\rangle
\end{align*}
where we set $u(\boldsymbol{z};\boldsymbol{\Lambda}) \triangleq  \boldsymbol{\mathcal{S}}\textbf{A}^* \boldsymbol{\Pi} \nablaL_{\boldsymbol{\Lambda}}\widehat{\boldsymbol{x}}(\boldsymbol{\Phi} \boldsymbol{x} +\boldsymbol{z}  ; \boldsymbol{\Lambda}) $. Since $\nablaL_{\boldsymbol{\Lambda}} \widehat{\boldsymbol{x}}(\boldsymbol{y}; \boldsymbol{\Lambda})$ is uniformly bounded by $L_2$, independently of $\boldsymbol{y}$, and all the linear operators are assumed to be bounded, then $\boldsymbol{\Lambda} \mapsto u(\boldsymbol{z};\boldsymbol{\Lambda})$ is bounded by some $L_u > 0$, independently of $\boldsymbol{z}$. Then
\begin{align}
\label{eq:nablaL_dof}
&\mathbb{E}_{\boldsymbol{\zeta},\boldsymbol{\varepsilon}} \left[ \frac{2}{\nu} \left\langle \textbf{A}^* \boldsymbol{\Pi} \left( \nablaL_{\boldsymbol{\Lambda}}\widehat{\boldsymbol{x}}(\boldsymbol{y} + \nu \boldsymbol{\varepsilon}; \boldsymbol{\Lambda}) - \nablaL_{\boldsymbol{\Lambda}}\widehat{\boldsymbol{x}}(\boldsymbol{y} ; \boldsymbol{\Lambda})\right), \boldsymbol{\mathcal{S}}\boldsymbol{\varepsilon}  \right\rangle \right] \\
 = \int_{\boldsymbol{\zeta}} \int_{\boldsymbol{\varepsilon}} &\frac{2}{\nu}\left\langle u(\boldsymbol{\zeta} + \nu \boldsymbol{\varepsilon} ; \boldsymbol{\Lambda}) - u(\boldsymbol{\zeta} ; \boldsymbol{\Lambda}),  \boldsymbol{\varepsilon}  \right\rangle \mathcal{G}_{\boldsymbol{\mathcal{S}}}(\boldsymbol{\zeta}) \mathcal{G}_{\boldsymbol{I}}(\boldsymbol{\varepsilon}) \, \mathrm{d}\boldsymbol{\zeta} \mathrm{d}\boldsymbol{\varepsilon}\nonumber\\
 = \int_{\boldsymbol{\zeta}} \int_{\boldsymbol{\varepsilon}} &\frac{2}{\nu}\left\langle u(\boldsymbol{\zeta} + \nu \boldsymbol{\varepsilon} ; \boldsymbol{\Lambda}) , \boldsymbol{\varepsilon}  \right\rangle \mathcal{G}_{\boldsymbol{\mathcal{S}}}(\boldsymbol{\zeta}) \mathcal{G}_{\boldsymbol{I}}(\boldsymbol{\varepsilon}) \, \mathrm{d}\boldsymbol{\zeta} \mathrm{d}\boldsymbol{\varepsilon} - \int_{\boldsymbol{\zeta}} \int_{\boldsymbol{\varepsilon}} \frac{2}{\nu}\left\langle u(\boldsymbol{\zeta}  ; \boldsymbol{\Lambda}),  \boldsymbol{\varepsilon}  \right\rangle \mathcal{G}_{\boldsymbol{\mathcal{S}}}(\boldsymbol{\zeta}) \mathcal{G}_{\boldsymbol{I}}(\boldsymbol{\varepsilon}) \, \mathrm{d}\boldsymbol{\zeta} \mathrm{d}\boldsymbol{\varepsilon}\nonumber\\
= \int_{\boldsymbol{\zeta}} \int_{\boldsymbol{\varepsilon}} &\frac{2}{\nu}\left\langle u(\boldsymbol{\zeta}  ; \boldsymbol{\Lambda}),  \boldsymbol{\varepsilon}  \right\rangle \left( \mathcal{G}_{\boldsymbol{\mathcal{S}}}(\boldsymbol{\zeta} - \nu \boldsymbol{\varepsilon}) - \mathcal{G}_{\boldsymbol{\mathcal{S}}}(\boldsymbol{\zeta}) \right) \mathcal{G}_{\boldsymbol{I}}(\boldsymbol{\varepsilon}) \, \mathrm{d}\boldsymbol{\zeta} \mathrm{d}\boldsymbol{\varepsilon}.\nonumber
\end{align}
Further, the following majoration holds
\begin{align}
\left\lVert\frac{2}{\nu}\left\langle u(\boldsymbol{\zeta}  ; \boldsymbol{\Lambda}),  \boldsymbol{\varepsilon}  \right\rangle \left( \mathcal{G}_{\boldsymbol{\mathcal{S}}}(\boldsymbol{\zeta} - \nu \boldsymbol{\varepsilon}) - \mathcal{G}_{\boldsymbol{\mathcal{S}}}(\boldsymbol{\zeta}) \right) \mathcal{G}_{\boldsymbol{I}}(\boldsymbol{\varepsilon}) \right\rVert & \leq \frac{2}{\nu} L_u \lVert \boldsymbol{\varepsilon} \rVert \left\lvert  \mathcal{G}_{\boldsymbol{\mathcal{S}}}(\boldsymbol{\zeta} - \nu \boldsymbol{\varepsilon}) - \mathcal{G}_{\boldsymbol{\mathcal{S}}}(\boldsymbol{\zeta}) \right\rvert \mathcal{G}_{\boldsymbol{I}}(\boldsymbol{\varepsilon}).
\end{align}
Up to a unitary variable change (see Appendix~\ref{app:SURE}, with $\boldsymbol{\vartheta} = \boldsymbol{\mathcal{V}}^{-1} \boldsymbol{\varepsilon}$), we can assume that the covariance matrix is diagonal, with diagonal terms $\left( s_i^2 \right)_{i = 1}^P$ so that 
\begin{align}
\mathcal{G}_{\boldsymbol{\mathcal{S}}}(\boldsymbol{\zeta}) = \prod_{i  = 1}^P \frac{\exp\left(- \lvert \zeta_i \rvert^2/2 s_i^2 \right)}{\sqrt{2\pi s_i^2}}
\end{align}
and we define the one-dimensional Gaussian densities as
\begin{align}
g_{s_i^2}(\zeta_i) \triangleq \frac{\exp\left(- \lvert \zeta_i \rvert^2/2 s_i^2 \right)}{\sqrt{2\pi s_i^2}}.
\end{align} 
From Taylor inequality,
\begin{align}
\lvert g_{s_i^2}(\zeta_i-\nu \varepsilon_i) -  g_{s_i^2}(\zeta_i) \rvert \leq \int_{(0, \nu  \varepsilon_i)} \lvert g_{s_i^2}'(\zeta_i - \tau) \rvert \, \mathrm{d}\tau,
\end{align}
where $(0, \nu \varepsilon_i)$ denotes the ordered interval, taking into account that $\varepsilon_i$ might be negative that is
\begin{align}
(0,\nu \varepsilon_i) = \left\lbrace \begin{array}{cc}
\left[0, \nu \varepsilon_i\right] & \text{ if }\varepsilon_i \geq 0 \\
\left[ \nu \varepsilon_i, 0\right] & \text{ else }
\end{array}\right.
\end{align}
then 
\begin{align}
\int_{\zeta_i} \lvert g_{s_i^2}(\zeta_i-\nu \varepsilon_i) -  g_{s_i^2}(\zeta_i) \rvert \, \mathrm{d}\zeta_i 
&\leq \int_{\zeta_i} \int_{(0, \nu  \varepsilon_i)} \lvert g_{s_i^2}'(\zeta_i - \tau) \rvert \, \mathrm{d}\tau \mathrm{d}\zeta_i \nonumber\\
& \leq  \int_{(0, \nu  \varepsilon_i)} \left( \int_{\zeta_i} \lvert g_{s_i^2}'(\zeta_i - \tau) \rvert \, \mathrm{d}\zeta_i \right) \, \mathrm{d}\tau \nonumber \\ 
& = \left( \int_{\mathbb{R}} \lvert g_{s_i^2}'(t) \rvert \, \mathrm{d}t \right) \nu \lvert \varepsilon_i \rvert < + \infty\nonumber
\end{align}
since the derivative of the Gaussian density is integrable over $\mathbb{R}$.\\
Going back to the integrals over variables $\boldsymbol{\zeta}, \boldsymbol{\varepsilon} \in \mathbb{R}^P$ of Equation~\eqref{eq:nablaL_dof}
\begin{align}
&\left\lVert \int_{\boldsymbol{\zeta}} \int_{\boldsymbol{\varepsilon}} \frac{2}{\nu}\left\langle u(\boldsymbol{\zeta}  ; \boldsymbol{\Lambda}),  \boldsymbol{\varepsilon}  \right\rangle \left( \mathcal{G}_{\boldsymbol{\mathcal{S}}}(\boldsymbol{\zeta} - \nu \boldsymbol{\varepsilon}) - \mathcal{G}_{\boldsymbol{\mathcal{S}}}(\boldsymbol{\zeta}) \right) \mathcal{G}_{\boldsymbol{I}}(\boldsymbol{\varepsilon}) \, \mathrm{d}\boldsymbol{\zeta} \mathrm{d}\boldsymbol{\varepsilon} \right\rVert\\
& \leq \int_{\boldsymbol{\zeta}} \int_{\boldsymbol{\varepsilon}} \frac{2}{\nu} L_u \lVert \boldsymbol{\varepsilon} \rVert \left\lvert  \mathcal{G}_{\boldsymbol{\mathcal{S}}}(\boldsymbol{\zeta} - \nu \boldsymbol{\varepsilon}) - \mathcal{G}_{\boldsymbol{\mathcal{S}}}(\boldsymbol{\zeta}) \right\rvert \mathcal{G}_{\boldsymbol{I}}(\boldsymbol{\varepsilon}) \, \mathrm{d}\boldsymbol{\zeta} \mathrm{d}\boldsymbol{\varepsilon}\nonumber\\
& =   \int_{\boldsymbol{\varepsilon}} \frac{2}{\nu} L_u \lVert \boldsymbol{\varepsilon} \rVert \prod_{i = 1}^P \left(  \int_{\zeta_i} \lvert g_{s_i^2}(\zeta_i-\nu \varepsilon_i) -  g_{s_i^2}(\zeta_i) \rvert \, \mathrm{d}\zeta_i \right) \mathcal{G}_{\boldsymbol{I}}(\boldsymbol{\varepsilon}) \, \mathrm{d}\boldsymbol{\varepsilon}\nonumber\\
& \leq \int_{\boldsymbol{\varepsilon}} \frac{2}{\nu} L_u \lVert \boldsymbol{\varepsilon} \rVert \prod_{i = 1}^P \left( \left( \int_{\mathbb{R}} \lvert g_{s_i^2}'(t) \rvert \, \mathrm{d}t \right) \nu \lvert \varepsilon_i \rvert  \right) \mathcal{G}_{\boldsymbol{I}}(\boldsymbol{\varepsilon}) \, \mathrm{d}\boldsymbol{\varepsilon}\nonumber\\
\lvert \varepsilon_i \rvert \leq \lVert \boldsymbol{\varepsilon} \rVert \quad &\leq \int_{\boldsymbol{\varepsilon}} 2 \nu^{P-1}  \lVert  L_u \boldsymbol{\varepsilon} \rVert \lVert \boldsymbol{\varepsilon}^P \rVert \prod_{i = 1}^P  \left( \int_{\mathbb{R}} \lvert g_{s_i^2}'(t) \rvert \, \mathrm{d}t    \right) \mathcal{G}_{\boldsymbol{I}}(\boldsymbol{\varepsilon}) \, \mathrm{d}\boldsymbol{\varepsilon}\nonumber\\
(0  < \nu \leq 1) \quad &\leq \int_{\boldsymbol{\varepsilon}} 2  L_u   \lVert \boldsymbol{\varepsilon} \rVert^{P+1} \prod_{i = 1}^P \left(  \int_{\mathbb{R}} \lvert g_{s_i^2}'(t) \rvert \, \mathrm{d}t \right)   \mathcal{G}_{\boldsymbol{I}}(\boldsymbol{\varepsilon}) \, \mathrm{d}\boldsymbol{\varepsilon} < + \infty \nonumber
\end{align}
Indeed, $\lVert \boldsymbol{\cdot} \rVert$ being the euclidean norm $(\forall i) \, \lvert \varepsilon_i \rvert \leq \lVert \boldsymbol{\varepsilon} \rVert$. 
Moreover, since we are interested in the limit $\nu \rightarrow 0$, we can assume without loss of generality that $0 >\nu \leq 1$ and thus $\nu^{P-1} \leq 1$. 
We conclude using the fact that any power of $  \lVert \boldsymbol{\varepsilon} \rVert$ is integrable against $\mathcal{G}_{\boldsymbol{I}}(\boldsymbol{\varepsilon})$, combined to the fact that the support $\mathbb{V}$ of $\varphi$ is compact, which enable to apply Fubini's theorem to exchange $\int_{\mathbb{V}}$ and $\mathbb{E}_{\boldsymbol{\zeta},\boldsymbol{\varepsilon}}$.\\

\noindent {\color{bleu}(DC 2)} The above majoration does not depends on $\nu$. Further, the Lipschitzianity assumptions provides the existence P-$a.s.$ of 
\begin{align*}
\lim\limits_{\nu \rightarrow 0}  \mathbb{E}_{\boldsymbol{\zeta}, \boldsymbol{\varepsilon}}      \left( \nablaL_{\boldsymbol{\Lambda}} \widehat{R}_{\nu, \boldsymbol{\varepsilon}}(\boldsymbol{y}; \boldsymbol{\Lambda}\lvert \boldsymbol{\mathcal{S}}) \right)_l
\end{align*} 
then dominated convergence theorem applies to invert $\lim\limits_{\nu \rightarrow 0}$ and $\int_{\mathbb{V}}$ which completes the proof.

\end{proof}

\clearpage

\section{Constant term of Stein Unbiased Risk Estimate}
\label{app:const_term}
\begin{proof}
Because of the cyclicality of the trace, one has $\mathrm{Tr}(\textbf{A} \boldsymbol{\mathcal{S}} \textbf{A}^*  )= \mathrm{Tr}( \textbf{A}^* \textbf{A} \boldsymbol{\mathcal{S}} )$, then using the definition of $\textbf{A} \triangleq \boldsymbol{\Pi} \left( \boldsymbol{\Phi}^* \boldsymbol{\Phi}\right)^{-1} \boldsymbol{\Phi}^*$,
\begin{align*}
&\textbf{A}^*  \textbf{A} =\boldsymbol{\Phi}  \left( \boldsymbol{\Phi}^* \boldsymbol{\Phi}\right)^{-1}\boldsymbol{\Pi}^* \boldsymbol{\Pi} \left( \boldsymbol{\Phi}^* \boldsymbol{\Phi}\right)^{-1} \boldsymbol{\Phi}^*\\
\end{align*}
then turning to the block-matrix form to perform the products
\begin{align*}
&\left( \boldsymbol{\Phi}^* \boldsymbol{\Phi}\right)^{-1}\boldsymbol{\Pi}^* \boldsymbol{\Pi} \left( \boldsymbol{\Phi}^* \boldsymbol{\Phi}\right)^{-1}  \\
& = \frac{1}{\left( F_0 F_2 - F_1^2 \right)^2} \begin{pmatrix}
 F_0 \boldsymbol{I}_{N/2} & - F_1 \boldsymbol{I}_{N/2} \\
- F_1 \boldsymbol{I}_{N/2} & F_2 \boldsymbol{I}_{N/2} 
  \end{pmatrix} \begin{pmatrix}
  \boldsymbol{I}_{N/2} & \boldsymbol{Z}_{N/2}\\
  \boldsymbol{Z}_{N/2} & \boldsymbol{Z}_{N/2}
  \end{pmatrix} \begin{pmatrix}
 F_0 \boldsymbol{I}_{N/2} & - F_1 \boldsymbol{I}_{N/2} \\
- F_1 \boldsymbol{I}_{N/2} & F_2 \boldsymbol{I}_{N/2} 
  \end{pmatrix} \\
  &= \frac{1}{\left( F_0 F_2 - F_1^2 \right)^2} \begin{pmatrix}
 F_0 \boldsymbol{I}_{N/2} & - F_1 \boldsymbol{I}_{N/2} \\
- F_1 \boldsymbol{I}_{N/2} & F_2 \boldsymbol{I}_{N/2} 
  \end{pmatrix} \begin{pmatrix}
  F_0\boldsymbol{Z}_{N/2} & -F_1 \boldsymbol{I}_{N/2}\\
\boldsymbol{Z}_{N/2} &  \boldsymbol{Z}_{N/2} 
  \end{pmatrix}\\
  &= \frac{1}{\left( F_0 F_2 - F_1^2 \right)^2} \begin{pmatrix}
 F_0^2 \boldsymbol{I}_{N/2} & - F_0 F_1 \boldsymbol{I}_{N/2} \\
- F_0F_1 \boldsymbol{I}_{N/2} & F_1^2 \boldsymbol{I}_{N/2} 
  \end{pmatrix} .
\end{align*}
Using again cyclicality of the trace 
\begin{align*}
\mathrm{Tr}(\textbf{A} \boldsymbol{\mathcal{S}} \textbf{A}^*  ) &= \mathrm{Tr} \left( \boldsymbol{\Phi}  \left( \boldsymbol{\Phi}^* \boldsymbol{\Phi}\right)^{-1}\boldsymbol{\Pi}^* \boldsymbol{\Pi} \left( \boldsymbol{\Phi}^* \boldsymbol{\Phi}\right)^{-1} \boldsymbol{\Phi}^* \boldsymbol{\mathcal{S}} \right)\\
&= \mathrm{Tr} \left( \boldsymbol{\Phi}  \frac{1}{\left( F_0 F_2 - F_1^2 \right)^2}\begin{pmatrix}
 F_0^2 \boldsymbol{I}_{N/2} & - F_0 F_1 \boldsymbol{I}_{N/2} \\
- F_0F_1 \boldsymbol{I}_{N/2} & F_1^2 \boldsymbol{I}_{N/2} 
  \end{pmatrix} \boldsymbol{\Phi}^* \boldsymbol{\mathcal{S}} \right)\\
  &= \mathrm{Tr} \left(   \frac{1}{\left( F_0 F_2 - F_1^2 \right)^2}\begin{pmatrix}
 F_0^2 \boldsymbol{I}_{N/2} & - F_0 F_1 \boldsymbol{I}_{N/2} \\
- F_0F_1 \boldsymbol{I}_{N/2} & F_1^2 \boldsymbol{I}_{N/2} 
  \end{pmatrix}  \boldsymbol{\Phi}^* \boldsymbol{\mathcal{S}}\boldsymbol{\Phi}\right)\\
\end{align*}

Then using the action of $\boldsymbol{\Phi}$ and $\boldsymbol{\Phi}^*$, explicited in Formula~\eqref{eq:def_Phi} we have the matrix representation
\begin{align}
\boldsymbol{\Phi} = \begin{pmatrix}
 1 \boldsymbol{I}_{N/2} &  \boldsymbol{I}_{N/2}\\
  2 \boldsymbol{I}_{N/2} &  \boldsymbol{I}_{N/2}\\
  \vdots & \vdots\\
J   \boldsymbol{I}_{N/2} & \boldsymbol{I}_{N/2}\\
\end{pmatrix} \in \mathbb{R}^{JN_1N_2 \times 2N_1N2}.
\end{align}  
Using of the decomposition of $\boldsymbol{\mathcal{S}}$ into $J^2$ blocks $\boldsymbol{\mathcal{S}}_{j}^{j'} = \mathcal{C}_j^{j'} \boldsymbol{\Xi}_j^{j'}\in \mathbb{R}^{N/2\times N/2}$, we obtain
\begin{align}
\boldsymbol{\Phi}^* \boldsymbol{\mathcal{S}} \boldsymbol{\Phi} = \begin{pmatrix}
\sum_{j,j' } j j' \boldsymbol{\mathcal{S}}_{j}^{j'} & \sum_{j,j' } j' \boldsymbol{\mathcal{S}}_{j}^{j'}\\
\sum_{j,j' } j\boldsymbol{\mathcal{S}}_{j}^{j'}& \sum_{j,j' }  \boldsymbol{\mathcal{S}}_{j}^{j'}
\end{pmatrix}, \quad 1 \leq j, j'\leq J.
\end{align}
It follows
\begin{align}
&\mathrm{Tr}(\textbf{A} \boldsymbol{\mathcal{S}} \textbf{A}^*  ) \nonumber\\
&= \mathrm{Tr}\left(\frac{1}{\left( F_0 F_2 - F_1^2 \right)^2} \begin{pmatrix}
 F_0^2 \boldsymbol{I}_{N/2} & - F_0 F_1 \boldsymbol{I}_{N/2} \\
- F_0 F_1 \boldsymbol{I}_{N/2} & F_1^2 \boldsymbol{I}_{N/2} 
  \end{pmatrix} \begin{pmatrix}
\sum_{j,j' }  j j' \boldsymbol{\mathcal{S}}_{j}^{j'} & \sum_{j,j' } j' \boldsymbol{\mathcal{S}}_{j}^{j'}\\
\sum_{j,j' } j\boldsymbol{\mathcal{S}}_{j}^{j'} & \sum_{j,j' } \boldsymbol{\mathcal{S}}_{j}^{j'}
\end{pmatrix} \right)\nonumber\\
&=\frac{1}{\left( F_0 F_2 - F_1^2 \right)^2}  \mathrm{Tr}\begin{pmatrix}
\sum_{j,j' } F_0^2 j j' \boldsymbol{\mathcal{S}}_{j}^{j'} - F_0 F_1  j' \boldsymbol{\mathcal{S}}_{j}^{j'} & \sum_{j,j' }  F_0^2  j' \boldsymbol{\mathcal{S}}_{j}^{j'}  - F_0 F_1  \boldsymbol{\mathcal{S}}_{j}^{j'}\\
\sum_{j,j' } F_1^2 j\boldsymbol{\mathcal{S}}_{j}^{j'} - F_0 F_1 j j' \boldsymbol{\mathcal{S}}_{j}^{j'}  & \sum_{j,j' }  F_1^2   \boldsymbol{\mathcal{S}}_{j}^{j'}  - F_0 F_1   j'\boldsymbol{\mathcal{S}}_{j}^{j'}
\end{pmatrix} \nonumber \\
&= \frac{1}{\left( F_0 F_2 - F_1^2 \right)^2} \mathrm{Tr}\left(\sum_{j,j' } \left(F_0^2  j j' \boldsymbol{\mathcal{S}}_{j}^{j'} - F_0F_1  j' \boldsymbol{\mathcal{S}}_{j}^{j'} + F_1^2 \boldsymbol{\mathcal{S}}_{j}^{j'}   - F_0F_1   j'\boldsymbol{\mathcal{S}}_{j}^{j'} \right)\right).
\end{align}
Then, one can remark that 
\begin{align}
\mathrm{Tr}(\boldsymbol{\mathcal{S}}_{j}^{j'}) \triangleq \sum_{\underline{n} \in \Omega} \mathcal{S}_{j, \underline{n}}^{j', \underline{n}} = \sum_{\underline{n} \in \Omega} \mathcal{C}_j^{j'} = \frac{N}{2} \mathcal{C}_j^{j'}.
\end{align}
since the filter $\boldsymbol{\Xi}_j^{j'}$ is supposed to be normalized, in the sense that its maximum value equals 1.
Finally 
\begin{align}
&\mathrm{Tr}(\textbf{A} \boldsymbol{\mathcal{S}} \textbf{A}^*  ) = \frac{N/2}{\left( F_0 F_2 - F_1^2 \right)^2} \sum_{j,j' } \left(F_0^2  j j' \mathcal{C}_j^{j'} - F_0F_1  j' \mathcal{C}_j^{j'}+ F_1^2 \mathcal{C}_j^{j'}  - F_0F_1   j'\mathcal{C}_j^{j'}\right). 
\end{align}
\end{proof}

\bibliographystyle{plain}
\bibliography{abbr,siims20}

\end{document}